\newtheorem{theorem}{Theorem}
\newtheorem{lemma}[theorem]{Lemma}
    \newtheorem{definition}{Definition}
  \newtheorem{proposition}[theorem]{Proposition}
  \newtheorem{remark}[theorem]{Remark}
  \newtheorem{corollary}[theorem]{Corollary}
\newcommand{\comment}[1]{}
\title{Distributionally-Constrained Adversaries in Online Learning}
\author{
    \textbf{Mo\"ise Blanchard}\footnote{Corresponding author}\\
      Columbia University\\
      \small{\texttt{mb5414@columbia.edu}}
      \and 
      \textbf{Samory Kpotufe}\\
      Columbia University\\
      \small{\texttt{skk2175@columbia.edu}}
}
\date{}
\newcommand{\nonl}
{\renewcommand{\nl}{\let\nl\oldnl}}
\DeclareMathOperator*{\argmax}{arg\,max}
\DeclareMathOperator*{\argmin}{arg\,min}
\renewenvironment{proof}[1][]{\par\noindent{\bf Proof #1\ }}{\hfill$\blacksquare$\\[2mm]}
\begin{document}

\newcommand{\trw}{\text{\small TRW}}
\newcommand{\maxcut}{\text{\small MAXCUT}}
\newcommand{\maxcsp}{\text{\small MAXCSP}}
\newcommand{\suol}{\text{SUOL}}
\newcommand{\wuol}{\text{WUOL}}
\newcommand{\crf}{\text{CRF}}
\newcommand{\sual}{\text{SUAL}}
\newcommand{\suil}{\text{SUIL}}
\newcommand{\fs}{\text{FS}}
\newcommand{\fmv}{{\text{FMV}}}
\newcommand{\smv}{{\text{SMV}}}
\newcommand{\wsmv}{{\text{WSMV}}}
\newcommand{\trwp}{\text{\small TRW}^\prime}
\newcommand{\alg}{\text{ALG}}
\newcommand{\rhos}{\rho^\star}
\newcommand{\brhos}{\brho^\star}
\newcommand{\bzero}{{\mathbf 0}}
\newcommand{\bs}{{\mathbf s}}
\newcommand{\bw}{{\mathbf w}}
\newcommand{\bws}{\bw^\star}
\newcommand{\ws}{w^\star}
\newcommand{\Prt}{{\mathsf {Part}}}
\newcommand{\Fs}{F^\star}

\newcommand{\Hs}{{\mathsf H} }

\newcommand{\hL}{\hat{L}}
\newcommand{\hU}{\hat{U}}
\newcommand{\hu}{\hat{u}}

\newcommand{\bu}{{\mathbf u}}
\newcommand{\ubf}{{\mathbf u}}
\newcommand{\hbu}{\hat{\bu}}

\newcommand{\primal}{\textbf{Primal}}
\newcommand{\dual}{\textbf{Dual}}

\newcommand{\Ptree}{{\sf P}^{\text{tree}}}
\newcommand{\bv}{{\mathbf v}}

\newcommand{\bq}{\boldsymbol q}

\newcommand{\rvM}{\text{M}}

\newcommand{\Acal}{\mathcal{A}}
\newcommand{\Bcal}{\mathcal{B}}
\newcommand{\Ccal}{\mathcal{C}}
\newcommand{\Dcal}{\mathcal{D}}
\newcommand{\Ecal}{\mathcal{E}}
\newcommand{\Fcal}{\mathcal{F}}
\newcommand{\Gcal}{\mathcal{G}}
\newcommand{\Hcal}{\mathcal{H}}
\newcommand{\Ical}{\mathcal{I}}
\newcommand{\Kcal}{\mathcal{K}}
\newcommand{\Lcal}{\mathcal{L}}
\newcommand{\Mcal}{\mathcal{M}}
\newcommand{\Ncal}{\mathcal{N}}
\newcommand{\Pcal}{\mathcal{P}}
\newcommand{\Scal}{\mathcal{S}}
\newcommand{\Tcal}{\mathcal{T}}
\newcommand{\Ucal}{\mathcal{U}}
\newcommand{\Vcal}{\mathcal{V}}
\newcommand{\Wcal}{\mathcal{W}}
\newcommand{\Xcal}{\mathcal{X}}
\newcommand{\Ycal}{\mathcal{Y}}
\newcommand{\Ocal}{\mathcal{O}}
\newcommand{\Qcal}{\mathcal{Q}}
\newcommand{\Rcal}{\mathcal{R}}

\newcommand{\brho}{\boldsymbol{\rho}}

\newcommand{\Cbb}{\mathbb{C}}
\newcommand{\Ebb}{\mathbb{E}}
\newcommand{\Nbb}{\mathbb{N}}
\newcommand{\Pbb}{\mathbb{P}}
\newcommand{\Qbb}{\mathbb{Q}}
\newcommand{\Rbb}{\mathbb{R}}
\newcommand{\Sbb}{\mathbb{S}}
\newcommand{\Vbb}{\mathbb{V}}
\newcommand{\Wbb}{\mathbb{W}}
\newcommand{\Xbb}{\mathbb{X}}
\newcommand{\Ybb}{\mathbb{Y}}
\newcommand{\Zbb}{\mathbb{Z}}

\newcommand{\Rbbp}{\Rbb_+}

\newcommand{\bX}{{\mathbf X}}
\newcommand{\bx}{{\boldsymbol x}}

\newcommand{\btheta}{\boldsymbol{\theta}}

\newcommand{\Pb}{\mathbb{P}}

\newcommand{\hPhi}{\widehat{\Phi}}

\newcommand{\Sigmah}{\widehat{\Sigma}}
\newcommand{\thetah}{\widehat{\theta}}

\newcommand{\indep}{\perp \!\!\! \perp}
\newcommand{\notindep}{\not\!\perp\!\!\!\perp}

\newcommand{\one}{\mathbbm{1}}
\newcommand{\1}{\mathbbm{1}}
\newcommand{\aprx}{\alpha}

\newcommand{\ST}{\Tcal(\Gcal)}
\newcommand{\x}{\mathsf{x}}
\newcommand{\y}{\mathsf{y}}
\newcommand{\Ybf}{\textbf{Y}}
\newcommand{\smiddle}[1]{\;\middle#1\;}

\definecolor{dark_red}{rgb}{0.2,0,0}
\newcommand{\detail}[1]{\textcolor{dark_red}{#1}}

\newcommand{\ds}[1]{{\color{red} #1}}
\newcommand{\rc}[1]{{\color{green} #1}}

\newcommand{\mb}[1]{\ensuremath{\boldsymbol{#1}}}

\newcommand{\metric}{\rho}
\newcommand{\proj}{\text{Proj}}

\newcommand{\paren}[1]{\left( #1 \right)}
\newcommand{\sqb}[1]{\left[ #1 \right]}
\newcommand{\set}[1]{\left\{ #1 \right\}}
\newcommand{\floor}[1]{\left\lfloor #1 \right\rfloor}
\newcommand{\ceil}[1]{\left\lceil #1 \right\rceil}
\newcommand{\abs}[1]{\left|#1\right|}
\newcommand{\norm}[1]{\left\|#1\right\|}

\newcommand{\todo}[1]{{\color{red} TODO: #1}}

\newcommand{\Reg}{\textnormal{Reg}}
\newcommand{\Ncov}{\textnormal{NCov}}
\newcommand{\TV}{\textnormal{TV}}
\newcommand{\Dkl}{\textnormal{D}_{\textnormal{KL}}}
\newcommand{\Binom}{\textnormal{Binom}}
\newcommand{\Beta}{\textnormal{Beta}}
\newcommand{\Unif}{\textnormal{Unif}}
\newcommand{\Ber}{\textnormal{Ber}}

\newcommand{\Span}{\textnormal{Span}}
\newcommand{\Proj}{\textnormal{Proj}}

\newcommand{\Treedim}{\textnormal{TreeDim}}
\newcommand{\TDim}{\textnormal{TDim}}
\newcommand{\Tdim}{\textnormal{Tdim}}
\newcommand{\dimu}{\underline{\dim}}
\newcommand{\dimo}{\overline{\dim}}
\newcommand{\tildedim}{\widetilde{\dim}}

\newcommand{\OblivReg}{\textnormal{Reg}^{\textnormal{ob}}}
\newcommand{\AdaptReg}{\textnormal{Reg}^{\textnormal{ad}}}
\newcommand{\OblivR}{\Rcal^{\textnormal{ob}}}
\newcommand{\AdaptR}{\Rcal^{\textnormal{ad}}}
\newcommand{\OblivRealizableR}{\Rcal^{\textnormal{ob},\star}}
\newcommand{\AdaptRealizableR}{\Rcal^{\textnormal{ad},\star}}

\maketitle

\begin{abstract}
    There has been much recent interest in understanding the continuum from adversarial to stochastic settings in online learning, with various frameworks including \emph{smoothed settings} proposed to bridge this gap. We consider the more general and flexible framework of \emph{distributionally constrained adversaries} in which instances are drawn from distributions chosen by an adversary within some constrained distribution class \cite{rakhlin2011online}. Compared to smoothed analysis, we consider general distributional classes which allows for a fine-grained understanding of learning settings between fully stochastic and fully adversarial for which a learner can achieve non-trivial regret.
    We give a characterization for which distribution classes are learnable in this context against both oblivious and adaptive adversaries, providing insights into the types of interplay between the function class and distributional constraints on adversaries that enable learnability. In particular, our results recover and generalize learnability for known smoothed settings. Further, we show that for several natural function classes including linear classifiers, learning can be achieved without any prior knowledge of the distribution class---in other words, a learner can simultaneously compete against any constrained adversary within learnable distribution classes.
\end{abstract}


\section{Introduction}

The continuum between fully adversarial and stochastic online learning has been of much interest over the last decade, with the aim to gain insights into challenging settings where the environment or \emph{distribution} of observations changes repeatedly overtime, either mildly (e.g.\ seasonally), or adversarially in reaction to a learner’s actions. The aim of this work is to gain further insights into sufficient and necessary conditions on the learning setting—i.e., model class and sequence of distributions—that admit learning, i.e., \emph{sublinear regret} over time. 

Formally, we consider online binary classification where the learner is competing with classifiers from a fixed hypothesis class $\Fcal \subset 2^{\cal X}$, while the environment iteratively samples each query $x_t\in\Xcal$ at time $t$ from a distribution $\mu_t$ chosen from a fixed class $\cal U$. As such $\cal U$ may be viewed as controlling the adversarial nature of the learning setting: the problem is fully adversarial if $\cal U$ admits all distributions on $\cal X$, and fully stochastic if $\cal U$ contains a single distribution. The question is then to understand learning settings, i.e., pairs $(\cal F, \cal U)$, for which learning is possible. 

For background, it is well known that outside of certain classes with very low \emph{complexity}, i.e., classes with finite Littlestone dimension, learning is impossible in the fully adversarial setting \cite{littlestone1988learning,ben2009agnostic}. For instance, this precludes fully adversarial learning even for linear classifiers in dimension one (thresholds). At the other extreme, any function class with finite Vapnik–Chervonenkis (VC) dimension is learnable for the fully stochastic case \cite{vapnik1971uniform,vapnik1974theory,valiant1984theory}, which in particular includes linear classifiers in finite dimension. This substantial gap between fully adversarial and stochastic settings thereby motivated a line of work on understanding the continuum between the two settings: e.g., sublinear regrets are possible in (a) \emph{smoothed adversarial settings} where all $\mu \in {\cal U}$ have bounded density with respect to some fixed base measure $\mu_0$ \cite{block2022smoothed,haghtalab2022oracle,haghtalab2024smoothed,block2024performance,blanchard2024agnostic}, and in recent relaxations thereof \cite{block2023sample}, or (b) whenever $(\cal F, \cal U)$ admits nontrivial \emph{sequential Rademacher complexity} \cite{rakhlin2011online}. In particular, the sufficient conditions in (b) allow for more learnable settings than in (a) by actually considering the interaction between $\cal F$ and $\cal U$; however, understanding situations with nontrivial sequential Rademacher complexity is a subject of ongoing work \cite{rakhlin2015online,block2021majorizing}. 

Our aim in this work is to further elucidate sufficient and necessary conditions for learnability by considering alternative views on the interactions between $\cal F$ and $\cal U$. We consider two classical formalisms on the adversary: first, \emph{oblivious adversaries} which pick a sequence of distributions $\mu_1, \ldots, \mu_t \in {\cal U}$ for all times $t$ {ahead of time}, i.e., oblivious to the learner’s actions, and second, more challenging \emph{adaptive adversaries} which may pick $\mu_t \in {\cal U}$ at time $t$ according to the entire history of the learner’s actions up to time $t$. We derive sufficient and necessary conditions on $(\cal F, U)$ w.r.t. either oblivious or adaptive adversaries, in each case described in terms of the interaction between $\cal F$ and $\cal U$: of consequence is how much mass measures $\mu \in \cal U$ may put on certain critical regions of space $\cal X$ defined by $\cal F$---these are regions somewhat rooted in the intuitive concept of \emph{disagreement regions} $\{f \neq f’\}$. 
The ensuing characterizations of learnability immediately recover known learnable settings such as \emph{smoothed} settings and their recent relaxations, and reveal more general settings where learning is possible in both oblivious and adaptive adversarial regimes. 
We illustrate these results with various instantiations, including further relaxations on smoothed settings which we term \emph{generalized smoothed settings}, and more transparent linear and polynomial classification settings. 

Our characterizations allow for useful comparisons between learning against oblivious and adaptive adversaries.
In particular, these reveal the interesting new fact that oblivious and adversarial regimes can sometimes be equivalent for relevant function classes (see \cref{remark:useful_characterization_obliv_vs_adaptive}): e.g., for linear or polynomial classes, any $\cal U$ is learnable with oblivious adversaries iff it is learnable with adaptive adversaries.

Furthermore, on the algorithmic front, our analysis reveals that, for such linear and polynomial function classes $\cal F$, there exists so-called \emph{optimistic} learners which can successfully learn any learnable $\cal U$ without prior knowledge of $\cal U$. This stands in contrast with the classical literature on smoothed settings, where until recently (see \cite{block2024performance,blanchard2024agnostic}) any successful algorithm required some knowledge of $\cal U$. The existence of such optimistic learners in general remains open.

We note that throughout the paper, our focus is mostly on understanding when learning is possible depending on the setting $(\Fcal,\Ucal)$ rather than achieving optimal rates. We leave open whether other equivalent characterizations can lead to tighter minimax regret bounds. Nevertheless, to prove our characterization of learnable settings, we provide generic procedures with convergence rates depending on  combinatorial complexity measures for the pair $(\Fcal, \Ucal)$ inherent in the characterizations of learnability. Notably, these rates recover (up to log terms) known rates in the literature in various settings of interest, including on smoothed online learning. 

However, these procedures are in general not computationally efficient, as they mainly serve to establish learnability, but may be made efficient when critical regions are easy to obtain---this is the case for some important function classes including linear separators.

We next present a more formal outline of the results. 

\subsection{Formal Results Outline}

As stated earlier, we consider fixed classes $\cal F$ of binary classifiers as baseline; the adversary $\cal A$ denotes a sequence of joint distributions on $x_t, y_t$, where the marginal $\mu_t$ on $x_t$ is constrained to a class $\cal U$. The learner $\cal L$ at each time $t$ picks a label ${\cal L}(x_t)$ (not necessarily realized in $\cal F$) before observing $y_t$, and incurs a loss $\1[\Lcal(x_t) \neq y_t].$

\paragraph{Oblivious Adversaries.} 
We consider the oblivious regret 
\begin{equation*}
    \OblivReg_T({\cal L}, {\cal A}) = \mathbb{E} \sqb{\sum_{t\in [T]} \1[\Lcal(x_t) \neq y_t] } - \inf_{f \in \cal F} \mathbb{E}\sqb{\sum_{t\in [T]} \1[f(x_t)\neq y_t]},
\end{equation*}
w.r.t. an oblivious adversary ${\cal A}$ that chooses distributions on $x_t, y_t$ ahead of time, with the restriction that $x_t \sim \mu_t \in {\cal U}$. 
We aim for necessary and sufficient conditions on $(\Fcal,\Ucal)$ for achieving sublinear regret $\lim_{T\to\infty}\frac{1}{T}\OblivReg_T(\cdot)  =0$.
We first show that the next two conditions are necessary: (1) $\Fcal$ must admit uniform finite covering numbers with respect to distances induced by any mixture measure from $\Ucal$, and (2) for any tolerance $\epsilon>0$, a so-called $\epsilon$-dimension of $(\Fcal,\Ucal)$ must be finite. This dimension---similarly to the Littlestone dimension---measures the maximum depth of a binary tree whose nodes correspond to \emph{disagreement regions} $\{f \neq f'\}$ over $\cal F$ that have mass at least $\epsilon$ under some $\mu \in \Ucal$. Intuitively, it captures the number of rounds over which an oblivious adversary can induce $\epsilon$ average-regret.
Upper-bounds are established via general procedures whose convergence rates depend on tree-depth. 

We then show that these conditions are also sufficient for general VC classes $\cal F$ (which we note immediately satisfy condition (1) above). The case of infinite VC dimension remains open.

\paragraph{Adaptive Adversaries.} We consider the adaptive regret
\begin{equation*}
    \AdaptReg_T({\cal L}, {\cal A}) = \mathbb{E} \sqb{\sum_{t\in [T]} \1[{\cal L}(x_t)\neq y_t] - \inf_{f \in \cal F} \sum_{t\in [T]} \1[f(x_t)\neq y_t]},
\end{equation*}
w.r.t. an adaptive adversary $\Acal$ that chooses the distribution $\mu_t\in\Ucal$ on $x_t$ as well as the label $y_t$ according to the learner's actions up to time $t-1$. Again, we aim for necessary and sufficient conditions on $(\cal F, U)$ to admit $\lim_{T\to\infty}\frac{1}{T} \AdaptReg_T(\cdot) =0$. The derived conditions in this case hold for any $\cal F$ irrespective of VC dimension. However, these conditions are more complex than in the oblivious case, and in particular rely on more refined notions of critical regions: it is no longer enough to consider simple disagreement regions $\{f \neq f'\}$, but subsets thereof, defined over a hierarchy of regions that distributions $\mu\in\cal U$ may put mass on. In particular such a hierarchy highlights the essential ingredients of difficult games for the learner: more precisely, critical regions at depth $k\geq 0$ and tolerance $\epsilon>0$ correspond to subsets of disagreement regions for which an adversary can enforce $k$ more rounds of a game that can induce at least $\epsilon$ average-regret. 
We show that a setting $(\Fcal,\Ucal)$ is learnable iff for any tolerance $\epsilon>0$ critical regions of arbitrary depth do not exist. In turn, learnable settings admit general algorithms with convergence rates depending on the maximum depth of critical regions.

\paragraph{Instantiations and Implications.} We consider the following three instantiations:

\begin{itemize}[wide, labelwidth=!, labelindent=0pt]
    \item \emph{Generalized Smoothed Settings.} We show that the above characterizations of learnability naturally recover and generalize usual smoothed settings of \citep{block2022smoothed,haghtalab2022oracle,haghtalab2024smoothed,block2024performance} (and their recent extensions \cite{block2023sample}): namely, letting $B$ in some $\cal B$ denote specific classes of measurable sets, we consider general distribution classes of the form 
    $\Ucal \doteq \{\mu: \mu(B)\leq \rho(\mu_0(B)), \ \forall B \in \cal B\} $, defined w.r.t. to a base measure $\mu_0$ and fixed function $\rho(\cdot)$ with $\lim_{\epsilon\to 0}\rho(\epsilon)=0$. As we will see, it follows easily from our general characterizations of learnability that for such $\Ucal$, the setting $(\Fcal, \Ucal)$ is learnable: (1) for $\cal B$ denoting all measurable sets, we recover known results that smoothed settings are learnable for adaptive adversaries (\cref{prop:extended_smooth_learnable}); (2) for even more general $\cal U$, where $\Bcal$ is restricted to just  disagreement regions $\{f\neq g\}$ for $f,g\in\Fcal$, $(\Fcal,\Ucal)$ remains learnable for oblivious adversaries (\cref{prop:pairwise_oblivious}).

    Finally, as noted above, the complexity terms inherent in our characterizations of learnability are sufficiently tight to recover known rates for specific smoothed settings studied in previous works.
    

    \item \emph{VC-1 classes ${\cal F}$.} Such simple classes are relevant in our analysis in two respects. 
    
    First, they 
    yield immediate clarity on characterizations of learnability since disagreement regions are then less opaque: this is thanks to simple representations of VC 1 classes as initial segments of an ordered tree  
    \cite{ben20152}, which then yield interpretable characterizations (\cref{prop:VC_1_oblivious_statement,thm:characterization_VC1}). In particular, these yield concrete examples of situations where adaptive learnability is clearly harder than oblivious learnability. 

    Second, they can help characterize learnability for some more general classes such as the linear case discussed below.

    
    \item \emph{\bf Linear Classes.} Last, we consider linear classifiers due to their practical importance in machine learning: our results apply generally to any class $\cal F$ of thresholds $f = \1[g \geq 0]$ over functions $g$ that are linear over some finite-dimensional basis (e.g., polynomials of finite degree, last layer of a neural network, random Fourier Features, etc.). 

    For such classes, we show that the 
    the general characterizations simplify into interpretable geometric conditions. Namely, a distribution class $\Ucal$ is learnable for linear classifiers against either oblivious or adaptive adversaries, \emph{iff any projection of this class onto a $1$-dimensional direction---resulting in threshold classifiers---is learnable} (\cref{prop:linear_classifiers}). Furthermore, this gives an example of a general function class for which learnability against oblivious and adaptive adversaries are equivalent (although convergence rates may be different). 
\end{itemize}

Surprisingly, in all the above instantiations and for both oblivious and adaptive adversaries, we demonstrate a stronger form of learnability: there exist so-called \emph{optimistic} algorithms that achieve sublinear regret under all distribution classes $\Ucal$ for which the setting $(\Fcal,\Ucal)$ is learnable (\cref{thm:VC_1_optimistic_learning,prop:lin_classifiers_optimistic}), which is the minimal condition for learning. As a result, for some relevant function classes including linear or polynomial classes, no prior information on $\Ucal$ is necessary to successfully learn. While this may not hold in general against adaptive adversaries, it could be true in general for oblivious adversaries.

\subsection{Related Works}

\paragraph{Sequential Learning.} Online learning is perhaps one of the most common formulations of sequential statistical learning, in which a learner is iteratively tested on an instance, makes a prediction, then observes the true value to update its model \cite{cesa2006prediction}. For fully adversarial data, learnable function classes are exactly those with finite Littlestone dimension \cite{littlestone1988learning,ben2009agnostic}. This marks a significant gap compared to the classical statistical learning setting when data is i.i.d.; in that case, learning is possible whenever the function class has finite VC dimension \cite{vapnik1971uniform,vapnik1974theory,valiant1984theory}. Beyond the fully adversarial setting, there has been a wide interest in understanding learnability including in the distributionally-constrained setting considered in this paper \cite{rakhlin2011online}. To this end, \cite{rakhlin2010online,rakhlin2011online} introduced sequential counterparts to complexity measures in classical statistical learning, and in particular, the sequential Rademacher complexity which can be used to upper bound the minimax adaptive regret \cite{rakhlin2015online,rakhlin2015sequential,block2021majorizing}. In comparison, our work gives characterizations of learnability in this distributionally-constrained setting for both adaptive and oblivious regret, and provides new insights in terms of critical regions that the adversary aims to place mass on.

\paragraph{Smoothed Online Learning.} As a specific form of sequential learning, several recent works have focused on the smoothed online learning setting in which instances are sampled from smooth distributions with respect to some fixed base measure \cite{rakhlin2011online}, and derived statistical rates as well as oracle-efficient algorithms  \cite{ haghtalab2022oracle,block2022smoothed,block2023sample,haghtalab2024smoothed,block2024performance,blanchard2024agnostic}. Beyond standard online learning, the smoothed learning framework was adopted in a wide variety of settings including sequential probability assignments \cite{bhatt2024smoothed}, learning in auctions \cite{durvasula2023smoothed,cesa2023repeated}, robotics \cite{block2022efficient,block2023oracle}, and differential privacy \cite{haghtalab2020smoothed}.
To illustrate the utility of our general characterizations, we show how they recover and extend known learnability results and regret bounds in smoothed online learning---which corresponds to a specific choice of distributional constraints on the adversary.

\paragraph{Contextual Bandits and Reinforcement Learning.} Beyond the binary classification setting considered in this paper, contextual bandits or reinforcement learning are natural settings in which distributions of instances (of states in this context) are constrained (e.g., through transition maps). Accordingly, sequential complexities or smoothed classes have also been considered in these context \cite{foster2020beyond, xie2022role}. Formulations of learnability in terms of critical regions---similar to those introduced in this work---may extend to these more general learning models, a direction we leave for future work.

\paragraph{Optimistic Learning.} We follow the optimistic learning framework introduced by \cite{hanneke2021learning} when studying learning without prior knowledge on the distribution class. At the high-level, the main question in this line of work is whether learning can be achieved with a unifying algorithm for all settings that are learnable---in our setting this corresponds to having a single algorithm that achieves sublinear regret under any learnable distribution class. This minimal assumption that learning is possible is known as the ``optimist's assumption''. Optimistic learning was originally introduced to understand which processes generating the data are learnable in online learning \cite{hanneke2021learning,hanneke2021bayes,blanchard2021reduction,blanchard2022universal,hanneke2024theory}, general regression \cite{blanchard2023universal} and contextual bandits \cite{blanchard2023contextual,blanchard2023adversarial}.

\section{Formal Setup and Preliminaries}

\subsection{Classical Complexity Notions for Function Classes}

Before defining our formal setup, we recall some standard combinatorial dimensions that will be useful in discussions. 

In the stochastic setup (PAC learning), where instances are sampled i.i.d.,\ it is known that the VC dimension characterizes learnability \cite{vapnik1971uniform,vapnik1974theory,valiant1984theory}.

\begin{definition}[VC dimension]\label{def:VC_dimension}
    Let $\Fcal:\Xcal\to\{0, 1\}$ be a function class. We say that $\Fcal$ shatters a set of points $\{x_1,\ldots,x_d\}\subset\Xcal$ if for any sequence of labels $y\in\{0, 1\}^d$, there exists $f_y\in\Fcal$ such that $f_y(x_t)=y_t$ for all $t\in[d]$. 
    
    The {\bf VC dimension} of $\Fcal$ is the size of the largest shattered set.
\end{definition}

In the full adversarial setting, i.e., $\cal U=\{\text{all distributions }\mu\text{ on }\Xcal\}$, both oblivious and adaptive learnability are characterized by finite Littlestone dimension \cite{littlestone1988learning,ben2009agnostic}.

\begin{definition}[Littlestone dimension]\label{def:littlestone_dimension}
    Let $\Fcal:\Xcal\to\{0, 1\}$ be a function class. We say that $\Fcal$ \emph{shatters} a tree of depth $d$ if there exist mappings $x_1,\ldots,x_d$ with $x_t:\{0,1\}^{t-1}\to \Xcal$ for $t\in[d]$, such that for any sequence of labels $y\in\{0,1\}^d$, there exists a function $f_y\in\Fcal$ satisfying $f_y(x_t(y_1,\ldots,y_{t-1}))=y_t$ for all $t\in[d]$.

    The {\bf Littlestone dimension} of $\Fcal$ is the maximum depth of a shattered tree.
\end{definition}

To give an example, the function class with at most $k$ non-zero values, $\Fcal=\{\1_S, |S|\leq k\}$ has Littlestone dimension $k$.
Function classes with finite Littlestone dimension are, however, somewhat contrived. For instance, even the class of thresholds on $[0,1]$, $\Fcal=\{\1[\cdot\leq x], x\in[0,1]\}$, has infinite Littlestone dimension.

\subsection{Distributionally-Constrained Online Learning}
\label{subsec:formal_setup}

\paragraph{General Online Setup.}
We focus on the classical full-feedback online binary classification problem with 0-1 loss. Precisely, let $\Xcal$ be an instance space equipped with a sigma-algebra (we denote by $\Sigma$ the set of measurable sets) and let $\Fcal$ be a function class, defined as a set of measurable functions $f:\mathcal X\to\set{0,1}$. A distribution class $\Ucal$ is a set of distributions on $\Xcal$. Given a time horizon $T\geq 1$, we consider the following sequential problem in which the adversarial test distributions are constrained within $\Ucal$. At each iteration $t\in[T]$:
\begin{enumerate}
    \item An adversary selects a test distribution $\mu_t\in\Ucal$ and a measurable value function $f_t:\mathcal X\to \set{0,1}$, depending on the full learning procedure history $\Hcal_t = \{(x_s,y_s,\hat y_s)_{s<t}\}$.
    \item The environment samples a test instance $x_t\sim\mu_t$.
    \item The learner observes the test instance $x_t$ and makes a prediction $\hat y_t \in\set{0,1}$, using $x_t$ and the history $\Hcal_t$ only.
    \item The learner observes the realized value $y_t = f_t(x_t)$ and incurs a loss $\1[\hat y_t\neq y_t]$. \label{item:feedback_learner}
\end{enumerate}
The goal of the learner is to minimize its expected total loss. Precisely, we consider two setups: oblivious and fully adaptive adversaries.

\paragraph{Oblivious Adversaries.} In the oblivious setup, the test distributions $\mu_1,\ldots,\mu_T$ as well as value functions $f_1,\ldots,f_T$ chosen by the adversary are fixed a priori of the learning procedure. In particular, they are independent from the test samples $x_1,\ldots,x_T$. Accordingly, we define the oblivious regret of the learner as being the expected excess loss of the learner with respect to the best \emph{a priori} function in class:
\begin{equation*}
    \OblivReg_T(\Lcal,\Acal):= \Ebb_{\Lcal,\Acal} \sqb{\sum_{t=1}^T \1[\hat y_t\neq y_t]}  - \inf_{f\in\Fcal} \Ebb_{\Lcal,\Acal} \sqb{\sum_{t=1}^T \1[f(x_t)\neq y_t]},
\end{equation*}
where we denote the learner and adversary by $\Lcal$ and $\Acal$ respectively. The expectation is taken over the randomness of the learner, the adversary, and the environment samples.

\paragraph{Adaptive Adversaries.} In the adaptive setup, the decisions of the adversary may depend on the complete history, as described above. We then aim to minimize the adaptive regret of the learner which corresponds to the expected excess loss of the learner with respect to the best \emph{a posteriori} function in class:
\begin{equation*}
    \AdaptReg_T(\Lcal,\Acal) := \Ebb_{\Lcal,\Acal}\sqb{\sum_{t=1}^T \1[\hat y_t\neq y_t] - \inf_{f\in\Fcal} \sum_{t=1}^T \1[f(x_t)\neq y_t]}.
\end{equation*}

Formally, learners and adversaries are defined as follows.

\begin{definition}[Learners and adversaries]
\label{def:learner_policies}
    Fix $T\geq 1$. A {\bf learner} is a sequence $(\Lcal_t)_{t\in[T]}$ of measurable functions $\Lcal_t:\Xcal^{t-1}\times \set{0,1}^{t-1}\times \Xcal\mapsto \set{0,1}$. The learner prediction at $t$ is $\hat y_t = \Lcal_t((x_s)_{s<t},(y_s)_{s<t},x_t)$.

    An {\bf oblivious adversary} is a sequence of distributions $\mu_1,\ldots,\mu_T\in\Ucal$ and a sequence of measurable value functions $f_1,\ldots,f_T:\Xcal\to\{0,1\}$.

    An {\bf adaptive adversary} is a sequence $(\Mcal_t,F_t)_{t\in[T]}$ of measurable mappings $M_t: \Xcal^{t-1}\times \set{0,1}^{t-1} \to \Ucal$ (with respect to the weak topology of distributions on $\Xcal$) and $F_t: \Xcal^{t-1}\times \set{0,1}^{t-1} \times\Xcal\to\{0,1\}$. The distribution and value function chosen by the adversary at iteration $t$ are $\mu_t=M_t((x_s)_{s<t},(\hat y_s)_{s<t})$ and $f_t=F_t((x_s)_{s<t},(\hat y_s)_{s<t} ,\cdot)$.
\end{definition}

As discussed in the introduction, this distributionally-constrained adversary setup has received attention in the literature \cite{rakhlin2011online} because it naturally generalizes classical frameworks in the statistical learning literature.

\paragraph{Realizable and Agnostic Settings.} In this paper we will consider two different scenarios for the distributionally-constrained model. First, we will consider the \emph{realizable} or \emph{noiseless} case in which a realizable adversary is constrained to output labels that are consistent with some function in class. That is, in the oblivious setting we assume that there exists $f\in\Fcal$ such that $f(x_t)=y_t$ for $t\in[T]$ almost surely, and for the adaptive setting, we assume that $\inf_{f\in\Fcal}\sum_{t=1}^T \1[f(x_t)\neq y_t]=0$ almost surely.

Second, we also consider the general case that has been discussed so far---also known as the \emph{agnostic} case---in which the values chosen by the adversary $y_t\in\{0,1\}$ are arbitrary.

\paragraph{Learnability of Function Class / Distribution Class Pairs.}

A natural question is to understand for which distribution classes $\Ucal$ can the learner ensure sublinear regret. As discussed in the introduction, this depends on the choice of function class $\Fcal$, which gives rise to the following notion of learnability for a pair $(\Fcal,\Ucal)$.

\begin{definition}\label{def:learnable_pairs}
    Fix a function class $\Fcal$ and a distribution class $\Ucal$ on $\Xcal$. We say that $(\Fcal,\Ucal)$ is {\bf learnable} for adaptive adversaries (in the agnostic setting) if
    \begin{equation*}
        \frac{1}{T}\AdaptR_T(\Fcal,\Ucal):= \inf_{\Lcal} \sup_{\Acal} \frac{1}{T}\AdaptReg_T(\Lcal,\Acal)  \underset{T\to\infty}{\longrightarrow} 0,
    \end{equation*}
    where the infimum is taken over all learners, and the supremum is taken over all adaptive adversaries $\Acal$ for the distribution class $\Ucal$. Learnability in the realizable setting is defined similarly by taking the supremum over adaptive and realizable adversaries; we denote by $\AdaptRealizableR_T$ the corresponding minimax regret.

    We define the notion of learnability and minimax regret $\OblivR_T,\OblivRealizableR_T$ for oblivious adversaries similarly, replacing $\AdaptReg_T$ with the oblivious regret $\OblivReg_T$. 
\end{definition}

As an important remark, we consider learnability with respect to a \emph{any fixed} pair $(\cal F, U)$ of function class and distribution class. Hence, a characterization of learnability should capture interactions between both classes, which allows for fine-grained results on the strength of adversaries.

Note that within \cref{def:learnable_pairs}, the algorithm $\Lcal$ is allowed to know the horizon $T$ in advance. One could alternatively consider a definition in which the learner strategy is independent of the horizon. These can be shown to be equivalent via the standard doubling trick. 
For intuition, give a brief translation of known results in the online learning literature in terms of learnable pairs $(\Fcal,\Ucal)$.
In the fully adversarial setting $\Ucal=\set{\text{all distributions}}$, a learner can achieve sublinear regret if and only if only if $\Fcal$ has finite Littlestone dimension \cite{littlestone1988learning,ben2009agnostic}. On the other hand, one can achieve sublinear regret if the distribution class is a singleton $\Ucal=\{\mu_0\}$ whenever $\Fcal$ has finite VC dimension \cite{vapnik1971uniform,vapnik1974theory,valiant1984theory}. The latter result was later generalized to smooth distribution classes $\Ucal_{\mu_0,\sigma}$ in smoothed online learning literature \cite{block2022smoothed,haghtalab2024smoothed}.

\paragraph{Optimistic Procedures: Learning with Unknown $\cal U$.}
\label{subsec:unknown_distr_class}
By default, the learnability notion from \cref{def:learnable_pairs} assumes that the learner knows the distribution class $\Ucal$ beforehand (and the function class $\Fcal$). 
This is a standard assumption within the smoothed online literature, in which most works assume that the base measure $\mu_0$ is known to the learner. However, this differs from other learning frameworks, including PAC learning where the data distribution $\mu_0$ is typically unknown. It turns out that for some useful function classes, no prior knowledge of the distribution class $\Ucal$ is needed. To formalize this, we recall that for a given function class $\Fcal$, in general not all distribution classes are learnable. The best one can hope for is therefore to design so-called \emph{optimistic learners},\footnote{This term refers to the idea that such algorithms learn under the minimal ``optimistic'' assumption that the setting at hand is learnable.} that successfully learn all distribution classes $\Ucal$ for which $(\Fcal,\Ucal)$ is learnable. We borrow this terminology which we borrow from works on universal learning \cite{hanneke2021learning}.

\begin{definition}\label{def:optimistic_learner}
    Fix a function class $\Fcal$ on $\Xcal$.
    We say that a learner $\Lcal$ is an {\bf optimistic learner} if for any distribution class $\Ucal$ such that $(\Fcal,\Ucal)$ is learnable, $\sup_{\Acal_T} \frac{1}{T}\AdaptReg_T(\Lcal,\Acal_T) \underset{T\to\infty}{\longrightarrow} 0$,
    where the supremum is taken over corresponding adversaries for horizon $T$ and distribution class $\Ucal$. We can similarly define optimistic learners for all other considered settings (adaptive/oblivious, agnostic/realizable).

    We say that the function class $\Fcal$ is {\bf optimistically learnable} if it admits an optimistic learner.
\end{definition}

As a note, learnable distribution classes $\Ucal$ may have very different structure, making optimistic learnability a very strong form of learnability. 
Surprisingly, we will show that for some useful function classes, we can indeed construct optimistic learners (see \cref{sec:examples}). In fact, we leave open whether optimistic learning is always achievable.

\section{Learnability with Oblivious Adversaries}
\label{sec:oblivious_adversaries}

In this section, we focus on learning with oblivious adversaries and first introduce an analogue notion of the Littlestone dimension for distributionally-constrained online learning. For convenience, we write $\bar\Ucal$ for the set of mixture distributions from $\Ucal$. We first define the notion of \emph{interaction tree} for a pair of function class and distribution class $(\Fcal,\Ucal)$. 

\begin{definition}
    Fix a pair of function and distribution classes $(\Fcal,\Ucal)$.
    An \textbf{interaction tree} of depth $d$ for $(\Fcal,\Ucal)$ is a full binary tree $\Tcal_d$ of depth $d$ such that each non-leaf node $v\in\Tcal_d$ is labeled with a distribution $\mu_v\in\bar\Ucal$ and each edge $e$ of the tree is labeled by a function $f_e\in\Fcal$. By convention, edges are directed from parents to their children.
\end{definition}

\cref{fig:oblivious_dimension} gives an illustration of an interaction tree of depth $2$.
We next introduce the following notion of dimension for $(\Fcal,\Ucal)$ which measures the largest depth of an interaction tree in which the distribution $\mu_v$ for each non-leaf node $v$ puts $\epsilon>0$ mass on the disagreement region $\{f_{(v,c_1)}\neq f_{(v,c_2)}\}$ where $c_1,c_2$ are the two children of $v$.

\begin{definition}
\label{def:dimension_oblivious}
    Fix $\epsilon>0$. We say that a pair of function and distribution classes $(\Fcal,\Ucal)$ {\bf $\epsilon$-shatters a tree of depth $d$} if there exist an interaction tree of depth $d$ that satisfies the following.
    \begin{enumerate}
        \item For any non-leaf node $v\in\Tcal_d$, let $f^l,f^r\in\Fcal$ be the functions associated with edges incident to $v$ leading to its left and  right child respectively. Then, $\mu_v(f^l\neq f^r)\geq \epsilon$.
        \item For any edge $e:=(v_1,v_2)$, consider any \emph{ancestor edge} $e':=(v_1',v_2')$, i.e., $v_1'$ and $v_2'$ are ancestors of $v_1$. Then, $\mu_{v_1'}(f_e\neq f_{e'})\leq \frac{\epsilon}{3}$.
    \end{enumerate}

    The {\bf $\epsilon$-dimension} of $(\Fcal,\Ucal)$, denoted by $\dim(\epsilon;\Fcal,\Ucal)$, is defined as the maximum depth of a tree $\epsilon$-shattered by $(\Fcal,\Ucal)$.
\end{definition}

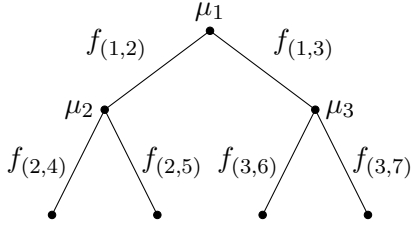
\begin{figure}
    \centering
    \begin{tikzpicture}[scale=0.7]

\filldraw (0,-0.5) circle (2pt);
\node[above] at (0,-0.5) {$\mu_1$};

\filldraw (-2,-2) circle (2pt);
\node[left] at (-2,-2) {$\mu_2$};
\draw (0,-0.5) -- (-2,-2) node[midway, above left] {$f_{(1,2)}$};

\filldraw (2,-2) circle (2pt);
\node[right] at (2,-2) {$\mu_3$};
\draw (0,-0.5) -- (2,-2) node[midway, above right] {$f_{(1,3)}$};

\filldraw (-3,-4) circle (2pt);
\draw (-2,-2) -- (-3,-4) node[midway, left] {$f_{(2,4)}$};

\filldraw (-1,-4) circle (2pt);
\draw (-2,-2) -- (-1,-4) node[midway, right] {$f_{(2,5)}$};

\filldraw (1,-4) circle (2pt);
\draw (2,-2) -- (1,-4) node[midway, left] {$f_{(3,6)}$};

\filldraw (3,-4) circle (2pt);
\draw (2,-2) -- (3,-4) node[midway, right] {$f_{(3,7)}$};

\end{tikzpicture}
    \caption{Illustration of an interaction tree of depth $d=2$}
    \label{fig:oblivious_dimension}
\end{figure}

We give an illustration of labeled trees from \cref{def:dimension_oblivious} in \cref{fig:oblivious_dimension}, which have a similar interpretation to Littlestone trees.
We can easily check that in the fully adversarial case when $\Ucal=\{\text{all distributions }\mu\text{ on }\Xcal\}$, the $\epsilon$-dimension of $(\Fcal,\Ucal)$ coincides with the Littlestone dimension (we can use Dirac distributions for all inner nodes). As we will see, having finite $\epsilon$-dimension is necessary for learning with oblivious adversaries.

Additionally, we show that having uniformly bounded covering numbers among mixtures of distributions in $\bar \Ucal$ is necessary. Formally, each distribution on $\Xcal$ induces a pseudo-metric on $\Fcal$ in terms of the mass it places on disagreement regions $\{f\neq g\}$. We then define covering numbers associated to a distribution on $\Xcal$ as follows.

\begin{definition}[Covering numbers]\label{def:covering_numbers}
    Let $\mu$ be a distribution on $\Xcal$. An $\epsilon$-covering set of $\Fcal$ with respect to $\mu$ is a subset $S\subseteq \Fcal$ such that for all $f\in\Fcal$, $=\inf_{f_0\in S} \mu(f\neq f_0) \leq \epsilon$.
    
    We denote by $\Ncov_\mu(\epsilon;\Fcal)$ the {\bf $\epsilon$-covering number} of $\Fcal$ with respect to $\mu$, defined as the minimum cardinality of an $\epsilon$-covering set for $\mu$.
\end{definition}

\paragraph{Necessary Conditions for General $\cal F$.} With these definitions at hand, we summarize below the necessary conditions for learning with oblivious adversaries for general function classes.

\begin{proposition}[Necessary Conditions for Learnability]
\label{prop:necessary_condition}
     Let $\Fcal,\Ucal$ be a function class and distribution class on $\Xcal$. Suppose that $(\Fcal,\Ucal)$ is learnable for oblivious adversaries (in either agnostic or realizable settings). Then, (1) for any $\epsilon>0$, $\sup_{\mu\in\bar\Ucal} \Ncov_\mu(\epsilon;\Fcal)<\infty$ and (2) for any $\epsilon>0$, $\dim(\epsilon;\Fcal,\Ucal)<\infty$.
\end{proposition}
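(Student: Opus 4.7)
My plan is to establish both necessary conditions by contraposition, in each case constructing a randomized oblivious adversary whose expected regret stays $\Omega(T)$. Because the minimax regret is a supremum over deterministic adversaries, Yao's minimax principle transfers any randomized lower bound to a deterministic one. I will work throughout in the realizable setting (which gives the stronger conclusion, since realizable regret lower-bounds agnostic regret), exploiting the clean fact that a realizable oblivious adversary committing per-realization to some $f^\star\in\Fcal$ has comparator loss identically zero, so the expected regret coincides with the learner's expected cumulative mistake count.

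For (1), suppose $\Ncov_\mu(\epsilon;\Fcal)=\infty$ for some mixture $\mu=\sum_i\alpha_i\mu_i\in\bar\Ucal$. The adversary independently samples a component index $i_t\sim(\alpha_i)$ at each round and plays $\mu_t=\mu_{i_t}$, so that marginally $x_t$ is iid from $\mu$, and labels $y_t=f^\star(x_t)$ for some adversary-chosen $f^\star\in\Fcal$. I then appeal to the standard online-to-batch reduction: if the oblivious regret against this family of adversaries were $o(T)$, then sampling a uniformly random round and returning the learner's implicit hypothesis $\hat f_t(\,\cdot\,)=\Lcal_t(\,\cdot\,,\mathcal{H}_{t-1})$ would yield a distribution-specific PAC learner for $(\Fcal,\mu)$ with expected risk vanishing for every $f^\star\in\Fcal$. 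But the classical Fano-type lower bound shows that infinite $\Ncov_\mu(\epsilon)$ precludes PAC learnability at scale $\epsilon$ under $\mu$: one picks an arbitrarily large $\epsilon$-packing of $\Fcal$ under $\mu$, and since labels carry at most one bit per sample, the learner's output must be $\epsilon$-far from some $f^\star$ with constant probability. Derandomizing over $(i_t)_t$ gives a deterministic oblivious adversary with $\OblivReg_T=\Omega(\epsilon T)$, contradicting learnability.

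For (2), assume $\dim(\epsilon;\Fcal,\Ucal)=\infty$ and, for each $d$, fix an $\epsilon$-shattered interaction tree $\Tcal_d$ with $T=d$. The randomized realizable oblivious adversary picks a leaf $\ell$ uniformly, plays $\mu_t=\mu_{v_{t-1}^\ell}$ along the root-to-leaf path (further decomposed into mixture components so that each $\mu_t\in\Ucal$), and commits to $f^\star=f_{e_d^\ell}$ with labels $y_t=f^\star(x_t)$. Conditioning on the first $t-1$ bits $b_1,\ldots,b_{t-1}$ of $\ell$ fixes the current node $v_{t-1}$ and the sampling distribution; the next bit $b_t$ is then a priori a fresh uniform coin flip choosing between the two children of $v_{t-1}$. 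Under an idealized coupling where labels were generated by the \emph{path-edge} functions $y_s=f_{e_s^\ell}(x_s)$, the learner's history at round $t$ would be a deterministic function of $(b_1,\ldots,b_{t-1})$ and the sample draws, hence independent of $b_t$, and condition~(1) of shattering would directly give mistake probability at least $\epsilon/2$ on the $\epsilon$-mass event $\{f^l(x_t)\neq f^r(x_t)\}$. Condition~(2) controls the discrepancy between these idealized labels and the true $f^\star$-labels by at most $\epsilon/3$ per step under the relevant ancestor distributions.

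The main obstacle is translating condition~(2) into a per-round mistake lower bound of $\Omega(\epsilon)$ that holds \emph{uniformly} in $t\leq d$. A naive total-variation union bound over prior rounds degrades the estimate to $\epsilon/2-t\epsilon/3$, which is vacuous after $O(1)$ rounds and fails to yield $\Omega(\epsilon T)$ cumulative regret. The sharper route I would pursue instead bounds the mutual information $I(b_t;\mathcal{H}_t\mid b_1,\ldots,b_{t-1})$ via the chain rule and the observation that each $y_s$ carries at most $h(\epsilon/3)$ bits about the later bits (with $h$ the binary entropy), and then converts this into posterior concentration on $b_t$ through Pinsker's inequality or a direct Fano step. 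Made uniform in $t\leq d$, this yields $\OblivReg_T\geq c\epsilon T$ for arbitrary $T$, and together with (1) completes the proof of both necessary conditions.
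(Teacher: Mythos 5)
Your argument for condition (1) is sound and essentially the paper's: the paper (Lemma \ref{lemma:covering_numbers_finite_necessary}) also plays the mixture $\bar\mu$ i.i.d.\ with a hidden target drawn from a $2^{T}$-sized $\epsilon$-packing, observes that the at most $2^{t-1}$ possible label histories cannot disambiguate the target, and derandomizes via the law of total probability; your routing through online-to-batch plus the Benedek--Itai-type distribution-specific PAC lower bound is the same counting argument in different clothing. Your reduction of the agnostic case to the realizable case via $\OblivR_T \geq \OblivRealizableR_T$ is also legitimate (the paper proves a separate, stronger agnostic bound of order $\min(\epsilon T,\sqrt{dT})$ via biased Bernoulli labels, but that is not needed for the qualitative statement).

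The gap is in condition (2), and it is exactly at the point you flag as ``the main obstacle.'' The mutual-information repair does not work: conditioning on $b_{<t}$, the only channel carrying information about $b_{\geq t}$ is the sequence of discrepancy bits $Z_s=\1[f^\star(x_s)\neq f_{e_s}(x_s)]$, each with $\Pr[Z_s=1]\leq\epsilon/3$, so the best you get is $I(b_t;\Hcal_t\mid b_{<t})\leq (t-1)\,h(\epsilon/3)$. Since $h(\epsilon/3)=\Theta(\epsilon\log(1/\epsilon))$ is a constant independent of $T$, Pinsker/Fano becomes vacuous after $O(1/(\epsilon\log(1/\epsilon)))$ rounds --- no better than the union bound --- and the cumulative regret extracted is $O(1)$, not $\Omega(\epsilon T)$. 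Nor can one hope the per-bit information is diluted: a shattered tree with $\epsilon/3$ tolerance can in principle leak each next bit on the discrepancy event, and even the ``no discrepancy observed'' branch tilts the posterior odds by a factor $(1-\epsilon/3)^{-t}$, so after $\Theta(1/\epsilon)$ rounds the learner's mistake rate can collapse. The paper's resolution is combinatorial, not information-theoretic: it first establishes condition (1), then uses finiteness of the covering numbers $N_\eta$ to \emph{boost} an $\epsilon$-shattered tree of huge depth into an $(\epsilon,\eta)$-shattered tree of depth $d$ with per-edge ancestor tolerance $\eta=\epsilon/(4T)$ (\cref{def:dimension_oblivious_small_tolerance}, \cref{lemma:boosting_trees,lemma:comparing_various_dimensions}, via a pigeonhole over an $\eta$-cover applied to the leaves of each subtree). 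With tolerance $\epsilon/(4T)$ the crude union bound over all $T$ rounds costs only $\eta d^2\leq\epsilon d/4$, and the path adversary labeled by the single leaf function $f^\star$ is genuinely realizable while still forcing $\Omega(\epsilon d)$ mistakes (\cref{lemma:finite_litt_dimension_necessary_realizable}). Without this boosting step --- which is where condition (1) feeds into the proof of condition (2) --- your construction does not close.
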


This is proved in \cref{subsec:necessary_conditions_oblivious}.
In general, the conditions from \cref{prop:necessary_condition} may not be sufficient anymore; the following result gives such an example for a function class $\Fcal$ with infinite dimension. The proof is given in \cref{subsec:counterexample_VC_class}.

\begin{proposition}[Insufficiency beyond VC Classes]\label{prop:counter_example_infinite_dim}
    There exists a function class and distribution class $\Fcal,\Ucal$ on $\Xcal=\bigcup_{d\geq 1} S_{d-1}$ where $S_{d-1}=\{x\in\Rbb^d:\|x\|_2=1\}$, such that for all $\epsilon>0$, $\sup_{\mu\in\bar \Ucal} \Ncov_\mu(\epsilon;\Fcal)<\infty$ and $\dim(\epsilon;\Fcal,\Ucal)<\infty$, but $(\Fcal,\Ucal)$ is not learnable for oblivious adversaries (in either agnostic or realizable settings).
\end{proposition}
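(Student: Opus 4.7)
The approach is to exhibit an example with infinite VC dimension where per-measure covering numbers and $\epsilon$-dimensions are both finite, illustrating that the conditions of \cref{prop:necessary_condition} are strictly weaker than what learnability requires beyond the VC regime. On $\Xcal = \bigcup_{d \geq 1} S_{d-1}$ I would take $\Fcal = \{f_{\mathbf b} : \mathbf b \in \{0,1\}^{\Nbb}\}$ where each $f_{\mathbf b}$ restricted to $S_{d-1}$ is a fixed simple indicator (say a halfspace) toggled by the $d$-th bit $b_d$. This immediately forces infinite VC dimension by shattering one chosen point per sphere. The distribution class $\Ucal$ would then be built from distributions $\mu_d$ on the individual spheres, together with a shared base $\mu^{\star} = \sum_d 2^{-d}\mu_d$ whose role is to couple the $\epsilon$-dimension's ancestor condition across depth---for instance by requiring every $\mu \in \Ucal$ to have bounded total variation distance to $\mu^{\star}$.

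To check condition (1), note that any $\mu \in \bar\Ucal$ is a probability measure, so at most $O(1/\epsilon)$ sphere-indices $d$ carry mass $\geq \epsilon$; two functions $f_{\mathbf b}, f_{\mathbf b'}$ disagree under $\mu$ only through those heavy indices, so enumerating bit patterns on them yields $\Ncov_\mu(\epsilon;\Fcal) \leq 2^{O(1/\epsilon)}$ uniformly in $\mu$. For condition (2), the subtle point is that naively picking distributions supported on fresh spheres at each level of an interaction tree would make the ancestor condition vacuous and allow infinite depth; coupling through $\mu^{\star}$ prevents this because disagreements under any $\mu \in \Ucal$ are within TV-slack of disagreements under $\mu^{\star}$, and the tree analysis reduces to the $\epsilon$-dim under $\mu^{\star}$ alone. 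Since $(\Fcal, \mu^{\star})$ is structurally a thresholds-like class on the weighted Cantor space $(\{0,1\}^{\Nbb}, \sum_d 2^{-d}\1[b_d \neq b'_d])$, its $\epsilon$-dim should be $O(\log(1/\epsilon))$.

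For unlearnability, I would have the oblivious adversary commit to a random $f^{\star} = f_{\mathbf b^{\star}}$ with $\mathbf b^{\star}$ drawn i.i.d.\ uniform Bernoulli, and at round $t$ use $\mu_t \in \Ucal$ that places a small fixed fraction $\delta$ of its mass on a fresh sphere $S_{d_t-1}$ with previously unused $d_t$. On that fraction, $x_t$ reveals information about $b^{\star}_{d_t}$ which no past round could have touched, so any predictor has expected error at least $\delta/2$ on round $t$; summing gives $\Omega(\delta T)$ expected regret, and in the realizable setting the comparator $f^{\star}$ achieves zero error. Infinite VC dimension is essential here: it provides an inexhaustible supply of independent bits that any cover-based algorithm cannot track uniformly across $\Ucal$.

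The main obstacle is that the $\epsilon$-dim bound above demanded distributions close to $\mu^{\star}$, which also tends to bound the adversary's power. The fine point is that the $\mu^{\star}$-pseudo-metric only captures ``low-frequency'' bits (those with $2^{-d} \gtrsim \epsilon$), while infinite VC lets the adversary exploit arbitrarily ``high-frequency'' bits that are invisible to the $\epsilon$-dim at any fixed $\epsilon$ but still accumulate to linear regret on the fresh-sphere rounds. Choosing $\delta$ as a small positive constant while keeping every $\mu \in \Ucal$ within appropriate slack of $\mu^{\star}$ at all coarser scales is the delicate quantitative step, and makes clear why Proposition~\ref{prop:necessary_condition}'s conditions cannot be sufficient once VC dimension is infinite.
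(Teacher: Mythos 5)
There is a genuine gap, and it sits exactly at the tension you flag in your last paragraph: the mechanism you propose for unlearnability is incompatible with the finiteness of $\dim(\epsilon;\Fcal,\Ucal)$. If every round an adversary can choose some $\mu_t\in\Ucal$ placing a fixed constant mass $\delta>0$ on a ``fresh'' sphere $S_{d_t-1}$ whose bit $b_{d_t}$ is undetermined by the history, then those same distributions and functions assemble into an $\epsilon$-shattered interaction tree of \emph{arbitrary} depth for any $\epsilon\lesssim\delta$: assign disjoint blocks of fresh sphere indices to disjoint subtrees, label each node $v$ with the distribution putting $\delta$ mass on its fresh sphere, and label its two child edges with functions differing only in that sphere's bit. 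Condition 1 of \cref{def:dimension_oblivious} holds with $\epsilon\approx\delta$, and condition 2 (ancestor closeness) holds automatically precisely because descendant functions toggle only higher-index, fresher bits, which carry negligible mass under any ancestor's distribution (its $\delta$ lump sits on its own lower-index fresh sphere and the remainder is near $\mu^\star$, whose tail mass $\sum_{d>D}2^{-d}$ is tiny). So the ``high-frequency bits'' are not invisible to the $\epsilon$-dimension at all---they are exactly what deep interaction trees are built from---and your construction forces $\dim(\epsilon;\Fcal,\Ucal)=\infty$ for small $\epsilon$. Conversely, if you shrink the allowed deviation from $\mu^\star$ so that fresh spheres can only receive $o(1)$ mass, the per-round error of the adversary vanishes and you only get $o(T)$ regret, which does not establish non-learnability. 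There is no intermediate tuning of $\delta$: any region that is simultaneously ``informative and repeatedly refreshable with constant mass'' is a node of a deep shattered tree.

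The paper's example avoids this by making unlearnability \emph{statistical} rather than combinatorial. It takes $\Fcal$ to be indicators of great subspheres $\{x\in S_{d-1}: v^\top x=0\}$ and $\Ucal$ the uniform distributions on those same subspheres, so that $\mu(f=1)\in\{0,1\}$ with value $1$ only when $\mu$ and $f$ correspond to the same hyperplane. This $0$--$1$ law forces every left-descendant of the root of a shattered tree to coincide, giving $\dim(\epsilon;\Fcal,\Ucal)=1$, and a disjointness argument gives $N_\epsilon\le 1+1/\epsilon$. The lower bound is then an indistinguishability computation: for ambient dimension $d\gtrsim T^2$, the KL divergence between the observation process under a hidden hyperplane $v$ and under a rotationally invariant null process is $O(T/(d-T))$ (via Beta laws of squared projections), so the learner suffers regret $T/4$ even though no single critical region ever carries constant mass across rounds. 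If you want to complete your write-up you would need to replace the fresh-sphere regret argument with a hiding argument of this kind; as stated, your proof cannot satisfy both halves of the proposition simultaneously.
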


\paragraph{Characterization for $\Fcal$ with Finite VC Dimension.}
However, when $\Fcal$ has finite VC dimension, we show that the conditions from \cref{prop:necessary_condition} are also sufficient for learning with oblivious adversaries. Hence these conditions provide a complete characterization of learnable pairs $(\Fcal,\Ucal)$.
As a remark, the first condition from \cref{prop:necessary_condition} is automatically satisfied for function classes with finite VC dimension since any distribution $\mu$ on $\Xcal$ has $\Ncov_\mu(\epsilon;\Fcal) = \Ocal(\text{VC}(\Fcal)\log\frac{1}{\epsilon})$ where $\text{VC}(\Fcal)$ denotes the VC dimension of $\Fcal$ \cite{haussler1995sphere}.

The characterization of learnable settings $(\Fcal,\Ucal)$ for oblivious adversaries when $\Fcal$ has finite VC dimension is detailed below. The proof is given in \cref{subsec:oblivious_realizable,subsec:oblivious_adaptive}.

\begin{theorem}\label{thm:oblivious_adversaries}
    Let $\Fcal$ be a function class with finite VC dimension and let $\Ucal$ be a distribution class on $\Xcal$. The following are equivalent:
    \begin{itemize}
        \item $(\Fcal,\Ucal)$ is learnable for oblivious adversaries (in the agnostic setting),
        \item $(\Fcal,\Ucal)$ is learnable for oblivious adversaries in the realizable setting,
        \item For any $\epsilon>0$, $\dim(\epsilon;\Fcal,\Ucal)<\infty$.
    \end{itemize}
\end{theorem}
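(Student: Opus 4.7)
The plan is to prove the three-way equivalence by showing $(1)\Rightarrow (2)\Rightarrow (3)\Rightarrow (1)$. The first two directions are immediate: $(1)\Rightarrow(2)$ holds because every realizable oblivious adversary is a particular agnostic oblivious adversary, so any learner achieving $o(T)$ regret uniformly against the larger class does so against the smaller; and $(2)\Rightarrow(3)$ is exactly the necessary-condition content of \cref{prop:necessary_condition}, which already identifies finite $\epsilon$-dimension as necessary for realizable learnability. The main work is the reverse direction $(3)\Rightarrow(1)$, which I would split into a realizable stage and an agnostic lifting stage.

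\emph{Realizable stage.} Fix $\epsilon>0$ and let $d=\dim(\epsilon/3;\Fcal,\Ucal)$, finite by hypothesis. I maintain the version space $V_t=\{f\in\Fcal:f(x_s)=y_s,\ s<t\}$. The predictor is an SOA-style rule: at round $t$, partition $V_t$ into $V_t^b=V_t\cap\{f(x_t)=b\}$ for $b\in\{0,1\}$, and predict the bit $b$ whose opposite branch $V_t^{1-b}$ retains the larger residual $(\epsilon/3)$-dimension (ties broken arbitrarily). The combinatorial claim driving the analysis is: whenever the realized label differs from $\hat y_t$ and $\mu_t(\{f\neq g\})\ge\epsilon$ for some witnesses $f\in V_t^0, g\in V_t^1$, the residual dimension strictly drops, $\dim(\epsilon/3;V_{t+1},\Ucal)<\dim(\epsilon/3;V_t,\Ucal)$. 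The factor-of-three slack between the $\epsilon$ mass threshold and the $\epsilon/3$ ancestor-consistency tolerance in \cref{def:dimension_oblivious} is precisely what certifies that the ancestor constraints of any maximal shattered tree inside $V_{t+1}$ continue to hold after prepending a new root via $(f,g,\mu_t)$. Hence at most $d$ rounds can produce such large-mass mistakes; all other rounds are constrained so that every $f,g\in V_t$ satisfy $\mu_t(f\neq g)<\epsilon$, bounding the per-round expected mistake by $\epsilon$. Summing gives expected cumulative loss $\le d+\epsilon T$, and tuning $\epsilon=\epsilon(T)\to 0$ slowly yields $\OblivRealizableR_T=o(T)$.

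\emph{Agnostic lifting and main obstacle.} To pass from realizable to agnostic regret, I would run a Hedge-style meta-learner over finitely many experts, each an instance of the realizable algorithm above with a hypothesized target $f^\star$ drawn from a fixed uniform $(\epsilon/2)$-cover of $\Fcal$. Because $\Fcal$ has finite VC dimension, $\sup_{\mu\in\bar\Ucal}\Ncov_\mu(\epsilon/2;\Fcal)$ is polynomial in $1/\epsilon$ by the Haussler bound cited after \cref{thm:oblivious_adversaries}, so the size of this universal cover - and hence the $\log N$ overhead from multiplicative weights - is controlled. The best expert tracks the true Bayes-optimal $f^\star\in\Fcal$ up to an additive $\epsilon T$ from cover error, plus the realizable mistake budget; Hedge adds $O(\sqrt{T\log N})$, and balancing $\epsilon$ produces sublinear agnostic regret. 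The chief obstacle is the combinatorial lemma in the realizable stage: proving that a large-mass mistake necessarily causes a strict drop in the $\epsilon/3$-dimension of the version space. I expect this to proceed by contradiction - take a witnessing maximal tree in $V_{t+1}$, attach a new root labeled $(f,g,\mu_t)$, and verify every ancestor-consistency inequality using the $\epsilon/3$ slack together with the fact that $f,g$ lie in $V_t$ and therefore agree with every previously committed label.
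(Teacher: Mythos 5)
Your overall architecture (necessity from \cref{prop:necessary_condition}, then a realizable mistake-bound algorithm lifted to the agnostic case by Hedge over experts) matches the paper's. But the core combinatorial claim driving your realizable stage is exactly the step that fails, and it is the hard part of the theorem. You claim that a mistake at round $t$ with a large-mass witness pair forces $\dim(\epsilon/3;V_{t+1},\Ucal)<\dim(\epsilon/3;V_t,\Ucal)$, by prepending a root $(\mu_t,f,g)$ to maximal shattered trees inside the two branches. To make the new tree $\epsilon$-shattered you must verify condition 2 of \cref{def:dimension_oblivious} at the new root: every function $f_e$ appearing anywhere in the left subtree must satisfy $\mu_t(f_e\neq f)\leq\epsilon/3$, and symmetrically on the right. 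But those functions are only known to lie in $V_{t+1}$, i.e.\ to agree with the target on the finitely many \emph{sample points} observed so far; agreement at $x_t$ (a single point) gives no control whatsoever on $\mu_t(f_e\neq f)$, which is a statement about measure. So the prepended tree need not be shattered and the dimension need not drop. The paper's proof is organized precisely around this obstacle: it runs in epochs of length $\Theta(d\log T/\epsilon)$, uses uniform convergence for VC classes (\cref{lemma:uniform_concentration}) to certify that functions consistent with \emph{all} epoch samples are within $\epsilon/4$ of $f^\star$ under the epoch-averaged measure $\bar\mu_k$, and works with the one-sided ``relaxed'' dimension $\dimo$ (\cref{def:relaxed_dimension}), which is then related back to $\dim$ via covering numbers (\cref{lemma:comparing_various_dimensions}). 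This is also the second place where finite VC dimension is essential, beyond controlling $\log N$ for Hedge; your proposal uses it only for the cover size.

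Two further problems. First, in the non-witness rounds your per-round mistake bound of $\epsilon$ does not follow: the mistake event is contained in the disagreement region of $V_t$, whose $\mu_t$-mass can be far larger than $\sup_{f,g\in V_t}\mu_t(f\neq g)$ (the witness pair varies with $x_t$, so you cannot bound the union by a single pairwise disagreement). Second, your agnostic lifting presumes a single ``fixed uniform $(\epsilon/2)$-cover of $\Fcal$''; the Haussler bound controls $\Ncov_\mu(\epsilon;\Fcal)$ for each $\mu$ separately, but no finite set need be an $\epsilon$-cover simultaneously for all $\mu\in\bar\Ucal$ (already false for thresholds on $[0,1]$). The paper instead indexes experts by epoch boundaries together with choices of leaf functions from the epoch-dependent shattered trees constructed online (\cref{eq:def_function_class} and the surrounding construction).
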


We give some brief intuitions as to why the necessary conditions from \cref{prop:necessary_condition} are also sufficient for finite VC dimension classes $\Fcal$. Such function classes $\Fcal$ enjoy the uniform convergence property for any choice of distribution $\mu\in\bar\Ucal$, which can be used for algorithms akin to empirical risk minimization. In contrast, the first condition from \cref{prop:necessary_condition} only ensures uniform convergence for $\epsilon$-covers for $\mu$---this is not always sufficient to recover the target function.

\begin{remark}
    We also provide non-asymptotic minimax oblivious regret bounds in terms of $\dim(\epsilon;\Fcal,\Ucal)$ and the VC dimension of $\Fcal$. These are deferred to the appendix (see \cref{thm:combined_oblivious_non_asymptotic}) for simplicity.
\end{remark}

\section{Learnability with Adaptive Adversaries}
\label{sec:adaptive_adversaries}

As in the oblivious case, we can think about the learning problem in terms of critical regions of $\Xcal$ on which the distribution class places sufficient mass. However such critical regions are now more involved. 

For intuition let's consider a simple realizable setting, i.e., where the adversary is constrained to produced an $f\in \cal F$ with no error on the observed sequence.
In this case, the learner can restrict its search to the current \emph{version space} $\Fcal_t$ containing all functions in the class $\Fcal$ that are consistent with the observed dataset $D_t:= \{(x_s,y_s), s<t\}$. The adversary then aims to place sufficient mass---i.e., pick $\mu_t$ with sufficient mass, say $\epsilon>0$---on the disagreement region $B_t$ induced by $\Fcal_t$ so as to encourage a mistake by the learner in the next round (that is, $B_t\subseteq \{x\in\Xcal: \exists f,g\in\Fcal_t, f(x)\neq g(x)\}$). 
If the new instance $x_t$ falls in this region, however, the next-round version space $\Fcal_{t+1}$ is accordingly reduced and so is the corresponding disagreement region $B_{t+1}$, thus making it harder for the adversary to continue inducing errors on the part of the learner. It is therefore incumbent on the adversary to instead aim to put mass on careful subsets of such disagreement regions $B_t$ \emph{so as to ensure that it can keep inducing mistakes in subsequent rounds.} This leads to the following recursive definition of such critical regions where an adversary ought to put mass. Interestingly, while defined with the above \emph{realizable} intuition, the same critical regions turn out to also characterize the agnostic setting where there is no restriction on the adversary to maintain $0$ error.

We denote the collection of all possible finite datasets by $\Dcal:=\set{ D\subset \Xcal\times\{0,1\}, |D|<\infty} $. Such datasets $D$ will serve to define version spaces, and hence corresponding critical regions $B_k(D,\epsilon)$ on which the adversary aims to put mass $\epsilon$, iterating on the horizon $k\geq 0$ as follows.

\begin{definition} Fix any $\epsilon> 0$.
 First, let $\Lcal_0(\epsilon) = \set{D\in \Dcal: \exists f\in\Fcal, \forall (x,y)\in D,f(x)=y }$ contain all (realizable) datasets whose corresponding version space is non-empty. For $k\geq 1$ and a dataset $D\in\Dcal$, we define the {\bf critical region}
\begin{equation*}
    B_k(D;\epsilon) := \set{x\in\Xcal: D\cup \{(x,0)\}, D\cup\{(x,1)\}\in\Lcal_{k-1}(\epsilon)},
\end{equation*}
recursively in terms of the sets 
\begin{equation*}
    \Lcal_k(\epsilon) := \set{D \in {\cal D}: \sup_{\mu\in\Ucal} \mu(B_k(D;\epsilon)) \geq \epsilon}.
\end{equation*}
\end{definition}

In simple terms, the critical region $B_k(D;\epsilon)$ is the target region of instances $x$ for which after adding any labeling of $x$ to the dataset, the adversary can still ensure $k-1$ future learner mistakes.
Note that by construction, the sets $(\Lcal_k(\epsilon))_{k\geq 0}$ are non-increasing (see \cref{lemma:decreasing_level_sets} in \cref{sec:proofs_adaptive}).
We can characterize learnable distribution classes against both realizable and agnostic adversaries in terms of the sets $\Lcal_k(\epsilon)$ as follows.

\begin{theorem}\label{thm:qualitative_charact}
    Let $\Fcal,\Ucal$ be a function class and distribution class on $\Xcal$. The following are equivalent:
    \begin{itemize}
        \item $(\Fcal,\Ucal)$ is learnable (in the agnostic setting),
        \item $(\Fcal,\Ucal)$ is noiseless-learnable,
        \item  $\forall \epsilon>0,\exists k: \Lcal_k(\epsilon) = \emptyset$.
    \end{itemize}
\end{theorem}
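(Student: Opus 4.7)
The theorem requires three implications. Agnostic $\Rightarrow$ noiseless is immediate since any noiseless adversary is a restricted agnostic adversary, yielding $\AdaptR_T(\Fcal,\Ucal) \geq \AdaptRealizableR_T(\Fcal,\Ucal)$. The other two implications carry the substance: noiseless learnable $\Rightarrow$ the level condition (lower bound, by contrapositive) and the level condition $\Rightarrow$ agnostic learnable (upper bound, by algorithm).

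For noiseless $\Rightarrow$ Condition 3, I would proceed by contrapositive: assume there exists $\epsilon>0$ such that $\Lcal_k(\epsilon)\neq\emptyset$ for all $k$, and construct a randomized noiseless adversary forcing $\Omega(\epsilon T)$ expected regret over horizon $T$. A useful preliminary is the monotonicity $D'\subseteq D \Rightarrow B_k(D;\epsilon)\subseteq B_k(D';\epsilon)$, proved by induction on $k$ (the base case uses that realizability is preserved under subsets), which yields $\emptyset\in \Lcal_k(\epsilon)$ for all $k$. The adversary unrolls the tree of critical regions to depth $T$ rooted at $\emptyset$: each internal node $v$ at depth $d$ carries a dataset $D_v \in \Lcal_{T-d}(\epsilon)$ with a distribution $\mu_v \in \Ucal$ satisfying $\mu_v(B_{T-d}(D_v;\epsilon)) \geq \epsilon$, and for each $x \in B_{T-d}(D_v;\epsilon)$ has two children corresponding to $D_v \cup \{(x,0)\}$ and $D_v \cup \{(x,1)\}$, both in $\Lcal_{T-d-1}(\epsilon)$ by the definition of $B$. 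The adversary commits to a random target $f^\star \in \Fcal$ drawn from a symmetric measure on this tree's branches (each leaf is in $\Lcal_0$ and therefore admits some consistent $f\in \Fcal$), and plays $\mu_t = \mu_{v_{t-1}}$ at round $t$, where $v_{t-1}$ is the current tree position determined by the history. The key property to verify is that, conditional on $(x_s, y_s)_{s<t}$, the distribution of $f^\star(x_t)$ restricted to $x_t \in B_{T-t+1}(D_{v_{t-1}};\epsilon)$ is uniform over $\{0,1\}$, which follows from the balanced branching of the tree on critical regions. This gives per-round expected mistake probability at least $\epsilon/2$ independently of the learner's prediction, hence total expected regret at least $\epsilon T/2$.

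For Condition 3 $\Rightarrow$ agnostic, I first establish a realizable mistake bound and then lift to agnostic via a standard reduction. Fix $\epsilon>0$ and $k(\epsilon)$ with $\Lcal_{k(\epsilon)}(\epsilon)=\emptyset$; define the potential $\phi(D) := \max\{k : D\in \Lcal_k(\epsilon)\}$, bounded by $k(\epsilon)-1$ on realizable datasets. The key structural observation is that whenever $x_t\notin B_{\phi(D_{t-1})+1}(D_{t-1};\epsilon)$, at least one labeling of $x_t$ strictly decreases $\phi$; call it $y^\star(x_t)$, and have the realizable learner predict $\hat y_t = 1 - y^\star(x_t)$. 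Then any mistake is either (i) a potential-dropping mistake, occurring at most $k(\epsilon)$ times over the horizon, or (ii) a mistake on $x_t \in B_{\phi(D_{t-1})+1}(D_{t-1};\epsilon)$, a set whose $\mu_t$-mass is strictly less than $\epsilon$ by the definition of $\phi(D_{t-1})$ as a maximum, contributing in expectation at most $\epsilon T$ events across $T$ rounds. This yields a realizable expected mistake bound $k(\epsilon) + \epsilon T$, which becomes $o(T)$ after tuning $\epsilon\to 0$ slowly in $T$. To handle agnostic adversaries, I would apply a standard exponential-weights reduction in the style of Ben-David--P\'al--Shalev-Shwartz, running virtual copies of the realizable learner indexed by possible labelings and aggregating with multiplicative weights; since $\Ucal$ constrains only the $x_t$-marginal, the same $\mu_t$ applies uniformly across virtual copies and the reduction carries over.

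The main obstacle I anticipate is the lower bound's realizability requirement: the adversary must produce a single $f^\star \in \Fcal$ consistent with the entire trajectory while simultaneously forcing a coin-flip label on each critical-region event, even though the random sample $x_t$ may fall outside the critical region (in which case $y_t$ is determined by $f^\star$ and reveals information). My plan is to realize the tree of critical regions as a branching distribution over functions in $\Fcal$ with the a-priori-committed random $f^\star$, so that conditioning on past samples preserves the balanced distribution over branches on future critical regions; verifying this formally may require a careful filtration or martingale argument. A secondary subtlety is tuning $\epsilon = \epsilon(T)$ in the agnostic reduction so that the realizable-derived regret bound remains sublinear; this is a routine optimization once the reduction is in hand.
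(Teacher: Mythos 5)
Your overall architecture is the right one and matches the paper's: the trivial implication agnostic $\Rightarrow$ realizable, a realizable lower bound for the contrapositive of ``realizable $\Rightarrow$ condition 3,'' and a realizable mistake-bound algorithm lifted to the agnostic case by an expert/Hedge reduction over mistake-time subsets. Your upper-bound half is essentially the paper's: the potential $\phi(D)=\max\{k: D\in\Lcal_k(\epsilon)\}$, the prediction rule that forces any mistake on a non-critical point to strictly drop $\phi$, the $k(\epsilon)+\epsilon T$ realizable bound, and the Ben-David--P\'al--Shalev-Shwartz-style aggregation are all exactly what the paper does (Algorithms 5--6 and Lemmas in Appendix B).

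The genuine gap is in the realizable lower bound, and it is exactly the obstacle you flag but do not resolve. Your adversary commits to a random $f^\star$ supported on branches of a tree that only branches at critical points, and you claim a per-round mistake probability of $\epsilon/2$ for all $T$ rounds. The problem is what happens on the $(1-\epsilon)$-mass of samples falling \emph{outside} the current critical region $B_k(D;\epsilon)$: by definition of $B_k$, at least one of the two labelings of such a point takes the dataset out of $\Lcal_{k-1}(\epsilon)$, and the label revealed by your pre-committed $f^\star$ may well be that bad one. When it is, the effective version space (which the learner sees) no longer supports the remaining tree, the ``balanced branching'' conditional law is destroyed, and the coin-flip property fails for all subsequent rounds; no filtration or martingale bookkeeping repairs this, because the issue is that the potential itself can collapse. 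The paper's fix is a separate structural lemma (its \cref{lemma:realizable_adaptive_choice}): for any $D\in\Lcal_k(\epsilon)$ and \emph{any} $x\in\Xcal$, some label $y$ keeps $D\cup\{(x,y)\}$ in $\Lcal_{\floor{k/2}}(\epsilon)$. This forces the adversary to choose labels \emph{adaptively} on non-critical points (realizability of the whole trajectory is then certified only at the end, since $k_{T+1}\ge 0$), and the price of the halving is that one only gets about $\log_2 k(\epsilon)$ coin-flip rounds, i.e., a bound of order $\epsilon\min(\log_2 k(\epsilon),T)$ rather than your claimed $\epsilon T/2$. Since $k(\epsilon)=\infty$ under the contrapositive hypothesis, this still yields linear regret and closes the equivalence, but without this lemma (or an equivalent mechanism for surviving non-critical samples) your construction does not produce a realizable adversary with the stated guarantee.
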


Next, we give more quantitative results on the regret convergence. We introduce the notation
\begin{equation*}
    k(\epsilon) := \sup\{k\geq 0:  \Lcal_k(\epsilon) \neq \emptyset \},\quad \epsilon>0.
\end{equation*}
When needed, we may write this quantity as $k(\epsilon;\Fcal,\Ucal)$ to specify the considered pair $(\Fcal,\Ucal)$.
Intuitively, $k(\epsilon)$ is the first index for which the adversary cannot enforce $k(\epsilon)$ learner mistakes with probability $\epsilon$. Formally, we have the following bounds on the minimax regret for adaptive adversaries. Both \cref{thm:quantitative_charact,thm:qualitative_charact} are proved in \cref{sec:proofs_adaptive}.

\begin{theorem}\label{thm:quantitative_charact}
    There exist constants $C_1,C_2>0$ such that the following holds. Let $\Fcal,\Ucal$ be a function class and distribution class on $\Xcal$. Then, for $T\geq 1$, respectively for the agnostic and realizable case,
    \begin{align*}
        C_1 \sup_{\epsilon\in(0,1]} \set{\min(k(\epsilon),\epsilon T) } &\leq \AdaptR_T(\Fcal,\Ucal) \leq C_2 \inf_{\epsilon\in(0,1]} \set{ \epsilon T +\sqrt{ \min(k(\epsilon)+1,T)\cdot  T\log T} } ,\\
        C_1 \sup_{\epsilon\in(0,1]} \set{\epsilon\cdot \min(\log k(\epsilon), T)} &\leq \AdaptRealizableR_T(\Fcal,\Ucal) \leq C_2 \inf_{\epsilon\in(0,1]} \set{\epsilon T + \min(k(\epsilon),T)}.
    \end{align*}
\end{theorem}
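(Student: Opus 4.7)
The strategy is to deduce all four inequalities from a common combinatorial object, a ``witness tree'' that unrolls $\{\Lcal_k(\epsilon)\}_k$ starting from any $D_0 \in \Lcal_{k(\epsilon)}(\epsilon)$. By iteratively applying the definition of $\Lcal_k(\epsilon)$, I build a depth-$k(\epsilon)$ binary game tree whose internal nodes $D$ are decorated with distributions $\mu_D \in \Ucal$ satisfying $\mu_D(B_{k(\epsilon) - \mathrm{depth}(D)}(D;\epsilon)) \geq \epsilon$; for any $x$ in the corresponding critical region, both children $D \cup \{(x,0)\}$ and $D \cup \{(x,1)\}$ lie in the next layer of the hierarchy by construction. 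The tree is data-dependent, since its branches are instantiated by the realized $x_t$'s, but it serves as the common backbone for both the adversary constructions and the learner algorithms.

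For the lower bounds, an adaptive adversary plays along the tree: maintain the current realized node $D_t$ at depth $\ell_t$, play $\mu_{D_t}$, and if $x_t \in B_{k(\epsilon)-\ell_t}(D_t;\epsilon)$ descend to a child, otherwise stay at the same node and label $x_t$ with any value that keeps the full dataset realizable (such a value exists because every node sits in $\Lcal_0(\epsilon)$). In the agnostic case, the labels at descent steps are drawn uniformly at random; each descent costs $\tfrac{1}{2}$ expected regret against the best-in-hindsight $f \in \Fcal$, and the process runs out of descents after at most $k(\epsilon)$ steps, so coupling with a $\mathrm{Binomial}(T,\epsilon)$ truncated at $k(\epsilon)$ yields $\Omega(\min(k(\epsilon),\epsilon T))$ expected regret. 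In the realizable case, the adversary commits in advance to a uniformly random root-to-leaf path and labels descent samples along it; an information-theoretic estimate of how fast a learner can identify this random path from its observations gives the weaker $\Omega(\epsilon\min(\log k(\epsilon),T))$ bound. Taking $\sup_\epsilon$ delivers both lower-bound inequalities.

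For the realizable upper bound, the learner runs a level-tracking halving algorithm: track $k_t := \sup\{k : D_t \in \Lcal_k(\epsilon)\}$ and predict $\hat y_t = \argmax_b \sup\{k : D_t \cup \{(x_t,b)\} \in \Lcal_k(\epsilon)\}$. The crucial observation is that whenever $x_t \notin B_{k_t+1}(D_t;\epsilon)$ the ``other'' label $1-\hat y_t$ produces a dataset in $\Lcal_{<k_t}$, so any mistake strictly lowers $k_t$; and by the maximality of $k_t$, the event $\{x_t \in B_{k_t+1}(D_t;\epsilon)\}$ has probability $<\epsilon$ under every $\mu_t \in \Ucal$. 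Hence at most $k(\epsilon)$ ``strictly dropping'' mistakes occur, plus at most $\epsilon T$ expected mistakes across ``unsafe'' rounds, totaling $k(\epsilon)+\epsilon T$. For the agnostic upper bound, I reduce to an experts problem: the possible future trajectories through $\{\Lcal_k(\epsilon)\}_k$ form a binary tree of depth $\leq k(\epsilon)$ with $\leq 2^{k(\epsilon)}$ root-to-leaf paths, and running Hedge-style exponential weights with one expert per path gives the standard $O(\sqrt{T \cdot k(\epsilon)})$ regret on critical-region rounds, while the complementary rounds contribute $\leq \epsilon T$ as in the realizable analysis; the extra $\sqrt{\log T}$ factor comes from learning-rate tuning (or a union bound over dyadic scales of $\epsilon$).

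The main obstacle is the agnostic upper bound: because the tree's branches are revealed online, the expert set is not fixed at the start of the game but must be instantiated adaptively as the sequence of critical-region samples unfolds, so the Hedge analysis has to be recast using a sleeping/specialist-experts framework while keeping the effective expert count at $2^{k(\epsilon)}$ without losing the desired rate. A secondary subtlety is that in the realizable level-tracking argument, $k_t$ can occasionally drop by more than one in a single step (when the ``safer'' label also lives in $\Lcal_{<k_t}$), so the mistake counting is cleanest phrased as a potential/amortized argument rather than as a per-step charge.
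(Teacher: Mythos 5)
Your agnostic lower bound and realizable upper bound follow essentially the paper's route and are sound. The other two pieces have genuine gaps.

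\textbf{Realizable lower bound.} Your adversary ``commits in advance to a uniformly random root-to-leaf path and labels descent samples along it,'' with the $\epsilon\log k(\epsilon)$ rate attributed to an unspecified information-theoretic identification argument. The missing ingredient is what happens on the \emph{non-descent} rounds: in the realizable setting those samples must also receive labels consistent with the eventual realizing function, and a label forced by a pre-committed function can collapse the level of the running dataset --- after adding such a pair, $D_{t+1}$ may no longer lie in any $\Lcal_k(\epsilon)$ with $k$ large, so the adversary loses the ability to place $\epsilon$ mass on a critical region and the game ends prematurely. The paper's proof hinges on a halving lemma (\cref{lemma:realizable_adaptive_choice}): for every $D\in\Lcal_k(\epsilon)$ and every $x$, \emph{some} label $y$ keeps $D\cup\{(x,y)\}\in\Lcal_{\floor{k/2}}(\epsilon)$. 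The adversary therefore chooses off-region labels adaptively to preserve (at least half of) the level, and the $\log_2 k(\epsilon)$ in the bound is exactly the number of halvings available --- not a statement about how fast the learner identifies a random path. Without this lemma your construction is not guaranteed to be realizable \emph{and} to sustain $\epsilon/2$-probability mistakes for $\log k(\epsilon)$ rounds.

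\textbf{Agnostic upper bound.} You correctly identify the main obstacle --- the witness tree is data-dependent, so ``one expert per root-to-leaf path'' is not a set of experts that exists before the game starts --- but you leave it unresolved, deferring to a sleeping/specialist-experts framework whose analysis you do not carry out. The paper avoids the issue entirely by changing the index set: experts are parameterized by subsets $S\subseteq[T]$ of \emph{mistake times} with $|S|\leq\min(k(\epsilon),T)$, a collection fixed in advance of size at most $T^{k+1}$. Each expert $E(S)$ simulates the realizable level-tracking algorithm online and flips its own prediction into its internal dataset exactly at the times in $S$; the expert tracking the disagreements with the best-in-hindsight $f^\star$ incurs at most $\epsilon T+O(k\log T)$ excess mistakes, and vanilla Hedge over $T^{k+1}$ experts contributes $\sqrt{Tk\log T}$ --- which is also where the $\log T$ factor actually comes from (it is $\log|\Scal_E|=(k+1)\log T$, not learning-rate tuning over dyadic scales of $\epsilon$). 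Your claimed $2^{k(\epsilon)}$-expert count and $O(\sqrt{Tk})$ rate would be an improvement on the theorem, but as stated the construction is not well-defined.
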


While these minimax regret bounds are not tight in general, it turns out that for all previously considered distribution classes in the smoothed online literature, the regret upper bounds from \cref{thm:quantitative_charact} are tight up to logarithmic factors in $T$. We refer to \cref{subsec:smooth_classes} for precise comparisons and statements.

We now give some details about the learners which achieve the upper bounds in \cref{thm:quantitative_charact}. We focus on the realizable case, for which the procedures are simpler. For convenience, we refer to the level of a dataset $D\in\Dcal$ as the largest index $k$ for which $D$ belongs to the corresponding set: $D\in\Lcal_k(\epsilon)$.
To achieve approximately $\epsilon$ average error, it suffices for the learner to choose at $t$ the label $y$ for which $D_t(y):=\{(x_s,y_s),s<t\}\cup\{(x_t,y)\}$ has the largest level. This ensures that in most cases, if the learner made a mistake at time $t$, the level of the dataset $D_{t+1}:=\set{(x_s,y_s), s\leq t}$ is strictly reduced compared to the level of $D_t:=\set{(x_s,y_s),s<t}$.

These procedures are similar to the algorithms in the fully adversarial setting for function classes with finite Littlestone dimension \cite{littlestone1988learning,ben2009agnostic}. In fact, we can check that in this case $\Ucal=\{\text{all distributions on }\Xcal\}$, the procedure described above exactly corresponds to the realizable algorithm from \cite{littlestone1988learning} and that the level of a dataset $D$ corresponds to the Littlestone dimension of the corresponding version space.

\section{Instantiations of Learnability Conditions}
\label{sec:examples}

\paragraph{Overview.} In this section, we detail some consequences and instantiations of our general characterizations of learnability. As a sanity check, we first confirm below that our general characterizations recover classical results for fully adversarial and stochastic settings. We then show that our characterizations also recover results from the smoothed online learning literature in \cref{subsec:smooth_classes}. Last, we instantiate our results for function classes with VC dimension 1 in \cref{subsec:VC1_classes} and for linear classifiers in \cref{subsec:linear_classifiers}.

\paragraph{Fully Adversarial Setting.}
We first consider the simple case when $\Fcal$ is finite or has finite Littlestone dimension. When $\Ucal=\{\text{all distributions $\mu$ on $\Xcal$}\}$, we can check that the sets $\Lcal_k(\epsilon)$ are invariant with $\epsilon\in(0,1]$ and exactly corresponds to the set of datasets $D\in\Dcal$ whose corresponding version space has Littlestone dimension at least $k$. In particular, for any $\epsilon\in(0,1]$, $k(\epsilon;\Fcal,\Ucal)$ is exactly the Littlestone dimension of $\Fcal$ (which is bounded by $\log_2|\Fcal|$ is $\Fcal$ is finite). Then, \cref{thm:qualitative_charact} recovers the known result that when $\Fcal$ has finite Littlestone dimension, all distribution classes are learnable for adaptive (a fortiori oblivious) adversaries in both realizable and agnostic settings \cite{littlestone1988learning,ben2009agnostic}. Further, from the above discussion, by taking $\epsilon=1/T$, the minimax regret upper bounds from \cref{thm:quantitative_charact} exactly recover the (tight) convergence rates for adversarial online learning with finite Littlestone function classes from the same papers.

\paragraph{Stochastic Setting.}
Next, we consider the standard stochastic setting where $\Ucal=\{\mu_0\}$ for some distribution $\mu_0$ on $\Xcal$ and $\Fcal$ has finite VC dimension. Using Sauer-Shelah's lemma \cite{sauer1972density,shelah1972combinatorial}, we can show the following bound on the quantities $k(\epsilon;\Fcal,\{\mu_0\})$, used in the characterization of learnability in \cref{thm:qualitative_charact,thm:quantitative_charact}.

\begin{lemma}\label{lemma:bound_k_stochastic_setting}
    Let $\Fcal$ be a function class on $\Xcal$ with finite VC dimension $d\geq 1$. Then for any $\epsilon\in(0,1]$,
    \begin{equation*}
        k(\epsilon;\Fcal,\{\mu_0\}) \lesssim d\log \frac{2d}{\epsilon}.
    \end{equation*}
\end{lemma}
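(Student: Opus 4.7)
The strategy is to reduce bounding $k(\epsilon;\Fcal,\{\mu_0\})$ to bounding the Littlestone dimension of a suitable finite approximation of $\Fcal$. The bound $d\log(2d/\epsilon)$ looks like $\log_2$ of Haussler's covering bound $(c/\epsilon)^d$ for VC classes, so I plan to exhibit a Littlestone-type shattered tree for such a finite cover whose depth matches $k(\epsilon)$.

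First I would unroll the recursive definition of $\Lcal_k(\epsilon)$. If $k(\epsilon;\Fcal,\{\mu_0\}) \geq k$, there exists $D_0 \in \Lcal_k(\epsilon)$, and iterating the definition produces a binary tree of depth $k$: at each internal node $v$ of depth $i$ there is a dataset $D_v$ (extending $D_0$) and a point $x_v \in B_{k-i}(D_v;\epsilon)$ such that both children $D_v \cup \{(x_v,0)\}$ and $D_v \cup \{(x_v,1)\} $ lie in $\Lcal_{k-i-1}(\epsilon)$. Every leaf corresponds to a realizable dataset witnessed by some $f_\ell \in \Fcal$, and, crucially, at every internal node $v$ the critical region has $\mu_0$-mass $\geq \epsilon$.

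Next, by Haussler's covering bound---a consequence of Sauer--Shelah \cite{haussler1995sphere}---there is an $(\epsilon/c)$-cover $\Ncal \subseteq \Fcal$ in $L^1(\mu_0)$ of size $|\Ncal| \lesssim (d/\epsilon)^d$. Associating each leaf function $f_\ell$ with its nearest net element $h_\ell \in \Ncal$ gives $\mu_0(h_\ell \neq f_\ell) \leq \epsilon/c$. The key step is to turn the original tree into a Littlestone-shattered tree for the finite class $\Ncal$: at each internal node $v$, we want a branching point $x'_v \in B_v$ on which the net representatives inherit the shattering structure, i.e., for each pair of sibling leaves $\ell_0,\ell_1$ descending from $v$ via its left/right edges, $h_{\ell_0}(x'_v)=f_{\ell_0}(x'_v)=0$ and $h_{\ell_1}(x'_v)=f_{\ell_1}(x'_v)=1$. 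Once this embedding succeeds, any finite class of size $M$ has Littlestone dimension $\leq \log_2 M$, so we obtain $k \leq \log_2 |\Ncal| \lesssim d\log(d/\epsilon)$, which is the claimed bound.

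The main obstacle is this embedding step. A naive union bound over the $2^{k-i}$ leaf descendants of a node $v$ at depth $i$ yields an excess error mass of $2^{k-i}\cdot(\epsilon/c)$, which dwarfs the available budget $\mu_0(B_v)\geq\epsilon$ for nodes near the root. Overcoming this likely requires a delicate inductive argument that builds the embedding level by level from the leaves upward (so that at each application, one only unions over two children rather than over all descendant leaves), or a refined averaging that exploits the fact that many descendant functions agree on long prefixes and hence their nearest net elements are correlated rather than independent. An alternative route that avoids the union-bound blowup is to strengthen the packing claim directly: show by induction on $k$ that $D \in \Lcal_k(\epsilon)$ implies $\Fcal_D$ contains an $\Omega(\epsilon)$-packing of size $\geq 2^k$ in $L^1(\mu_0)$, which, combined with Haussler's packing bound $\leq (c/\epsilon)^d$, yields the same conclusion.
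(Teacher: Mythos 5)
There is a genuine gap, and it is more structural than the quantitative union-bound issue you identify. The recursive definition of $\Lcal_k(\epsilon)$ only guarantees that at each node the set of \emph{admissible branching points} $B_{k-i}(D_v;\epsilon)$ has $\mu_0$-mass at least $\epsilon$; it gives no lower bound whatsoever on the $L^1(\mu_0)$-distance between the leaf functions $f_\ell$, which need only disagree at the finitely many specific branching points along their paths, and these may form a $\mu_0$-null set. Consequently both of your routes fail: an $(\epsilon/c)$-cover $\Ncal$ can collapse two sibling-subtree leaf functions onto the same net element $h$, which then cannot take both labels at $x_v$, so no choice of $x'_v$ rescues the embedding; and the strengthened packing claim is simply false. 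For a counterexample to the latter, take $\Xcal=[0,1]$, $\mu_0$ Lebesgue, and $\Fcal=\{\1_{\{x\}}:x\in[0,1]\}\cup\{0\}$ (VC dimension $1$): here $B_1(\emptyset;\epsilon)=[0,1]$ so $\emptyset\in\Lcal_1(\epsilon)$ and $k(\epsilon)\geq 1$, yet every pair of functions in $\Fcal$ is at $L^1(\mu_0)$-distance $0$, so there is no $2$-point $\Omega(\epsilon)$-packing.

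The missing idea is to randomize the branching points rather than to separate the leaf functions in $L^1(\mu_0)$. The paper draws $T=\lceil 2k(\epsilon)/\epsilon\rceil$ i.i.d.\ samples from $\mu_0$ and, for each label string $y\in\{0,1\}^{k(\epsilon)}$, defines stopping times $t_y(l)$ as the first sample landing in the current critical region $B_{k(\epsilon)-l+1}(D_y(l);\epsilon)$; since each such region has mass at least $\epsilon$, each waiting time is stochastically dominated by a geometric, and in expectation at least half of the $2^{k(\epsilon)}$ strings complete their path within $T$ samples. Fixing such a realization, each completed string yields a realizable dataset and hence a function $f_y\in\Fcal$, and distinct strings produce distinct restrictions of these functions to $\{x_1,\ldots,x_T\}$ (they first diverge at a common sampled branching point). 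So $\Fcal$ induces at least $2^{k(\epsilon)-1}$ behaviors on $T$ points, and Sauer--Shelah bounds this by $2T^d$, giving $k(\epsilon)\lesssim d\log\frac{2d}{\epsilon}$. This replaces covering/packing in $L^1(\mu_0)$ by counting traces on an empirical sample, which is exactly what the mass-$\geq\epsilon$ hypothesis is able to control.
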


This is proved in \cref{sec:proof_consequences}.
As a result, \cref{thm:qualitative_charact} recovers that all stochastic settings $(\Fcal,\{\mu_0\})$ are learnable for adaptive adversaries when $\Fcal$ has finite VC dimension (for both realizable and agnostic settings). Further, taking $\epsilon=1/T$, the minimax regret upper bounds from \cref{thm:quantitative_charact} exactly recover known (tight) convergence rates in this stochastic setting up to $\log T$ factors \cite{vapnik1971uniform,vapnik1974theory,valiant1984theory}.  

\subsection{Generalized Smoothed Settings}
\label{subsec:smooth_classes}

As a consequence of our general characterizations, we can 
recover known results on the learnability of smoothed distribution classes \cite{block2022smoothed,haghtalab2022oracle,haghtalab2024smoothed}. 
We recall that in smoothed settings, one focuses on distribution classes of the form $\Ucal_{\mu_0,\sigma}=\{\mu: \mu \ll \mu_0, \ \|\frac{d\mu}{d\mu_0}\|_\infty \leq \frac{1}{\sigma}\}$ for some base measure $\mu_0$ on $\Xcal$ and a smoothness parameter $\sigma> 0$. Subsequently, \cite{block2023sample} considered relaxations of this smoothed distribution classes in which $\|\frac{d\mu}{d\mu_0}\|_\infty$ is replaced by an $f$-divergence $\text{div}_f(\mu\parallel \mu_0)$. Conveniently, all these distribution classes can be either written as or included within \emph{generalized smoothed} distribution classes as defined below.

\begin{definition}
    Fix any base measure $\mu_0$ on $\Xcal$ and a function $\rho:[0,1]\to \Rbb_+$ satisfying $\lim_{\epsilon\to 0}\rho(\epsilon)=0$, and $\Bcal \subseteq \Sigma$ a set of measurable subsets of $\Xcal$. We define the corresponding \textbf{generalized smoothed distribution class} via
    \begin{equation}\label{eq:def_extended_smoothed_distribution_class}
        \Ucal_{\mu_0,\rho}(\Bcal):=\set{\mu: \forall B\in\Bcal,\, \mu(B)\leq \rho(\mu_0(B))}.
    \end{equation}
\end{definition}

\begin{remark}[Previous smoothed settings are captured]
    Smoothed settings $\Ucal_{\mu_0,\sigma}$ correspond to taking functions $\rho:\epsilon\mapsto \epsilon/\sigma$ and all measurable sets $\Bcal=\Sigma$. We can also check that the divergence-based distribution classes considered in \cite{block2021majorizing} can be generalized by these generalized smoothed classes by taking all measurable sets $\Bcal=\Sigma$ and choosing $\rho$ appropriately (see \cref{lemma:generalized_smoothness} in \cref{subsec:extension_smoothed_proofs} for a quick proof).
\end{remark}

 Note that the generalized smoothed distribution classes are richer as the set of constraints $\Bcal$ is smaller. In particular, one would aim to restrict the set of constraints $\Bcal$ depending on the considered function class $\Fcal$. Such a natural choice is to focus on disagreement regions: $\Bcal^{\text{pair}}(\Fcal):=\set{\{f\neq g\}: f,g\in\Fcal}$. For convenience, we will write $\Ucal^{\text{pair}}_{\mu_0,\rho}(\Fcal):=\Ucal_{\mu_0,\rho}(\Bcal^{\text{pair}}(\Fcal))$. We start with the case where $\Bcal=\Sigma$ then turn to disagreement regions.

\paragraph{First Generalization: $\Bcal=\Sigma$.}
In previous works on smoothed settings, one of the major underlying steps is to reduce the problem of learning under smoothed adversaries to the stochastic case where the instances are sampled according to the base measure. This is the intuition behind the coupling lemma from \cite[Theorem 2.1]{haghtalab2024smoothed}, and the rejection sampling argument from \cite{block2023sample}.
As a direct consequence of \cref{thm:qualitative_charact}, we can explicit this reduction to the stochastic setting in a unified and simplified manner as detailed below.\footnote{The reduction can be immediately extended to show that if a distribution class $\Ucal_0$ is adaptive learnable then so are any smoothened versions of this distribution class, e.g., of the form $\{\mu: \exists \mu_0\in\Ucal, \,\forall B\in\Sigma, \mu(B)\leq \rho(\mu_0(B))\}$.}

\begin{proposition}\label{prop:extended_smooth_learnable}
    Let $\Fcal$ be a function class on $\Xcal$. For any distribution $\mu_0$ on $\Xcal$ and any non-decreasing $\rho:[0,1]\to \Rbb_+$ with $\lim_{\epsilon\to 0}\rho(\epsilon)=0$, $(\Fcal,\Ucal_{\mu_0,\rho}(\Sigma))$ is learnable for adaptive adversaries whenever $(\Fcal,\{\mu_0\})$ is adaptive learnable. 
    
    Further, for any $\epsilon\in(0,1]$, letting $\rho^{-1}(\epsilon):=\sup\{\delta\in(0,1]: \rho(\delta)<\epsilon\}$, we have $k(\epsilon;\Fcal,\Ucal_{\mu_0,\rho}(\Sigma)) \leq k(\rho^{-1}(\epsilon);\Fcal,\{\mu_0\})$.
\end{proposition}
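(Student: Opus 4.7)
The plan is to reduce both assertions of the proposition to the quantitative bound on $k$: once we establish $k(\epsilon;\Fcal,\Ucal_{\mu_0,\rho}(\Sigma))\le k(\rho^{-1}(\epsilon);\Fcal,\{\mu_0\})$ for every $\epsilon>0$, the learnability statement follows by invoking \cref{thm:qualitative_charact} twice---once to pass from learnability of $(\Fcal,\{\mu_0\})$ to finiteness of $k(\delta;\Fcal,\{\mu_0\})$ for every $\delta>0$, and once more to pass from finiteness of $k(\epsilon;\Fcal,\Ucal_{\mu_0,\rho}(\Sigma))$ back to learnability. The assumption $\lim_{\delta\to 0}\rho(\delta)=0$ ensures that for each $\epsilon>0$ there is some $\delta>0$ with $\rho(\delta)<\epsilon$, so $\rho^{-1}(\epsilon)>0$ and this substitution is legitimate.

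The key elementary fact I would record first is that for any $\mu\in\Ucal_{\mu_0,\rho}(\Sigma)$ and $B\in\Sigma$, one has $\mu(B)\le\rho(\mu_0(B))$ by definition; taking the supremum over $\mu\in\Ucal_{\mu_0,\rho}(\Sigma)$ therefore gives $\sup_{\mu\in\Ucal_{\mu_0,\rho}(\Sigma)}\mu(B)\le\rho(\mu_0(B))$. Hence whenever this supremum is at least $\epsilon$, we have $\rho(\mu_0(B))\ge\epsilon$, and monotonicity of $\rho$ forces $\mu_0(B)\ge\rho^{-1}(\epsilon)$ (otherwise, by definition of $\rho^{-1}$ there would exist $\delta$ with $\mu_0(B)<\delta$ and $\rho(\delta)<\epsilon$, contradicting $\rho(\mu_0(B))\le\rho(\delta)$).

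Next I would prove by induction on $k\ge 0$ the inclusion
\begin{equation*}
\Lcal_k(\epsilon;\Fcal,\Ucal_{\mu_0,\rho}(\Sigma))\subseteq\Lcal_k(\rho^{-1}(\epsilon);\Fcal,\{\mu_0\}).
\end{equation*}
The case $k=0$ is immediate, since $\Lcal_0$ consists of realizable datasets and depends neither on $\epsilon$ nor on the distribution class. For the induction step, the recursion $B_k(D;\epsilon)=\{x:D\cup\{(x,0)\},D\cup\{(x,1)\}\in\Lcal_{k-1}(\epsilon)\}$ is monotone in $\Lcal_{k-1}$, so the inductive hypothesis yields a corresponding inclusion of critical regions computed relative to $\Ucal_{\mu_0,\rho}(\Sigma)$ at level $\epsilon$ into those computed relative to $\{\mu_0\}$ at level $\rho^{-1}(\epsilon)$. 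Given $D\in\Lcal_k(\epsilon;\Fcal,\Ucal_{\mu_0,\rho}(\Sigma))$, the elementary fact above applied to this critical region yields $\mu_0$-mass at least $\rho^{-1}(\epsilon)$, which by the region inclusion transfers to the $\{\mu_0\}$-critical region at level $\rho^{-1}(\epsilon)$, placing $D$ in $\Lcal_k(\rho^{-1}(\epsilon);\Fcal,\{\mu_0\})$ as required.

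The resulting chain of inclusions immediately gives the announced bound $k(\epsilon;\Fcal,\Ucal_{\mu_0,\rho}(\Sigma))\le k(\rho^{-1}(\epsilon);\Fcal,\{\mu_0\})$, and the learnability claim then follows as outlined in the first paragraph. I do not anticipate a substantive obstacle: the only delicate point is the careful manipulation of the generalized inverse $\rho^{-1}$, which relies on $\rho$ being non-decreasing and vanishing at $0$, while the remainder is a clean monotone induction along the critical-region hierarchy of \cref{sec:adaptive_adversaries}.
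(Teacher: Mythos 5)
Your proposal is correct and follows essentially the same route as the paper's proof: the same observation that $\sup_{\mu\in\Ucal_{\mu_0,\rho}(\Sigma)}\mu(B)\geq\epsilon$ forces $\mu_0(B)\geq\rho^{-1}(\epsilon)$, the same inductive inclusion $\Lcal_k(\epsilon)\subseteq\widetilde\Lcal_k(\rho^{-1}(\epsilon))$, and the same appeal to \cref{thm:qualitative_charact} to conclude learnability. Your write-up merely makes explicit the monotonicity of the critical-region recursion and the handling of the generalized inverse, which the paper leaves implicit.
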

    
\begin{proof}
    Fix $\epsilon>0$. By definition of $\Ucal_{\mu_0,\rho}(\Sigma)$, if for some measurable set $B\in\Sigma$ one has $\sup_{\mu\in\Ucal}\mu(B) \geq \epsilon$, then $ \rho(\mu_0(B))\geq \epsilon$ and hence $\mu_0(B)\geq \rho^{-1}(\epsilon)$. Then, for all $k\geq 0$, $\Lcal_k(\epsilon)\subseteq \widetilde\Lcal_k(\rho^{-1}(\epsilon))$ where $\widetilde\Lcal$ refers to the corresponding sets for the distribution class $\{\mu_0\}$. In particular, $k(\epsilon;\Fcal,\Ucal_{\mu_0,\rho}(\Sigma)) \leq k(\rho^{-1}(\epsilon);\Fcal,\{\mu_0\})$. Therefore, \cref{thm:qualitative_charact} implies that if the distribution class $\{\mu_0\}$---which corresponds to a stochastic setting---is learnable for $\Fcal$, then $\Ucal$ is also learnable.
\end{proof}

\comment{
\begin{corollary}\label{cor:extended_smooth_learnable}
    Let $\Fcal$ be a function class and $\Ucal_0$ a distribution class on $\Xcal$. Fix a any non-decreasing function $\rho:[0,1]\to \Rbb_+$ with $\lim_{\epsilon\to 0}\rho(\epsilon)=0$. We consider the smoothened distribution class
    \begin{equation*}
        \Ucal:=\set{\mu \text{ on }\Xcal: \exists \mu_0\in\Ucal_0: \forall B\in\sigma(\Fcal), \mu(B)\leq \rho(\mu_0(B))}.
    \end{equation*}
    
    Then, $(\Fcal,\Ucal)$ is learnable for adaptive (in particular oblivious) adversaries whenever $(\Fcal,\Ucal_0)$ is learnable for adaptive adversaries.
\end{corollary}
}

Importantly, this recovers the fact that for function classes $\Fcal$ with finite VC dimension, generalized-smoothed distribution classes are learnable for adaptive (a fortiori oblivious) adversaries in both agnostic and realizable settings as detailed in the following.

\begin{corollary}[Learnability of generalized smoothed settings]
    If $\Fcal$ has finite VC dimension, since all stochastic settings $(\Fcal,\{\mu_0\})$ are learnable, \cref{prop:extended_smooth_learnable} shows that all generalized-smoothed distribution classes are learnable. I particular $(\Fcal,\{\mu_0\})$
\end{corollary}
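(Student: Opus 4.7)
The plan is to combine Proposition~\ref{prop:extended_smooth_learnable} with the known learnability of stochastic settings for finite-VC classes, then translate the resulting bound on $k(\epsilon;\Fcal,\Ucal_{\mu_0,\rho}(\Sigma))$ into learnability via Theorem~\ref{thm:qualitative_charact}. Concretely, for a function class $\Fcal$ with VC dimension $d < \infty$ and any distribution $\mu_0$ on $\Xcal$, Lemma~\ref{lemma:bound_k_stochastic_setting} gives $k(\epsilon;\Fcal,\{\mu_0\}) \lesssim d\log(2d/\epsilon) < \infty$ for all $\epsilon > 0$, so Theorem~\ref{thm:qualitative_charact} already establishes that $(\Fcal,\{\mu_0\})$ is adaptive-learnable (in both realizable and agnostic settings).

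Next, I would invoke Proposition~\ref{prop:extended_smooth_learnable}, which asserts that for any non-decreasing $\rho:[0,1]\to\Rbb_+$ with $\lim_{\epsilon\to 0}\rho(\epsilon)=0$, one has the dimension transfer
\begin{equation*}
    k(\epsilon;\Fcal,\Ucal_{\mu_0,\rho}(\Sigma)) \;\leq\; k(\rho^{-1}(\epsilon);\Fcal,\{\mu_0\}),
\end{equation*}
where $\rho^{-1}(\epsilon) := \sup\{\delta\in(0,1]: \rho(\delta) < \epsilon\}$. Plugging in the bound from Lemma~\ref{lemma:bound_k_stochastic_setting} at level $\rho^{-1}(\epsilon)$ yields
\begin{equation*}
    k(\epsilon;\Fcal,\Ucal_{\mu_0,\rho}(\Sigma)) \;\lesssim\; d\log\!\paren{\frac{2d}{\rho^{-1}(\epsilon)}}.
\end{equation*}

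It then remains to verify that this quantity is finite for every $\epsilon > 0$, which reduces to checking $\rho^{-1}(\epsilon) > 0$. This is where the assumption $\lim_{\delta\to 0}\rho(\delta)=0$ is used: for any fixed $\epsilon > 0$ there exists some $\delta_0 > 0$ with $\rho(\delta_0) < \epsilon$, so $\rho^{-1}(\epsilon)\geq \delta_0 > 0$. Hence $k(\epsilon;\Fcal,\Ucal_{\mu_0,\rho}(\Sigma)) < \infty$ for all $\epsilon > 0$, and Theorem~\ref{thm:qualitative_charact} concludes that $(\Fcal,\Ucal_{\mu_0,\rho}(\Sigma))$ is learnable for adaptive adversaries, in both the realizable and agnostic settings. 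Since oblivious adversaries are weaker than adaptive ones, oblivious learnability follows a fortiori.

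The main obstacle is essentially nil: this is a direct chaining of Proposition~\ref{prop:extended_smooth_learnable}, Lemma~\ref{lemma:bound_k_stochastic_setting}, and Theorem~\ref{thm:qualitative_charact}. The only mildly subtle point worth flagging is that $\rho$ need not be continuous or strictly monotone, so $\rho^{-1}$ must be interpreted as a generalized inverse; the limit assumption on $\rho$ is exactly what ensures it returns strictly positive values, which is all the argument needs.
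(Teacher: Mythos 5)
Your proposal is correct and follows exactly the route the paper intends: the corollary is stated as an immediate consequence of chaining \cref{prop:extended_smooth_learnable} (the reduction $k(\epsilon;\Fcal,\Ucal_{\mu_0,\rho}(\Sigma)) \leq k(\rho^{-1}(\epsilon);\Fcal,\{\mu_0\})$) with \cref{lemma:bound_k_stochastic_setting} and the characterization in \cref{thm:qualitative_charact}. Your additional check that $\rho^{-1}(\epsilon)>0$ via the assumption $\lim_{\delta\to 0}\rho(\delta)=0$ is the right (and only) detail that needs verifying.
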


Further, \cref{prop:extended_smooth_learnable} together with \cref{lemma:bound_k_stochastic_setting} imply that the generic upper bounds on the minimax adaptive regret from \cref{thm:quantitative_charact} recover all known regret bounds in smoothed settings up to logarithmic factors. Precisely, in the smoothed setting, the tolerance function is $\rho:\epsilon\mapsto \epsilon/\sigma$ and hence $\rho^{-1}(\epsilon) = \sigma\epsilon$ for $\epsilon\in(0,1]$. The bounds from \cref{thm:quantitative_charact} exactly give $\AdaptR_T(\Fcal,\Ucal_{\mu_0,\sigma}) \lesssim \sqrt{dT\log T\cdot \log\frac{T}{\sigma} }$ and $\AdaptRealizableR_T(\Fcal,\Ucal_{\mu_0,\sigma}) \lesssim d \log\frac{T}{\sigma}$, which are tight up to the $\log T$ factors \cite{block2022smoothed,haghtalab2024smoothed}. We can also check similarly that \cref{thm:quantitative_charact} recovers the bounds from \cite{block2023sample} for smoothed distributions classes with $f$-divergences up to $\log T$ factors. For conciseness, we defer the formal statement and proof to \cref{cor:rate_smooth_divergence} in \cref{subsec:extension_smoothed_proofs}.

\paragraph{Further Generalization: $\Bcal =\Bcal^{\text{pair}}(\Fcal)= \{ \{f\neq g\}: f, g \in \cal F\}$.}
In the previous paragraph we focused on the case where we include all measurable set constraints $\Bcal=\Sigma$ for generalized smoothed distribution classes (see \cref{eq:def_extended_smoothed_distribution_class}), which does not incorporate any potential dependencies between the distribution class and function class $\Fcal$.
In this paragraph, we turn to the case of pairwise-smoothed classes $\Ucal_{\mu_0,\rho}^{\text{pair}}(\Fcal)$ for which we restrict the set of constraints to disagreement regions for $\Fcal$. Naturally, these may be significantly richer distribution classes than for $\Bcal=\Sigma$.

Using \cref{thm:oblivious_adversaries}, we can check that these remain learnable for oblivious adversaries.

\begin{proposition}\label{prop:pairwise_oblivious}
    Let $\Fcal$ be a function class with finite VC dimension on $\Xcal$. For any distribution $\mu_0$ on $\Xcal$ and any non-decreasing $\rho:[0,1]\to \Rbb_+$ with $\lim_{\epsilon\to 0}\rho(\epsilon)=0$, $(\Fcal,\Ucal_{\mu_0,\rho}^{\text{pair}}(\Fcal))$ is learnable for oblivious adversaries.
\end{proposition}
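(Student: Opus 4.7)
The plan is to apply \cref{thm:oblivious_adversaries}, which reduces the task to showing $\dim(\epsilon; \Fcal, \Ucal) < \infty$ for every $\epsilon > 0$, where $\Ucal := \Ucal_{\mu_0, \rho}^{\text{pair}}(\Fcal)$. The uniform covering condition from \cref{prop:necessary_condition} is automatic since $\Fcal$ has finite VC dimension, so the $\epsilon$-dimension bound is the only thing to verify. A useful preliminary observation is that $\Ucal$ is closed under mixtures: if $\mu = \sum_i \alpha_i \mu_i$ with $\mu_i \in \Ucal$, then for any $B \in \Bcal^{\text{pair}}(\Fcal)$, linearity gives $\mu(B) \leq \sum_i \alpha_i \rho(\mu_0(B)) = \rho(\mu_0(B))$. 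Hence $\bar\Ucal \subseteq \Ucal$, so every distribution labeling an internal node of an interaction tree satisfies the pairwise smoothing constraint.

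The main step would be to upper-bound the depth of any $\epsilon$-shattered tree by assigning to each leaf $\ell$ the function $f_\ell \in \Fcal$ labeling the edge into $\ell$, and then showing these $2^d$ functions are pairwise separated under $\mu_0$. Fixing two distinct leaves $\ell_1, \ell_2$ with least common ancestor $v$, let $f^l, f^r$ be the functions labeling the two child edges of $v$ lying on the paths to $\ell_1$ and $\ell_2$ respectively. The sibling condition gives $\mu_v(\{f^l \neq f^r\}) \geq \epsilon$, while the ancestor condition applied to $f_{\ell_1}$ (resp.\ $f_{\ell_2}$) against its ancestor edge from $v$ yields $\mu_v(\{f_{\ell_1} \neq f^l\}) \leq \epsilon/3$ and $\mu_v(\{f_{\ell_2} \neq f^r\}) \leq \epsilon/3$ (both are trivially zero when the leaf is a direct child of $v$). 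A triangle inequality in the pseudometric $d_{\mu_v}(f,g) := \mu_v(\{f \neq g\})$ then gives $\mu_v(\{f_{\ell_1} \neq f_{\ell_2}\}) \geq \epsilon - 2 \cdot \epsilon/3 = \epsilon/3$. Crucially, $\{f_{\ell_1} \neq f_{\ell_2}\} \in \Bcal^{\text{pair}}(\Fcal)$, and since $\mu_v \in \bar\Ucal \subseteq \Ucal$, the smoothing constraint applies: $\rho(\mu_0(\{f_{\ell_1} \neq f_{\ell_2}\})) \geq \epsilon/3$, hence $\mu_0(\{f_{\ell_1} \neq f_{\ell_2}\}) \geq \epsilon_0 := \inf\{\delta \in [0,1] : \rho(\delta) \geq \epsilon/3\}$, which is strictly positive by the assumption $\lim_{\delta \to 0}\rho(\delta) = 0$.

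The conclusion then follows: the $2^d$ leaf functions form an $\epsilon_0$-packing of $\Fcal$ in the pseudometric $d_{\mu_0}$, so $2^d \leq \Ncov_{\mu_0}(\epsilon_0/2; \Fcal)$, which is finite by Haussler's bound applied to the finite VC dimension class $\Fcal$. Thus $\dim(\epsilon; \Fcal, \Ucal) \leq \log_2 \Ncov_{\mu_0}(\epsilon_0/2; \Fcal) < \infty$, and \cref{thm:oblivious_adversaries} closes the argument.

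The main subtlety is that the ancestor condition in \cref{def:dimension_oblivious} constrains only $\mu_v$-masses and not $\mu_0$-masses, so a naive attempt to embed the shattered tree into a shattered tree for $(\Fcal, \{\mu_0\})$ does not go through. The workaround exploits exactly the structure of $\Bcal^{\text{pair}}(\Fcal)$: one only needs a lower bound on the $\mu_0$-mass of disagreement regions $\{f_{\ell_1} \neq f_{\ell_2}\}$, and these regions lie in $\Bcal^{\text{pair}}$, so the pairwise smoothing constraint applies and transfers the $\mu_v$-separation down to a $\mu_0$-separation, at which point VC theory takes over. This is why the argument succeeds for the constraint family $\Bcal^{\text{pair}}(\Fcal)$ and would not extend verbatim to an arbitrary $\Bcal$.
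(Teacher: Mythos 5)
Your proof is correct and reaches the result by a genuinely different, more direct route than the paper. The paper's proof (in \cref{subsec:extension_smoothed_proofs}) goes through its auxiliary dimensions: it picks $\delta$ with $\rho(\delta)<\epsilon/2$, takes an $(\epsilon,\eta)$-shattered tree with $\eta=\delta\epsilon/6$, restricts each node distribution to the region $B_v$ where its density against $\mu_0$ is at least $\epsilon/2$, and thereby converts the tree into a restricted-region shattered tree for the singleton class $\{\mu_0\}$, proving $\dimu(\epsilon,\eta;\Fcal,\Ucal_{\mu_0,\rho}^{\text{pair}}(\Fcal))\leq\tildedim(\delta;\Fcal,\{\mu_0\})$; finiteness of $\dim(\epsilon;\cdot)$ then follows from the chain of comparisons in \cref{lemma:comparing_various_dimensions} before invoking \cref{thm:oblivious_adversaries}. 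You instead bound $\dim(\epsilon;\Fcal,\Ucal)$ in one shot: the $2^d$ leaf functions of an $\epsilon$-shattered tree are pairwise $\epsilon/3$-separated under the distribution at their least common ancestor (triangle inequality on the two conditions of \cref{def:dimension_oblivious}), mixtures inherit the pairwise constraint, and since each set $\{f_{\ell_1}\neq f_{\ell_2}\}$ lies in $\Bcal^{\text{pair}}(\Fcal)$ this $\mu_v$-separation transfers to a $\mu_0$-separation of at least $\epsilon_0:=\inf\{\delta:\rho(\delta)\geq\epsilon/3\}>0$, so the leaf functions form a $\mu_0$-packing and $d\leq\log_2\Ncov_{\mu_0}(\epsilon_0/3;\Fcal)<\infty$ by Haussler's bound. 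Your route is shorter, avoids both the density-restriction step and the $N_\eta$-dependent blow-ups of \cref{lemma:comparing_various_dimensions}, and yields a clean explicit bound on $\dim(\epsilon;\Fcal,\Ucal)$; the paper's route buys reuse of machinery it needs elsewhere. Two cosmetic points: a family with pairwise distances $\geq\epsilon_0$ need not inject into an $\epsilon_0/2$-cover (equality can occur in the triangle inequality), so take the cover at radius $\epsilon_0/3$ rather than $\epsilon_0/2$; and your application of the ancestor condition to the edge directly above a leaf's parent reads ``ancestor'' reflexively in \cref{def:dimension_oblivious}, which is the convention the paper uses in the parallel \cref{def:dimension_oblivious_small_tolerance}. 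Neither affects correctness.
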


The proof is given in \cref{subsec:extension_smoothed_proofs}.
Pairwise-smoothed distribution classes are not learnable in full generality for adaptive adversaries (even for finite VC dimension), however for simple function classes this may be the case. As an example, for the function class of thresholds on $[0,1]$: $\Fcal_{\text{threshold}}:=\{\1[\cdot \leq x],x\in[0,1]\}$, learnable distribution classes essentially correspond to pairwise-smoothed classes (and oblivious and adaptive adversaries coincide). The proof of the following result is given in \cref{subsec:linear_classifiers_proof}.

\begin{proposition}\label{prop:thresholds}
    A distribution class $\Ucal$ is learnable for $\Fcal_{\text{threshold}}$ (against either adaptive or oblivious adversaries) if and only if there exists some distribution $\mu_0$ on $\Xcal$, and a tolerance function $\rho:[0,1]\to\Rbb_+$ with $\lim_{\epsilon\to 0}\rho(\epsilon)=0$, such that
    $\Ucal \subset \Ucal_{\mu_0,\rho}^{pair}(\Fcal_{\text{threshold}})$.
\end{proposition}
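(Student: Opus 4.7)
I will handle the two directions separately. The $(\Leftarrow)$ direction follows from earlier results, while the $(\Rightarrow)$ direction requires constructing $(\mu_0,\rho)$ from the finite-dimension condition, which I expect to be the main obstacle.

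For the $(\Leftarrow)$ direction, suppose $\Ucal \subset \Ucal_{\mu_0,\rho}^{pair}(\Fcal_{\text{threshold}})$. Oblivious learnability follows immediately from \cref{prop:pairwise_oblivious} since thresholds have VC dimension $1$. For adaptive learnability, I apply \cref{thm:qualitative_charact} after bounding $k(\epsilon):=k(\epsilon;\Fcal_{\text{threshold}},\Ucal)<\infty$ for every $\epsilon>0$. A realizable dataset for $\Fcal_{\text{threshold}}$ has version space equal to an interval, and a direct induction on the definition of $\Lcal_k(\epsilon)$ shows that if such a dataset lies in $\Lcal_k(\epsilon)$, then its version space contains $2^{k-1}$ disjoint sub-intervals each satisfying $\sup_{\mu\in\Ucal}\mu(\cdot)\geq\epsilon$. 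By pairwise smoothness each such sub-interval has $\mu_0$-mass at least $\rho^{-1}(\epsilon):=\sup\{\delta:\rho(\delta)<\epsilon\}$, so $2^{k-1}\rho^{-1}(\epsilon)\leq 1$ and $k(\epsilon)\leq 1+\log_2(1/\rho^{-1}(\epsilon))$.

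For the $(\Rightarrow)$ direction, suppose $\Ucal$ is oblivious learnable, which is implied by adaptive learnability. By \cref{thm:oblivious_adversaries}, $\dim(\epsilon;\Fcal_{\text{threshold}},\Ucal)<\infty$ for every $\epsilon>0$. The key step is the following lemma: for each $\epsilon>0$ there exist $\mu_0^{(\epsilon)}\in\bar\Ucal$ and $\delta_\epsilon>0$ such that $\mu_0^{(\epsilon)}(I)\geq\delta_\epsilon$ for every interval $I$ with $\sup_{\mu\in\Ucal}\mu(I)\geq\epsilon$. Given the lemma, one sets $\mu_0:=\sum_{n\geq 1}2^{-n}\mu_0^{(1/n)}\in\bar\Ucal$ and picks any continuous nondecreasing $\rho$ with $\rho(2^{-n}\delta_{1/n})\leq 2/n$ and $\rho(\epsilon)\to 0$: for $\mu\in\Ucal$ and interval $I$ with $\mu(I)\geq 1/n$ the lemma yields $\mu_0(I)\geq 2^{-n}\delta_{1/n}$, hence $\rho(\mu_0(I))\geq 2/n\geq\mu(I)$, giving $\Ucal\subset\Ucal_{\mu_0,\rho}^{pair}(\Fcal_{\text{threshold}})$.

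The lemma itself is the main obstacle and I would prove it by contradiction. Assuming failure at some $\epsilon>0$, I iteratively construct a sequence $(\mu^{(j)},I^{(j)})_{j\geq 1}$ by applying the failure at step $j$ to $\nu_j:=j^{-1}\sum_{i<j}\mu^{(i)}\in\bar\Ucal$ with tolerance $\delta_j:=\epsilon/(3j\cdot 2^j)$, obtaining $\mu^{(j)}(I^{(j)})\geq\epsilon$ while $\mu^{(i)}(I^{(j)})\leq\epsilon/(3\cdot 2^j)$ for $i<j$. From this sequence I would build an interaction tree of arbitrary depth $d$, contradicting finite $\dim(\epsilon)$, by assigning $\mu_v:=\mu^{(j(v))}$ at each node $v$ with thresholds at the endpoints of (a suitable sub-interval of) $I^{(j(v))}$. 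The internal disagreement condition is immediate, while the ancestor condition is the subtle point: the interval between a descendant threshold and an ancestor threshold can a priori sweep through regions of uncontrolled ancestor mass, and this is handled by placing the $I^{(j)}$ in a nested or ordered fashion coherent with the binary tree and trimming each $I^{(j)}$ to a sub-interval on which all previous $\mu^{(i)}$ have small mass.
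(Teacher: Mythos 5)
Your $(\Leftarrow)$ direction is sound: \cref{prop:pairwise_oblivious} gives the oblivious case, and your counting argument (a dataset in $\Lcal_k(\epsilon)$ forces $2^{k-1}$ disjoint $\epsilon$-heavy intervals, each of $\mu_0$-mass at least $\rho^{-1}(\epsilon)$, hence $2^{k-1}\rho^{-1}(\epsilon)\le 1$) correctly bounds $k(\epsilon)$; this is a legitimate alternative to the paper's route through $\Tdim$ and the compressed sets $\widetilde\Lcal_k(\epsilon)$. One small correction in the other direction: to conclude $\rho(\mu_0(I))\geq 2/n$ you need $\rho(2^{-n}\delta_{1/n})\geq 2/n$, not $\leq$; such a nondecreasing $\rho$ with $\rho(0^+)=0$ exists because both sequences tend to $0$.

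The genuine gap is in the $(\Rightarrow)$ direction, exactly where you flag it. Your key lemma is true, but the argument you sketch does not establish it. The sequence $(\mu^{(j)},I^{(j)})$ only satisfies an orthogonality property---each $I^{(j)}$ is nearly null for all earlier witnesses---and nothing forces the $I^{(j)}$ to be disjoint, nested, or ordered compatibly with a binary tree: they may overlap arbitrarily, shrink to a common point, or accumulate on a nondegenerate interval with the witnessing mass escaping to an endpoint. Converting this into an $\epsilon'$-shattered interaction tree (whose ancestor condition is, for thresholds, essentially a disjoint-chain condition) requires a compactness argument on the endpoints of the $I^{(j)}$ plus a case analysis (mass escaping toward a shrinking endpoint yields, via continuity from below, a chain of disjoint heavy intervals contradicting finite $\Tdim$; mass remaining in the bulk contradicts orthogonality). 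That extraction is the entire content of the lemma, and ``trimming each $I^{(j)}$'' does not supply it. The paper avoids the issue: it first converts finite dimension into the existence, for each $\epsilon$, of a finite partition $x_0<\cdots<x_{k_\epsilon}$ of $[0,1]$ whose open gaps all have $\sup_{\mu\in\Ucal}$-mass at most $\epsilon$, and then takes $\mu_0$ to be a countable mixture of \emph{point masses at the partition points}---in particular $\mu_0$ need not lie in $\bar\Ucal$, so no analogue of your lemma is needed. If you want to retain your stronger lemma with $\mu_0^{(\epsilon)}\in\bar\Ucal$, the clean proof also goes through this partition: every $\epsilon$-heavy interval either contains a partition point carrying an atom of mass at least $\epsilon/8$ under some $\mu\in\Ucal$, or contains a full block $[x_j,x_{j'}]$ of sup-mass at least $\epsilon/2$; a finite mixture of witnesses for these finitely many atoms and blocks then serves as $\mu_0^{(\epsilon)}$.
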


\subsection{Function Classes of VC Dimension 1}
\label{subsec:VC1_classes}

We consider function classes $\Fcal$ with VC dimension 1, whose simplicity allow for a clear interpretation of the characterizations of \cref{sec:oblivious_adversaries,sec:adaptive_adversaries}: namely, such classes admit a simple tree representation \cite{ben20152} in terms of which critical regions for learnability are more concrete. 

Also, of importance as we will later see, VC 1 classes can yield intuition into the learnability of more complex classes such as linear and polynomial classes (see \cref{subsec:linear_classifiers}).  

We recall the following result of \cite{ben20152}.

\begin{definition}[Tree ordering \cite{ben20152}]
    A partial ordering $\preceq$ on $\Xcal$ is a {\bf tree ordering} if for any $x\in\Xcal$, the initial segment $\set{x'\in\Xcal: x'\preceq x}$ is totally ordered by $\preceq$. Additionally, we write $x\prec y$ if and only if $x\preceq y$ and $x\neq y$.
\end{definition}

One of the results from \cite{ben20152} is that functions in a VC dimension 1 function class are initial segments for some tree ordering up to a relabeling.

\begin{theorem}[Tree representation of VC 1 classes \cite{ben20152}]
\label{thm:representation_VC1}
    Let $\Fcal$ be a function class on $\Xcal$. $\Fcal$ has VC dimension 1 if and only if there exists a tree ordering $\preceq$ on $\Xcal$ and a relabeling function $r:\Xcal\to\{0,1\}$ such that every function $f\in \Fcal_r := \{x\mapsto \1[g(x) \neq r(x)], g\in\Fcal\}$ is an initial segment of $\preceq$. That is, if $x\preceq y$ and $f(y)=1$ then $f(x)=1$.
\end{theorem}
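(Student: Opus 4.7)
The plan is to prove both implications separately, with the easy direction (existence of a tree ordering implies VC dimension $1$) serving mostly as a sanity check and the converse (construction of the tree ordering from $\Fcal$) carrying the main content.

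For the forward direction, my approach would be contrapositive. Since relabeling preserves VC dimension, assume $\Fcal_r$ consists of initial segments of a tree ordering $\preceq$ and show that no pair $\{x, y\}$ can be shattered. I would split into cases: if $x \preceq y$, downward closure of $f^{-1}(1)$ forbids the labeling $(0,1)$ on $(x, y)$; if $x$ and $y$ are $\preceq$-incomparable, then any $f$ with $f(x) = f(y) = 1$ would contain both in its initial segment $f^{-1}(1)$, forcing them to lie below a common element whose ancestors form a chain (by the tree-ordering property), hence making $x, y$ comparable---contradiction. Either way at most three of the four labelings are realized, so the VC dimension is at most $1$.

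For the reverse direction, the plan is to define the ordering directly from $\Fcal$. First I would pick any $f_0 \in \Fcal$ and set $r = f_0$, so the constant zero function lies in $\Fcal_r$ and $\Fcal_r$ still has VC dimension $1$. Then define $x \preceq y$ iff every $f \in \Fcal_r$ with $f(y) = 1$ also satisfies $f(x) = 1$. This relation is reflexive and transitive by construction, and becomes a partial order after quotienting by the equivalence $x \sim y$ that holds whenever $\Fcal_r$ cannot distinguish $x$ from $y$; moreover, each $f \in \Fcal_r$ is automatically an initial segment of $\preceq$ by the very definition of $\preceq$.

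The hard part will be verifying the tree property, namely that $\{x' : x' \preceq x\}$ is totally ordered for every $x$. I would argue by contradiction: suppose $x_1, x_2 \preceq x$ but $x_1$ and $x_2$ are $\preceq$-incomparable. The two noncomparabilities yield witness functions $g_1, g_2 \in \Fcal_r$ realizing the labelings $(0,1)$ and $(1, 0)$ on $(x_1, x_2)$; the constant zero function supplies $(0, 0)$; and any $f \in \Fcal_r$ with $f(x) = 1$, together with $x_1, x_2 \preceq x$, supplies $(1, 1)$. This shatters $\{x_1, x_2\}$ and contradicts the VC $1$ hypothesis. The main subtlety to handle carefully is the corner case where no $f \in \Fcal_r$ assigns $1$ to $x$: such ``null'' points can be attached as isolated roots of their own branches or absorbed into the equivalence quotient, since $\Fcal_r$ labels them identically $0$ and they play no role in the representation. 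As a bonus, once the tree property is in place, the same shattering argument applied to two points both lying in $f^{-1}(1)$ shows that $f^{-1}(1)$ is in fact a chain, so each $f \in \Fcal_r$ is an initial segment in the strongest sense.
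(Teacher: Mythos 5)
The paper does not actually prove this statement---it is imported verbatim from \cite{ben20152}---so there is no in-paper argument to compare against. Your proposal reconstructs what is essentially the standard proof from that reference: normalize by a member $f_0$ of the class so that the all-zero function belongs to $\Fcal_r$, define $x\preceq y$ iff every $f\in\Fcal_r$ with $f(y)=1$ also has $f(x)=1$, and use a two-point shattering argument (the $(0,1)$ and $(1,0)$ labelings from incomparability witnesses, $(0,0)$ from the zero function, $(1,1)$ from any function lighting up a common upper bound) to establish the tree property. This is correct, and your closing observation that the same four functions show each $f^{-1}(1)$ is a chain is exactly what makes the representation usable in the rest of the paper.

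Two points deserve more care than your sketch gives them. First, the ``null'' points $x$ with $f(x)=0$ for all $f\in\Fcal_r$ are not merely a harmless corner case under your definition of $\preceq$: for such $x$ the condition $y\preceq x$ holds vacuously for \emph{every} $y$, so the predecessor set of $x$ is all of $\Xcal$ and the tree property genuinely fails there; you must explicitly redefine the order at these points (e.g., make each one comparable only to itself), after checking that no null point ever lies below a non-null point, so that downward closure of each $f^{-1}(1)$ is unaffected. Second, the theorem asks for a partial order on $\Xcal$ itself, so rather than ``quotienting'' by indistinguishability you should refine the preorder by linearly ordering each indistinguishability class arbitrarily; predecessor sets remain chains and the functions remain constant on classes. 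Finally, note that your forward direction tacitly reads ``initial segment'' as a totally ordered downward-closed set (the reading consistent with the paper's figure and with \cite{ben20152}); with the literal ``that is'' clause of the statement---mere downward closure---that direction would be false (all downward-closed subsets of a binary tree shatter a pair of siblings), so it is worth stating the chain property as part of what you prove.
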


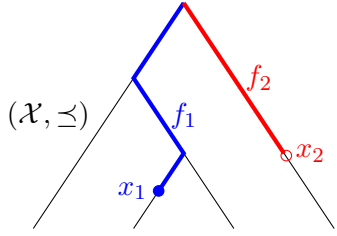
\begin{figure}
    \centering
    \begin{tikzpicture}[scale=1]

\draw (2,-3) -- (0,0) -- (-2,-3);
\draw (-2/3,-1) -- (0,-2);
\draw (-2/3,-3) -- (0,-2) -- (2/3,-3);

\node at (-1.8,-1.5) {$(\Xcal,\preceq)$};

\filldraw[blue] (-1/3,-2.5) circle (2pt);
\draw[blue,ultra thick] (0,0) -- (-2/3,-1) -- node[right] {$f_1$} (0,-2) -- (-1/3,-2.5) node[left] {$x_1$};

\draw[red] (4/3+0.03,-2-0.03) circle (2pt);
\draw[red,ultra thick] (0,0) -- node[right] {$f_2$} (4/3,-2) node[right]{$x_2$};

\end{tikzpicture}
    \caption{Tree representation of a VC 1 function class. For $x,x'\in\Xcal$ we have $x\preceq x'$ if $x'$ is a descendant of $x'$ in the tree (or $x$ itself). Functions have value $1$ on an initial segment of $(\Xcal,\preceq)$ and $0$ elsewhere. We represent two functions $f_1:=\1[\cdot\preceq x_1]$ and $f_2:=\1[\cdot \prec x_2]$.}
    \label{fig:VC_dimension_1}
\end{figure}

We give an illustration of this tree representation in \cref{fig:VC_dimension_1}. As a remark, if $\Xcal$ is complete, this implies that any relabeled function $f\in\Fcal_r$ it either holds that $f(x)= \1[x\preceq x_f]$ or that $f(x)= \1[x\prec x_f]$ for some unique $x_f\in\Xcal$. 
Given \cref{thm:representation_VC1}, we fix the corresponding tree ordering $\preceq$ and assume without loss of generality that all functions $f\in\Fcal$ are initial segments of $\preceq$. Up to deleting some elements of $\Xcal$ we also suppose that for all $x\in\Xcal$, there exists $f,g\in\Fcal$ with $f(x)\neq g(x)$, and for any $x_1,x_2\in\Xcal$ with $x_1\neq x_2$, there exists $f\in\Fcal$ such that $f(x_1)\neq f(x_2)$. We say that $\Fcal$ has been pruned if it satisfies these three conditions, and we identify $\Fcal$ with its pruned function class (which does not affect learning).

\paragraph{Oblivious Adversaries.} We can use this tree representation to easily interpret the characterization from \cref{thm:oblivious_adversaries} for learnable distribution classes. Informally, for VC-dimension-1 classes, this condition essentially asks that one cannot embed a large threshold class within $(\Fcal,\Ucal)$. To be more precise, we introduce an analogue of the \emph{threshold dimension} \cite{shelah1990classification,hodges1997shorter} for distributionally-constrained online learning.

\begin{definition}\label{eq:simplified_dim_VC_1}
    Let $\preceq$ be a tree ordering on $\Xcal$. For any $\epsilon>0$, denote by $\Tdim(\epsilon;\preceq,\Ucal)$ the maximum integer $k\geq 0$ such that there exists $x_1\prec x_2\prec\ldots\prec x_k\in\Xcal$ satisfying $\sup_{\mu\in\Ucal} \mu(\{x\in\Xcal: x\preceq x_1\})\geq \epsilon$ and $\sup_{\mu\in\Ucal} \mu(\{x\in\Xcal: x_{l-1}\prec x\preceq x_l\})\geq \epsilon$ for $l\in\{2,\ldots,k\}$.
\end{definition}

For any such linear sequence $x_1\prec \ldots \prec x_k\in\Xcal$, the functions $f_l=\1[\cdot\preceq x_l]$ for $l\in [k]$ form a threshold function class on $\set{x:x\preceq x_k}$. Translating \cref{thm:oblivious_adversaries} then gives the following; the proof is given in \cref{subsec:vc_1_classes_instantiation_proofs}.

\begin{proposition}\label{prop:VC_1_oblivious_statement}
    Let $\Fcal$ be a function class with VC dimension 1 with tree ordering $\preceq$ and $\Ucal$ a distribution class on $\Xcal$. Then, $(\Fcal,\Ucal)$ is learnable for oblivious adversaries (in either realizable or agnostic setting) if and only if for any $\epsilon>0$, $\Tdim(\epsilon;\preceq,\Ucal)<\infty$.
\end{proposition}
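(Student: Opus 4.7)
My plan is to apply \cref{thm:oblivious_adversaries} to reduce the proposition to a purely combinatorial equivalence between the $\epsilon$-dimension and the threshold dimension. Since every VC-$1$ class has finite VC dimension, \cref{thm:oblivious_adversaries} yields that $(\Fcal,\Ucal)$ is learnable for oblivious adversaries (agnostic or realizable) iff $\dim(\epsilon;\Fcal,\Ucal)<\infty$ for every $\epsilon>0$. It therefore suffices to prove
\begin{equation*}
  \bigl(\forall\epsilon>0,\ \dim(\epsilon;\Fcal,\Ucal)<\infty\bigr) \ \Longleftrightarrow \ \bigl(\forall\epsilon>0,\ \Tdim(\epsilon;\preceq,\Ucal)<\infty\bigr).
\end{equation*}
Throughout, I use \cref{thm:representation_VC1} to identify each $f\in\Fcal$ with an initial segment $\{\cdot\preceq x_f\}$ of $\preceq$ (the $\prec$-case being analogous), and I exploit the fact that any mixture $\mu\in\bar\Ucal$ with $\mu(A)\geq\epsilon$ admits a component $\nu\in\Ucal$ with $\nu(A)\geq\epsilon$ (by a pigeonhole on the mixture decomposition), so that witnesses of mass in $\bar\Ucal$ can be extracted into $\Ucal$ itself.

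For the easy direction ($\Tdim$-finiteness implies $\dim$-finiteness), I argue by contraposition: from a chain $x_1\prec\cdots\prec x_k$ with witnesses $\nu_l\in\Ucal$ giving $\nu_l((x_{l-1},x_l])\geq\epsilon$, I build a shattered interaction tree by a binary-search layout. Label each internal node $v$ by a midpoint index $l_v$ (root at $\lfloor (k+1)/2\rfloor$; recursively halve the remaining index range), label the left/right edges at $v$ by the adjacent thresholds $f_{l_v-1},f_{l_v}$, and set $\mu_v=\nu_{l_v}$. The sibling disagreement is exactly $(x_{l_v-1},x_{l_v}]$ and carries $\nu_{l_v}$-mass $\geq\epsilon$. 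For the ancestor condition, a descendant edge function $f_j$ with index $j$ in a proper sub-range of the ancestor $v'$ is compared against the ancestor's edge function $f_{l_{v'}-1}$ or $f_{l_{v'}}$ under $\nu_{l_{v'}}$, and the symmetric difference is an initial-segment interval of the form $\{x_{\min}\prec x\preceq x_{\max}\}$; I bound its $\nu_{l_{v'}}$-mass by $\epsilon/3$ after a preliminary restriction to a sub-chain whose witnesses are sufficiently concentrated on their intervals, which is achievable because the chain length is arbitrary and the witnesses are probability measures on $\Xcal$. Letting $k\to\infty$ yields $\dim(\epsilon';\Fcal,\Ucal)=\infty$ for a suitable $\epsilon'>0$.

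For the harder direction ($\dim$-finiteness implies $\Tdim$-finiteness)---the main obstacle---I again proceed by contraposition and start from an $\epsilon$-shattered interaction tree of depth $d$. At each internal node $v_{i-1}$ along a root-to-leaf path, the sibling edges' tops $a_i,b_i\in\Xcal$ are either $\preceq$-comparable (so $\{f^l_i\neq f^r_i\}$ is one $\preceq$-interval) or incomparable (so it splits into two disjoint branches rooted at the meet $a_i\wedge b_i$); either way, I extract one interval $(y_i,z_i]$ with $\mu_{v_{i-1}}((y_i,z_i])\geq\epsilon/2$, whence $\sup_{\nu\in\Ucal}\nu((y_i,z_i])\geq\epsilon/2$. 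The ancestor condition $\mu_{v_{i-1}}(f_i\triangle f_j)\leq\epsilon/3<\epsilon/2$ for $i<j$ forces $f_j$ to agree with $f_i$ on a $\mu_{v_{i-1}}$-positive subset of $(y_i,z_i]$, pinning each later threshold $x_{f_j}$ to a specific $\preceq$-side of that interval.

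My plan to assemble these side-localizations into a nested $\preceq$-chain is a Ramsey-style pigeonhole over the $2^d$ root-to-leaf paths: at each depth, partition the remaining paths according to the $\preceq$-orientation of their extracted interval relative to the nearest ancestor's extracted interval, and retain the larger class. Iterating $\log_2 d$ times yields a path together with a subsequence of depths of size $\Omega(\log d)$ whose extracted intervals are pairwise $\preceq$-nested and disjoint; their endpoints then form a chain witnessing $\Tdim(\epsilon/6;\preceq,\Ucal)\geq\Omega(\log d)$, and letting $d\to\infty$ gives $\Tdim(\epsilon/6;\preceq,\Ucal)=\infty$. The delicate combinatorial core---handling the incomparable case uniformly with the comparable case when the meet $a_i\wedge b_i$ need not correspond to any $f\in\Fcal$, and propagating the $\epsilon/3$ ancestor slack through $\log d$ pigeonhole rounds without the effective $\epsilon$ decaying to zero---is the main technical difficulty of the argument.
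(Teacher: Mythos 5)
Your reduction via \cref{thm:oblivious_adversaries} to a combinatorial equivalence between $\dim(\epsilon;\Fcal,\Ucal)$ and $\Tdim(\epsilon;\preceq,\Ucal)$ is exactly the paper's strategy (though your direction labels are swapped: building a tree \emph{from} a chain proves that $\dim$-finiteness implies $\Tdim$-finiteness, not the converse). The problem is that both of your constructions have gaps at precisely the points the paper engineers around. In the chain-to-tree direction, your binary-search layout must verify the ancestor condition $\nu_{l_{v'}}(f_j\neq f_{e'})\leq\epsilon/3$, but a witness $\nu_l$ is only constrained to put mass $\geq\epsilon$ on its own cell $(x_{l-1},x_l]$; the remaining $1-\epsilon$ of its mass can sit on nearby cells such as $(x_{l-3},x_{l-2}]$, which lie inside the symmetric difference between the ancestor's left edge and every left-descendant at index distance $\geq 2$. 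A one-shot ``restriction to a sub-chain with concentrated witnesses'' does not remove this mass: whether the condition holds depends on how the edge functions are chosen relative to each witness's mass profile, and making those choices consistently across all nodes of the binary tree is a genuine recursive pruning argument, not an observation. The paper avoids the issue entirely by introducing the region-restricted dimension $\tildedim$ (\cref{def:subregion_oblivious}): restricting each node to $B_v=\{x: y_{i(v)-1}\preceq x\preceq x_{i(v)}\}$ makes all left-descendants identically $0$ and all right-descendants identically $1$ on $B_v$, so the ancestor condition holds with restricted mass $0$; the equivalence of $\tildedim$-finiteness with $\dim$-finiteness is then proved separately via a covering-number tree-pruning lemma (\cref{lemma:boosting_trees}, \cref{lemma:comparing_various_dimensions}), at the cost of a logarithm that is harmless for the qualitative statement. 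Some such device is missing from your argument.

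In the tree-to-chain direction, your Ramsey-style pigeonhole over all $2^d$ root-to-leaf paths --- yielding only a chain of length $\Omega(\log d)$, with the incomparable-meet case and the accumulation of slack over $\log d$ rounds explicitly left unresolved --- is both incomplete and far more complicated than needed. The paper follows a \emph{single} root-to-leaf path chosen greedily: at each node pick the child $y(v)$ for which $\mu_v(\{x\in B_v:f_{v,y(v)}(x)=1=1-f_{v,1-y(v)}(x)\})\geq\epsilon/2$. Since every $f\in\Fcal$ is an initial segment, $\{f=1\}$ is totally ordered by $\preceq$, so the extracted mass-$\geq\epsilon/6$ sets along this path are intervals on a single branch of the tree order (no incomparability case arises), and the $\epsilon/3$ ancestor condition forces $x_l\prec z_{l+1}$, producing a nested chain of length equal to the full depth $d$ with a single loss in $\epsilon$. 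You would need to supply an argument of this kind, or actually carry out and verify your pigeonhole scheme, for this direction to stand.
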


\paragraph{Adaptive Adversaries.} We next turn to the case of adaptive adversaries. Again, using the tree representation, we can simplify the characterization form \cref{thm:qualitative_charact} for function classes with VC dimension 1. More precisely, \cite{ben20152} show that using the tree representation, any realizable dataset $D:=(x_s,y_s)_{s\in[t]}$---such that $y_s=f(x_s)$ for $s\in[t]$ for some $f\in\Fcal$---can be compressed to a single instance $x(D):=\max_{\preceq}\{x_s: y_s=1,i\in[t]\}$, where we assume this set is non-empty for simplicity of exposition. Indeed, any function $f\in\Fcal$ consistent with the samples $(x_s,y_s)_{s\in[t]}$ must satisfy $f(x)=1$ for all $x\preceq x(D)$. 
Importantly, we can use this compression strategy to compress each dataset in the definition of $\Lcal_k(\epsilon)$ to a single element in $\Xcal$ for each $k\geq 0$, simplifying the definition of these sets. This gives the following recursive construction, where the critical regions are defined implicitly.

For $\epsilon>0$, let $\widetilde\Lcal_0(\epsilon) =\Xcal$ and for $k\geq 1$, we define
\begin{equation*}
    \widetilde \Lcal_k(\epsilon) := \set{x\in\Xcal: \sup_{\mu\in\Ucal}\mu\paren{\set{x'\in\Xcal: x\prec x'} \cap \widetilde \Lcal_{k-1}(\epsilon)}\geq \epsilon}.
\end{equation*}

This compression significantly simplifies the recursion: $\widetilde\Lcal_k(\epsilon)\subset \Xcal$ while $\Lcal_k(\epsilon)\subset 2^{\Xcal\times\{0,1\}}$. Further, the construction of the sets $\widetilde\Lcal_k(\epsilon)$ corresponds to the following behavior for the adversary. The adversary sequentially constructs a path on the tree space $(\Xcal,\preceq)$: at each iteration $k$, if possible, they place $\epsilon$ mass on descendants of the current element $x_k\in\Xcal$ using a distribution from $\Ucal$, then the next descendant $x_{k+1}$ of $x_k$ is sampled according to this distribution.
The critical region $B_k(D;\epsilon)$ associated with a dataset $D$ now becomes $B_k(D;\epsilon):=\set{x'\in\Xcal:x(D)\prec x'}\cap\widetilde\Lcal_{k-1}$ and intuitively corresponds to the set of descendants of $x(D)$ for which an adaptive adversary can construct such a path for $k-1$ next iterations.

Following the same notations as in the general case, we further denote
\begin{equation*}
    \tilde k(\epsilon):=\sup\{k\geq 0: \widetilde\Lcal_k(\epsilon)\neq\emptyset\} ,\quad \epsilon>0.
\end{equation*}
We next translate \cref{thm:qualitative_charact,thm:quantitative_charact} to this VC dimension 1 case; the proof is given in \cref{subsec:vc_1_classes_instantiation_proofs}.

\begin{theorem}\label{thm:characterization_VC1}
    Let $\Fcal$ be a function class with VC dimension 1, and let $\Ucal$ be a distribution class on $\Xcal$. Then, $(\Fcal,\Ucal)$ is learnable for adaptive adversaries if and only if for any $\epsilon>0$, there exists $k$ such that $\widetilde\Lcal_k(\epsilon)=\emptyset$.

    Further, for any $\epsilon>0$, we have $\tilde k(\epsilon) \in \{k(\epsilon), k(\epsilon)-1\}$ for all $\epsilon>0$. In particular, the bounds on $\AdaptR_T(\Fcal,\Ucal), \AdaptRealizableR_T(\Fcal,\Ucal)$ from \cref{thm:quantitative_charact} hold by replacing $k(\epsilon)$ with $\tilde k(\epsilon)$.
\end{theorem}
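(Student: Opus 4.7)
The plan is to derive \cref{thm:characterization_VC1} directly from the general characterization in \cref{thm:qualitative_charact} and the quantitative bounds of \cref{thm:quantitative_charact}, by establishing a tight correspondence between the original sets $\Lcal_k(\epsilon)$ (defined on datasets) and the simplified sets $\widetilde\Lcal_k(\epsilon)$ (defined on single elements of $\Xcal$). Throughout, we work with the pruned $\Fcal$ and its tree representation via $\preceq$, so that every $f\in\Fcal$ is an initial segment; the key structural fact is that a realizable dataset $D$ is, for the purposes of any future adversarial continuation, fully summarized by the compression $x(D):=\max_\preceq\{x_s:y_s=1\}$ (handling separately the case where no positive label has been observed yet by introducing a sentinel ``root'' satisfying $x_\bot\preceq x$ for all $x\in\Xcal$).

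The heart of the argument is the following inductive claim: for every realizable $D\in\Dcal$ and every $k\geq 1$,
\begin{equation*}
    D\in\Lcal_k(\epsilon) \iff x(D)\in\widetilde\Lcal_{k-1}(\epsilon).
\end{equation*}
For the base case $k=1$, I would show that $B_1(D;\epsilon)=\{x':x(D)\prec x'\}$ (modulo a $\mu$-null set): if $x'\preceq x(D)$ then any consistent initial segment must label $x'$ as $1$, so $D\cup\{(x',0)\}$ is not realizable; while if $x(D)\prec x'$, pruning guarantees existence of initial segments in $\Fcal$ realizing each of the two labels for $x'$ while remaining consistent with $D$, in particular with all previous $0$-labels (here the fact that $f$ is a downward-closed union of chains and that $x(D)$ dominates all previous positive labels is what makes the $0$-labels in $D$ irrelevant). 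For the inductive step $k\geq 2$, I would combine the induction hypothesis with the observation that $x(D\cup\{(x',1)\})=x'$ whenever $x(D)\prec x'$, while $x(D\cup\{(x',0)\})=x(D)$, so that the joint condition $D\cup\{(x',0)\},D\cup\{(x',1)\}\in\Lcal_{k-1}(\epsilon)$ becomes $x(D)\in\widetilde\Lcal_{k-2}(\epsilon)$ and $x'\in\widetilde\Lcal_{k-2}(\epsilon)$; the first condition is automatic from the induction applied to $D$, since $D\in\Lcal_{k-1}(\epsilon)$ holds whenever the recursion advances (a monotonicity argument showing $\Lcal_k\subseteq \Lcal_{k-1}$), leaving exactly $B_k(D;\epsilon)=\{x':x(D)\prec x'\}\cap\widetilde\Lcal_{k-2}(\epsilon)$, which matches the recursion for $\widetilde\Lcal_{k-1}$ applied to $x(D)$.

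From this correspondence, the qualitative statement follows directly: the existence of $D\in\Lcal_k(\epsilon)$ is equivalent to the existence of some $x\in\widetilde\Lcal_{k-1}(\epsilon)$ (using the empty/root dataset to go from $\widetilde\Lcal$ back to $\Lcal$), so $\Lcal_k(\epsilon)=\emptyset$ for large enough $k$ iff $\widetilde\Lcal_k(\epsilon)=\emptyset$ for large enough $k$, and the equivalence of learnability with the adaptive characterization follows by invoking \cref{thm:qualitative_charact}. The off-by-one $\tilde k(\epsilon)\in\{k(\epsilon),k(\epsilon)-1\}$ is read off from the shift by one in the correspondence and the fact that $\widetilde\Lcal_0(\epsilon)=\Xcal$ is always non-empty while $\Lcal_0(\epsilon)$ contains the empty dataset; the quantitative bounds of the theorem are then immediate by plugging $\tilde k(\epsilon)\in\{k(\epsilon),k(\epsilon)-1\}$ into \cref{thm:quantitative_charact}, since the bounds there are monotone in $k(\epsilon)$ and a shift by $1$ is absorbed by the constants $C_1,C_2$.

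The main obstacle is justifying rigorously that the $0$-labels in $D$ do not obstruct realizability in the inductive step, i.e., that once we fix $x(D)$, any initial segment $f\in\Fcal$ extending $x(D)$ in the ``right direction'' is consistent with all previous $0$-labels of $D$. This is where the tree ordering and pruning are essential: a $0$-label at some $x_s$ in $D$ forces $x_s\not\preceq x(D)$ (else $D$ itself would not be realizable), and the downward-closed-union structure of initial segments then allows us to extend or restrict $f$ without conflict. A secondary technicality is handling the edge case $\{x_s:y_s=1\}=\emptyset$, which I would resolve uniformly by adjoining a formal minimum element $x_\bot$ to $(\Xcal,\preceq)$ with no mass under any $\mu\in\Ucal$, so that the compression $x(D)$ is always well-defined and the recursive identification $\Lcal_k\leftrightarrow\widetilde\Lcal_{k-1}$ holds without case distinctions.
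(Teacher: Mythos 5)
Your overall strategy is the same as the paper's: compress realizable datasets to $x(D)$, prove an inductive correspondence between the sets $\Lcal_k(\epsilon)$ and $\widetilde\Lcal_\cdot(\epsilon)$, and then read off the learnability equivalence and the regret bounds from \cref{thm:qualitative_charact,thm:quantitative_charact}. However, your central inductive claim is off by one, and its base case is false. You assert $D\in\Lcal_k(\epsilon)\iff x(D)\in\widetilde\Lcal_{k-1}(\epsilon)$; the correct correspondence (which the paper proves for datasets containing at least one positive label) is $D\in\Lcal_k(\epsilon)\iff x(D)\in\widetilde\Lcal_{k}(\epsilon)$, with \emph{no} index shift. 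To see the problem at $k=1$: granting your (correct) computation $B_1(D;\epsilon)=\{x':x(D)\prec x'\}$, membership $D\in\Lcal_1(\epsilon)$ means $\sup_{\mu\in\Ucal}\mu(\{x':x(D)\prec x'\})\geq\epsilon$, which is exactly the condition $x(D)\in\widetilde\Lcal_1(\epsilon)$ (since $\widetilde\Lcal_0(\epsilon)=\Xcal$), whereas your claim reduces it to $x(D)\in\widetilde\Lcal_0(\epsilon)=\Xcal$, i.e.\ to a tautology. That would make every realizable dataset a member of $\Lcal_1(\epsilon)$, which fails whenever no $\mu\in\Ucal$ places $\epsilon$ mass strictly above $x(D)$. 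Pushed through the induction, your shifted correspondence would yield $\tilde k(\epsilon)=k(\epsilon)-1$ identically; the theorem only asserts $\tilde k(\epsilon)\in\{k(\epsilon)-1,k(\epsilon)\}$ precisely because both values occur.

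The genuine source of the $\pm 1$ slack is not a uniform index shift but the datasets with no positive label (in particular $\emptyset$), and your sentinel device does not dispose of this case: if $x(D)=x_\bot$, then $D\cup\{(x,0)\}$ again compresses to $x_\bot$, so the recursion for $B_k(D;\epsilon)$ does not reduce to the single-element recursion for $\widetilde\Lcal$, and only a one-sided sandwich survives. The paper's proof of \cref{prop:compare_k_ilde_k} restricts the exact correspondence to $\Dcal_1$ (datasets with a positive label) and then argues separately that $\Lcal_k(\epsilon)\neq\emptyset\iff\emptyset\in\Lcal_k(\epsilon)$, deducing $\widetilde\Lcal_{k-1}(\epsilon)=\emptyset\Rightarrow\Lcal_k(\epsilon)=\emptyset$ in one direction and $\widetilde\Lcal_k(\epsilon)\neq\emptyset\Rightarrow\{(x,1)\}\in\Lcal_k(\epsilon)$ in the other; this is exactly where the two admissible values of $\tilde k(\epsilon)$ come from. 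A secondary looseness: your justification that the condition $D\cup\{(x',0)\}\in\Lcal_{k-1}(\epsilon)$ is ``automatic'' should instead be that $x(D)\in\widetilde\Lcal_{k-1}(\epsilon)$ is implied by $x'\in\widetilde\Lcal_{k-1}(\epsilon)$ together with $x(D)\prec x'$, via the monotonicity of $\widetilde\Lcal_{k-1}(\epsilon)$ under ancestors. With the index corrected and the no-positive-label case treated separately, your argument becomes the paper's.
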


\paragraph{Optimistic Learning.} As introduced in \cref{subsec:unknown_distr_class}, a natural question is whether knowing the precise distribution class is necessary for the learner. It turns out that function classes with VC dimension 1 are in fact always optimistically learnable, that is, no prior knowledge on the distribution class is needed (other than it being learnable for the function class). This is stated in the following proposition, which we prove in \cref{subsec:vc_1_classes_instantiation_proofs}.

\begin{proposition}\label{thm:VC_1_optimistic_learning}
    Let $\Fcal$ be a function class of VC dimension 1. Then $\Fcal$ is optimistically learnable for both oblivious and adaptive adversaries, with the same algorithm (although learnable distributions for oblivious or adaptive adversaries may differ).
\end{proposition}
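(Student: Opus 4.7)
The plan is to exhibit a single tree-based learner whose definition depends only on the tree structure of $\Fcal$, and then verify sublinear regret under every learnable $\Ucal$, in both oblivious and adaptive modes. By \cref{thm:representation_VC1}, after absorbing the relabeling (which does not affect learnability), we may assume every $f\in\Fcal$ is a principal initial segment $\mathbf{1}[\cdot\preceq x_f]$ of a tree ordering $\preceq$ on $\Xcal$. Define the conservative learner $\Lcal$: at each round $t$, maintain $x^+_t := \max_\preceq\{x_s : y_s = 1,\, s < t\}$ (with convention $\bot$ if this set is empty, extending $\preceq$ so that $\bot$ sits below every element of $\Xcal$); predict $\hat y_t = 1$ iff $x_t\preceq x^+_t$, else $\hat y_t = 0$. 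Crucially, $\Lcal$ makes no reference to $\Ucal$.

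In the realizable adaptive case, the tree-compression argument preceding \cref{thm:characterization_VC1} shows mistakes can only occur when $x_t$ is a strict $\preceq$-descendant of $x^+_t$, in which case $x^+_{t+1} = x_t \succ x^+_t$. Fix $\epsilon>0$ and set $L_t := \max\{k : x^+_t \in \widetilde\Lcal_k(\epsilon)\}$. Because $\{z : x' \prec z\}\subseteq\{z : x \prec z\}$ whenever $x\prec x'$, each $\widetilde\Lcal_k(\epsilon)$ is closed under taking $\preceq$-ancestors, and so $L_t$ is non-increasing on the mistake chain. Separate mistakes into (a) those strictly decreasing $L_t$, which total at most $\tilde k(\epsilon)+O(1) < \infty$ by learnability; and (b) those preserving $L_t$. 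For type (b), both events require $x_t\in\widetilde\Lcal_{L_t}(\epsilon)\cap\{x^+_t\prec x\}$; but the maximality of $L_t$ (i.e.\ $x^+_t\notin\widetilde\Lcal_{L_t+1}(\epsilon)$) gives $\mu_t(\widetilde\Lcal_{L_t}(\epsilon)\cap\{x^+_t\prec x\}) < \epsilon$ for every admissible $\mu_t\in\Ucal$, so the expected type-(b) count per round is at most $\epsilon$ and summing yields at most $\epsilon T$ in expectation. Thus $\mathbb{E}[\#\text{mistakes}]\leq \tilde k(\epsilon) + \epsilon T + O(1)$; letting $T\to\infty$ for each fixed $\epsilon$ then $\epsilon\to 0$ gives sublinear regret.

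The oblivious case uses the very same $\Lcal$. Each link $x^+_{t_i}\prec x^+_{t_{i+1}}$ of the mistake chain corresponds to an oblivious $\mu_{t_{i+1}}$ placing mass at least $\epsilon$ on $\{x : x^+_{t_i}\prec x\preceq x^+_{t_{i+1}}\}$, so the chain length is at most $\Tdim(\epsilon;\preceq,\Ucal)<\infty$ by \cref{prop:VC_1_oblivious_statement}; together with the $\leq\epsilon T$ bound on rounds where $\mu_t(\{x^+_t\prec x\})<\epsilon$, this yields sublinear regret. For the agnostic extension, one cannot use $x^+_t$ directly because positive labels may be noisy; instead aggregate copies of $\Lcal$ indexed by candidate realizable ``relabelings'' of the history using an experts algorithm such as Hedge, exactly as in standard agnostic-to-realizable reductions. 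Since each instance of $\Lcal$ is $\Ucal$-oblivious, so is the aggregate, preserving optimism at the cost of $\sqrt{T\log T}$-type factors.

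The main obstacle is the type-(b) bound in the adaptive realizable case: the key inequality $\mu_t(\widetilde\Lcal_{L_t}(\epsilon)\cap\{x^+_t\prec x\})<\epsilon$ is exactly the non-inclusion of $x^+_t$ in $\widetilde\Lcal_{L_t+1}(\epsilon)$, but making this airtight requires care at boundary levels (in particular $L_t=0$ and the initial state $x^+_t=\bot$), a precise interpretation of the adversary's implicit commitment to $x_{f^*}$ as the game proceeds in the adaptive realizable setting, and handling randomization in $\Lcal$ if ties arise in $\preceq$. The oblivious adaptation and the agnostic aggregation steps are comparatively routine once the realizable adaptive analysis is in place.
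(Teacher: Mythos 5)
Your realizable adaptive argument is sound and is essentially the paper's: the conservative learner is \cref{alg:vc_1_realizable}, and your potential $L_t=\max\{k:x^+_t\in\widetilde\Lcal_k(\epsilon)\}$ with the split into level-decreasing mistakes (at most $\tilde k(\epsilon)$) and probability-$<\epsilon$ mistakes (at most $\epsilon T$ in expectation) is exactly the mechanism behind \cref{thm:regret_optimistic_algo} and \cref{cor:simpler_alg_vc_1_realizable}. The monotonicity of $\widetilde\Lcal_k(\epsilon)$ under ancestors and the key inequality from $x^+_t\notin\widetilde\Lcal_{L_t+1}(\epsilon)$ are both correct. However, there are two genuine gaps, and neither is the one you flag at the end (boundary levels and ties are indeed minor).

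First, the oblivious case. You claim each link of the mistake chain ``corresponds to an oblivious $\mu_{t_{i+1}}$ placing mass at least $\epsilon$ on $\{x:x^+_{t_i}\prec x\preceq x^+_{t_{i+1}}\}$,'' so the chain length is at most $\Tdim(\epsilon;\preceq,\Ucal)$. This does not follow: a mistake at $t_{i+1}$ only tells you a single sample landed above $x^+_{t_i}$, and even after discarding rounds where $\mu_t(\{x:x^+_t\prec x\preceq x_{f^\star}\})<\epsilon$, the remaining rounds give you \emph{tail} mass $\geq\epsilon$ above each chain point, not mass $\geq\epsilon$ on each \emph{bounded interval} between consecutive chain points---which is what $\Tdim$ requires. (All the mass could sit just below $x_{f^\star}$ for every link.) The paper's proof of \cref{thm:regret_optimistic_algo_oblivious} avoids this by first constructing a fixed $\epsilon$-quantile partition $x_1\prec\cdots\prec x_{d_0}$ of the segment below $x_{f^\star}$ with $d_0\leq\Tdim(\epsilon;\preceq,\Ucal)$, bounding the number of interval-crossing mistakes by $d_0+1$, and controlling within-interval mistakes by concentration. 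Some such partition argument is needed; the chain-length claim as stated is false.

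Second, and more importantly, the agnostic aggregation is precisely where the optimistic-learning difficulty lives, and ``exactly as in standard agnostic-to-realizable reductions'' does not close it. The standard reduction runs experts indexed by mistake-time sets $S$ of size at most the known mistake bound of the realizable learner. Here that bound is $\tilde k(\epsilon)+1$ (or $\Tdim(\epsilon)+1$), which depends on the unknown $\Ucal$; if you instead index over all relabelings you have $2^T$ experts and Hedge gives linear regret. The paper resolves this by capping $|S|\leq K_T$ for a $\Ucal$-independent, slowly growing sequence $K_T=\sqrt T$ (any $K_T\to\infty$ with $K_T\log T/T\to0$ works): this keeps the expert count at $T^{K_T+1}$ so the Hedge overhead $\sqrt{K_T\,T\log T}$ is sublinear, while for every fixed learnable $\Ucal$ and $\epsilon$ the good expert $E(S^\star)$ with $|S^\star|\leq\tilde k(\epsilon)+1$ eventually enters the pool. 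Without this (or an equivalent doubling/truncation device) your aggregate either is not $\Ucal$-oblivious or does not have sublinear regret, so this step cannot be dismissed as routine.
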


For ease of presentation, we describe this algorithm specified to the realizable case, in which case the learning rule is very simple and natural. At each iteration, the learning rule keeps in memory the maximum instance labeled as $1$: at iteration $t$, we define $x_{\max}(t):=\max_{\preceq}\{x_s:s<t,\,y_s=1\}$ where by convention, we pose $\max_{\preceq}\emptyset = x_\emptyset$ where $x_\emptyset$ is a null-value such that $x_\emptyset\prec x$ for all $x\in\Xcal$. The algorithm then simply uses the function $\1[\cdot\preceq x_{\max}(t)]$ to predict the value at time $t$. We can check that this algorithm is an optimistic learner for both oblivious and adaptive adversaries in the realizable case (see \cref{cor:simpler_alg_vc_1_realizable}). The corresponding algorithm is summarized in \cref{alg:vc_1_realizable}.

\begin{algorithm}[t]

    \caption{Optimistical algorithm for learning function classes of VC dimension 1 in the realizable case}\label{alg:vc_1_realizable}
    
    \LinesNumbered
    \everypar={\nl}
    
    \hrule height\algoheightrule\kern3pt\relax
    \KwIn{Tree representation $(\Fcal,\preceq)$ of the function class $\Fcal$}
    
    \vspace{2mm}

    $x_{\max}\gets x_\emptyset$

    \For{$t\geq 1$}{
        Observe $x_t$, predict $\hat y_t=\1[x_t\preceq x_{\max}]$, then receive $y_t$ as feedback

        \lIf{$y_t=1$}{
            $x_{\max}\gets \max_{\preceq}(x_{\max},x_t)$
        }
    }

    \hrule height\algoheightrule\kern3pt\relax
\end{algorithm}

\subsection{Linear Classifiers}
\label{subsec:linear_classifiers}

In this section, we focus on linear classifiers $\Fcal_{\text{lin}} := \set{f: f(x) = \text{sign}(a^\top x +b), a,b\in\Rbb^d}$ on $\Xcal=\Rbb^d$ and provide a simplified characterization of learnable distribution classes. The importance of this class is that many practical and general classification models can be reduced to linear classification, including polynomial or spline classifiers, classification with random Fourier features, or classification over the final layer of a neural network. Therefore, the results we present in this section directly extend to all these function classes (although the mapping from the considered model to linear classification may yield more complex characterizations).

For the linear classifier function class, we can show that the characterization of learnable distribution classes for oblivious and adaptive adversaries coincide. Further, the general condition for learnability can be simplified to essentially requiring that 1-dimensional projections of $\Fcal_{\text{lin}}$---thresholds---are learnable (see \cref{prop:thresholds}). The proof of the following result is given in \cref{subsec:linear_classifiers_proof}.

\begin{proposition}\label{prop:linear_classifiers}
    Let $d\geq 1$. Learnable distribution classes against oblivious or adaptive adversaries coincide for learning $\Fcal_{\text{lin}}$. Further, a distribution class $\Ucal$ is learnable if and only if there exists a distribution $\mu_0$ on $\Rbb^d$ and a tolerance function $\rho:[0,1]\to\Rbb_+$ with $\lim_{\epsilon\to 0}\rho(\epsilon)=0$ such that
    \begin{equation*}
        \Ucal\subseteq \set{ \mu: \mu(B_a(b_1,b_2)) \leq \rho(\mu_0(B_a(b_1,b_2))) ,\,a\in S_d,\,b_1<b_2\in\Rbb},
    \end{equation*}
    where $B_a(b_1,b_2):= \set{x: a^\top x\in[b_1,b_2]}$ and $S_d:=\{x\in\Rbb^d:\|x\|=1\}$.
\end{proposition}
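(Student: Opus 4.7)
I would prove the chain of implications (adaptive learnable) $\Rightarrow$ (oblivious learnable) $\Rightarrow$ (slab containment condition) $\Rightarrow$ (adaptive learnable). The first is trivial since oblivious adversaries are a subset of adaptive ones, so the real work is in the other two implications, which simultaneously yield both claims of the proposition.

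\textbf{Step 1: Oblivious learnable $\Rightarrow$ slab condition.} For each direction $a \in S_d$, the subclass $\Fcal_a := \{x \mapsto \text{sign}(a^\top x + b) : b \in \Rbb\} \subset \Fcal_{\text{lin}}$ is, via the projection $x \mapsto a^\top x$, in bijection with the 1-dimensional threshold class $\Fcal_{\text{threshold}}$. Oblivious learnability of $(\Fcal_{\text{lin}},\Ucal)$ therefore implies oblivious learnability of $(\Fcal_{\text{threshold}},\Ucal^a)$, where $\Ucal^a$ is the pushforward of $\Ucal$ under the projection in direction $a$. Proposition \ref{prop:thresholds} then yields, for each $a$, a base measure $\tilde\mu_0^a$ on $\Rbb$ and a tolerance $\rho^a$ with $\lim_{\epsilon\to 0}\rho^a(\epsilon) = 0$ witnessing the pairwise-smoothness condition in the projected space; equivalently, taking any $\mu_0^a$ on $\Rbb^d$ whose pushforward is $\tilde\mu_0^a$, we obtain $\mu(B_a(b_1,b_2)) \leq \rho^a(\mu_0^a(B_a(b_1,b_2)))$ for all $\mu\in\Ucal$ and $b_1<b_2$.

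The main technical step is to merge these per-direction witnesses into a single $(\mu_0,\rho)$. I would leverage that $\Fcal_{\text{lin}}$ has VC dimension $d+1$, so by Theorem \ref{thm:oblivious_adversaries} oblivious learnability is equivalent to $N(\epsilon) := \dim(\epsilon;\Fcal_{\text{lin}},\Ucal) < \infty$ for every $\epsilon>0$; and since any slab-chain in direction $a$ of length $k$ yields an $\epsilon$-shattered interaction tree of depth $k$ for $(\Fcal_{\text{lin}},\Ucal)$, one has $\Tdim(\epsilon;\preceq_a,\Ucal) \leq N(\epsilon)$ uniformly over $a\in S_d$. I would then construct $\mu_0$ as a countable mixture $\mu_0 = \sum_i 2^{-i} \mu_0^{a_i}$ over a countable dense subset $\{a_i\}\subset S_d$, and use a perturbation/continuity argument (exploiting the uniform bound $N(\epsilon)$) to extend the slab bound from the $a_i$ to arbitrary $a\in S_d$, with an envelope $\rho$ governed by $N(\cdot)$.

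\textbf{Step 2: Slab condition $\Rightarrow$ adaptive learnable.} Assume $\Ucal\subseteq\{\mu:\mu(B_a(b_1,b_2))\leq \rho(\mu_0(B_a(b_1,b_2)))\}$. By Theorem \ref{thm:qualitative_charact}, it suffices to show $\Lcal_k(\epsilon) = \emptyset$ for some finite $k(\epsilon)$. I would argue by bounding how deep the critical-region recursion can nest. For any $D\in\Lcal_k(\epsilon)$, the critical region $B_k(D;\epsilon)$ is contained in the disagreement region of the version space of linear classifiers consistent with $D$. While such a disagreement region is a union of wedges (symmetric differences of halfspaces) rather than a slab, the finite VC-dimension $d+1$ combined with elementary geometry allows one to cover this region by a bounded number of slabs $B_{a_j}(b_1^j,b_2^j)$, each corresponding to a pair of linear classifiers in the version space; the slab smoothness then forces at least one such slab to have $\mu_0$-mass at least $\rho^{-1}(\epsilon)/\text{const}(d)$. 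A potential argument then shows that as the recursion descends, the total $\mu_0$-mass available for critical slabs strictly decreases, terminating after $k(\epsilon) \leq \text{poly}(d)\cdot \rho^{-1}(\epsilon)^{-1}$ (up to log factors) steps.

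\textbf{Main obstacle.} The hard step is the direction-merging in Step 1: combining a continuum family of per-direction base measures $\mu_0^a$ and tolerances $\rho^a$ into a single $(\mu_0,\rho)$ valid for all $a\in S_d$. Plain diagonalization over a countable dense subset does not directly suffice because $\rho^a$ need not vary continuously in $a$, and the condition must hold uniformly in $a$. The uniform bound $\Tdim(\epsilon;\preceq_a,\Ucal) \leq N(\epsilon)$, which comes for free from the finite VC dimension of $\Fcal_{\text{lin}}$ via Theorem \ref{thm:oblivious_adversaries}, is what allows the envelope $\rho$ to be chosen independently of $a$; tying this bound precisely to the construction of $\mu_0$ through the approximation argument is the crux of the proof.
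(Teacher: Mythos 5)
Your overall architecture (adaptive $\Rightarrow$ oblivious $\Rightarrow$ slab condition $\Rightarrow$ adaptive) matches the paper's, and your per-direction reduction to thresholds via \cref{prop:thresholds} with the uniform depth bound coming from finite VC dimension is exactly how the paper starts the necessity direction. However, both of your main steps have genuine gaps.

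In Step 1, the merging strategy you propose (a countable mixture over a dense set of directions plus a perturbation argument) is not what the paper does, and as you yourself note it does not obviously close. The concrete obstruction is that $\Ucal$ may concentrate mass on lower-dimensional affine subspaces whose normal directions need not lie near your countable set in any way that a continuity argument can exploit: a thin slab $B_a(b_1,b_2)$ capturing such an atom receives essentially no mass from $\mu_0^{a_i}$ for nearby $a_i$ unless the lifts $\mu_0^{a_i}$ are chosen with foresight, and the lifts are not canonical. The paper's resolution is structurally different: it shows that at each scale $\epsilon$ there are only \emph{finitely many} minimal affine subspaces carrying mass $\geq\epsilon$ under some $\mu\in\Ucal$ (proved by a compactness/convergent-subsequence contradiction using the per-direction threshold bound), builds a measure $\mu^{(0)}$ supported on these, and then mixes with a standard Gaussian to handle the non-atomic part; the final slab inequality is again verified by a compactness contradiction. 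This "finitely many heavy subspaces plus Gaussian" decomposition is the missing idea.

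In Step 2 the gap is sharper: the claim that the critical region $B_k(D;\epsilon)$ (equivalently, the disagreement region of the version space of halfspaces consistent with $D$) can be covered by a bounded number of slabs is false. Already for $k=1$ and $D$ a single labeled point, the version space is all halfspaces consistent with that point and the disagreement region is essentially all of $\Rbb^d$; more generally it is a full-dimensional union of double wedges, not a union of thin slabs, so the slab-smoothness hypothesis gives you no control over its mass and the proposed potential argument has nothing to decrease. The paper avoids this entirely: it proves sufficiency by exhibiting the nearest-neighbor rule (\cref{alg:linear_realizable}) and bounding its mistakes directly. The key geometric dichotomy is that a mistake at $x_t$ either occurs at distance $\geq\delta_\epsilon/2$ from both certainty regions $S_t(0),S_t(1)$ --- and such points form a $\delta_\epsilon/2$-packing, hence contribute at most a packing number of mistakes --- or occurs in a slab of width $O(\delta_\epsilon)$ around a consistent separating hyperplane, where the slab condition (refined in \cref{eq:useful_condition} to "mass $\leq\epsilon$ outside some hyperplane $H_t$") bounds the expected count, with the residual mass on $H_t$ handled by induction on dimension. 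If you want to salvage a direct bound on $k(\epsilon)$ via \cref{thm:qualitative_charact}, you would need to inject this same geometric dichotomy into the recursion; counting slabs alone cannot work.
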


\begin{remark}\label{remark:useful_characterization_obliv_vs_adaptive}
    
    The proof that learnable distribution classes are equivalent for both oblivious and adaptive adversaries relies on the general characterizations: namely, we show that the necessary condition for oblivious adversaries in \cref{thm:oblivious_adversaries} is also sufficient for learning with adaptive adversaries.
\end{remark}

Further, since the structure of learnable distribution classes is relatively simple for linear classifiers $\Fcal_{\text{lin}}$, we can show that there is a single optimistic algorithm that ensures sublinear regret for any learnable distribution class. 

\begin{proposition}\label{prop:lin_classifiers_optimistic}
    Let $d\geq 1$. The function class $\Fcal_{\text{lin}}$ of linear classifiers on $\Rbb^d$ is optimistically learnable in any of the settings considered with the same optimistic algorithm.
\end{proposition}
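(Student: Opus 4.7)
The plan is to construct an explicit optimistic algorithm for $\Fcal_{\text{lin}}$ by leveraging the geometric characterization from \cref{prop:linear_classifiers} together with the optimistic algorithm for VC-dimension 1 classes from \cref{thm:VC_1_optimistic_learning}. First, I would reduce to the realizable setting: standard aggregation and loss-decomposition techniques (coupled with an $\epsilon$-cover of the parameter space) translate realizable optimistic learners into agnostic ones, modulo the additional $\sqrt{T}$-type terms already present in \cref{thm:quantitative_charact}. For the realizable case, one maintains the version space $\Fcal_t \subseteq \Fcal_{\text{lin}}$ consistent with past observations, which is a convex region in the parameter space $\Rbb^{d+1}$. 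On input $x_t$, the algorithm splits $\Fcal_t$ according to the two possible labels at $x_t$ (two convex half-wedges) and predicts the label associated with the ``dominant'' half, measured against a fixed rotation-invariant reference measure. Every mistake then guarantees that the retained portion of the version space has reference measure multiplicatively reduced by a constant factor.

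The key analytical step is to attribute each mistake to a 1D direction $a \in S_d$. Upon a mistake at $x_t$ with true label $y_t$, the induced constraint in parameter space cuts off a non-negligible portion of $\Fcal_t$; projecting the evolution of $\Fcal_t$ onto an appropriate direction $a$ (e.g., the normal of the cut) yields a sequence of shrinking thresholds. I would then invoke \cref{prop:linear_classifiers}: if $\Ucal$ is learnable for $\Fcal_{\text{lin}}$, then its projection onto any direction $a$ yields a distribution class that is learnable for thresholds in the sense of \cref{prop:thresholds}. Running the optimistic VC-1 rule of \cref{alg:vc_1_realizable} conceptually on each such projected direction, and invoking its guarantee from \cref{thm:VC_1_optimistic_learning}, bounds the number of ``mistakes attributable to direction $a$'' by a finite quantity depending only on the projected class.

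Third, I would aggregate across directions via a covering argument. Since the parameter of a linear classifier lies in $\Rbb^{d+1}$ and the number of distinct sign patterns on $t$ points is polynomial in $t$, one may restrict to an evolving $\epsilon$-net of directions of size $\mathrm{poly}(d,1/\epsilon,t)$ without losing more than $\epsilon T$ total regret. Combining this with the per-direction mistake bound shows that the total number of mistakes is at most $\epsilon T + C(\epsilon, \Ucal)$ for some finite $C(\epsilon,\Ucal)$ depending only on $\Ucal$ through its 1D projections. Taking $\epsilon \to 0$ slowly as $T \to \infty$ yields sublinear regret simultaneously against every learnable $\Ucal$, without prior knowledge of $\Ucal$, establishing optimistic learnability.

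The main obstacle is the attribution step: a single mistake does not canonically point to a specific direction $a$, since the normal of the true hyperplane $f^\star$ is not known to the learner. The resolution is to charge each mistake to the direction corresponding to the current ``widest'' axis of the version space—a geometric analogue of the Standard Optimal Algorithm recursion adapted to convex cones in $\Rbb^{d+1}$—so that many mistakes concentrated over a short window necessarily force some projected distribution class to violate the threshold-learnability criterion of \cref{prop:thresholds}, contradicting the assumption that $\Ucal$ is learnable for $\Fcal_{\text{lin}}$.
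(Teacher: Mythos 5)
There are genuine gaps in two places, and they are precisely where the difficulty of this result lives.

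First, your realizable-case algorithm and its analysis do not work as stated. A majority-vote/halving rule in parameter space, measured against a fixed reference measure, gives a mistake bound of order $\log(1/\mu(\Fcal_T))$, but the terminal version space of a linear class generically has reference measure zero, so the multiplicative-shrinkage argument bounds nothing on its own (this is just the fact that $\Fcal_{\text{lin}}$ has infinite Littlestone dimension). The distributional constraint must therefore enter through the geometry of where mistakes can occur in \emph{instance} space, and your attribution step --- charging each mistake to a direction $a$ via the ``widest axis'' of the parameter-space version space and then invoking \cref{prop:thresholds} on the projection --- is exactly the unproven crux. Two concrete obstructions: (i) the slab condition in \cref{prop:linear_classifiers} is not continuous in $a$, so an $\epsilon$-net over directions does not control the mass of slabs in uncovered directions; (ii) distributions in $\Ucal$ may place mass $\geq\epsilon$ on a single hyperplane $H$ (and then on lower-dimensional affine subspaces of $H$, and so on), so mistakes can recur indefinitely ``inside'' a measure-concentrating flat without any one-dimensional projected threshold class becoming unlearnable. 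The paper's proof handles exactly this by using a nearest-neighbor rule on the certainty regions $S_t(0),S_t(1)$ in instance space: mistakes far from both regions form a $\delta$-packing (finitely many in a ball), mistakes near the boundary land in a thin slab $B_{a_t}(b_t-\delta,b_t+\delta)$ whose mass off a hyperplane is at most $\epsilon$ by the derived condition, and the residual mistakes on the flats $E_t$ are controlled by an explicit recursion over affine dimension $d'\le d-1$. None of this structure is recoverable from your sketch.

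Second, the agnostic-to-realizable reduction ``with an $\epsilon$-cover of the parameter space'' is not available to an optimistic learner: a cover with respect to which measure? The learner does not know $\Ucal$. The paper instead builds experts $E(S)$ indexed by mistake-time sets $S\subseteq[T]$ (which requires extending the realizable analysis to prescient adversaries that pick $f^\star$ after seeing all of $x_1,\dots,x_T$), runs Hedge over $\{E(S):|S|\le\epsilon T\}$ for each tolerance $\epsilon=2^{-i}$, and then aggregates over $i$ with a restarting scheme precisely because the time at which $\Lcal(2^{-i})$ attains $O(2^{-i})$ average regret depends on $\Ucal$-specific constants $N_j(d)$. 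Your proposal omits both the prescient-adversary issue and the mechanism for eliminating the unknown $\Ucal$-dependent burn-in, which is the actual content of ``optimistic'' here. The high-level ingredients you list (reduce to realizable, exploit the one-dimensional characterization, aggregate) are the right ones, but the argument as written would not go through.
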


The proof is given in \cref{subsec:linear_classifiers_proof}.
The corresponding learning rule is a very natural and simple nearest-neighbor based algorithm. We give below the instantiation of this rule in the realizable case, for which its description is simpler. At any iteration $t\geq 1$, the current version space is defined via $\Fcal_t:=\set{f\in\Fcal_{\text{lin}}:\forall s<t, f(x_s)=y_s}$. We also define the regions that are labeled $0$ or $1$ with certainty, given available information as follows:
    \begin{equation*}
        S_t(y):=\set{ x\in \Xcal: \forall f\in \Fcal_t, f(x)=y},\quad y\in\{0,1\}.
    \end{equation*}
    Note that because we focus on linear classifiers, both $S_t(0)$ and $S_t(1)$ are convex sets are increasing with $t$. We use a simple nearest-neighbor strategy for the algorithm, with the following prediction function for time $t\geq 1$:
    \begin{equation*}
        f_t:x\in\Rbb^d \mapsto \argmin_{y\in\{0,1\}} d(x,S_t(y)).
    \end{equation*}
    Here, for any convex set $C\subseteq \Rbb^d$, we denoted by $d(x,C):=\min_{x'\in C}\|x-x'\|_2$ the distance to $C$. The choice of the Euclidean distance is rather arbitrary and other norms can work as well. The prediction at time $t$ is then $\hat y_t:=f_t(x_t)$. We can then show that this is an optimistic algorithm for both adaptive and oblivious adversaries in the realizable case (see \cref{remark:simpler_linear_algo_realizable}). We summarize the corresponding algorithm in \cref{alg:linear_realizable}.

        \begin{algorithm}[t]

    \caption{Optimistic algorithm for learning linear classifiers in the realizable case}\label{alg:linear_realizable}
    
    \LinesNumbered
    \everypar={\nl}
    
    \hrule height\algoheightrule\kern3pt\relax

    \For{$t\geq 1$}{
        Let $\Fcal_t:=\set{f\in\Fcal_{\text{lin}}:\forall s<t, f(x_s)=y_s}$ and define $S_t(y):=\set{ x\in \Xcal: \forall f\in \Fcal_t, f(x)=y}$ for $y\in\{0,1\}$

        Predict $\hat y_t=\argmin_{y\in\{0,1\}} d(x_t,S_t(y))$
    }

    \hrule height\algoheightrule\kern3pt\relax
    \end{algorithm}

\section{Conclusion}
\label{sec:conclusion}

In this work we provide characterizations of learnability for distributionally-constrained adversaries in online learning, which finely interpolates between stochastic and fully adversarial models. These characterizations are described in terms of the interaction between the function class $\Fcal$ and the distribution class $\Ucal$ from which the adversary selects distribution from, and more precisely, how much mass measures $\mu\in\Ucal$ can place on certain critical regions. In particular, these general characterizations easily recover known results in smoothed settings and ensuing relaxations.

Additionally, our characterizations reveal that for relevant function classes including linear separators, learnability for oblivious or adversarial regimes is in fact equivalent. On the algorithmic front, our results also show that, surprisingly, for many function classes no prior knowledge on the distribution class $\Ucal$ is needed for successful learning: such \emph{optimistic} learners achieve sublinear regret under any $\Ucal$ for which $(\Fcal,\Ucal)$ is learnable.

This work naturally raises a few interesting open questions. First, while our focus is mostly on characterizations of learnability, our results also provide general bounds on the minimax regret for distributionally-constrained settings $(\Fcal,\Ucal)$. It turns out that the upper bound convergence rates are tight (up to logarithmic factors) for the specific case of smoothed distribution classes, but this may not be true in general. This work leaves open the question of whether sharper rates can be achieved using other complexity measures for each setting $(\Fcal,\Ucal)$.

Second, while the characterization of learnability for oblivious adversaries is complete for finite VC dimension classes, we only provide necessary conditions for learning function classes with infinite dimension. This leaves open the following question.

\vspace{2mm}

\noindent\textbf{Open question 1:} \textit{What are necessary and sufficient conditions for $(\Fcal,\Ucal)$ to be obliviously learnable when $\Fcal$ has infinite VC dimension? }

\vspace{2mm}

Second, our results show that for some useful function classes, optimistic learning is possible, i.e., there are successful learning algorithms which do not require any prior knowledge on $\Ucal$. In particular this significantly relaxes the classical assumption in smoothed online learning that the base measure is known to the learner. A natural question is whether such optimistic learners exist more generally.

\vspace{2mm}

\noindent\textbf{Open question 2:} \textit{Is optimistic learning always achievable for oblivious adversaries and/or adaptive adversaries? If not, for which function classes $\Fcal$ is optimistic learning achievable?}

\section*{Acknowledgements} MB was supported by a Columbia Data Science Institute postdoctoral fellowship.

\bibliographystyle{alpha}
\bibliography{refs}

\appendix

\section{Proofs for Oblivious Adversaries}
\label{sec:proof_oblivious}

\subsection{Preliminaries}

Before proving the main regret bounds, we first introduce a few notations and further notions of dimension for $(\Fcal,\Ucal)$ (essentially equivalent to \cref{def:dimension_oblivious}) that will be useful within the proofs. Throughout, we will use the notation
\begin{equation*}
    N_\epsilon:=\sup_{\mu\in\bar\Ucal} \Ncov_{\mu}(\epsilon;\Fcal)
\end{equation*}
for the uniform covering numbers for $(\Fcal,\Ucal)$. Next, to formalize the binary trees defined in \cref{def:dimension_oblivious}, we introduce the following representation from now on. A labeled tree corresponds to distributions $\mu_v\in\bar\Ucal$ and functions $f_{v,0},f_{v,1}\in\Fcal$ (corresponding to left and right children edges from $v$) for any node $v\in\Tcal_d:=\bigcup_{s=0}^{d-1} \set{0,1}^{s}$. In particular, ancestors of a node $v\in\Tcal_d$ correspond to prefixes of $v$. We also refer to the depth of a node $v\in\Tcal_d$ as the integer $s\in[d]$ for which $v\in\{0,1\}^{s-1}$.

First, we introduce a strengthening of the dimension $\dim(\epsilon;\Fcal,\Ucal)$ introduced in \cref{def:dimension_oblivious}. Recall that the second condition for $\epsilon$-shattered trees allows for up to $\epsilon/3$ distance between functions and their descendants: $\mu_{w}(f_{v,0}\neq  g_w) ,  \mu_w(f_{v,1}\neq g_w) \leq \epsilon/3$ for any ancestor $w$ of $v$. We define the alternative dimension in which this distance between descendants is made arbitrarily small.

\begin{definition}
\label{def:dimension_oblivious_small_tolerance}
    Fix $\epsilon>0$ and $\eta\in(0,\epsilon/3]$. We say that a pair of function and distribution classes $(\Fcal,\Ucal)$ $(\epsilon,\delta)$-shatters a tree of depth $d$ if there exist distributions $\mu_v\in \bar\Ucal$ and functions $f_{v,0},f_{v,1}\in\Fcal$ for any inner nodes $v\in \Tcal_d$ of a full binary tree of depth $d$ satisfying the following: for any node $v\in\Tcal_d$, writing $v=(v_j)_{j\in[i]}$ for $i\in\{0,\ldots,d-1\}$,
    \begin{enumerate}
        \item $\mu_v(f_{v,0}\neq f_{v,1}) \geq \epsilon$,
        \item and for any ancestor $w=(v_j)_{j\in[i']}$ with $i'<i$, with $g_{w}=f_{w,v_{i'+1}}$,
        \begin{equation*}
            \mu_{w}(f_{v,0}\neq g_w) , \; \mu_{w}(f_{v,1}\neq g_w) \leq \eta.
        \end{equation*}
    \end{enumerate}
    We denote by $\dimu(\epsilon,\delta;\Fcal,\Ucal)$ the maximum depth of tree $(\epsilon,\delta)$-shattered by $(\Fcal,\Ucal)$.
\end{definition}

Second, we introduce a relaxation of the dimension $\dim(\epsilon;\Fcal,\Ucal)$ from \cref{def:dimension_oblivious}.
In $\epsilon$-shattered trees, for each node $v$, all right-descendant functions of $v$ must be close to $f_{v,0}$ with respect to $\mu$ and symmetrically, all left-descendant functions of $v$ must be close to $f_{v,1}$. We relax this condition by enforcing only the condition for left-descendants.

\begin{definition}
\label{def:relaxed_dimension}
    Fix $\epsilon>0$. We say that a pair of function and distribution classes $(\Fcal,\Ucal)$ has a relaxed $\epsilon$-shattered tree of depth $d$ if there exist $\mu_v\in\bar\Ucal$ and $f_{v,0}\in\Fcal$ for any node $v\in\Tcal_d$, as well as functions $f_{v,1}\in\Fcal$ for the last layer nodes $v\in\set{0,1}^{d-1}$ satisfying the following. For any node $v\in\Tcal_d$,
    \begin{enumerate}
        \item for any right-descendant function $f$ of $v$, $\mu_v(f_{v,0}\neq f) \geq 2\epsilon/3$
        \item and left-descendant function $f$ of $v$, $\mu_v(f_{v,0}\neq f) \leq \epsilon/3$.
    \end{enumerate}
    We denote by $\dimo(\epsilon;\Fcal,\Ucal)$ the maximum depth of a relaxed $\epsilon$-shattered tree by $(\Fcal,\Ucal)$.
\end{definition}

We introduce one last relaxation---although not necessary for the main results in this section, this can be useful for specific choices of function and distribution classes---which allows to focus on restricted regions of the space. For any measurable set $B\in\Sigma$, distribution $\mu$ on $\Xcal$, and measurable functions $f,g$, we introduce the notation $\mu(f\neq g;B):=\mu(\{x\in B:f(x)\neq g(x)\})$.

\begin{definition}\label{def:subregion_oblivious}
    Fix $\epsilon>0$. We denote by $\tildedim(\epsilon;\Fcal,\Ucal)$ the maximum depth $d$ of tree such that there exist distributions $\mu_v\in \bar\Ucal$, regions $B_v\in\Sigma$ and functions $f_{v,0},f_{v,1}\in\Fcal$ for any inner nodes $v\in \Tcal_d:=\bigcup_{s=0}^{d-1} \set{0,1}^{s}$ of a full binary tree of depth $d$ satisfying the following: for any node $v\in\Tcal_d$, writing $v=(v_j)_{j\in[i]}$ for $i\in\{0,\ldots,d-1\}$,
    \begin{enumerate}
        \item $\mu_v(f_{v,0}\neq f_{v,1};B_v) \geq \epsilon$,
        \item and for any ancestor $w=(v_j)_{j\in[i']}$ with $i'<i$, with $g_{w}=f_{w,v_{i'+1}}$,
        \begin{equation*}
            \mu_{w}(f_{v,0}\neq g_w;B_w) , \; \mu_{w}(f_{v,1}\neq g_w;B_w) \leq \epsilon/3.
        \end{equation*}
    \end{enumerate}
\end{definition}

These notions of dimension are all essentially equivalent: provided that $N_\epsilon<\infty$ for all $\epsilon>0$, the second condition from \cref{prop:necessary_condition} is equivalent by replacing the dimension notion from \cref{def:dimension_oblivious} with those introduced in this section, as stated below.

\begin{proposition}\label{prop:equivalent_oblivious}
    Let $\Fcal,\Ucal$ be a function class and distribution class on $\Xcal$ such that for all $\epsilon>0$, $N_\epsilon<\infty$. Then, the following are equivalent:
    \begin{itemize}
        \item For all $\epsilon>0$, $\dim(\epsilon;\Fcal,\Ucal)=\infty$.
        \item For all $\epsilon>0$ and $\eta\in(0,\epsilon/3]$, $\dimu(\epsilon,\eta;\Fcal,\Ucal)<\infty$.
        \item For all $\epsilon>0$, $\dimo(\epsilon;\Fcal,\Ucal)=\infty$.
        \item For all $\epsilon>0$, $\tildedim(\epsilon;\Fcal,\Ucal)=\infty$.
    \end{itemize}
\end{proposition}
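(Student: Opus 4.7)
The plan is to establish the four-way equivalence by a short cycle of implications: three directions are immediate from the definitions, and the remaining two use a Ramsey-style pigeonhole on binary trees that exploits the uniform covering number hypothesis $N_\eta < \infty$.

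For the easy directions, matching $\eta = \epsilon/3$ in \cref{def:dimension_oblivious_small_tolerance} recovers exactly \cref{def:dimension_oblivious}, giving $\dim(\epsilon;\Fcal,\Ucal) = \dimu(\epsilon,\epsilon/3;\Fcal,\Ucal)$. Since smaller $\eta$ strictly tightens the closeness constraint, $\dimu(\epsilon, \eta) \leq \dim(\epsilon)$ for all $\eta \leq \epsilon/3$, yielding (1) $\Leftrightarrow$ (2). Moreover, every $\dim$-shattered tree is simultaneously a $\dimo$-shattered tree: for any right-descendant edge function $f$, the triangle inequality gives $\mu_v(f \neq f_{v,0}) \geq \mu_v(f_{v,0}\neq f_{v,1}) - \mu_v(f\neq f_{v,1}) \geq \epsilon - \epsilon/3 = 2\epsilon/3$, while left-descendants are $\epsilon/3$-close to $f_{v,0}$ by hypothesis; and it is a $\tildedim$-shattered tree under the choice $B_v = \Xcal$. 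Hence $\dim(\epsilon) \leq \min(\dimo(\epsilon), \tildedim(\epsilon))$, giving (3) $\Rightarrow$ (1) and (4) $\Rightarrow$ (1).

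The nontrivial direction (1) $\Rightarrow$ (3) is proved by contrapositive. Assuming $\dimo(\epsilon_0) = \infty$ for some $\epsilon_0 > 0$, I would extract arbitrarily deep $\dim$-shattered trees. Given a relaxed $\epsilon_0$-shattered tree of depth $D$, at each node $v$ the right-descendant edge functions lie in a pseudo-metric space $(\Fcal, \mu_v)$ that admits a cover of size $N_{\epsilon_0/3}$ at scale $\epsilon_0/3$, so they partition into at most $N_{\epsilon_0/3}$ equivalence classes. A recursive top-down selection---at each surviving node, retain only descendants whose right-subtree edges fall in the plurality cluster and set $f_{v,1}$ to be a cluster representative---produces a full binary sub-tree in which both left and right descendants are $\epsilon_0/3$-close to the corresponding ancestor edge, satisfying the $\dim$-shattering conditions. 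A standard binary-tree Ramsey bound with $N_{\epsilon_0/3}$-coloring shows the resulting sub-tree has depth unboundedly large in $D$, contradicting (1). The argument for (1) $\Rightarrow$ (4) is analogous: given a $\tildedim$-tree with subregions $B_v$ (which automatically satisfy $\mu_v(B_v) \geq \epsilon$ from the disagreement condition), one again clusters descendants by $\mu_v$-equivalence using the same cover and prunes to a sub-tree on which the subregions can be effectively absorbed into the distributions, yielding a standard $\dim$-shattered tree of unbounded depth.

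The main obstacle is the binary-tree pigeonhole underlying both hard directions: the clustering at node $v$ depends on the node-specific $\mu_v$, so selections made at different levels must remain compatible with a common full binary sub-tree. This is handled by the iterative top-down pruning, whose depth loss per level is controlled by a factor depending only on $N_{\epsilon_0/3}$; starting from trees of depth $D$ one still obtains sub-trees of depth tending to infinity as $D \to \infty$, provided the covering numbers are uniformly finite---precisely the standing hypothesis of the proposition.
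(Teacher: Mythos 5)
Your overall strategy coincides with the paper's: the inclusions $\dimu(\epsilon,\eta)\leq\dim(\epsilon)\leq\min\bigl(\dimo(\epsilon),\tildedim(\epsilon)\bigr)$ via the triangle inequality and the choice $B_v=\Xcal$, and, for the reverse comparisons, a covering-number clustering of descendant functions combined with a pigeonhole on full binary trees (the paper isolates exactly this as \cref{lemma:boosting_trees} and runs the top-down induction you sketch in \cref{lemma:comparing_various_dimensions}). The easy directions are correct, and your observation that $\dimu(\epsilon,\epsilon/3)=\dim(\epsilon)$ settles the equivalence with (2) more directly than the paper's quantitative bound. Two small cautions: the tree pigeonhole gives a deep monochromatic full binary subtree for \emph{some} color class, not necessarily the plurality one (a $1/N$ fraction of the leaves of a depth-$d$ tree need not contain a deep full binary subtree), and for $\tildedim$ the key observation is simply $\nu_v(g^0\neq g^1)\geq\nu_v(g^0\neq g^1;B_v)\geq\epsilon/3$ for any left/right descendant pair, after which the same clustering applies verbatim.

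The concrete problem is the clustering scale $\epsilon_0/3$ in the hard direction: as stated, the extracted tree is not $\epsilon'$-shattered for \emph{any} $\epsilon'$. If you keep $f_{v,0}$ from the relaxed tree and replace $f_{v,1}$ by a representative of an $(\epsilon_0/3)$-cluster, the separation you can certify is only $\mu_v(f_{v,0}\neq f_{v,1})\geq 2\epsilon_0/3-\epsilon_0/3=\epsilon_0/3$, while the descendant-closeness you can certify on either side is $\epsilon_0/3$. \cref{def:dimension_oblivious} at tolerance $\epsilon'$ requires separation $\geq\epsilon'$ (forcing $\epsilon'\leq\epsilon_0/3$) and closeness $\leq\epsilon'/3$ (forcing $\epsilon'\geq\epsilon_0$), which is contradictory. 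The repair is to re-cluster \emph{both} children's subtrees at a strictly finer scale $\eta$: then both edge functions become cluster representatives, any left-descendant leaf $f^0$ and right-descendant leaf $f^1$ of $v$ satisfy $\mu_v(f^0\neq f^1)\geq 2\epsilon_0/3-\epsilon_0/3=\epsilon_0/3$, so the new representatives are $(\epsilon_0/3-2\eta)$-separated with closeness $\leq\eta$; choosing $\eta\leq\epsilon_0/15$ (the paper takes $\eta=\epsilon_0/24$, yielding an $(\epsilon_0/4)$-shattered tree) makes $\eta\leq(\epsilon_0/3-2\eta)/3$ and the conditions of \cref{def:dimension_oblivious} hold. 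Since the statement quantifies over all $\epsilon$, this constant change is all that is needed, but without it the step fails outright.
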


To prove \cref{prop:equivalent_oblivious} we use the following simple combinatorial result on binary trees.

\begin{lemma}\label{lemma:boosting_trees}
    Let $d,N\geq 1$ be integers. Let $T_d:=\bigcup_{s=0}^d \set{0,1}^s$ be a full binary tree of depth $d$ and let $\phi:\set{0,1}^d\to[N]$ be a labeling function for the leaves of $T_d$. There exists sub-tree of $\Tcal_d$ which is a full binary tree of depth $d'=\floor{d/N}$ such that all its leaves are leaves of $T_d$ and share the same label. Formally, there exists a mapping $M:T_{d'}\to T_d$ satisfying:
    \begin{enumerate}
        \item $M$ respects the tree topology: for any $v,w\in T_{d'}$ where $v$ is an ancestor of $w$, then $M(v)$ is an ancestor of $M(w)$.
        \item Leaves of $M$ are leaves of $T_{d'}$: for any $v\in\set{0,1}^{d'}$, $M(v)\in\set{0,1}^d$.
        \item Leaves share the same label: there is $n_0\in[N]$ such that for any $v\in\set{0,1}^{d'}$, $\phi(M(v))=n_0$.
    \end{enumerate}
\end{lemma}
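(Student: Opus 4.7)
The plan is a double induction: an outer induction on $N$ reduces the problem to a sub-claim which is proved by induction on $d$. For $N = 1$ the whole tree works, so fix $N \geq 2$ and assume the lemma for $N - 1$ colors. For each color $n \in [N]$ and each node $v$ of $T_d$, let $H_n(v)$ be the largest $k$ such that the subtree rooted at $v$ admits a mapping $M : T_k \to T_d$ as in the lemma statement with every leaf image colored $n$ (convention: $H_n(v) = -1$ if no leaf below $v$ is colored $n$). At an internal node with children $v_0, v_1$ the definition immediately yields the recursion
\begin{equation*}
H_n(v) = \max\bigl(H_n(v_0),\; H_n(v_1),\; 1 + \min(H_n(v_0), H_n(v_1))\bigr),
\end{equation*}
since the topological subtree either sits entirely inside one of $T_{v_0}, T_{v_1}$ or is rooted at $v$ (with the two sub-branches rooted in $T_{v_0}$ and $T_{v_1}$ respectively); the corner case where some $H_n(v_i) = -1$ is handled by the convention that the $1+\min$ term contributes nothing.

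Let $q = \lfloor d/N \rfloor$ and $k = H_N(\mathrm{root})$. If $k \geq q$ we are done with $n_0 = N$. Otherwise $k \leq q - 1 < d$, and we will use the following sub-claim, stated for a generic tree: \emph{for any coloring of $T_d$, any color $n$, and $k = H_n(\mathrm{root}) < d$, there exists a full binary subtree of $T_d$ (in the lemma's sense) of depth $d - k - 1$ whose leaf images are never colored $n$.} Granting this, its $2^{d-k-1}$ leaves use only $N - 1$ colors, so applying the outer induction to it yields a monochromatic topological subtree of depth $\lfloor (d-k-1)/(N-1) \rfloor$. Since $d \geq qN$ and $k \leq q - 1$ give $d - k - 1 \geq qN - q = q(N-1)$, this depth is at least $q$. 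Composing the two topological embeddings produces the desired $M : T_q \to T_d$.

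The sub-claim is proved by induction on $d$. The base $d = 0$ is either vacuous (single leaf colored $n$, $k = 0$, asking for depth $-1$) or trivial (non-$n$ leaf, $k = -1$, depth $0$ works). For the inductive step, write $k_i = H_n(v_i)$. If $k_0 \neq k_1$, say $k_0 > k_1$, the recursion gives $k = k_0$ (since $k_1 + 1 \leq k_0$), and the inductive hypothesis applied to $T_{v_1}$ yields a color-$n$-free subtree of depth $(d-1) - k_1 - 1 \geq d - k_0 - 1 = d - k - 1$. If $k_0 = k_1$, then $k = k_0 + 1$ and applying induction to either child gives a subtree of depth $(d-1) - k_0 - 1 = d - k - 1$.

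The main obstacle is finding the right invariant $H_N$ together with the right sub-claim. A naive pigeonhole on colors of depth-$N$ descendants does not work because two same-colored leaves need not be siblings in $T_d$, and iterating the inductive hypothesis naively on the children loses too much depth (the depth deficit at each level does not collapse). The recursion $\max(\cdot, \cdot, 1 + \min(\cdot, \cdot))$ for $H_N$ is exactly what enforces the trade-off $d - k - 1 \geq q(N - 1)$, closing the induction at the right budget.
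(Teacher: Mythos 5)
Your proof is correct, and it takes a genuinely different route from the paper's. The paper argues by direct color elimination: it maintains, for $k=0,\ldots,N-1$, a full binary subtree of depth $d-kd'$ whose leaves carry at most $N-k$ labels, and at each stage either some node whose subtree has depth $d'$ is already monochromatic (done), or every such node can be replaced by one of its leaves of a different color, eliminating one color at a cost of exactly $d'$ levels. You replace this fixed-block pigeonhole with the Horton--Strahler-type invariant $H_n$ together with the complementary sub-claim that a color with $H_n(\mathrm{root})=k$ can be avoided entirely on a topological subtree of depth $d-k-1$; this is a sharper min--max trade-off, of which the paper's replacement step is essentially the crude one-block version, and it yields the same final bound via the calibration $d-k-1\geq d'(N-1)$. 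Two minor points worth tightening: first, your case analysis quotes the recursion for $H_n$ as an equality, but the upper-bound direction (which requires normalizing $M$ to least common ancestors and locating the splitting node) is never actually used---the two inequalities you need, $k\geq k_1+1$ when $k_0>k_1\geq 0$ and $k\geq k_0+1$ when $k_0=k_1\geq 0$, already follow from the easy lower-bound direction of the recursion together with the fact that $H_n$ of a node is at most the depth of its subtree, so it would be cleaner to state only that direction; second, the corner case $k_0=k_1=-1$ escapes both branches of your case split under your stated convention, but it is immediate since then no leaf below the node is colored $n$ and the whole subtree qualifies.
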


\begin{proof}
    Up to focusing on a smaller sub-tree of $T_d$, we assume without loss of generality that $d'=d/N$ is an integer.
    We prove by induction the following: for any $k\in\{0,\ldots,N-1\}$ there exists a sub-tree of $T_d$ which is a full binary tree of depth $d-kd'$ such that all its leaves are leaves of $T_d$ and have at most $N-k$ different labels. This is immediate for $k=0$ by using the full tree $T_d$. The depth of a node in $v\in T_d$ is defined as $s$ where $v\in\{0,1\}^{s-1}$.

    Let $k\in[N-1]$ and suppose the induction holds for $k-1$. We will work only on the corresponding sub-tree of depth $d_{k-1}:=d-(k-1)d'=(N-k+1)d'$. and identify it with $T_{d_{k-1}}$ in the rest of this proof. Denote by $S_{k-1}$ the set of labels for $\phi$ represented in the leaves of this tree. By assumption $|S_{k-1}| \leq N-k+1$. Fix any label $i\in S_{k-1}$. Suppose that there exists a node $v$ of depth $(N-k)d'+1$ such that all leaves in the sub-tree rooted at $v$ of its descendants in $T_{d_{k-1}}$ have label $i$. Then, this corresponds to a full binary sub-tree of depth $d'$ such all leaves have the same label, ending the proof. Suppose that this is not the case, then we replace each node $v$ of depth $(N-k)d'+1$ with one of its leaves that has label in $S_k:=S_{k-1}\setminus\{i\}$. The result is a full binary tree of depth $d_k=(N-k)d'$ such that its leaves all have labels in $S_k$. Since $|S_k|\leq |S_{k-1}|-1\leq N-k$ this ends the recursion.

    The sub-tree constructed for $k=d-1$ satisfies the desired properties, ending the proof.
\end{proof}

With this lemma at hand, we show the following result which compares the different dimensions introduced in this section with $\dim(\epsilon;\Fcal,\Ucal)$. In particular, this result directly implies \cref{prop:equivalent_oblivious}.

\begin{lemma}\label{lemma:comparing_various_dimensions}
    Fix a function class $\Fcal$ and a distribution class $\Ucal$ on $\Xcal$ such that for any $\epsilon>0$, $N_\epsilon<\infty$. Then, for any $\epsilon>0$ and $\eta\in(0,\epsilon/3]$,
    \begin{equation*}
        \floor{\log_{N_\eta+2} \dim(6\epsilon;\Fcal,\Ucal)} \leq \dimu(\epsilon,\eta;\Fcal,\Ucal)\leq \dim(\epsilon;\Fcal,\Ucal) \leq \dimo(\epsilon;\Fcal,\Ucal) \leq  (N_{\epsilon/24}+2)^{\dim(\epsilon/4;\Fcal,\Ucal)}.
    \end{equation*}
    Similarly,
    \begin{equation*}
        \dim(\epsilon;\Fcal,\Ucal) \leq \tildedim(\epsilon;\Fcal,\Ucal) \leq  (N_{\epsilon/15}+2)^{\dim(\epsilon/5;\Fcal,\Ucal)}.
    \end{equation*}
\end{lemma}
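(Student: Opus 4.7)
I will split the chain into three straightforward inclusions and three harder upper bounds; the latter all follow a common template combining uniform coverings with \cref{lemma:boosting_trees}.

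For the straightforward inequalities, $\dimu(\epsilon,\eta;\Fcal,\Ucal)\leq \dim(\epsilon;\Fcal,\Ucal)$ is immediate since any $(\epsilon,\eta)$-shattered tree with $\eta\leq \epsilon/3$ verbatim satisfies the definition of an $\epsilon$-shattered tree. For $\dim(\epsilon;\Fcal,\Ucal)\leq \dimo(\epsilon;\Fcal,\Ucal)$, given an $\epsilon$-shattered tree I keep all the distributions and all the $f_{v,0}$ functions and drop the inner $f_{v,1}$'s: left-descendants $f$ of $v$ satisfy $\mu_v(f_{v,0}\neq f)\leq \epsilon/3$ by the original ancestor condition, while for a right-descendant $f$ the triangle inequality gives $\mu_v(f_{v,0}\neq f)\geq \mu_v(f_{v,0}\neq f_{v,1})-\mu_v(f_{v,1}\neq f)\geq \epsilon-\epsilon/3=2\epsilon/3$. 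Finally $\dim(\epsilon;\Fcal,\Ucal)\leq \tildedim(\epsilon;\Fcal,\Ucal)$ follows by taking $B_v=\Xcal$ in an $\epsilon$-shattered tree.

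For the hard upper bounds, the common template starts from a tree witnessing the weaker notion at depth $D$ and extracts a subtree witnessing the stronger notion at depth $d$, establishing $D\leq (N+2)^d$ for the appropriate cover number. At each extraction step I fix the node $v$ together with its measure $\mu_v$, take an $\eta$-cover of $\Fcal$ with respect to $\mu_v$ of size at most $N_\eta$, and label each edge of the current subtree by its closest cover element. \cref{lemma:boosting_trees} applied with these labels (with two extra labels to track auxiliary information such as left/right markers or boundary effects, giving the ``$+2$'') yields a complete binary sub-subtree of depth $\lfloor D/(N_\eta+2)\rfloor$ whose edge functions all lie in a common $\eta$-ball in $\mu_v$, hence are pairwise $2\eta$-close. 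Combined with the triangle inequality and the original $6\epsilon$-gap (respectively $\epsilon$-gap), this produces functions fulfilling the sharper ancestor-closeness of the stronger notion while retaining a sufficiently large gap between the two children. Iterating the extraction through the $d$ levels of the target tree yields the claimed exponential bound; the constants $6$, $\epsilon/4$--$\epsilon/24$, and $\epsilon/5$--$\epsilon/15$ are precisely those needed so that the triangle-inequality losses never collapse the gap below threshold.

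The main obstacle is bookkeeping the constants and the auxiliary structure through the recursion. For the bound on $\dimo$ in particular, only leaves of the relaxed-shattered tree carry an $f_{v,1}$, so to obtain a full $\epsilon/4$-shattered tree one must consistently promote descendant ``leaf representatives'' to serve as right-child functions at every inner node of the target tree; arranging this across branches is the crux of the argument. A minor secondary issue is that cover elements may not belong to $\Fcal$; replacing each with a closest function in $\Fcal$ at most doubles the effective cover radius, which the constants $\epsilon/24$ and $\epsilon/15$ are calibrated to absorb.
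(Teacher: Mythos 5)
Your proposal is correct and follows essentially the same route as the paper: the same three easy inclusions, and the same cover-plus-\cref{lemma:boosting_trees} recursion for the three exponential bounds, including the key step of promoting descendant leaf representatives to serve as right-child functions at inner nodes in the $\dimo$ argument. Two minor corrections to your bookkeeping: the $+2$ in the base $(N_\eta+2)^d$ is pure arithmetic slack from summing the geometric depth sequence (the boosting lemma is applied with exactly $N_\eta$ labels, not $N_\eta+2$), and cover elements already belong to $\Fcal$ by \cref{def:covering_numbers}, so no doubling of the cover radius is needed.
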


\begin{proof}
    Fix $\Fcal,\Ucal$ satisfying the assumption and let $\epsilon>0$ and $\eta\in(0,\epsilon/3]$. We start with the proof of the bounds on $\dimu(\epsilon,\eta;\Fcal,\Ucal)$. 
    
    \paragraph{Bounds on $\dimu(\epsilon,\delta;\Fcal,\Ucal)$.} First, since the constraint on $(\epsilon,\delta)$-shattered trees is stronger than on $\epsilon$-shattered trees, we directly have $\dimu(\epsilon,\eta;\Fcal,\Ucal)\leq \dim(\epsilon;\Fcal,\Ucal)$ because $\eta\leq \epsilon/3$.
    
    Let $d\geq 1$ such that $(N_\eta+2)^d\leq \dim(6\epsilon;\Fcal,\Ucal)$ (if such $d$ does not exist the lower bound on $\dimu(\epsilon,\eta;\Fcal,\Ucal)$ is immediate).
    We construct by induction for any $k\in\{0,\ldots,d-1\}$ a tree of depth at least $D_k:=k+2\cdot\sum_{s=0}^{d-k-1} N_\eta^s -1$ which satisfies the following. Each node $v$ is associated with a distribution $\mu_v^{(k)}\in\bar\Ucal$ but only nodes $v\in\bigcup_{s=0}^{k-1} \set{0,1}^s$ of depth at most $k$ or the last layer of nodes $v$ of depth $D_k$ have functions $f_{v,0}^{(k)},v_{v,1}^{(k)}$. Further, these satisfy:
    \begin{enumerate}
        \item Let $v$ of depth $\leq k$. Then, $\mu_v^{(k)}(f_{v,0}^{(k)} \neq f_{v,1}^{(k)})\geq 2\epsilon -2\eta$. \label{it:1}
        \item Let $v$ of depth $\leq k$. For any left-descendant function $f$ of $v$ (that is, $f=f_{w,y}^{(k)}$ where $(v,0)$ is a prefix of $(w,y)$ and $w$ has depth $D_k$), we have $\mu_v^{(k)}(f_{v,0}^{(k)} \neq f)\leq \eta$. Similarly, for any right-descendant function $f$ of $v$ and $y\in\{0,1\}$, we have $\mu_v^{(k)}(f_{v,1}^{(k)}\neq f)\leq \eta$. \label{it:2}
        \item Let $v$ of depth $>k$. Then, for any left-descendant $f^0$ of $v$ and right-descendant $f^1$ of $v$, $\mu_v(f^0\neq f^1) \geq 2\epsilon$.\label{it:3}
    \end{enumerate}

    Since $D_0\leq (N_\eta+2)^d\leq \dim(6\epsilon;\Fcal,\Ucal)$ there exists a $6\epsilon$-shattered tree of depth at least $D_0$. This satisfies all properties. Indeed, only \cref{it:3} is non-vacuous for $k=0$. And, by definition of an $6\epsilon$-shattered tree, for any node $v$, left-descendant function $f^0$ of $v$, and right-descendant function $f^1$ of $v$,
    \begin{equation*}
        \mu_v(f^0\neq f^1) \geq  \mu_v(f_{v,0}\neq f_{v,1}) - \mu_v(f_{v,0}\neq f^0) - \mu_v(f_{v,1}\neq f^1) \geq 6\epsilon -2\epsilon -2\epsilon = 2\epsilon.
    \end{equation*}
    In the first inequality we used the triangular inequality.
    
    We next fix $k\in[d-1]$ and suppose that $\mu_v^{(k-1)},f_{v,0}^{(k-1)},f_{v,1}^{(k-1)}$ have been constructed for all nodes $v$ of a full binary tree of depth $D_{k-1}$, satisfying the induction hypothesis for $k-1$. Fix any node $v_0\in\set{0,1}^{k}$ of depth $k+1$. We denote by $\mu_0=\mu_{p(v_0)}$ the distribution at the parent of $v_0$. We consider the tree rooted at $v_0$ formed by descendants of $v_0$. This is a full binary tree of depth $d_k=D_{k-1}-k $. If we also count the functions $f_{v,0},f_{v,1}$ this gives a tree of depth $d_k+1=N_\eta\cdot 2\sum_{s=0}^{d-k-1} N_\eta^s$. For convenience, we identify this tree with $\Tcal_{d_k+1}$ and focus on this tree only for now. By construction of $N_\eta$, there exists an $\eta$-covering $S_0\subseteq\Fcal$ of $\Fcal$ for $\mu_0$ of size at most $N_\eta$. We then group the leaves of the tree according to their corresponding cover:
    \begin{equation*}
        \phi:l\in\set{0,1}^{d_k}\mapsto \argmin_{f\in S_0} \mu_0(f\neq f_l),
    \end{equation*}
    where $f_l=f_{(l_i)_{i<d_k},l_{d_k}}$ is the function associated with the leaf $l$. By \cref{lemma:boosting_trees} there exists a sub-full-binary-tree of depth $(d_k+1)/N_\eta$ within this sub-tree rooted at $v_0$ such that all its leaves are leaves of the original tree, and there exists $f^0\in S_0$ such that any function $f_l$ corresponding to one of its leaves $l$ satisfies $\mu_0(f^0\neq f_l)\leq \eta$. We then replace the sub-tree rooted at $v_0$ in the original tree of depth $D_{k-1}=k+d_k$ with this full binary tree of depth $(d_k+1)/N_\eta = 2\sum_{s=0}^{d-k-1} N_\eta^s$. Next, letting $y_0:=(v_0)_k$ be the last entry of $v_0$, we also replace $f_{p(v_0),y_0}$ with $f^0$ in the original tree.

    As described in the previous paragraph, we replace all sub-trees of nodes of depth $k$ from the original tree with full-binary trees of depth $2\sum_{s=0}^{d-k-1} N_\eta^s -1$ (the $-1$ corresponds to not counting the functions $f_{v,y}$ as nodes in the last layer), as well as the functions $f_{v,y}$ for nodes $v\in\set{0,1}^{k-1}$ of depth $k$ and $y\in\{0,1\}$. The resulting tree is a full binary tree of depth $k+2\sum_{s=0}^{d-k-1} N_\eta^s-1=D_k$. We denote by $\mu_v^{(k)}f_{v,0}^{(k)},f_{v,1}^{(k)}$ the corresponding variables at each node and now check that it satisfies the desired properties. By induction and construction, it satisfies \cref{it:1} for all nodes $v$ of depth $\leq k-1$ and also satisfies \cref{it:2,it:3}. It remains to check \cref{it:1} for nodes $v\in\set{0,1}^{k-1}$ of depth $k$. Fix $f^0,f^1$ a left-descendant function and right-descendant function of $v$. By construction, these were also left-descendant and right-descendants of $v$ in the tree for iteration $k-1$. By induction, we have $\mu_v(f^0\neq f^1) \geq 2\epsilon$. Then, using \cref{it:2} on node $v$ (which is a consequence of the construction) we have
    \begin{equation*}
        \mu_v^{(k)}(f_{v,0}^{(k)}\neq f_{v,1}^{(k)}) \geq \mu_v^{(k)}(f^0\neq f^1) - \mu_v^{(k)}(f_{v,0}^{(k)}\neq f^0) - \mu_v^{(k)}(f_{v,1}^{(k)} \neq f^1) \geq 2\epsilon - 2\eta.
    \end{equation*}
    Hence, \cref{it:1} is also satisfied, which completes the induction.
    
    At the end of the recursion for $k=d-1$, we obtained a full binary tree of depth $D_{d-1}=d$ satisfying all the desired properties. Noting that $2\epsilon-2\eta\geq \epsilon$, this shows that $\dimu(\epsilon,\eta;\Fcal,\Ucal)\geq d$. This ends the proof of the lower bound on $\dimu(\epsilon,\eta;\Fcal,\Ucal)$.

    \paragraph{Bounds on $\dimo(\epsilon;\Fcal,\Ucal)$.} We next turn to proving the bounds on $\dimo(\epsilon;\Fcal,\Ucal)$.
    The lower bound is immediate: fix any $\epsilon$-shattered tree of depth $d$ and take $f_v:=f_{v,0}$ for all nodes $v\in\Tcal_d$. Fix $v\in\Tcal_d$. By definition of shattered trees, any left-descendant function $f$ satisfies the desired property $\mu_v(d_v\neq f)\leq \epsilon/3$. Now fix a right-descendant function $f$ of $v$. Using the triangular inequality, $\mu_v(f_v,\neq ) \geq \mu_v(f_v\neq f_{v,1}) - \mu_v(f_{v,1}\neq f)  \geq \epsilon - \epsilon/3 = 2\epsilon/3$. Hence, this tree is also a relaxed $\epsilon$-shattered tree. This gives $\dimo(\epsilon;\Fcal,\Ucal) \geq \dim(\epsilon;\Fcal,\Ucal)$.

    Fix $d\geq 1$ and $\eta=\epsilon/24$, and suppose that there exists a relaxed $\epsilon$-shattered tree of depth $D_\eta=2\cdot \sum_{s=0}^{d-1}N_\eta^s-1 \leq (N_\eta+2)^d$ with $\nu_v\in\bar\Ucal$ and $g_v\in\Fcal$ for all $v\in\Tcal_D$ satisfying the required properties.
    Then, the same induction as above (when bounding $\dimu(\epsilon,\eta;\Fcal,\Ucal)$) exactly constructs from the tree $\Tcal_D$ a full binary tree of depth $d$ with a distribution $\mu_v\in\bar\Ucal$ and functions $f_{v,0},f_{v,1}\in\Fcal$ for any node $v\in\Tcal_d$ satisfying the following. For any $v=(v_j)_{j\in[i]}\in\Tcal_d$, (1) $\mu_v(f_{v,0}\neq f_{v,1}) \geq \frac{\epsilon}{3}-2\eta\geq \frac{\epsilon}{4}$,
    and (2) for any ancestor $w=(v_j)_{j\in[i']}$ with $i'<i$, letting $g_{w}=f_{w,v_{i'+1}}$, we have $\mu_{w}(f_{v,0}\neq  g_w) ,  \mu_{w}(f_{v,1}\neq g_w) \leq \eta\leq \frac{\epsilon}{24}$.
    Hence, this is an $\epsilon/4$-shattered tree of depth $d$. Since $D_\eta\leq  (N_\eta+2)^d$ this ends the proof.

    \paragraph{Bounds on $\tildedim(\epsilon;\Fcal,\Ucal)$.} We clearly have $\tildedim(\epsilon;\Fcal,\Ucal) \geq \dim(\epsilon;\Fcal,\Ucal)$ since we can always choose $B_v=\Xcal$ for all nodes $v$ of a shattered tree. For the upper bound, fix $d\geq 1$ and suppose that $\tildedim(\epsilon;\Fcal,\Ucal) \geq (N_{\eta}+2)^d=:D$, where $\eta=\epsilon/15$. We fix $\nu_v\in\bar\Ucal$, $B_v\in\Sigma$, ad $g_{v,0},g_{v,1}\in\Fcal$ for $v\in\Tcal_D$ satisfying the corresponding constraints from \cref{def:subregion_oblivious}. Note that for any node $v\in \Tcal_D$, any left-descendant function $g^0$ of $v$, and right-descendant function $g^1$ of $v$, we have
    \begin{equation*}
        \nu_v(g^0\neq g^1) \geq \nu_v (g^0\neq g^1;B_v) \geq \nu_v(g_{v,0}\neq g_{v,1};B_v) - \nu_v(g_{v,0}\neq g^0;B_v) -\nu_v(g_{v,1}\neq g^1;B_v) \geq \epsilon/3.
    \end{equation*}
    Then, since $\epsilon/3-2\eta \geq 3\eta$, the same arguments as above precisely construct a $3\eta$-shattered tree of depth $d$ for $(\Fcal,\Ucal)$. That is, $d\leq \dim(\epsilon/5;\Fcal,\Ucal)$.
\end{proof}

\subsection{Necessary Conditions for Learning under Oblivious Adversaries}
\label{subsec:necessary_conditions_oblivious}

We start by proving \cref{prop:necessary_condition} which provides two necessary conditions for learning with oblivious adversaries. We prove the necessity of each condition separately.

\begin{lemma}\label{lemma:covering_numbers_finite_necessary}
    Let $\Fcal,\Ucal$ be a function class and distribution class on $\Xcal$. Then, for any $\epsilon>0$, 
    \begin{equation*}
        \OblivR_T(\Fcal,\Ucal) \geq \OblivRealizableR_T(\Fcal,\Ucal) \geq \frac{\epsilon}{2}\min(\floor{\log_2 N_\epsilon},T).
    \end{equation*}
    In particular, if there exists $\epsilon>0$ such that $\sup_{\mu\in\bar \Ucal}\Ncov_{\mu}(\epsilon;\Fcal)=\infty$, then $(\Fcal,\Ucal)$ is not learnable for oblivious adversaries in neither the agnostic nor noiseless setting.
\end{lemma}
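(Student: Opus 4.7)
Fix $\epsilon > 0$ and set $k := \min(\lfloor\log_2 N_\epsilon\rfloor, T)$; the bound is trivial for $k = 0$. By definition of $N_\epsilon$, there exists $\mu^* \in \bar\Ucal$ with $\Ncov_{\mu^*}(\epsilon;\Fcal) \geq 2^k$; fix its mixture decomposition $\mu^* = \sum_j \alpha_j \mu_j$ with $\mu_j \in \Ucal$. I would greedily extract functions $g_1,\ldots,g_{2^k} \in \Fcal$ pairwise $\epsilon$-separated under $\mu^*$: at each stage $n < 2^k$, if no admissible $g_{n+1}$ existed, then $\{g_1,\ldots,g_n\}$ would form an $\epsilon$-cover of size $< 2^k$, contradicting the covering-number bound. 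The lower bound adversary is then the randomized realizable oblivious adversary that draws $f^* \sim \Unif\{g_1,\ldots,g_{2^k}\}$ once at time~$0$ and, at each round $t$, independently picks $j_t \sim (\alpha_j)$ and sets $\mu_t := \mu_{j_t}$ and $f_t := f^*$, so that $x_t\sim\mu^*$ marginally and the labeling is realizable. By a standard Yao-type averaging argument, the expected regret of any learner against this randomized adversary lower bounds $\OblivRealizableR_T(\Fcal,\Ucal)$.

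For an arbitrary learner (Bayes-optimal w.l.o.g.), the posterior on $f^*$ given the history at time $t$ is uniform on the version space $V_t := \{g_i : g_i(x_s) = y_s \text{ for all } s < t\}$, and the conditional Bayes error equals $\Ebb[e_t \mid V_t] = \Ebb_{x\sim\mu^*}[\min(|V_t^0(x)|,|V_t^1(x)|)/|V_t|]$ with $V_t^y(x) := \{g\in V_t : g(x) = y\}$. Summing $\mu^*(g_i\neq g_j) \geq \epsilon$ over pairs in $V_t$ gives $\Ebb_{x\sim\mu^*}[|V_t^0(x)|\cdot|V_t^1(x)|] \geq \binom{|V_t|}{2}\epsilon$, and combining with the elementary $\min(a,b)\geq ab/(a+b)$ where $a+b = |V_t|$ yields the key per-round inequality
\begin{equation*}
\Ebb[e_t\mid V_t] \;\geq\; \frac{\epsilon\,(|V_t|-1)}{2\,|V_t|},
\end{equation*}
which is already close to $\epsilon/2$ once $|V_t|$ is moderately large. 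To argue that $|V_t|$ stays $\geq 2$ long enough, I would track the log-size potential $\Phi_t := \log_2|V_t|$: a one-line entropy computation gives $\Ebb[\Phi_t - \Phi_{t+1}\mid V_t] = \Ebb_{x\sim\mu^*}[h_2(|V_t^1|/|V_t|)] \leq 1$, so that $\Ebb[\Phi_t]\geq k-(t-1)$; since $\Phi_t \in [0,k]$ vanishes when $|V_t| = 1$, this yields $\Pr(|V_t|\geq 2) \geq \Ebb[\Phi_t]/k \geq (k-t+1)/k$ for $t\leq k$.

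Combining the two estimates and summing over $t\in[\min(k,T)]$ produces an $\Omega(\epsilon\min(k,T))$ lower bound on the expected regret, and hence on $\OblivRealizableR_T$. The stated infinite-covering consequence is then immediate: if $N_\epsilon = \infty$ for some $\epsilon$, then $k = T$ for every $T$ and $\OblivRealizableR_T/T$ remains bounded below by the positive constant $\epsilon/2$, precluding learnability in either the agnostic or the realizable oblivious setting. The main obstacle is recovering the sharp constant $1/2$ from the per-round inequality, since the factor $(|V_t|-1)/|V_t|$ is only $1/2$ when $|V_t|=2$; the fix is a two-regime accounting that exploits the fact that early rounds (where $|V_t|$ is close to $2^k$) already contribute essentially $\epsilon/2$ per round and therefore carry the dominant contribution to the sum, while later rounds can be absorbed into lower-order terms via the $\Phi_t$ bound above.
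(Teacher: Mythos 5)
Your proposal is correct and follows the same overall architecture as the paper's proof: extract a $2^k$-point $\epsilon$-packing from the large covering number, play a randomized realizable oblivious adversary that is uniform over the packing with $x_t\sim\mu^*$ i.i.d.\ (sampling a mixture component fresh each round), lower-bound the per-round Bayes error by the pairwise-disagreement sum over the version space, and de-randomize at the end by the law of total probability. The one place you genuinely diverge is in controlling how long the version space stays non-trivial: the paper uses a deterministic counting argument --- there are at most $2^{t-1}$ labelings of $(x_s)_{s<t}$, so at most $2^{t-1}$ of the $2^k$ functions can sit in singleton consistency classes, giving $\Pr(|V_t|\geq 2)\geq 1-2^{t-1}/2^k\geq 1/2$ for all $t\leq k$ --- whereas you track the potential $\Ebb[\log_2|V_t|]\geq k-(t-1)$ and apply Markov, getting $\Pr(|V_t|\geq 2)\geq (k-t+1)/k$. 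Both are valid; the paper's counting is slightly simpler and gives a uniform $1/2$ over all rounds rather than a bound that decays linearly in $t$, which is why your version loses an extra factor (roughly $\epsilon k/8$ versus the paper's $\epsilon k/4$). Note that the paper's own proof also only yields constant $1/4$ rather than the $1/2$ stated in the lemma, so your worry about ``recovering the sharp constant'' is not actually an obstacle to matching what the paper establishes, and in any case the non-learnability conclusion only needs $\Omega(\epsilon\min(k,T))$, which your argument delivers.
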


\begin{proof}
    Fix $\Fcal,\Ucal$ and $\epsilon>0$. Also, fix any strategy for the learner $\Lcal$ for $T'\geq 1$. We let $T:=\min(\floor{\log_2 N_\epsilon},T)$ and only focus on the regret incurred during these $T$ iterations. Then, let $\mu\in\bar \Ucal$ such that $N:=\Ncov_{\mu}(\epsilon;\Fcal) \geq 2^{T'}$. By definition, there exist functions $f^1,\ldots,f^{N}\in\Fcal$ such that $\mu(f^i\neq f^j)\geq \epsilon$ for all $i,j\in[N]$ with $i\neq j$. For each $i\in[N]$ we consider the adversary $\Acal^i$ that always chooses the distribution $\mu_t=\bar \mu$ and function $f_t=f^i$ for all $t\in[T]$. Note that any adversary $\Acal^i$ for $i\in[N]$ is realizable because their values are always consistent with the function $f^i$. In particular,
    \begin{equation*}
        \OblivReg_T(\Lcal,\Acal^i) = \Ebb_{\Lcal,(x_i)_{i\in[T]}\overset{iid}{\sim}\mu}\sqb{\sum_{t=1}^T \1[\hat y_t\neq f^i(x_t)]}.
    \end{equation*}
    
    We aim to show that the learner $\Lcal$ incurs at least $\Theta(\epsilon T)$ oblivious regret under one of these adversaries. For convenience, we denote by $\Ucal_N$ the uniform distribution over $[N]$.
    To do so, we compute
    \begin{align*}
        \Ebb_{i\sim\Ucal_N} \sqb{\OblivReg_T(\Lcal,\Acal^i)} &=\sum_{t=1}^T\Ebb_{(x_i)_{i\in[t]}\overset{iid}{\sim}\mu} \Pbb_{\Lcal,i\sim\Ucal_N}\paren{\hat y_t\neq f^i(x_t)}.
    \end{align*}
    Recall that at any iteration $t\in[T]$, before making the prediction $\hat y_t$, the learner only has access to the samples $(x_s,f^i(x_s))$ for $s<t$. We therefore introduce the notation 
    \begin{equation*}
        S^i(t):=\set{j\in[N]: \forall s<t: f^i(x_s)=f^j(x_s)}
    \end{equation*}
    for the set of functions that agree with the samples from iterations prior to $t$. Then, conditionally on $(x_s)_{s\in[t]}$, we have
    \begin{align*}
        \Pbb_{\Lcal,i\sim\Ucal(N)}\sqb{\hat y_t\neq f^i(x_t)} &\geq \Ebb_{i\sim\Ucal_N} \sqb{\frac{\min\paren{\abs{\set{j\in S^i(t): f^j(x_t)=0}}, \abs{\set{j\in S^i(t): f^j(x_t)=1}} }}{|S^i(t)|} }\\
        &\geq \Ebb_{i\sim\Ucal_N}\sqb{ \frac{1}{|S^i(t)|^2}\sum_{j_1,j_2\in S^i(t)} \1[f^{j_1}(x_t)\neq f^{j_2}(x_t)] }.
    \end{align*}
    Plugging this within the previous equation gives
    \begin{align*}
        \Ebb_{i\sim\Ucal_N} \sqb{\OblivReg_T(\Lcal,\Acal^i)} &\geq \sum_{t=1}^T \Ebb_{(x_s)_{s<t}\overset{iid}{\sim}\mu} \Ebb_{i\sim\Ucal_N} \sqb{\frac{1}{|S^i(t)|^2} \sum_{j_1,j_2\in S^i(t)} \Pbb_{x_t\sim\mu}[f^{j_1}(x_t)\neq f^{j_2}(x_t)] }\\
        &\overset{(i)}{\geq} \frac{\epsilon}{2} \sum_{t=1}^T \Ebb_{(x_s)_{s<t}\overset{iid}{\sim}\mu} \sqb{1-\Pbb_{i\sim\Ucal_N}(|S^i(t)|= 1)} \\
        &\overset{(ii)}{\geq } \frac{\epsilon}{2} \sum_{t=1}^T \paren{1-\frac{2^{t-1}}{N}} \overset{(iii)}{\geq} \frac{\epsilon T}{4}.
    \end{align*}
    In $(i)$ we used the definition of the functions $f^1,\ldots,f^N$ which implies that whenever $j_1\neq j_2$ one has $\mu(f^{j_1}\neq f^{j_2})\geq \epsilon$. In $(ii)$ we noted that there are at most $2^{t-1}$ possible labelings of $(x_s)_{s<t}$. Last, in $(iii)$ we used $N\geq 2^T$. In summary, there must exist $i\in[N]$ for which $\OblivReg_T(\Lcal,\Acal^i)\geq \epsilon T/4$. Next, since $\mu\in\bar\Ucal$ is a mixture of distributions in $\Ucal$, by the law of total probability there must exist a sequence $\mu_1,\ldots,\mu_T$ such that an oblivious adversary $\Acal^\star$ choosing this sequence and the function $f^i$ at all times $t\in[T]$ (hence $\Acal^\star$ is realizable), we have $\OblivReg_T(\Lcal,\Acal^\star)\geq \epsilon T/4$. Because this holds for any strategy for the learner, this shows the desired bound on $\Rcal_{T'}^{\text{obliv,realizable}}(\Fcal,\Ucal)$.
\end{proof}

We next turn to the second condition and prove that it is necessary against oblivious adversaries for the agnostic setting. 

\begin{lemma}\label{lemma:finite_litt_dimension_necessary}
    Let $\Fcal,\Ucal$ be a function class and distribution class on $\Xcal$. Then, for any $\epsilon>0$,
    \begin{equation*}
        \OblivR_T(\Fcal,\Ucal) \gtrsim \min\paren{\epsilon T,\sqrt{\min(\dim(\epsilon;\Fcal,\Ucal),T)\cdot T}}.
    \end{equation*}

    In particular, if there exists $\epsilon>0$ such that $\dim(\epsilon;\Fcal,\Ucal)=\infty$, then $(\Fcal,\Ucal)$ is not learnable for oblivious adversaries in the agnostic setting.
\end{lemma}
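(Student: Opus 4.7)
The plan is to lower bound the oblivious minimax regret via Yao's principle, exhibiting a randomized oblivious adversary whose expected regret matches the claimed bound. Fix $\epsilon>0$ and set $d:=\min(\dim(\epsilon;\Fcal,\Ucal),T)$. By definition of the dimension, choose an $\epsilon$-shattered tree of depth $d$ with distributions $\mu_v\in\bar\Ucal$ at internal nodes and edge functions $f_e\in\Fcal$.

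I would construct the adversary by pre-sampling a uniformly random root-to-leaf path $\sigma\in\{0,1\}^d$, giving nodes $v_0,v_1,\ldots,v_d$, then partitioning $[T]$ into $d$ consecutive blocks $B_1,\ldots,B_d$ of size $T_l\approx T/d$, and within block $B_l$ playing distribution $\mu_t:=\mu_{v_{l-1}}$ (unfolding the finite $\bar\Ucal$-mixture into a randomized $\Ucal$-choice) with label $y_t:=f_{(v_{l-1},v_l)}(x_t)\oplus z_t$, where the $z_t\sim\Ber(q)$ are i.i.d.\ pre-sampled noise bits and $q\in[0,1/2)$ is optimized at the end. To bound the comparator, I would use the leaf edge function $f^\star_\sigma:=f_{(v_{d-1},v_d)}\in\Fcal$ as a $\sigma$-dependent predictor: by condition (2) of $\epsilon$-shattered trees, $\mu_{v_{l-1}}(f^\star_\sigma\neq f_{(v_{l-1},v_l)})\leq \epsilon/3$ for every level $l$, so its expected mistakes are at most $qT+(1-2q)\epsilon T/3$; by the standard swap $\Ebb_\sigma[\inf_f\cdots]\leq \Ebb_\sigma[\cdots_{f^\star_\sigma}]$, this controls the comparator contribution.

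For the learner, observations in blocks $B_1,\ldots,B_{l-1}$ are conditionally independent of $\sigma_l$, so upon entering $B_l$ the learner has a uniform prior on $\sigma_l$; information about $\sigma_l$ arrives only through samples in the disagreement region $D_l:=\{f_{(v_{l-1},v_{l,0})}\neq f_{(v_{l-1},v_{l,1})}\}$ (of $\mu_{v_{l-1}}$-mass $p_l\geq \epsilon$), each acting as a Bernoulli-$q$-noisy bit of $\sigma_l$. A per-coordinate Assouad-style hypothesis-testing argument---or equivalently a reduction to the classical $\Omega(\sqrt{T\log N})$ lower bound for $N=2^d$ experts applied to the leaf class $\{f^\star_\sigma\}_\sigma$---lower bounds the learner's expected cumulative mistakes on $\sigma_l$-sensitive rounds in block $B_l$ by $p_l T_l/2 - O(\sqrt{p_l T_l (1-2q)^2})$; summing over blocks via Cauchy--Schwarz yields learner expected mistakes at least $qT+(1-2q)[\epsilon T/2-O(\sqrt{dT\epsilon})]$.

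Subtracting the comparator and tuning $q$ produces the expected regret lower bound. Taking $q=0$ yields the $\Omega(\epsilon T)$ term (dominant when $d\leq \epsilon^2 T$), while taking $q$ close to $1/2$ in the complementary regime extracts the $\Omega(\sqrt{dT})$ term; together these furnish $\Omega(\min(\epsilon T,\sqrt{dT}))$. The main technical obstacle is the information-theoretic step: a naive argument only gives a per-block deficit of $p_l T_l$ (yielding the weaker $\min(\epsilon T,d)$ bound), whereas the sharp $\sqrt{p_l T_l}$ deficit requires careful per-coordinate hypothesis testing under Bernoulli-$q$ noise, with the block-independence of $\sigma$ preserved in the posterior analysis.
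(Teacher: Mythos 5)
Your overall architecture matches the paper's: a randomized oblivious adversary that walks a uniformly random root-to-leaf path of an $\epsilon$-shattered tree of depth $d=\min(\dim(\epsilon;\Fcal,\Ucal),T)$, plays each node's distribution for a block of $\approx T/d$ rounds, uses the leaf edge function as the comparator (controlled via condition (2) of \cref{def:dimension_oblivious}), and argues per-block indistinguishability of the path bit. However, your noise parameterization has two genuine gaps.

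First, the claim that ``taking $q=0$ yields the $\Omega(\epsilon T)$ term'' fails. With $q=0$ the label on the disagreement region $D_l$ is exactly $f_{(v_{l-1},v_l)}(x_t)$, so a \emph{single} sample falling in $D_l$ reveals $\sigma_l$ outright; a learner that knows the tree then predicts perfectly for the rest of the block, and its total excess loss is $O(d)$, not $\Omega(\epsilon T)$. In the regime $\epsilon^2 T\leq d\leq \epsilon T$ (e.g.\ $d=\epsilon^2T$ with $\epsilon\ll 1$) one has $\min(\epsilon T,\sqrt{dT})=\epsilon T\gg d$, so the bound is not achieved. Label randomness of magnitude $\Theta(\epsilon)$ is needed even in the $\epsilon T$ regime; this is why the paper always sets the bias to $\min(\epsilon,\sqrt{d/T})/8\le 1/8$, keeping the labels close to fair coins throughout.

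Second, a \emph{global} flip rate $q$ does not normalize by the disagreement mass $p_l=\mu_{v_{l-1}}(D_l)$, which is only lower-bounded by $\epsilon$ and may be as large as $1$. The information about $\sigma_l$ leaked per block scales as $(1-2q)^2 p_l T_l$ while the comparator's advantage scales as $(1-2q)p_lT_l$; balancing forces $(1-2q)\lesssim 1/\sqrt{p_lT_l}$, and since $q$ is a single parameter it must be set for the worst case $p_l$, yielding at best $\sqrt{\epsilon\, dT}$ (and as little as $\epsilon\sqrt{dT}$ if one guards against $p_l=1$) in place of $\sqrt{dT}$. The paper's construction (\cref{eq:def_f_t_agnostic}) avoids both problems by putting the randomness only on the disagreement region and choosing the ``correct'' child's function with probability $\frac12+p_k$ where $p_k=\frac{\min(\epsilon,1/\sqrt{n_0})}{8\,\mu_{v_k}(f_{v_k,0}\neq f_{v_k,1})}$, so the effective label bias equals $\Theta(\min(\epsilon,\sqrt{d/T}))$ in every block regardless of $p_l$; Pinsker then bounds the per-epoch TV to pure noise by a constant while the comparator's cumulative advantage is $\Theta(\min(\epsilon T,\sqrt{dT}))$. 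Relatedly, your per-block deficit $O(\sqrt{p_lT_l(1-2q)^2})$ is the TV after all $p_lT_l$ sensitive rounds, not the sum of per-round TVs, so the bookkeeping in the information-theoretic step also needs to be redone along the paper's lines.
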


\begin{proof}
    Fix $\Fcal,\Ucal$ and $\epsilon\in(0,1]$. Let $\Lcal$ be a learner strategy for $T\geq 1$ iterations. By definition of $\dim(\epsilon;\Fcal,\Ucal)$, there must exist a tree of depth $d:=\min(T,\dim(\epsilon;\Fcal,\Ucal)) $ that is $\epsilon$-shattered by $(\Fcal,\Ucal)$. Following the notations from \cref{def:dimension_oblivious}, we then fix for any inner node $v\in\Tcal_T$ a distribution $\mu_v\in\bar\Ucal$ and functions $f_{v,0},f_{v,1}\in\Fcal$ that satisfy all the required properties of a shattered tree of depth $d$. 
    
    We construct an adversary $\Acal^l$ for each leaf $l\in\{0,1\}^d$ as follows. First, let $n_0:= \floor{T/d}\geq 1$. For convenience, let $t_k:= k\cdot n_0$ for all $k\in\{0,\ldots, d\}$. We define the adversary $\Acal^l$ which at times $t\in(t_{k-1},t_k]$ for an $k\in[d]$ chooses the distribution $\mu_t$ and function $f_t$ according to the path leading to the leaf $l$. Precisely, $v_k=(l_i)_{i<k}$, we define
    $\mu_t=\mu_{v_k}$.
    Next, we let $p_k:=\frac{\min(\epsilon,1/\sqrt{n_0})}{8\cdot \mu_{v_k}(f_{v_k,0}\neq f_{v_k,1})} $. We recall that by construction, we have $\mu_{v_k}(f_{v_k,0}\neq f_{v_k,1})\geq \epsilon$ hence in all cases we have $p_k\leq 1/8$. We draw two independent Bernoulli $B_t\sim \Ber(1/2+p_k)$ and $C_t\sim\Ber(1/2)$ independently from the past and select
    \begin{equation}\label{eq:def_f_t_agnostic}
        f_t:x\mapsto C_t \1[f_{v_k,0}(x)=f_{v_k,1}(x)] + ( B_t f_{v_k,l_k}(x) +(1-B_t) f_{v_k,1-l_k}(x)) \1[f_{v_k,0}(x)\neq f_{v_k,1}(x)].
    \end{equation}
    For any $t>t_d$ we choose $f_t=f_{t_d}$ and can choose $\mu_t$ arbitrarily. For any leaf $l$, letting $f^l=f_{t_d}:=f_{v_d,l_d}$, we have
    \begin{align}\label{eq:bound_best_oblivious}
        \inf_{f\in\Fcal} \Ebb_{\Lcal,\Acal^l}\sqb{\sum_{t=1}^T \1[f(x_t)\neq y_t]} 
        &\leq \Ebb_{\Lcal,\Acal^i}\sqb{\sum_{t=1}^T \1[f^l(x_t)\neq f_t(x_t)]} \notag\\
        &\leq \sum_{k=1}^d n_0 \paren{\frac{1}{2} - p_k \paren{ \mu_{v_k}(f_{v_k,0}\neq f_{v_k,1})  - \mu_{v_k}(f^l\neq f_{v_k,l_k})  } } \notag\\
        &\leq \frac{d n_0}{2} - n_0 \sum_{k=1}^d \frac{p_k \cdot \mu_{v_k}(f_{v_k,0}\neq f_{v_k,1})}{3} = \frac{dn_0}{2} - \frac{d\sqrt{n_0}}{24}\min\paren{\epsilon\sqrt {n_0},1}.
    \end{align}
    where in the last equality we used the properties of the $\epsilon$-shattered tree as per \cref{def:dimension_oblivious}.
    
    As in the proof of \cref{lemma:covering_numbers_finite_necessary}, we aim to show that for at least one of these adversaries, the learner incurs linear oblivious regret. To do so, we consider the uniform distribution over adversaries $\Acal^l$ for all leaves $l\in\set{0,1}^T$. For convenience, we simply denote this uniform distribution as $\Unif$. Importantly, by construction, the sequential observations of the samples $(x_t,y_t)$ for $t\in[T]$ when the adversary $\Acal^l$ is sampled according to $l\sim\Unif$ is stochastically equivalent to an adversary which follows a random walk from the root to some leaf of the tree, moving to a children of the current node at times $t_k$ for $k\in[d]$. Formally, starting from the root, at the beginning of each epoch $(t_{k-1},t+k]$ for $k\in[d]$, this adversary samples an independent Bernoulli $l_k\sim\Ber(1/2)$, follows the corresponding distributions for choosing $\mu_t$ and $f_t$ during $t\in(t_{k-1},t_k]$ as described above, then moves to the node $v_{k+1} = (l_i)_{i\in[k]}$ for the next epoch. We focus on a fixed epoch $E_k:=(t_{k-1},t_k]$ for $k\in[d]$ and reason conditionally on $v_k$ and $l_k$. Note that the values $(x_t,y_t)$ are i.i.d.\ for $t\in E_k$ and denote by $\Dcal_k$ their common distribution. Note that $\Dcal_k$ is a Bernoulli distribution with parameter within $[\frac{1}{2}-\mu_{v_k}(f_{v_k,0}\neq f_{v_k,1})\cdot p_k,\frac{1}{2}+ \mu_{v_k}(f_{v_k,0}\neq f_{v_k,1})\cdot p_k ]$. Then, by Pinsker's inequality,
    \begin{align*}
        \TV\paren{\Dcal_k^{\otimes (n_0-1)},(\mu_{v_k}\times\Ber(1/2))^{\otimes (n_0-1)}} &\leq \sqrt{\frac{n_0-1}{2} \Dkl \paren{\Dcal_k\parallel \Ber(1/2)}}\\
        &\leq \sqrt{2(n_0-1)} \cdot p_k \mu_{v_k}(f_{v_k,0}\neq f_{v_k,1}) \leq \frac{1}{4}.
    \end{align*}
    We denote by $\hat y_t$ the prediction of the learner at time $t$. Then, for $t\in E_k$,
    \begin{align*}
        \Pbb_{l\sim\Unif,x_t\sim \mu_{v_k},y_t}(\hat y_t\neq y_t \mid v_k) &\geq \frac{1-p_k\cdot \mu_{v_k}(f_{v_k,0}\neq f_{v_k,1})}{2} + \frac{p_k\cdot \mu_{v_k}(f_{v_k,0}\neq f_{v_k,1})}{2}\paren{1-\frac{1}{4}}\\
        &=\frac{1}{2} - \frac{\min(\epsilon\sqrt{n_0},1)}{64\sqrt{n_0}},
    \end{align*}
    where we used the fact that with probability $1-p_k\cdot \mu_{v_k}(f_{v_k,0}\neq f_{v_k,1})$ the sample $y_t$ is exactly a Bernoulli $\Ber(1/2)$ and in the remaining probability the learner can use the previous samples $(x_s,y_s)_{t_{k-1}<s<t}$. 
    In summary, we have
    \begin{align*}
        \Ebb_{l\sim\Unif}\sqb{\sum_{t=1}^T \1[\hat y_t\neq y_t]} \geq \Ebb_{l\sim\Unif}\sqb{\sum_{k=1}^d \sum_{t\in(t_{k-1},t_k]} \1[\hat y_t\neq y_t]} \geq \frac{dn_0}{2} -\frac{d\sqrt{n_0}}{64} \min(\epsilon\sqrt{n_0},1).
    \end{align*}
    Together with \cref{eq:bound_best_oblivious}, we obtained
    \begin{equation*}
        \Ebb_{l\sim\Unif}\sqb{\OblivReg_T(\Lcal;\Acal^l)} \geq \frac{ d\sqrt{n_0}}{50}\min(\epsilon\sqrt{n_0},1) \geq \frac{\min(\epsilon T,\sqrt{dT})}{100}.
    \end{equation*}
    In the last inequality we used the definition of $n_0$ and the fact that $T\geq d$.
    Hence, there must exist a leaf $l\in \set{0,1}^T$ for which $\OblivReg_T(\Lcal;\Acal^l)\geq \min(\epsilon T,\sqrt{dT}) /100$. By the law of total probability, we there is a deterministic oblivious adversary that also achieves this regret bound for $\Lcal$ and that always selected distributions in $\Ucal$ (recall that all distributions chosen by $\Acal^l$ are mixtures from $\Ucal$). This holds for any learner strategy $\Lcal$, which ends the proof.
\end{proof}

We next extend the necessary condition of \cref{lemma:finite_litt_dimension_necessary} from the agnostic to the realizable setting. Within the proof for the agnostic case, we used an adversary which simply follows a path from a shattered tree of sufficient depth (larger than the time horizon $T$). This adversary is not realizable however, since functions $f_{v,0},f_{v,1}$ at a given node $v$ may not agree with ancestor functions exactly. Instead, we use the stronger dimension $\dimu(\epsilon,\eta;\Fcal,\Ucal)$ for sufficiently small $\eta>0$, which gives the following regret lower bound for realizable adversaries.

\begin{lemma}\label{lemma:finite_litt_dimension_necessary_realizable}
    Let $\Fcal,\Ucal$ be a function class and distribution class on $\Xcal$. Then, for any $\epsilon>0$,
    \begin{equation*}
        \OblivRealizableR_T(\Fcal,\Ucal) \geq \epsilon \cdot\min(\dimu(\epsilon, \epsilon/(4 T);\Fcal,\Ucal),T).
    \end{equation*}
    
    In particular, if there exists $\epsilon>0$ such that $\dim(\epsilon;\Fcal,\Ucal)=\infty$, then $(\Fcal,\Ucal)$ is not learnable for oblivious adversaries in the realizable setting.
\end{lemma}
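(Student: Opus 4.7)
The plan is to mirror the agnostic lower bound of \cref{lemma:finite_litt_dimension_necessary} but replace the $\epsilon$-shattered tree by an $(\epsilon,\eta)$-shattered tree with $\eta := \epsilon/(4T)$ so small that a single fixed function in $\Fcal$ can consistently label the entire $T$-round sample path with high probability. This is the whole point of \cref{def:dimension_oblivious_small_tolerance}: along any path from the root, all descendant functions are within $\eta$ of the corresponding edge function under the ancestor distribution, so the hard instance used in the agnostic argument can essentially be realized by one classifier.

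Set $d := \min(T,\dimu(\epsilon,\epsilon/(4T);\Fcal,\Ucal))$ and fix an $(\epsilon,\eta)$-shattered tree of depth $d$, with data $(\mu_v)_{v},(f_{v,0},f_{v,1})_v$ as in \cref{def:dimension_oblivious_small_tolerance}. Partition $[T]$ into $d$ epochs of length $n_0=\lfloor T/d\rfloor$, writing $t_k=kn_0$. For each leaf $l=(l_1,\dots,l_d)\in\{0,1\}^d$, set $v_k=(l_1,\dots,l_{k-1})$, define $f^l := f_{v_d,l_d}\in\Fcal$, and let $\Acal^l$ be the (realizable) oblivious adversary that, on epoch $k$, plays distribution $\mu_{v_k}$ and labels every round with $f^l$; by construction $f^l\in\Fcal$ is consistent with all labels, so this adversary is realizable and attains $\inf_{f\in\Fcal}\mathbb{E}[\sum_t \mathbf{1}[f(x_t)\neq y_t]]=0$. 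Hence $\OblivReg_T(\Lcal,\Acal^l)$ equals the expected number of learner mistakes under $\Acal^l$.

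Now sample $l\sim\Unif(\{0,1\}^d)$ and couple $f^l$ with the ``random walk'' labeling $\tilde y_t := g_{v_{k(t)}}(x_t)$, where $g_{v_k}:=f_{v_k,l_k}$ is the on-path function. By the $(\epsilon,\eta)$-shattering property $\mu_{v_k}(f^l\neq g_{v_k})\leq\eta$, so a union bound over the $T$ rounds gives
\begin{equation*}
\Pr(\exists t:\ f^l(x_t)\neq g_{v_{k(t)}}(x_t))\leq T\eta\leq 1/4.
\end{equation*}
On the complementary event $E$, the learner's observations have the same joint distribution as under the random-walk labeler $\tilde y_t$, in which case $l_k$ is independent of $\Hcal_{t_{k-1}+1}$ and, at the first round of epoch $k$, the event $\{f_{v_k,0}(x_{t_{k-1}+1})\neq f_{v_k,1}(x_{t_{k-1}+1})\}$ has probability $\geq\epsilon$ under $\mu_{v_k}$; conditional on this disagreement the label is uniform on $\{0,1\}$, so the prediction is wrong with probability $\geq \epsilon/2$ regardless of $\hat y$. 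Summing over the $d$ epochs and accounting for the failure event $E^c$,
\begin{equation*}
\mathbb{E}_{l\sim\Unif}\!\bigl[\OblivReg_T(\Lcal,\Acal^l)\bigr]\;\geq\;\Pr(E)\cdot\frac{\epsilon d}{2}\;\geq\;\frac{3\epsilon d}{8},
\end{equation*}
so there exists some leaf $l^\star$ with $\OblivReg_T(\Lcal,\Acal^{l^\star})\geq \frac{3\epsilon d}{8}$. Since each $\mu_{v_k}\in\bar\Ucal$ is a mixture of elements of $\Ucal$, the law of total probability extracts a deterministic sequence $(\mu_t)_{t\leq T}\subset\Ucal$ achieving the same (or larger) regret, yielding a realizable oblivious adversary in the original class; this gives the claimed $\Omega(\epsilon\min(d,T))$ lower bound (the constant is absorbed into the statement).

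The main obstacle is Step~3: the agnostic argument of \cref{lemma:finite_litt_dimension_necessary} relies on randomizing the labels with a small bias $p_k$, which is incompatible with realizability. Choosing $\eta=\epsilon/(4T)$ bypasses this by ensuring that a fully deterministic labeling by $f^l$ behaves, with probability $\geq 3/4$, like the revealing random-walk labels used in the agnostic lower bound; this is the reason for introducing $\dimu(\epsilon,\eta;\cdot)$ in \cref{def:dimension_oblivious_small_tolerance} rather than using $\dim(\epsilon;\cdot)$ directly. Everything else follows the template of the proof of \cref{lemma:finite_litt_dimension_necessary}.
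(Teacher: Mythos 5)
Your proposal follows essentially the same route as the paper: instantiate an $(\epsilon,\eta)$-shattered tree with $\eta=\epsilon/(4T)$, run the random-walk adversary down the tree, and observe that the single leaf function $f^l$ agrees with the on-path edge functions except with probability at most $T\eta\le\epsilon/4$, so the realizable adversary labeling with $f^l$ inherits the random-walk lower bound. The only substantive deviations are cosmetic or slightly loose. First, your epoching of $[T]$ into $d$ blocks of length $\lfloor T/d\rfloor$ is unnecessary overhead: since you only extract one mistake per epoch, the paper simply plays one round per tree level ($n_0=1$) over the first $d$ rounds, which yields the same $\epsilon d/2$ before correction. Second, and more importantly, the step $\Ebb_l[\OblivReg_T(\Lcal,\Acal^l)]\ge \Pr(E)\cdot\frac{\epsilon d}{2}$ is not justified as written: it implicitly uses $\Ebb[\text{mistakes}\mid E]\ge \epsilon d/2$, but conditioning on the coupling event $E$ (which depends on the entire trajectory) changes the law of the process, so the per-epoch mistake probability of $\epsilon/2$ need not survive the conditioning. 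The clean fix — which is exactly what the paper does — is an additive rather than multiplicative correction: bound the difference between the mistake counts under the realizable adversary and under the random-walk adversary by (number of relevant rounds)$\times\Pbb(\exists t: f^l(x_t)\ne f_t(x_t))\le d\cdot \eta d\le \epsilon d/4$, giving $\epsilon d/2-\epsilon d/4=\epsilon d/4$. With that one repair your argument is complete and matches the paper's.
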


\begin{proof}
    Fix $\Fcal,\Ucal$ and $\epsilon>0$. We define $d:=\min(\dimu(\epsilon, \eta;\Fcal,\Ucal),T)$, where $\eta:=\epsilon/(4 T)$. We will only focus on the first $d$ times. We can use essentially the same construction as in the proof of \cref{lemma:finite_litt_dimension_necessary}. We fix an $(\epsilon,\eta)$-shattered tree of depth $d$, and denote for all $v\in\Tcal_d$ the corresponding variables $\mu_v\in\bar\Ucal$ and $f_{v,0},f_{v,1}\in\Fcal$ satisfying the required properties for shattered trees.
    For each leaf $l\in\{0,1\}^d$ we construct the adversary which at time $t\in[d]$ selects the distribution $\mu_t:=\mu_{v_t}$ where $v_t=(l_i)_{i<t}$, and the function $f_t:=f_{v_t,l_t}$. For any $t>d$ the adversary chooses $f_t=f_d$ and an arbitrary distribution $\mu_t\in\Ucal$. Essentially this corresponds to $n_0=1$ and $p_k=1$ for $k\in[d]$ in the construction from the proof of \cref{lemma:finite_litt_dimension_necessary}. Then, taking the uniform distribution $\Ucal$ over these adversaries, using similar arguments we have for any learner $\Lcal$:
    \begin{align*}
        \Ebb_{l\sim\Ucal}\sqb{\sum_{t=1}^T \1[\hat y_t\neq y_t]} \geq \Ebb_{l\sim\Ucal}\sqb{\sum_{t=1}^d \1[\hat y_t\neq y_t]} \geq \Ebb_{l\sim\Ucal}\sqb{\sum_{t=1}^d \frac{1}{2} \mu_{v_t}(f_{v_t,0}\neq f_{v_t,1}) } \geq \frac{\epsilon d}{2}.
    \end{align*}
    Hence, we can fix a leaf $l$ for which $\OblivReg_T(\Lcal,\Acal^l) \geq \epsilon d/2$. At this point, the adversary $\Acal^l$ is not realizable since at each iteration $t\in[d]$ the adversary chooses a different function. However, by the union bound,
    \begin{equation*}
        \Pbb\paren{\exists t\in[d]: y_t\neq f^l(x_t)} \leq \sum_{t=1}^d \mu_t(f_t\neq f^l) \leq \eta d,
    \end{equation*}
    where in the last inequality we used the fact that by construction $f^l$ is equal to or a descendant of the function $f_t$. Next, denote by $\Acal^r$ the realizable adversary which selects the distributions $\mu_1,\ldots,\mu_T$ exactly as $\Acal$ but selects $f^l$ at all times. The previous equation then implies
    \begin{align*}
        \OblivReg_T(\Lcal;\Acal^r) &\geq \Ebb_{\Lcal,\Acal^r}\sqb{\sum_{t=1}^d\1[\hat y_t\neq f^l(x_t)]}\\
        &\geq \Ebb_{\Lcal,\Acal}\sqb{\sum_{t=1}^d\1[\hat y_t\neq f_t(x_t)]} - d\cdot \Pbb\paren{\exists t\in[d]: y_t\neq f^l(x_t)} \geq \frac{\epsilon d}{2} - \eta d^2 \geq \frac{\epsilon d}{4}.
    \end{align*}
    As in the proof of \cref{lemma:finite_litt_dimension_necessary}, by the law of total probability, there is an adversary that always selects distributions in $\Ucal$ and achieves the same regret bound as above.
    This holds for any learner $\Lcal$, which ends the proof.
\end{proof}

Combining \cref{lemma:covering_numbers_finite_necessary,lemma:finite_litt_dimension_necessary,lemma:finite_litt_dimension_necessary_realizable} proves \cref{prop:necessary_condition}. 

\subsection{Regret Bounds for the Realizable Setting}
\label{subsec:oblivious_realizable}

We next prove that these necessary conditions are also sufficient when $\Fcal$ has finite VC dimension $d$. In this case, the first condition that for any $\epsilon>0$, $N_\epsilon<\infty$ is directly implied by the fact that for any distribution $\mu$ on $\Xcal$, $\Ncov_{\mu}(\epsilon;\Fcal)=\Ocal(d\log\frac{1}{\epsilon})$ (e.g., see \cite[Lemma 13.6]{boucheron2013concentration}). In particular, this can be used to bound $\dimo(\epsilon;\Fcal,\Ucal)$ and $\dimu(\epsilon,\eta;\Fcal,\Ucal)$ in terms of $\dim(\epsilon;\Fcal,\Ucal)$ using \cref{lemma:comparing_various_dimensions}.

We start by designing an algorithm to achieve approximately $\epsilon$ average regret for some fixed $\epsilon>0$ in the realizable setting. The algorithm proceeds by epochs $E_k:=(t_{k-1},t_k]$ for $k\geq 1$. At the beginning of each epoch $E_k$, we construct a finite set of experts $S_k\subseteq \Fcal$ as follows. First, define
\begin{equation}\label{eq:def_version_space}
    \Fcal_k:=\set{f\in\Fcal:\forall s\leq t_{k-1},f(x_s)=y_s},
\end{equation}
the current version space. We construct a relaxed $\epsilon$-shattered tree for $(\Fcal_k,\Ucal)$ with maximum depth $d_k:=\dimo(\epsilon;\Fcal_k,\Ucal)$. If $d_k=0$, we simply define $S_k:=\{f\}$ for any arbitrary $f\in\Fcal_k$. Otherwise, when $d_k\geq 1$ we define the set of experts
\begin{equation}\label{eq:def_updated_set_experts}
    S_k:=\set{f_{v,y}: v\in\set{0,1}^{d_k-1},y\in\{0,1\} } 
\end{equation}
as all leaf functions from the relaxed $\epsilon$-shattered tree. With this set of experts $S_k$ at hand, the algorithm implements a minimum-loss-dependent version of the classical Hedge algorithm \cite{cesa2006prediction}. Last, we end the current epoch at the first time $t_k$ when all experts have at least $2\epsilon$ average loss and provided that the epoch has length at least $n_\epsilon:=c_0 \frac{d \log T}{\epsilon}$.
    The corresponding algorithm is summarized in \cref{alg:oblivious_realizable}.

\begin{algorithm}[t]

    \caption{Algorithm for $\Ocal(\epsilon)$ average oblivious regret in the realizable setting}\label{alg:oblivious_realizable}
    
    \LinesNumbered
    \everypar={\nl}
    
    \hrule height\algoheightrule\kern3pt\relax
    \KwIn{Function class $\Fcal$, distribution class $\Ucal$, horizon $T$, tolerance $\epsilon>0$}
    
    \vspace{3mm}

    Initialization: $n_\epsilon:=c_0 \frac{d \log T}{\epsilon}$, $t\gets  1$
    
    \For{$k\geq 1$}{
        Define $t_{k-1}=t-1$, the version space $\Fcal_k$ via \cref{eq:def_version_space},
        and construct a relaxed $\epsilon$-shattered tree for $(\Fcal_k,\Ucal)$ of depth $d_k=\dimo(\epsilon;\Fcal_k,\Ucal)$

        \lIf{$d_k=0$}{$S_k= \{f\}$ for any arbitrary $f\in\Fcal_k$}
        \lElse{Define $S_k$ via \cref{eq:def_updated_set_experts}}

        \vspace{3mm}

        Initialize Hedge algorithm $L_{t,f}=0$ for all $f\in S_k$
        
        \While{$t-t_{k-1}\leq n_\epsilon$ or $\min_{f\in S_k}L_{t,f} \leq  2\epsilon(t-t_{k-1}-1) $ \label{line:check_good_prediction}}{
            Let $p_{t,f} = \frac{e^{-\eta_t L{t,f}}}{\sum_{f\in S_k}e^{-\eta_k  {t,f}}}$ where $\eta_t:=\sqrt{\frac{2\log|S_k|}{ 1+\min_{f\in S_k}L_{t,f} }}$, and sample $\hat f_t\sim p_t$ independently from history

            Observe $x_t$ and predict $\hat y_t=\hat f_t(x_t)$

            Observe $y_t$ and update $L_{t+1,f}=L_{t,f}+\1[f(x_t)\neq y_t]$ for $f\in S_k$

            $t\gets t+1$
        }
    }
    
    \hrule height\algoheightrule\kern3pt\relax
    \end{algorithm}

To prove an oblivious regret bound on \cref{alg:oblivious_realizable}, we need the following regret bound for the Hedge algorithm variant.

\begin{theorem}[Corollary 2.4 + Exercise 2.10 + Lemma A.8 of \cite{cesa2006prediction}]
\label{thm:adaptive_hedge_guarantee}
    Consider the learning with $K$ experts problem, and denote by $\ell_{t,i}\in[0,1]$ the loss of expert $i\in[K]$ at time $t\geq 1$. Denote by $L_{t,i}=\sum_{s=1}^{t-1} \ell_{s,i}$ the cumulative loss of expert $i$ before time $t$. We focus on the Hedge algorithm (exponentially weighted average forecaster) which at each iteration follows the prediction of expert $\hat i_t\sim p_t$ where $p_{t,i} = \frac{e^{-\eta_t L_{t,i}}}{\sum_{j\in[K]} e^{-\eta_t L_{t,j}}}$ for $i\in[K]$, with learning rate $\eta_t= \sqrt{\frac{2\log K}{1+\min_{i\in[K]} L_{t,i}}}$. Then, for any $T\geq 1$, with probability at least $1-\delta$,
    \begin{equation*}
        \sum_{t=1}^T \ell_{t,\hat i_t} \leq \min_{i\in[K]} L_{T,i} + c\sqrt{\min_{i\in[K]} L_{T,i} \cdot \log \frac{K}{\delta}} + c\log \frac{K}{\delta},
    \end{equation*}
    for some universal constant $c\geq 1$.
\end{theorem}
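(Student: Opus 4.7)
The plan is to prove this in three stages: first get a first-order expected regret bound for a fixed learning rate $\eta$; then lift the bound to the data-dependent schedule $\eta_t$; finally, convert the in-expectation statement into a high-probability one via a martingale concentration argument.

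For the fixed-$\eta$ analysis, I would run the standard exponential weights potential argument with $\Phi_t = \tfrac{1}{\eta}\log \sum_{i\in[K]} e^{-\eta L_{t,i}}$. Using the refined inequality $e^{-\eta x}\leq 1 - (1-e^{-\eta})x$ for $x\in[0,1]$ (Hoeffding-style), one obtains $\Phi_{t+1}-\Phi_t \leq -(1-e^{-\eta})\sum_i p_{t,i}\ell_{t,i}$. Telescoping and using $\Phi_0 = \tfrac{\log K}{\eta}$ and $\Phi_T\geq -\min_i L_{T,i}$ yields
\begin{equation*}
  \mathbb{E}\!\left[\sum_{t=1}^T \sum_i p_{t,i}\ell_{t,i}\right] \leq \tfrac{\eta}{1-e^{-\eta}}\, L^\star_T + \tfrac{\log K}{1-e^{-\eta}},
\end{equation*}
where $L^\star_T = \min_i L_{T,i}$. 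Choosing $\eta = \sqrt{2\log K/(1+L^\star_T)}$ and Taylor-expanding gives the first-order bound $L^\star_T + C\sqrt{L^\star_T \log K} + C\log K$ in expectation.

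Next, the learning rate $\eta_t$ is not constant: it is tuned using the running minimum $\min_i L_{t,i}$, which is observable. This is where I would invoke (or re-derive) the referenced Lemma A.8 of Cesa-Bianchi–Lugosi, which compares the regret of the time-varying algorithm to that of the best fixed $\eta$, losing only a constant factor. The key mechanism is monotonicity of $\eta_t$ in $t$: since $\min_i L_{t,i}$ is nondecreasing, $\eta_t$ is nonincreasing, and a per-step convex-duality argument on $\Phi_t$ (or, equivalently, a doubling-trick decomposition into phases where $\min_i L_{t,i}$ doubles) shows that the accumulated error from not knowing the right $\eta$ in advance is absorbed in a constant. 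This preserves the $L^\star_T + C\sqrt{L^\star_T \log K} + C\log K$ form in expectation.

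Finally, to pass from expectation to high probability, set $M_t := \ell_{t,\hat i_t} - \sum_i p_{t,i}\ell_{t,i}$, which is a bounded martingale difference with conditional variance bounded by $\sum_i p_{t,i}\ell_{t,i}^2 \leq \sum_i p_{t,i}\ell_{t,i}$. Applying Freedman's (or Bernstein's) inequality to $\sum_t M_t$ controls $\sum_t \ell_{t,\hat i_t} - \sum_t \sum_i p_{t,i}\ell_{t,i}$ by $C\sqrt{\hat L_T \log(1/\delta)} + C\log(1/\delta)$, and combining with the in-expectation bound above followed by solving the resulting quadratic inequality in $\hat L_T$ yields the claimed bound. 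The main obstacle is the middle step: handling the adaptive $\eta_t$ cleanly without losing the first-order $\sqrt{L^\star \log K}$ rate—the naive worst-case substitution $\eta_t \to \eta_T$ costs a factor of $\sqrt{T/L^\star}$, so one really needs the phase-by-phase (or convex-duality) accounting to carry the small-loss improvement through.
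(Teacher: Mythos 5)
Your outline is correct and matches exactly what the paper relies on: the paper gives no proof of its own but cites Corollary 2.4 (the fixed-$\eta$ small-loss potential bound), Exercise 2.10 (the adaptive tuning of $\eta_t$ by the running minimum loss, using that $\eta_t$ is nonincreasing), and Lemma A.8 (the Bernstein-type martingale inequality) of Cesa-Bianchi--Lugosi, and your three stages reconstruct precisely these three ingredients, including the correct self-bounding quadratic step at the end. The only minor caveat is that the doubling-trick variant you offer as an alternative for the middle step would analyze a restarted algorithm rather than the continuously adaptive one stated, so the per-step accounting with monotone $\eta_t$ is the route that actually proves the theorem as written.
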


We also need the following uniform concentration result for VC classes.

\begin{theorem}[Uniform concentration for VC classes]
\label{lemma:uniform_concentration}
    There exist a universal constant $C>0$ such that the following holds. Let $\Fcal$ be a function class on $\Xcal$ with VC dimension $d\geq 1$. Let $n\geq 2$ and fix distributions $\mu_1,\ldots,\mu_n$ on $\Xcal$. Let $x_1,\ldots,x_n$ be independent variables such that $x_i\sim\mu_i$ for $i\in[n]$. Denote
    \begin{equation*}
        L_n(f):=\frac{1}{n}\sum_{i=1}^n \Ebb_{x\sim\mu}[f(x)] \quad \text{and} \quad \hat L_n(f):=\frac{1}{n}\sum_{i=1}^n f(x_i),\quad f\in\Fcal.
    \end{equation*}
    Then, with probability at least $1-\delta$,
    \begin{equation*}
        \sup_{f\in\Fcal} \abs{\hat L_n(f)-L_n(f)} \leq C\sqrt{\frac{d\log n+\log\frac{1}{\delta}}{n}}.
    \end{equation*}
    With probability at least $1-\delta$, we also have
    \begin{equation*}
        \forall f\in\Fcal,\quad \abs{\hat L_n(f)- L_n(f) } \leq  C\sqrt{\hat L_n(f) \frac{d\log n + \log\frac{1}{\delta}}{n}} + C\frac{d\log n+\log\frac{1}{\delta}}{n}.
    \end{equation*}
\end{theorem}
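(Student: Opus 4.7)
The plan is to establish both bounds by a standard symmetrization + McDiarmid argument, leveraging the fact that Sauer-Shelah, Rademacher symmetrization, and bounded-differences concentration all survive the transition from i.i.d.\ to merely independent but non-identically distributed samples.

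For the first inequality, let $\Phi(x_1,\ldots,x_n) := \sup_{f\in\Fcal} |\hat L_n(f) - L_n(f)|$. Changing a single coordinate $x_i$ alters any term $\frac{1}{n} f(x_i)$ by at most $\frac{1}{n}$, so $\Phi$ has bounded differences with constants $1/n$; McDiarmid's inequality then gives $\Phi \leq \Ebb[\Phi] + C'\sqrt{\log(1/\delta)/n}$ with probability $\geq 1-\delta$. To bound $\Ebb[\Phi]$, I would introduce an independent ghost sample $x_i' \sim \mu_i$ and apply Jensen's inequality together with Rademacher symmetrization to obtain $\Ebb[\Phi] \leq 2\, \Ebb\bigl[\sup_{f\in\Fcal}\bigl|\frac{1}{n}\sum_i \sigma_i f(x_i)\bigr|\bigr]$, where the $\sigma_i$ are independent Rademacher variables. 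Conditioned on $x_{1:n}$, the supremum runs over the finite trace $\Fcal|_{x_{1:n}}$, which by Sauer--Shelah has cardinality at most $(en/d)^d$ (the argument depends only on the realized points, not their distribution). Massart's finite-class lemma then yields a Rademacher complexity bound of order $\sqrt{d\log(n/d)/n}$, giving the first inequality after absorbing constants.

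For the second inequality, the key additional ingredient is a variance bound: since $f$ is $\{0,1\}$-valued, $\Vbb\mathrm{ar}(f(x_i)) \leq \Ebb[f(x_i)]$, so the per-sample variance is controlled by $L_n(f)$. I would apply Bousquet's version of Talagrand's inequality for suprema of empirical processes (which holds for independent, not necessarily identically distributed summands) to the localized class $\Fcal_\alpha := \{f\in\Fcal : L_n(f) \leq \alpha\}$, obtaining a uniform bound of the form $|\hat L_n(f) - L_n(f)| \lesssim \sqrt{\alpha\cdot(d\log n + \log(1/\delta))/n} + (d\log n + \log(1/\delta))/n$ on $\Fcal_\alpha$. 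A peeling argument over dyadic scales $\alpha \in \{2^{-k}\}_{k\geq 0}$, with a union bound costing only an extra $\log n$ factor absorbed into the constant, then delivers the analogous bound with $\alpha$ replaced by $L_n(f)$. Finally, solving the resulting quadratic in $\hat L_n(f)$ versus $L_n(f)$ (i.e.\ the identity $L_n(f) \leq \hat L_n(f) + a\sqrt{L_n(f)} + b$ implies $L_n(f) \lesssim \hat L_n(f) + a^2 + b$) lets us replace $L_n(f)$ by $\hat L_n(f)$ on the right-hand side, yielding the stated form.

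The main obstacle is organizing the peeling argument for the second bound cleanly without needing a new macro or a large constant chase; alternatively one can short-circuit it by directly invoking Vapnik's relative deviation inequality $\Pbb(\sup_f (L_n(f) - \hat L_n(f))/\sqrt{L_n(f)} > t) \leq 4\, S(\Fcal, 2n)\, e^{-nt^2/4}$, which is known to hold under independent sampling with different $\mu_i$, then converting $\sqrt{L_n(f)}$ to $\sqrt{\hat L_n(f)}$ via the same AM--GM step. Everything else (symmetrization, McDiarmid, Sauer--Shelah, Massart) is distribution-agnostic and transfers verbatim from the i.i.d.\ textbook proofs.
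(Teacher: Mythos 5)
Your proposal is correct, and your ``short-circuit'' alternative for the second bound is in fact exactly what the paper does: the paper's entire proof is a remark that the classical VC uniform-convergence and relative-deviation bounds (Vapnik; Cortes et al.) carry over verbatim to independent, non-identically distributed samples, because the symmetrization lemmas depend only on independence, after which Hoeffding plus Sauer--Shelah finish the argument on the double sample. Your primary route differs in two places. For the first inequality you concentrate the supremum around its mean via McDiarmid and then bound the mean by Rademacher complexity and Massart's lemma, whereas the paper follows Vapnik's original tail-probability symmetrization; both are standard and both survive heterogeneous $\mu_i$. For the second inequality you propose Talagrand-type concentration (Bousquet/Klein--Rio) on localized classes $\{f : L_n(f)\le\alpha\}$ with dyadic peeling, then solve the quadratic to swap $L_n$ for $\hat L_n$; the paper instead invokes the relative-deviation inequality directly. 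Your route is heavier machinery but more modular; the paper's is shorter because the needed inequality is already off the shelf. Two small points to tighten if you write this up: (i) to get the two-sided bound $|\hat L_n(f)-L_n(f)|$ in terms of $\sqrt{\hat L_n(f)}$ you need both members of the pair of one-sided relative-deviation inequalities (one normalized by $\sqrt{L_n}$, one by $\sqrt{\hat L_n}$), not just the one you quote, although the direction $\hat L_n>L_n$ is also recoverable from the quadratic step; (ii) Bousquet's inequality as usually stated assumes i.i.d.\ summands, so for heterogeneous $\mu_i$ you should cite the Klein--Rio version, which changes only constants.
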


\begin{proof}
    These bounds are classical when the variables $x_1,\ldots,x_n$ are i.i.d. \cite{vapnik1998statistical,cortes2019relative}.The same proof applies even if these are not identically distributed (so long as these are independent); namely the symmetrization lemma (e.g. see \cite[Lemma 2 and 3]{cortes2019relative}) still applies and we can then use Hoeffding's inequality together with Sauer-Shelah's lemma \cite{sauer1972density,shelah1972combinatorial} to bound the symmetrized relative deviation term, as in the original proof of \cite{vapnik1998statistical}.
\end{proof}

We are now ready to present the regret bound on \cref{alg:oblivious_realizable}.

\begin{theorem}\label{thm:oblivious_regret_realizable}
    Let $\Fcal$ be a function class with finite VC dimension $d\geq 1$. Let $\Ucal$ be a distribution class such that for all $\epsilon>0$, $\dim(\epsilon;\Fcal,\Ucal)<\infty$. Then, for any $\epsilon>0$, $T\geq 1$, and any oblivious and realizable adversary $\Acal$, \cref{alg:oblivious_realizable} run with the tolerance parameter $\epsilon$ satisfies:
    \begin{equation*}
        \OblivReg_T(\cref{alg:oblivious_realizable}(\epsilon);\Acal) \lesssim \epsilon T + (\dimo(\epsilon;\Fcal,\Ucal)+1)\paren{ \frac{d\log T }{\epsilon } +\dimo(\epsilon;\Fcal,\Ucal)}.
    \end{equation*}
\end{theorem}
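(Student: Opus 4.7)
The approach decomposes the regret across the epochs of \cref{alg:oblivious_realizable}, controlling both the per-epoch loss and the total number of epochs. Throughout, fix a realizer $f^\star\in\Fcal$ of the adversary's labels (guaranteed by realizability) and set $\bar\mu_k:=\tfrac{1}{n_k}\sum_{t\in E_k}\mu_t\in\bar\Ucal$, where $E_k$ denotes epoch $k$ and $n_k=|E_k|$. Within an epoch, the adaptive Hedge bound of \cref{thm:adaptive_hedge_guarantee}, applied with $|S_k|\leq 2^{d_k}$ (where $d_k:=\dimo(\epsilon;\Fcal_k,\Ucal)\leq\dimo(\epsilon;\Fcal,\Ucal)$) and failure probability $1/T^2$, gives a cumulative epoch-loss of at most $L_k^\star+C\sqrt{L_k^\star\,(d_k+\log T)}+C(d_k+\log T)$, where $L_k^\star$ is the minimum expert cumulative loss. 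The while-loop termination criterion guarantees that each epoch exits only once $n_k>n_\epsilon$ and $L_k^\star\leq 2\epsilon n_k+1$, so long epochs contribute at most $\Ocal(\epsilon n_k+\sqrt{\epsilon n_k(d_k+\log T)}+d_k+\log T)$ and each short epoch contributes at most $n_\epsilon=c_0 d\log T/\epsilon$.

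The crux is to bound the total number of epochs by $\dimo(\epsilon;\Fcal,\Ucal)+1$ via a dimension-reduction argument. For $c_0$ large enough, the multiplicative form of the uniform VC concentration in \cref{lemma:uniform_concentration}, applied to the epoch's samples conditioned on $(\mu_t)_{t\in E_k}$, yields with probability $1-1/T^2$ the implication: if every $f\in S_k$ has empirical epoch-averaged loss $\geq 2\epsilon$, then $\bar\mu_k(f\neq f^\star)\geq \epsilon$ for every $f\in S_k$. In particular every $f\in S_k$ disagrees with some observed sample, so $S_k\cap \Fcal_{k+1}=\emptyset$. Since $S_k$ is precisely the set of leaf-functions of a maximum-depth relaxed $\epsilon$-shattered tree for $\Fcal_k$, a combinatorial argument then forces $\dimo(\epsilon;\Fcal_{k+1},\Ucal)\leq d_k-1$: if a depth-$d_k$ relaxed tree were to exist for $\Fcal_{k+1}$, one could prepend $\bar\mu_k$ as a new root with $f^\star$ as $f_{\emptyset,0}$, attach the $\Fcal_{k+1}$-tree on one side and the tree for $\Fcal_k$ (whose $S_k$-leaves lie at $\bar\mu_k$-distance $\geq \epsilon\geq 2\epsilon/3$ from $f^\star$) on the other, contradicting the maximality of $d_k$.

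Summing across at most $K\leq \dimo+1$ epochs is then routine. The $\epsilon n_k$ contributions and short-epoch losses sum to $\epsilon T+(\dimo+1)n_\epsilon=\epsilon T+(\dimo+1)d\log T/\epsilon$; the $d_k+\log T$ terms sum to $\Ocal((\dimo+1)\dimo)$ after folding the $\log T$ factor into $d\log T/\epsilon$; and by Cauchy-Schwarz then AM-GM, $\sum_k\sqrt{\epsilon n_k(d_k+\log T)}\leq \sqrt{K\epsilon T(\dimo+\log T)}\leq \epsilon T+\Ocal((\dimo+1)\dimo)$, absorbed into the previous terms. A union bound over the $\Ocal(T)$ high-probability events contributes only $\Ocal(1)$ additional regret, giving the claimed bound. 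The main obstacle lies in the close-side clause of the tree-extension argument: while the $\bar\mu_k$-separation of $S_k$ from $f^\star$ directly gives the required far-side distance, the corresponding close-side condition—that every leaf of the hypothesized depth-$d_k$ tree for $\Fcal_{k+1}$ lies within $\bar\mu_k$-distance $\epsilon/3$ of $f^\star$—is not immediate and likely requires passing to a sub-tree whose leaves all happen to be close to $f^\star$, or exploiting a covering property implied by the maximality of the relaxed shattered tree.
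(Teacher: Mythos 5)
Your overall architecture matches the paper's proof: epoch decomposition with the adaptive Hedge bound per epoch, the multiplicative VC concentration to convert the termination criterion $\min_{f\in S_k}\hat L_{t_{k-1}:t_k}(f;f^\star)>2\epsilon$ into the population separation $\bar\mu_k(f\neq f^\star)\geq\epsilon$ for all $f\in S_k$, and the tree-prepending argument (new root labeled $\bar\mu_k$ with $f_{r,0}=f^\star$, the $\Fcal_{k+1}$-tree on the left and the $\Fcal_k$-tree on the right) to force $d_{k+1}<d_k$ and hence $k_{\max}\leq\dimo(\epsilon;\Fcal,\Ucal)+1$. The final summation is also as in the paper.

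However, the step you flag as ``the main obstacle'' is a genuine gap, and the two fixes you suggest (passing to a sub-tree of the $\Fcal_{k+1}$-tree whose leaves are close to $f^\star$, or invoking a covering property) are not what is needed and would not obviously work: a sub-tree of smaller depth does not yield the contradiction, and no covering argument is required. The correct resolution is already built into the algorithm's design and is why the epoch-length floor $n_\epsilon=c_0 d\log T/\epsilon$ and the \emph{relative-deviation} form of \cref{lemma:uniform_concentration} are both needed. Every function $f\in\Fcal_{k+1}$ is, by definition of the version space, consistent with all labels observed during epoch $E_k$, so its empirical disagreement with $f^\star$ on $E_k$ is exactly zero: $\hat L_{t_{k-1}:t_k}(f;f^\star)=0$. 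Plugging this into the multiplicative concentration bound gives $\bar\mu_k(f\neq f^\star)\leq c_1 d\log T/(t_k-t_{k-1})\leq (c_1/c_0)\epsilon\leq\epsilon/4$, uniformly over $f\in\Fcal_{k+1}$, because completed epochs have length at least $n_\epsilon$. Since \emph{every} function appearing in the hypothesized depth-$d_k$ relaxed tree for $\Fcal_{k+1}$ belongs to $\Fcal_{k+1}$, all left-descendant functions of the prepended root are within $\bar\mu_k$-distance $\epsilon/4\leq\epsilon/3$ of $f^\star$, which is exactly the close-side clause of \cref{def:relaxed_dimension}. With that observation your argument closes and coincides with the paper's proof.
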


\begin{proof}
    We fix an oblivious and realizable adversary $\Acal$ and denote by $\mu_1,\ldots,\mu_T$ the distributions chosen by the oblivious adversary. In this realizable setting, there exists $f^*\in\Fcal$ such that the event $\Ecal:=\set{\forall t\in[T], y_t=f^\star(x_t)}$ has probability one. The function class $\{f-f^\star,f\in\Fcal\}$ has VC dimension $d$. For convenience, for any $0\leq t_1<t_2\leq T$, we introduce the notation
    \begin{equation*}
        L_{t_1:t_2}(f;f^\star):= \frac{1}{t_2-t_1}\sum_{t=t_1+1}^{t_2} \mu_t(f\neq f^\star) \quad \text{and}\quad \hat L_{t_1:t_2}(f;f^\star):=\frac{1}{t_2-t_1}\sum_{t=t_1+1}^{t_2} \1[f(x_t)\neq f^\star(x_t)]
    \end{equation*}
    Then, \cref{lemma:uniform_concentration} together with the union bound implies that the event
    \begin{equation}\label{eq:def_event_G}
        \Gcal:=\bigcap_{0\leq t_1<t_2\leq T}\set{\forall f\in\Fcal : |\hat L_{t_1:t_2}(f;f^\star) - L_{t_1:t_2}(f;f^\star) | \leq c_1\sqrt{\hat L_{t_1:t_2}(f;f^\star) \frac{d\log T}{t_2-t_1}} + c_1\frac{d\log T}{t_2-t_1} }
    \end{equation}
    has probability at least $1-1/T$ for some universal constant $c_1>0$. Denote by $k_{max}$ the total number of epochs when running \cref{alg:oblivious_realizable}. By convention we pose $t_{k_{max}}=T$.
    
    We now focus on a single completed epoch $E_k$ for $k\in[k_{max}-1]$. By construction, $S_k$ is the set of leaves of a relaxed $\epsilon$-shattered tree for $(\Fcal_k,\Ucal)$. Also, because the epoch ended before the horizon $T$, by design of \cref{alg:oblivious_realizable}, under $\Ecal$ we have $t_k-t_{k-1}\geq n_\epsilon=c_0 \frac{d\log T}{\epsilon}$ and
    \begin{equation}\label{eq:termination_criterion}
        \min_{f\in S_k} \hat L_{t_{k-1},t_k}(f;f^\star) = \frac{1}{t_k-t_{k-1}} \min_{f\in S_k} \sum_{t=t_{k-1}+1}^{t_k} \1[f(x_s)\neq y_s] > 2\epsilon.
    \end{equation}
    Then, under $\Ecal\cap\Gcal$, any $f\in S_k$ satisfies
    \begin{align*}
        L_{t_{k-1},t_k}(f;f^\star) &\geq \hat L_{t_{k-1},t_k}(f;f^\star) - c_1\sqrt{\hat L_{t_{k-1},t_k}(f;f^\star) \frac{d\log T}{t_k-t_{k-1}}} - c_1\frac{d\log T}{t_k-t_{k-1}} \\
        &\overset{(i)}{\geq} \hat L_{t_{k-1},t_k}(f;f^\star) - \frac{c_1}{\sqrt c_0}\sqrt{\epsilon \hat L_{t_{k-1},t_k}(f;f^\star)} - \frac{c_1}{c_0}\epsilon\\
        &\overset{(ii)}{\geq} \frac{1}{2}\hat L_{t_{k-1},t_k}(f;f^\star) \overset{(iii)}{\geq} \epsilon,
    \end{align*}
    whenever $c_0\geq 16 \max(c_1^2,1)$.
    In $(i)$ we used $t_k-t_{k-1}\geq n_\epsilon$ and in $(ii)$ and $(iii)$ we used \cref{eq:termination_criterion}. Next, define
    \begin{equation*}
        \bar\mu_k:= \frac{1}{t_k-t_{k-1}} \sum_{t=t_{k-1}+1}^{t_k} \mu_t.
    \end{equation*}
    We showed that under $\Ecal\cap\Gcal$, 
    \begin{equation}\label{eq:all_S_k_far}
        \forall f\in S_k,\quad \bar \mu_k(f\neq f^\star) \geq \epsilon.
    \end{equation}
    On the other hand, any function $f\in\Fcal_{k+1}$ has $f(x_t)=y_t$ for all $t\in E_k$. Hence, under $\Gcal$ we obtain
    \begin{align}
        \bar\mu_k(f\neq f^\star)= L_{t_{k-1},t_k}(f,f^\star) 
        &\leq \hat L_{t_{k-1},t_k}(f;f^\star) + c_1\sqrt{\hat L_{t_{k-1},t_k}(f;f^\star) \frac{d\log T}{t_k-t_{k-1}}} + c_1\frac{d\log T}{t_k-t_{k-1}} \notag \\
        &\leq c_1\frac{d\log T}{t_k-t_{k-1}} \overset{(i)}{\leq} \frac{c_1}{c_0}\epsilon \leq \frac{\epsilon}{4}, \label{eq:functions_S_k+1_close}
    \end{align}
    where in $(i)$ we used $t_k-t_{k-1}\geq n_\epsilon$. 
    
    With these properties at hand, we now aim to show that under $\Ecal\cap\Gcal$, $d_{k+1}<d_k$.
    Denote by $\Tcal^{(k)}$ the relaxed $\epsilon$-shattered full binary tree of depth $d_k$ that was constructed for $(\Fcal_k,\Ucal)$ at the beginning of epoch $k\in[k_{max}]$. Fix $k\in[k_{max}-1]$ and suppose by contradiction that $d_{k+1}\geq d_k$ (we recall that $d_k:=\dimo(\epsilon;\Fcal_k,\Ucal)$). Since $\Fcal_{k+1}\subseteq \Fcal_k$ this implies $d_{k+1}=d_k$. We then consider a new full-binary tree $\Tcal$ of depth $d_k+1$ such that the root $r$ has $\mu_r:=\bar\mu_k$ and $f_{r,0}:=f^\star$, the left sub-tree from the root is defined as $\Tcal^{(k+1)}$ and the right sub-tree from the root is defined as $\Tcal^{(k)}$. Under $\Ecal\cap\Gcal$, \cref{eq:all_S_k_far} precisely shows that any right-descendant function $f$ of the root $r$ in $\Tcal$ satisfies $\mu_r(f_{r,0}\neq f) = \bar\mu_k(f^\star\neq f)\geq \epsilon$. On the other hand, any left-descendant function $f$ of $r$ in $\Tcal$ satisfies $f\in\Fcal_{k+1}$ by definition of $\Tcal$. Hence, \cref{eq:functions_S_k+1_close} implies $\mu_r(f_{r,0}\neq f)= \bar\mu_k(f^\star\neq f)\leq \epsilon/4$. Last, note that under $\Ecal\cap\Gcal$, all functions from $\Tcal$ lie in $\Fcal_k$.
    In summary, $\Tcal$ is a relaxed $\epsilon$-shattered binary tree for $(\Fcal_k,\Ucal)$ of depth $d_k+1$, contradicting the definition of $d_k$.

    As a result, under $\Ecal\cap\Gcal$, there are at most $d_1=\dimo(\epsilon;\Fcal,\Ucal)$ completed epochs, that is, $k_{max}\leq 1+\dimo(\epsilon;\Fcal,\Ucal)$. We next bound the regret incurred on each epoch $k\in[k_{max}]$. By construction of \cref{alg:oblivious_realizable}, each epoch $k\in[k_{max}]$ satisfies
    \begin{align}
        L_k^\star:=\min_{f\in S_k} \sum_{t\in E_k} \1[f(x_t)\neq y_t] 
        &\leq \begin{cases}
            n_\epsilon &\text{if } t_k-t_{k-1}-1<n_\epsilon\\
            1+ 2\epsilon(t_k-t_{k-1}-1) &\text{otherwise}
        \end{cases}  \notag \\
        &\leq n_\epsilon + 2\epsilon(t_k-t_{k-1}) +1 \label{eq:best_regret_each_epoch}
    \end{align}
    We next define the following event in which we bound the regret of the modified Hedge algorithm at all epochs:
    \begin{equation*}
        \Hcal:=\set{\forall k\in[k_{max}], \sum_{t\in E_k} \1[\hat y_t\neq y_t] \leq L_k^\star + c_2\sqrt{L_k^\star  \log(T|S_k|)} + c_2\log(T|S_k|) },
    \end{equation*}
    where $c_2=3c$ and $c$ is the universal constant appearing in \cref{thm:adaptive_hedge_guarantee}. We then apply \cref{thm:adaptive_hedge_guarantee} to each epoch and take the union bound which implies that $\Hcal$ has probability at least $1-1/T$. Here we remarked that for each epoch $k\in[k_{max}]$, the end time of the epoch $t_k$ is not known a priori, however, we can take the union bound over all possible end times $t_k\in[T]$ (and there are at most $T$ epochs). Under $\Ecal\cap\Gcal\cap\Hcal$, we obtain
    \begin{align*}
        \sum_{t=1}^T \1[\hat y_t\neq y_t] &= \sum_{k=1}^{k_{max}} \sum_{t\in E_k} \1[\hat y_t\neq y_t]\\
        &\overset{(i)}{\leq} 2c_2 \sum_{k=1}^{k_{max}} \paren{L_k^\star + \log(T|S_k|) }\\
        &\overset{(ii)}{\leq} 2c_2 \cdot \epsilon T + 2c_2\cdot  k_{max}(n_\epsilon +1 + \log T + \dimo(\epsilon;\Fcal,\Ucal)\cdot\log 2 )\\
        &\overset{(iii)}{\leq} c_3\cdot  \epsilon T + c_3 \cdot (\dimo(\epsilon;\Fcal,\Ucal)+1)\paren{ \frac{d\log T }{\epsilon} +\dimo(\epsilon;\Fcal,\Ucal)}
    \end{align*}
    for some universal constant $c_3>0$.
    In $(i)$ we used $\Hcal$, in $(ii)$ we used \cref{eq:best_regret_each_epoch} together with the fact that $\log_2(|S_k|)=d_k \leq d_1= \dimo(\epsilon;\Fcal,\Ucal)$. In $(iii)$ we used $k_{max}\leq 1+\dimo(\epsilon;\Fcal,\Ucal)$ under $\Ecal\cap\Gcal$. Recalling that $\Pbb[\Ecal^c]+\Pbb[\Gcal^c]+\Pbb[\Hcal^c]\leq 2/T$, we obtained the desired oblivious regret bound by taking the expectation.
\end{proof}

\subsection{Regret Bounds for the Agnostic Setting}
\label{subsec:oblivious_adaptive}

We next turn to the agnostic case for which we use a similar algorithm to \cref{alg:oblivious_realizable} with adjusted parameters. Intuitively, while in \cref{alg:oblivious_realizable} we considered a specific choice of epochs and performed a Hedge variant algorithm on each epoch, in the agnostic case we will simply perform the Hedge algorithm on all experts for any possible choice of epochs.

Formally, fix $\epsilon>0$ and define the shorthand $D:=\dimo(\epsilon;\Fcal,\Ucal)+1$. We recall that here $N_\epsilon= \Ocal(d\log\frac{1}{\epsilon})$ where $d$ is the VC dimension of $\Fcal$. We let $n_\epsilon :=c_0 \frac{d\log T}{\epsilon}$. Fix a sequence of integers $t_0=0 <t_1<\ldots<t_{k'}=T$ with $k'\leq D$. We construct online a sub-function class $\Fcal_k\subseteq \Fcal$ and a corresponding cover $S_k\subseteq \Fcal_k$ for $k\in[k']$ as follows. We first let $\Fcal_1=\Fcal$. For any $k\in[k']$, having defined $\Fcal_k$, we construct a relaxed $\epsilon$-shattered tree for $(\Fcal,\Ucal)$ of maximum depth $d_k:=\dimo(\epsilon;\Fcal_k,\Ucal)$. We then define $S_k$ as the set of all leaf functions of this tree, exactly as in \cref{eq:def_updated_set_experts}. At the end of this epoch, if $k<k'$, we define the function class for the next epoch via
\begin{equation}\label{eq:def_function_class}
    \Fcal_{k+1} := \set{f\in\Fcal_k: \forall g\in S_k, \;\frac{1}{t_k-t_{k-1}}\sum_{t\in(t_{k-1},t_k]} \1[f(x_t)\neq g(x_t)] \geq 2\epsilon}.
\end{equation}
Since for any $k\in[k']$, $\dimo(\epsilon;\Fcal_k,\Ucal)\leq D-1$, we have $|S_k|\leq 2^{D-1}$. Hence, by enumerating the functions in $S_1,\ldots,S_{k'}$, we can specify all possible combinations of functions $(f_1,\ldots,f_{k'})\in S_1\times\ldots\times S_{k'}$ using sequences of their indices $i_1,\ldots,i_{k'}\in[2^{D-1}]$.

We consider all experts obtained from any pair of sequences $(t_1,i_1),\ldots,(t_{k'},i_{k'})\in [T]\times [2^{D-1}]$ with $k'\leq D$ and $1\leq t_1\leq \ldots\leq t_{k'}=T$, by constructing online the corresponding functions $f_k\in S_k$ for $k\in[k']$ and following the predictions of $f_k$ on epoch $(t_{k-1},t_k]$ (if for some $k\in[k']$ we have $i_k>|S_k|$, we can define this expert arbitrarily). We denote this expert by $E((t_k,i_k)_{k\in[k']})$ and let $\Scal_E$ be the set of all these experts. The algorithm runs the classical Hedge algorithm on the set of experts $\Scal_E$. For completeness, we include its pseudo-code in \cref{alg:oblivious_agnostic}. To analyze the oblivious regret of \cref{alg:oblivious_agnostic} we use the following classical regret bound on Hedge.

\begin{theorem}[Corollary 4.2 \cite{cesa2006prediction}]
\label{thm:regret_hedge}
    Consider the learning with $K$ experts problem with horizon $T$, and denote by $\ell_{t,i}\in[0,1]$ the loss of expert $i\in[K]$ at time $t\geq 1$. Denote by $L_{t,i}=\sum_{s=1}^{t-1} \ell_{s,i}$ the cumulative loss of expert $i$ before time $t$. We focus on the Hedge algorithm (exponentially weighted average forecaster) which at each iteration follows the prediction of expert $\hat i_t\sim p_t$ where $p_{t,i} = \frac{e^{-\eta L_{t,i}}}{\sum_{j\in[K]} e^{-\eta L_{t,j}}}$ for $i\in[K]$, with learning rate $\eta= \sqrt{\frac{8\log K}{T}}$. Then, for any $T\geq 1$, with probability at least $1-\delta$,
    \begin{equation*}
        \sum_{t=1}^T \ell_{t,\hat i_t} -\min_{i\in[K]} L_{T,i} \leq  \sqrt{\frac{T\log K}{2}} + \sqrt{\frac{T}{2}\log \frac{1}{\delta}},
    \end{equation*}
    for some universal constant $c>0$.
\end{theorem}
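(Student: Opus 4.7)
Since this is a restatement of a classical result, the plan is to follow the standard potential-function / Hoeffding-lemma argument for the exponentially weighted average forecaster, then upgrade the in-expectation bound to a high-probability bound via Azuma--Hoeffding. The statement is purely about the standard expert setting, so nothing in the paper's distributionally-constrained setup enters the argument.

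First, I would introduce the potential $W_t := \sum_{i\in[K]} e^{-\eta L_{t,i}}$, with $W_1 = K$ and with the Hedge weights writable as $p_{t,i} = e^{-\eta L_{t,i}}/W_t$. The key per-round inequality is
\begin{equation*}
    \log\frac{W_{t+1}}{W_t} \;=\; \log \sum_{i\in[K]} p_{t,i} \, e^{-\eta \ell_{t,i}} \;\leq\; -\eta \, \hat\ell_t + \frac{\eta^2}{8},
\end{equation*}
where $\hat\ell_t := \sum_i p_{t,i}\ell_{t,i}$ is the expected loss of the Hedge sampler at time $t$. The first equality is algebraic and the inequality is Hoeffding's lemma applied to the random variable taking value $\ell_{t,i}\in[0,1]$ under $p_t$. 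Telescoping and lower-bounding $\log W_{T+1} \geq -\eta L_{T+1,i} - \log K$ (from keeping the single $i$-th term) yields
\begin{equation*}
    \sum_{t=1}^T \hat\ell_t \;\leq\; \min_{i\in[K]} L_{T+1,i} + \frac{\log K}{\eta} + \frac{\eta T}{8}.
\end{equation*}
Plugging in $\eta = \sqrt{8\log K/T}$ balances the two tail terms and gives the in-expectation regret $\sqrt{(T\log K)/2}$.

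Second, to obtain the high-probability bound I would apply Azuma--Hoeffding to the martingale difference $D_t := \ell_{t,\hat i_t} - \hat\ell_t$, which is bounded in $[-1,1]$ and has mean zero conditionally on the past (since $\hat i_t \sim p_t$ is drawn independently of history given $p_t$). Summing gives $\sum_{t\leq T} D_t \leq \sqrt{(T/2)\log(1/\delta)}$ with probability at least $1-\delta$. Combining with the deterministic bound on $\sum_t \hat\ell_t$ above produces exactly the advertised inequality.

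There is no real obstacle here; the main thing to be careful about is that the learning rate $\eta$ is a fixed constant (not time-adaptive), which is precisely why the clean Hoeffding-lemma bookkeeping works and why we do not need the adaptive-learning-rate analysis used in \cref{thm:adaptive_hedge_guarantee}. If one wanted to derive this directly from a known black-box, the cleanest route is to simply cite Corollary 4.2 of \cite{cesa2006prediction} for the in-expectation part and any standard concentration argument (Azuma--Hoeffding, or Theorem A.7/A.8 in the same reference) for the high-probability extension.
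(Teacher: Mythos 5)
Your proof is correct and is exactly the standard argument behind the cited Corollary 4.2 of \cite{cesa2006prediction} (potential function plus Hoeffding's lemma for the in-expectation bound, then Hoeffding--Azuma for the high-probability upgrade), which the paper imports as a black box without reproving. The only nit is an indexing quirk inherited from the statement itself: with the convention $L_{t,i}=\sum_{s=1}^{t-1}\ell_{s,i}$, the comparator should really be $L_{T+1,i}=\sum_{s=1}^{T}\ell_{s,i}$ as in your telescoping, not $L_{T,i}$; your proof handles this correctly.
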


\begin{algorithm}[t]

    \caption{Hedge algorithm}\label{alg:oblivious_agnostic}
    
    \LinesNumbered
    \everypar={\nl}
    
    \hrule height\algoheightrule\kern3pt\relax
    \KwIn{Horizon $T$, set of experts $\Scal_E$. $y_{t,E}$ denotes the prediction of expert $E\in\Scal_E$ at time $t\in[T]$}
    
    \vspace{3mm}

    Initialization: $\eta:=\sqrt{\frac{8\log|\Scal_E|}{T}}$, $L_{1,E} = 0$ for all $E\in \Scal_E$
    
    \For{$t\in[T]$}{
        Let $p_{t,E} = \frac{e^{-\eta L_{t,E}}}{\sum_{E'\in \Scal_E}e^{-\eta  L_{t,E'}}}$ and sample $\hat E_t\sim p_t$ independently from history

        Observe $x_t$ and predict $\hat y_t= y_{t,\hat E_t}$

        Observe $y_t$ and update $L_{t+1,E}=L_{t,E}+\1[y_{t,E}\neq y_t]$ for $E\in \Scal_E$
    }
    
    \hrule height\algoheightrule\kern3pt\relax
\end{algorithm}

Our regret bound for \cref{alg:oblivious_agnostic} is the following.

\begin{theorem}\label{thm:oblivious_regret_agnostic}
    Let $\Fcal$ be a function class with finite VC dimension $d\geq 1$. Let $\Ucal$ be a distribution class such that for all $\epsilon>0$, $\dim(\epsilon;\Fcal,\Ucal)<\infty$. Then, for any $\epsilon>0$, $T\geq 1$, and any oblivious and realizable adversary $\Acal$, \cref{alg:oblivious_agnostic} run with the tolerance parameter $\epsilon$ satisfies:
    \begin{equation*}
        \OblivReg_T(\cref{alg:oblivious_agnostic}(\epsilon);\Acal) \lesssim \epsilon T + (\dimo(\epsilon;\Fcal,\Ucal)+1)\paren{\sqrt{T\log T} + \frac{d\log T}{\epsilon
        }  }.
    \end{equation*}
\end{theorem}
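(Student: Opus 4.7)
The plan is to combine a standard Hedge regret bound over the meta-set $\Scal_E$ with the existence of a ``good'' expert $E^\star \in \Scal_E$ whose loss tracks that of the best in-class $f^\star \in \Fcal$.

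First, each expert is specified by at most $D := \dimo(\epsilon;\Fcal,\Ucal)+1$ epoch boundaries in $[T]$ and as many function indices in $[2^{D-1}]$ (since $d_k \le D-1$, so $|S_k| \le 2^{D-1}$), giving $|\Scal_E| \le D(T\cdot 2^{D-1})^D$ and hence $\log|\Scal_E| \lesssim D(\log T + D)$. Plugging this into \cref{thm:regret_hedge} with failure probability $1/T$ yields, with probability at least $1-1/T$,
\[
\sum_{t=1}^T \1[\hat y_t \neq y_t] \leq \min_{E\in\Scal_E} L_{T,E} + \Ocal\paren{D\sqrt{T\log T}}.
\]
It then suffices to exhibit $E^\star \in \Scal_E$ with $L_{T,E^\star} \le \Ebb[\sum_t \1[f^\star(x_t)\neq y_t]] + \Ocal(\epsilon T + D n_\epsilon)$, where $f^\star$ achieves $\min_{f\in\Fcal} \Ebb[\sum_t \1[f(x_t)\neq y_t]]$ and $n_\epsilon = c_0 d\log T/\epsilon$.

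The construction of $E^\star$ mirrors the realizable analysis of \cref{thm:oblivious_regret_realizable}, with $f^\star$ playing the role of the ``truth.'' Condition on a uniform-concentration event $\Gcal$ obtained from \cref{lemma:uniform_concentration} taken simultaneously over all $\Ocal(T^2)$ sub-intervals of $[T]$ (probability $\ge 1-1/T$), and process epochs greedily starting from $\Fcal_1^\star = \Fcal \ni f^\star$. At each epoch $k$ with $f^\star \in \Fcal_k^\star$, let $\bar\mu_k^\star$ denote the mixture of $\mu_t$ over the epoch. Either (a) some $g \in S_k^\star$ has empirical $f^\star$-disagreement below $2\epsilon$ on $(t_{k-1}^\star, T]$---set $t_k^\star = T$, $g_k^\star = g$, and terminate, with triangle-inequality excess loss bounded by $2\epsilon(T-t_{k-1}^\star) \le 2\epsilon T$; or (b) no such $g$ exists on the prefix of length $n_\epsilon$---declare a short bad epoch, set $t_k^\star = t_{k-1}^\star + n_\epsilon$ and $g_k^\star \in S_k^\star$ arbitrarily (excess loss trivially $\le n_\epsilon$), and continue. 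By the defining criterion of $\Fcal_{k+1}^\star$ in \cref{eq:def_function_class}, $f^\star \in \Fcal_{k+1}^\star$ in case (b), preserving the invariant.

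The crux is then to show $d_{k+1}^\star < d_k^\star$ after each short bad epoch, bounding the number of such epochs by $D$. This would proceed as in the realizable proof: assuming $d_{k+1}^\star \ge d_k^\star$ for contradiction, attach the depth-$d_{k+1}^\star$ tree for $(\Fcal_{k+1}^\star,\Ucal)$ as the left subtree and the depth-$d_k^\star$ tree for $(\Fcal_k^\star,\Ucal)$ (whose leaves are $S_k^\star$) as the right subtree under a new root with $\mu_r := \bar\mu_k^\star$ and $f_{r,0} := f^\star$. The right-descendant condition $\bar\mu_k^\star(f^\star \neq g) \geq 2\epsilon/3$ for $g \in S_k^\star$ follows from the bad-epoch criterion combined with concentration. \textbf{The main obstacle is the left-descendant condition} $\bar\mu_k^\star(f^\star \neq f) \le \epsilon/3$ for every leaf $f$ of the left subtree: in the realizable case this came for free because $\Fcal_{k+1}$ was a version space and uniform concentration forced $f \approx f^\star$ under $\bar\mu_k^\star$, whereas here $f \in \Fcal_{k+1}^\star$ only guarantees $f$ is empirically $2\epsilon$-far from every $g \in S_k^\star$, which is compatible with $f$ being far from $f^\star$. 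I would circumvent this either by passing to $\dimu$ with vanishing slack $\eta$ (using \cref{lemma:comparing_various_dimensions} to relate $\dimu$, $\dimo$ up to a $\log_{N_\eta+2}$ factor absorbed into constants), thereby allowing the new root's $f_{r,0}$ to be chosen as an element of $S_k^\star$ that is $\eta$-close to $f^\star$ under $\bar\mu_k^\star$ (rather than $f^\star$ itself), or by working instead with $\tildedim$ from \cref{def:subregion_oblivious} and picking the new-root region $B_r$ to be the support where $f^\star$ disagrees with its closest approximator in $S_k^\star$. Once the dimension decrement is established, summing the three contributions---Hedge regret $\Ocal(D\sqrt{T\log T})$, good-epoch excess $\Ocal(\epsilon T)$, and bad-epoch excess $\Ocal(D n_\epsilon) = \Ocal(D d\log T/\epsilon)$---yields the claimed bound.
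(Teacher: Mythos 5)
Your overall architecture matches the paper's: a Hedge bound over $\Scal_E$ (with the same count $|\Scal_E|\le D(T2^{D})^{D}$) plus the exhibition of a single expert $E^\star$ whose loss tracks $f^\star$ up to $\Ocal(\epsilon T + Dn_\epsilon)$. But your construction of $E^\star$ has a genuine gap, and it is exactly the one you flag yourself: to bound the number of ``bad'' epochs by $D$ you need $d_{k+1}<d_k$, and the left-descendant condition $\bar\mu_k(f^\star\neq f)\le\epsilon/3$ fails to follow for $f\in\Fcal_{k+1}$ when $\Fcal_{k+1}$ is defined by the expert recursion \cref{eq:def_function_class} (empirical $2\epsilon$-farness from every $g\in S_k$ says nothing about proximity to $f^\star$). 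Writing ``I would circumvent this either by passing to $\dimu$ \ldots or by working with $\tildedim$ \ldots'' does not close the gap: neither workaround is carried out, and neither is obviously sound --- e.g., replacing the root function $f_{r,0}=f^\star$ by an $\eta$-close element of $S_k$ is in tension with the right-descendant requirement, since every element of $S_k$ is by construction $2\epsilon$-\emph{far} from $f^\star$ under $\bar\mu_k$ on a bad epoch. A secondary issue: your two cases (a) ``some $g\in S_k$ is $2\epsilon$-close to $f^\star$ on the entire tail $(t_{k-1},T]$'' and (b) ``no $g$ is $2\epsilon$-close on the length-$n_\epsilon$ prefix'' are not exhaustive, so the epoch construction as stated does not always produce a next boundary; you would need the paper's stopping rule (end the epoch at the first time $\ge n_\epsilon$ at which \emph{all} experts exceed $2\epsilon$ average loss, cf.\ \cref{eq:best_regret_each_epoch}).

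The paper avoids your obstacle entirely by a reduction you are missing: it introduces the phantom realizable adversary $\Acal^r$ that keeps the same distributions $\mu_1,\ldots,\mu_T$ but relabels every point by $f^\star$, and runs \cref{alg:oblivious_realizable} on $\Acal^r$ to \emph{define} the epoch boundaries $t_1,\ldots,t_{k_{\max}}$ and the per-epoch best functions $f_k\in S_k$. In that run the classes $\Fcal_{k+1}$ are genuine version spaces of $f^\star$, so the dimension-decrement argument of \cref{thm:oblivious_regret_realizable} applies verbatim (under the same concentration event $\Gcal$), giving $k_{\max}\le D$ and $\min_{f\in S_k}\sum_{t\in E_k}\1[f(x_t)\neq f^\star(x_t)]\le n_\epsilon+2\epsilon(t_k-t_{k-1})+1$; the resulting pair of sequences $(t_k,i_k)_k$ then indexes an expert in $\Scal_E$, and the triangle inequality $\1[y_{t,E^\star}\neq y_t]\le \1[f^\star(x_t)\neq y_t]+\1[y_{t,E^\star}\neq f^\star(x_t)]$ finishes the proof. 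You correctly located the hard point, but the step you leave open is precisely the step the argument turns on, so the proposal is not a complete proof.
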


\begin{proof}
    Fix an oblivious adversary $\Acal$ and denote by $\mu_1,\ldots,\mu_T$ and $f_1,\ldots,f_T$ its decisions.
    By construction, there are $|\Scal_E|\leq D(T2^D)^D$ experts. Then, \cref{thm:regret_hedge} implies that the event
    \begin{equation*}
        \Ecal:=\set{ \sum_{t=1}^T \1[\hat y_t\neq y_t] -\min_{E\in\Scal_E} \sum_{t=1}^T \1[y_{t,E}\neq y_t] \leq  c_1\sqrt{DT(D+\log T)} }
    \end{equation*}
    has probability at least $1-1/T$ for some universal constant $c_1>0$ (note that $D\geq 1$). Next, we fix $f^\star\in\Fcal$ such that
    \begin{equation}\label{eq:definition_f_star}
        \Ebb\sqb{\sum_{t=1}^T \1[f^\star(x_t)\neq y_t]} \leq \inf_{f\in\Fcal}\Ebb\sqb{\sum_{t=1}^T \1[f(x_t)\neq y_t]} + 1.
    \end{equation}

    We next consider the realizable adversary $\Acal^r$ that chooses the same distributions $\mu_1,\ldots,\mu_T$ as $\Acal$ but always selects the function $f^\star$ at all times. We consider running the algorithm for the realizable case \cref{alg:oblivious_realizable} with adversary $\Acal^r$. Denote the epochs defined by \cref{alg:oblivious_realizable} by $E_k:=(t_{k-1},t_k]$ for $k\in[k_{max}]$. Define the same event $\Gcal$ as defined within the proof of \cref{thm:oblivious_regret_realizable} in \cref{eq:def_event_G}. The proof then shows that under $\Gcal$, we have $k_{max}\leq 1+\dimo(\epsilon;\Fcal,\Ucal)=D$ (note that by construction the event $\set{\forall t\in[T]: y_t=f^\star(x_t)}$ is automatically satisfied). Further, by construction of these epochs within \cref{alg:oblivious_realizable}, as detailed within \cref{eq:best_regret_each_epoch} we have
    \begin{equation*}
        \min_{f\in S_k} \sum_{t\in E_k} \1[f(x_t)\neq f^\star(x_t)] \leq n_\epsilon+2\epsilon(t_k-t_{k-1})+1,
    \end{equation*}
    where $n_\epsilon=c_0\frac{d\log T}{\epsilon}$ for some universal constant $c_0>0$.
    In particular, under $\Gcal$, the expert $E^\star\in\Scal_k$ which had the parameters $t_1,\ldots,t_{k_{max}}$ and the indices of the functions $f_k:=\argmin_{f\in S_k} \sum_{t\in E_k} \1[f(x_t)\neq f^\star(x_t)]$ for all $k\in[k_{max}]$ has 
    \begin{equation*}
        \sum_{t=1}^T \1[y_{t,E^\star}\neq f^\star(x_t)] \leq \sum_{k=1}^{k_{max}} \min_{f\in S_k} \sum_{t\in E_k} \1[f(x_t)\neq f^\star(x_t)]  \leq 2\epsilon T+ D(n_\epsilon+1).
    \end{equation*}
    In the last inequality we used the fact that $k_{max}\leq D$ under $\Gcal$. Putting things together, under $\Ecal\cap\Gcal$ we obtained
    \begin{align*}
        \sum_{t=1}^T\1[\hat y_t\neq y_t] &\overset{(i)}{\leq} \sum_{t=1}^T\1[y_{t,E^\star}\neq y_t] + c_1\sqrt{DT(D+\log T)}\\
        &\leq  \sum_{t=1}^T\1[f^\star(x_t)\neq y_t] + \sum_{t=1}^T\1[y_{t,E^\star}\neq f^\star(x_t)] + c_1\sqrt{DT(D+\log T)}\\
        &\leq \sum_{t=1}^T\1[f^\star(x_t)\neq y_t] + 2\epsilon T + c_1\sqrt{DT(D+\log T)} + 2c_0 \frac{D d\log T}{\epsilon},
    \end{align*}
    where in $(i)$ we used $\Ecal$. Taking the expectation and recalling that $\Pbb[\Ecal^c]+\Pbb[\Gcal^c]\leq 2/T$, we obtained
    \begin{align*}
        \OblivReg_T(\cref{alg:oblivious_agnostic};\Acal) &\overset{(i)}{\leq}
        \Ebb\sqb{\sum_{t=1}^T\1[\hat y_t\neq y_t]-\sum_{t=1}^T\1[f^\star(x_t)\neq y_t]} + 1\\
        &\leq 2\epsilon T + c_1 D\sqrt{T\log T} + 2c_0 \frac{D d\log T}{\epsilon} + 3.
    \end{align*}
    In $(i)$ we used the definition of $f^\star$ from \cref{eq:definition_f_star}. This ends the proof.
\end{proof}

\cref{thm:oblivious_adversaries} is obtained by combining
\cref{prop:necessary_condition,thm:oblivious_regret_realizable,thm:oblivious_regret_agnostic}, together with the fact that function classes $\Fcal$ with finite VC dimension always satisfy $N_\epsilon\lesssim d\log\frac{1}{\epsilon}$. More precisely, we obtained the following non-asymptotic bounds on the minimax oblivious regret.

\begin{theorem}\label{thm:combined_oblivious_non_asymptotic}
    Let $\Fcal,\Ucal$ be a function class and distribution class on $\Xcal$. Then,
    \begin{multline*}
        \sup_{\epsilon\in(0,1]}\set{\min\paren{\epsilon T,\sqrt{\min(\dim(\epsilon;\Fcal,\Ucal),T)\cdot T}} } \lesssim \OblivR_T(\Fcal,\Ucal)\\
        \lesssim \inf_{\epsilon\in(0,1]} \set{ \epsilon T+ (\dimo(\epsilon;\Fcal,\Ucal)+1)\paren{\sqrt{T\log T } + \frac{d\log T}{\epsilon }  } }.
    \end{multline*}
    \begin{multline*}
        \sup_{\epsilon\in(0,1]}\set{\epsilon  \cdot  \min(\dimu(\epsilon,\epsilon/(4T);\Fcal,\Ucal), T)} 
        \lesssim  
        \OblivRealizableR_T(\Fcal,\Ucal)\\
        \lesssim \inf_{\epsilon\in(0,1]} \set{ \epsilon T + (\dimo(\epsilon;\Fcal,\Ucal)+1)\paren{\frac{d\log T}{\epsilon }   +  \dimo(\epsilon;\Fcal,\Ucal) }   }.
    \end{multline*}
\end{theorem}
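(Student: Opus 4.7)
The plan is to assemble this non-asymptotic minimax bound from the four ingredients already established earlier in the appendix: two lower bounds coming from the necessary-conditions lemmas, and two upper bounds coming from the algorithmic theorems. Since each of the four bounds is already proved for an arbitrary fixed $\epsilon\in(0,1]$, the only work left is to pass from ``for every $\epsilon$'' to a supremum on the lower side and an infimum on the upper side, which the minimax regret definition accommodates trivially.

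First I would handle the agnostic case. For the lower bound, I would invoke \cref{lemma:finite_litt_dimension_necessary}, which produces, for each $\epsilon\in(0,1]$, an oblivious (agnostic) adversary forcing regret $\gtrsim \min(\epsilon T, \sqrt{\min(\dim(\epsilon;\Fcal,\Ucal),T)\cdot T})$ against any learner. Since the minimax quantity $\OblivR_T(\Fcal,\Ucal)$ upper bounds this for every $\epsilon$, taking the supremum over $\epsilon\in(0,1]$ gives the left-hand inequality. For the upper bound, I would fix any $\epsilon\in(0,1]$ and run \cref{alg:oblivious_agnostic} with tolerance parameter $\epsilon$; \cref{thm:oblivious_regret_agnostic} then yields regret $\lesssim \epsilon T + (\dimo(\epsilon;\Fcal,\Ucal)+1)(\sqrt{T\log T}+\frac{d\log T}{\epsilon})$ against any oblivious adversary. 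Since this bound holds for every $\epsilon$, and the learner is allowed to pick any algorithm (and in particular the best tolerance), the minimax $\OblivR_T(\Fcal,\Ucal)$ is bounded by the infimum over $\epsilon\in(0,1]$ of the right-hand side.

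The realizable case is entirely analogous. The lower bound follows from \cref{lemma:finite_litt_dimension_necessary_realizable}, which gives, for each $\epsilon\in(0,1]$, a realizable oblivious adversary enforcing regret $\gtrsim \epsilon\cdot \min(\dimu(\epsilon,\epsilon/(4T);\Fcal,\Ucal), T)$; taking the supremum over $\epsilon$ completes that side. The upper bound follows from \cref{thm:oblivious_regret_realizable} applied to \cref{alg:oblivious_realizable} with parameter $\epsilon$, giving regret $\lesssim \epsilon T + (\dimo(\epsilon;\Fcal,\Ucal)+1)(\frac{d\log T}{\epsilon} + \dimo(\epsilon;\Fcal,\Ucal))$, and then taking the infimum over $\epsilon\in(0,1]$.

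There is no real obstacle here, since all the substantive work---the tree construction for the lower bounds and the modified Hedge analysis for the upper bounds---is done in the four cited results. The only mild care required is to note that the quantitative bounds from \cref{lemma:finite_litt_dimension_necessary,lemma:finite_litt_dimension_necessary_realizable,thm:oblivious_regret_realizable,thm:oblivious_regret_agnostic} each hold for an arbitrary $\epsilon\in(0,1]$, so the supremum/infimum over $\epsilon$ is genuinely achievable in the minimax formulation. In particular one does not need to invoke \cref{prop:equivalent_oblivious} or \cref{lemma:comparing_various_dimensions} to convert between $\dim$, $\dimu$, and $\dimo$ at this stage, since the bounds are stated in exactly the dimension notions each source result uses.
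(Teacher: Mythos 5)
Your proposal is correct and matches the paper's proof, which is exactly the one-line observation that the two lower bounds follow from \cref{lemma:finite_litt_dimension_necessary,lemma:finite_litt_dimension_necessary_realizable} and the two upper bounds from \cref{thm:oblivious_regret_realizable,thm:oblivious_regret_agnostic}, with $\epsilon$ tuned optimally on each side. Your added remark that no conversion between $\dim$, $\dimu$, and $\dimo$ is needed at this stage is accurate, since the theorem is deliberately stated in the dimension notion native to each source result.
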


\begin{proof}
    Both upper bounds are obtained by optimally tuning the parameter $\epsilon>0$ within \cref{lemma:finite_litt_dimension_necessary,lemma:finite_litt_dimension_necessary_realizable,thm:oblivious_regret_realizable,thm:oblivious_regret_agnostic}.
\end{proof}
Note that we can bound $\dimo(\epsilon;\Fcal,\Ucal)$ in terms of $\dim(\epsilon/4;\Fcal,\Ucal)$ and $N_{\epsilon/24}$ as detailed in \cref{lemma:comparing_various_dimensions}.

\subsection{An Example Beyond VC Function Classes}
\label{subsec:counterexample_VC_class}

In the previous sections, we showed that the two conditions
\begin{equation}\label{eq:two_conditions_oblivious}
    \forall \epsilon>0,\quad N_\epsilon<\infty\quad \text{and}\quad  \dim(\epsilon;\Fcal,\Ucal)<\infty
\end{equation}
are necessary for learning with oblivious adversaries (\cref{prop:necessary_condition}) and they are also sufficient if $\Fcal$ additionally has finite VC dimension (\cref{thm:oblivious_adversaries}). These conditions \cref{eq:two_conditions_oblivious} are however not sufficient in general and we give such an example in this section. All statements in this section are specifically for the function class and distribution class defined below.

Let $\Xcal:=\bigsqcup_{d\geq 1} S_{d-1}$, where $S_{d-1}=\{x\in\Rbb^d:\|x\|=1\}$ is the unit sphere and $\bigsqcup$ denotes a formal disjoint union. For convenience, we write $\Xcal_d:= S_{d-1}$. For instance, $(1)\in\Xcal_1$ and $(1,0)\in\Xcal_2$ are distinct points in $\Xcal$. We consider the function class $\Fcal$ obtained as the collection of all hyperplanes:
\begin{equation*}
    \Fcal :=\bigsqcup_{d\geq 1} \set{x\in\Xcal\mapsto \1[x\in \Xcal_d] \1[v^\top x=0], v\in S_{d-1}}.
\end{equation*}
Next, we consider the distribution class $\Ucal$ containing all uniform distributions on any such hyperplane:
\begin{equation*}
    \Ucal:= \bigsqcup_{d\geq 1} \set{ \Unif(\Xcal_d\cap\{x:v^\top x=0\}), v\in S_{d-1}}.
\end{equation*}
By construction, for any function $f= \1[\cdot\in \Xcal_d] \1[v^\top \cdot =0]\in\Fcal$  with $v\in S_{d-1}$, and any $\mu\in\Ucal$,
\begin{equation}\label{eq:property_counterexample}
    \mu(f=1) =\begin{cases}
        1 & \text{if } \mu = \Unif(\Xcal_d\cap\{x:v^\top x=0\})  \\
        0 & \text{otherwise}.
    \end{cases}
\end{equation}
Using this remark we can then show that $(\Fcal,\Ucal)$ satisfies \cref{eq:two_conditions_oblivious}.

\begin{lemma}\label{lemma:counter_example_has_low_dims}
    For the function class and distribution class $(\Fcal,\Ucal)$ constructed above, for any $\epsilon\in[0,1]$ and $\mu\in\Ucal$,
    $\Ncov_{\mu}(\epsilon;\Fcal)=2$. Also, for $\epsilon>0$,
    \begin{equation*}
        N_\epsilon \leq \frac{1}{\epsilon}+1 \quad \text{and} \quad \dim(\epsilon;\Fcal,\Ucal)=1.
    \end{equation*}
\end{lemma}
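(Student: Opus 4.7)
The plan exploits the $0/1$-dichotomy from \eqref{eq:property_counterexample}: for any $\mu\in\Ucal$ and $f\in\Fcal$, $\mu(f=1)\in\{0,1\}$, and moreover for any two $f,g\in\Fcal$ corresponding to distinct hyperplanes, the event $\{f=1\}\cap\{g=1\}$ lies in the intersection of two different hyperplanes with some $\Xcal_d$, which is a codimension-$2$ subsphere and thus has measure zero under every $\mu\in\Ucal$ (and a fortiori under any $\mu\in\bar\Ucal$). Consequently, whenever $f,g$ correspond to distinct hyperplanes, the pseudo-distance collapses to $\mu(f\neq g)=\mu(f=1)+\mu(g=1)$. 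This single identity drives all three statements.

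For the first claim, fix $\mu\in\Ucal$ and let $f^\star\in\Fcal$ be the unique (up to sign of $v$) function with $\mu(f^\star=1)=1$. Then every $g\neq f^\star$ satisfies $\mu(g=1)=0$, so any two such $g_1,g_2$ have $\mu(g_1\neq g_2)=0$. Thus $S=\{f^\star,g_0\}$ for any single $g_0\neq f^\star$ already is a $0$-cover, giving $\Ncov_\mu(\epsilon;\Fcal)\leq 2$. For the matching lower bound, a single-function cover $\{h\}$ always fails whenever $\epsilon<1$: if $h=f^\star$ any $g_0\neq f^\star$ is at distance $1$, and if $h\neq f^\star$ then $f^\star$ is at distance $1$.

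For the second claim, write $\mu=\sum_i\alpha_i\mu_i\in\bar\Ucal$ with $\mu_i\in\Ucal$, and set $\beta_f:=\mu(f=1)=\sum_{i:f^\star_{\mu_i}=f}\alpha_i$. Since the events $\{f=1\}$ for distinct-as-hyperplane $f$ are $\mu$-almost disjoint, $\sum_f \beta_f\leq 1$, so at most $\lfloor 1/\epsilon\rfloor$ functions carry $\beta_f>\epsilon$. Take $S=\{f:\beta_f>\epsilon\}\cup\{z\}$ for any $z\in\Fcal$ with $\mu(z=1)=0$ (such $z$ exists since $\Fcal$ contains uncountably many hyperplanes). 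For every $g\notin S$ we then have $\mu(g\neq z)=\mu(g=1)+0\leq\epsilon$, so $|S|\leq\lfloor 1/\epsilon\rfloor+1$.

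For the third claim, $\dim(\epsilon;\Fcal,\Ucal)\geq 1$ is immediate: pick $\mu\in\Ucal$, its $f^\star$, and any $g\neq f^\star$; then $\mu(f^\star\neq g)=1\geq\epsilon$. The core argument is the upper bound. Suppose toward contradiction a depth-$2$ $\epsilon$-shattered tree exists: root $\mu_r\in\bar\Ucal$ with edge functions $f^l\neq f^r$ (distinctness forced by $\mu_r(f^l\neq f^r)\geq\epsilon>0$), left child $\mu_l$ with edge functions $g^{ll},g^{lr}$, and symmetrically on the right. The root condition gives $\mu_r(f^l=1)+\mu_r(f^r=1)\geq\epsilon$, so without loss of generality $\mu_r(f^l=1)\geq\epsilon/2>\epsilon/3$. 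The ancestor-edge conditions $\mu_r(g^{ll}\neq f^l),\mu_r(g^{lr}\neq f^l)\leq\epsilon/3$ then \emph{force} $g^{ll}$ and $g^{lr}$ to coincide with $f^l$ as hyperplanes, since otherwise the pseudo-distance would be at least $\mu_r(f^l=1)>\epsilon/3$. But then $\mu_l(g^{ll}\neq g^{lr})=0<\epsilon$, contradicting shattering at the left child. The case $\mu_r(f^r=1)\geq\epsilon/2$ is symmetric via the right branch.

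The only delicate step is bookkeeping the distinction between $f,g\in\Fcal$ being \emph{equal as functions} versus \emph{corresponding to the same hyperplane} (the labels $v$ and $-v$ give the same hyperplane hence the same function); the almost-disjointness identity $\mu(f=1,g=1)=0$ applies exactly when they differ as hyperplanes, and this is what the contradiction in Part~3 hinges on.
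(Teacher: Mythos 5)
Your proposal is correct and follows essentially the same route as the paper's proof: both rest on the $0/1$ dichotomy of $\mu(f=1)$ and the almost-sure disjointness of the events $\{f=1\}$ across distinct hyperplanes, count the functions with mass exceeding $\epsilon$ to bound $N_\epsilon$, and show that the root's heavy branch forces the child's two edge functions to coincide, killing any depth-$2$ shattered tree. Your phrasing of that last step via the additive identity $\mu(f\neq g)=\mu(f=1)+\mu(g=1)$ is just a reformulation of the paper's observation that $\mu(f_{r,0}=f=1)>0$ forces $f=f_{r,0}$, and your care about $\epsilon<1$ in the $\Ncov_\mu=2$ lower bound is, if anything, slightly more precise than the paper.
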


\begin{proof}
    \cref{eq:property_counterexample} directly implies that $\Ncov_{\mu}(0;\Fcal)=2$ for any $\mu\in\Ucal$. Next, fix $\epsilon>0$ and a mixture $\mu\in\bar\Ucal$. Let $\Scal_\mu=\set{f: \mu(f=1)>\epsilon}$ be the set of all hyperplanes that have at least $\epsilon$ mass under $\mu$. By \cref{eq:property_counterexample}, we have
    \begin{equation*}
        1\geq \Pbb_{x\sim\mu}(\exists f\in \Scal_\mu: f(x)=1) = \sum_{f\in\Scal_\mu} \Pbb_{x\sim\mu}(f(x)=1) \geq \epsilon |\Scal_\mu|.
    \end{equation*}
    Hence, adding a function $f_0\in\Fcal$ for which $\mu(f_0=1)=0$ to the set $\Scal_\mu$, we obtained an $\epsilon$-cover of $\Fcal$ with respect to $\mu$ that has cardinality at most $1+1/\epsilon$. This proves that $N_\epsilon\leq 1+1/\epsilon$.

    Next, we consider an $\epsilon$-shattered tree for $(\Fcal,\Ucal)$ of depth $d\geq 1$. By definition, we have $\mu_r(f_{r,0}\neq f_{r,1})\geq \epsilon$ where $r$ denotes the root. Hence, there exists $y\in\{0,1\}$ such that $\mu_r(f_{r,y}=1)\geq \epsilon/2$. Suppose without loss of generality that $y=0$. Next, using the second property of shattered trees, any left-descendant function $f$ of the root $r$ must satisfy $\mu_r(f_{r,0}\neq f)\leq \epsilon/3$. We combine these properties to obtain
    \begin{equation*}
        \mu_r(f_{r,0}=f=1) \geq \mu_r(f_{r,0}=1) - \mu_r(f_{r,0}\neq f) \geq \frac{\epsilon}{2}-\frac{\epsilon}{3} >0.
    \end{equation*}
    In particular, since $\mu_r\in\bar\Ucal$ there must exist $\mu\in\Ucal$ such that $\mu(f_{r,0}=f=1)>0$. From \cref{eq:property_counterexample} we conclude that $f=f_{r,0}$. In summary, all left-descendant functions of the root coincide, which forces $d=1$. This proves $\dim(\epsilon;\Fcal,\Ucal)\leq 1$. We can also easily check that $\dim(\epsilon;\Fcal,\Ucal)\geq 1$.
\end{proof}

While \cref{lemma:counter_example_has_low_dims} shows that $(\Fcal,\Ucal)$ easily satisfy the necessary conditions from \cref{prop:necessary_condition}, it turns out that these are not learnable for oblivious adversaries.

\begin{proposition}\label{prop:counterexample}
    For the function class and distribution class $(\Fcal,\Ucal)$ constructed above, for any $T\geq 1$, $ \OblivR_T(\Fcal,\Ucal)\geq  \OblivRealizableR_T(\Fcal,\Ucal) \geq T/4$.
\end{proposition}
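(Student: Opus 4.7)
The plan is to exhibit, for each horizon $T$, an oblivious realizable adversary in a sufficiently large ambient dimension $d = d(T)$ (it will suffice to take $d = CT$ for an absolute constant $C$) whose expected mistakes against any learner are at least $T/4$. Specifically, the adversary first draws $v^\star \sim \Unif(S_{d-1})$ and commits to $f^\star := f_{v^\star} \in \Fcal$ as the labeling function for all rounds. Then, independently for each $t \in [T]$, it samples a fair coin $B_t \in \set{0,1}$ and a fresh $w_t \sim \Unif(S_{d-1})$, and sets $\mu_t = \Unif(\Xcal_d \cap \set{v^{\star\top}x = 0})$ if $B_t = 1$ and $\mu_t = \Unif(\Xcal_d \cap \set{w_t^\top x = 0})$ if $B_t = 0$. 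By \eqref{eq:property_counterexample}, $y_t = f^\star(x_t) = B_t$ almost surely, so the adversary is realizable (by $f^\star$) and oblivious; the best in-class loss is zero and the regret equals the expected mistakes of the learner.

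The next step is to argue that no learner can achieve $\Pbb(\hat y_t = y_t \mid H_{t-1}) > 3/4$ at any round $t$. Conditional on the history $H_{t-1}$, the labels $y_s = B_s$ for $s < t$ are observed, but $B_t$ remains a fair coin independent of the history. A symmetry argument (exchanging the roles of $v^\star$ and $x_s$ under $B_s = 1$, and observing that under $B_s = 0$ the fresh $w_s$ integrates out and leaves $x_s$ independent of $v^\star$) shows that the posterior of $v^\star$ given $H_{t-1}$ is $\Unif(V_t \cap S_{d-1})$, where $V_t := \Span(\set{x_s : s < t,\ B_s = 1})^\perp$ has dimension $k_t \geq d - T$. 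Le Cam's two-point inequality then yields
\begin{equation*}
    \Pbb(\hat y_t = y_t \mid H_{t-1}) \;\leq\; \frac{1}{2} + \frac{1}{2}\TV\!\paren{\Pbb(x_t \in \cdot \mid B_t = 1, H_{t-1}),\, \Pbb(x_t \in \cdot \mid B_t = 0, H_{t-1})},
\end{equation*}
and it remains to bound this TV by $1/2$.

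For that, I would decompose $x_t = x_t^{V_t} + x_t^{V_t^\perp}$ along $V_t \oplus V_t^\perp$. A rotational-symmetry calculation shows that, under both conditioning events, the normalized directions of $x_t^{V_t}$ and $x_t^{V_t^\perp}$ are independent and uniform on their respective unit spheres with identical laws. The only statistic that distinguishes the two cases is the squared projection $r_t^2 := \norm{x_t^{V_t^\perp}}^2$: slicing the ambient sphere by $V_t$ and $V_t^\perp$ gives $r_t^2 \sim \Beta((d-k_t)/2, k_t/2)$ under $B_t = 0$ and $r_t^2 \sim \Beta((d-k_t)/2, (k_t - 1)/2)$ under $B_t = 1$. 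By the data-processing inequality, the TV in $x_t$ equals the TV between these two Beta laws, which share the first parameter (at most $T/2$) and differ by $1/2$ in the second, with $k_t$ of order $d$. A standard Gaussian (or Hellinger) approximation then bounds this TV by $O(\sqrt{T}/k_t)$, which is at most $1/2$ once $d \geq CT$ for a large enough absolute constant $C$. Summing $\Pbb(\hat y_t \neq y_t) \geq 1/4$ across $t \in [T]$ yields the required $\OblivRealizableR_T(\Fcal,\Ucal) \geq T/4$.

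The main technical obstacle is the clean justification of the reduction to the Beta sufficient statistic. A priori each conditional law of $x_t$ is a mixture of uniform measures on varying $(d-2)$-spheres, and it is not immediately clear that the mixture admits a nice density with respect to $\Unif(S_{d-1})$. The fix is to work directly in the $V_t \oplus V_t^\perp$ coordinates and disintegrate the joint law of $(v^\star, x_t)$ (resp.\ $(w_t, x_t)$); the rotational symmetry of the posterior on $v^\star$ inside $V_t$ then eliminates every coordinate but the radial one in $V_t^\perp$, producing a clean density in $r_t^2$ and reducing the TV to the quantitative Beta comparison described above.
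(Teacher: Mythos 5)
Your construction is the same randomized adversary as the paper's $\Acal^{(1)}$ (a hidden direction $v^\star\sim\Unif(S_{d-1})$ fixing $f^\star=f_{v^\star}$, and each round a fair coin deciding between $\mu_{v^\star}$ and the rotation-invariant mixture $\bar\mu$), and your core reduction is also the paper's: the posterior of $v^\star$ given the history is uniform on the sphere in the orthocomplement of the observed label-$1$ points, rotational invariance collapses the conditional law of $x_t$ to the single radial statistic $r_t^2$, and the two hypotheses become Beta laws whose second parameters differ by $1/2$. Where you genuinely diverge is in the aggregation of the per-round information. The paper introduces an auxiliary non-realizable adversary $\Acal^{(0)}$ with i.i.d.\ $\Ber(1/2)$ labels, bounds the \emph{trajectory-level} total variation between $\Acal^{(0)}$ and $\Acal^{(1)}$ via the chain rule for KL plus Pinsker, and subtracts $T\cdot\TV$ from the trivial $T/2$ lower bound; this is why it ends up requesting $d\gtrsim T^2$. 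You instead apply a Le Cam/Bayes-risk bound \emph{round by round}, showing $\Pbb(\hat y_t=y_t\mid H_{t-1})\le \tfrac12+\tfrac12\TV$ with the per-round TV at most $1/2$, and sum. This dispenses with the null adversary, is quantitatively cheaper in the ambient dimension ($d\gtrsim T$ suffices), and is arguably more transparent; both routes are valid because $\Xcal$ contains spheres of every dimension.

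Two points you should make explicit to close the argument. First, the derandomization: the minimax regret $\OblivRealizableR_T$ is a supremum over \emph{deterministic} oblivious adversaries whose $\mu_t$ lie in $\Ucal$ (not $\bar\Ucal$), whereas your adversary is randomized (over $v^\star$, the coins $B_t$, and the fresh directions $w_t$). Since every realization selects distributions in $\Ucal$ and labels consistent with the fixed $f^\star\in\Fcal$, the law of total probability yields, for any fixed learner, a deterministic realizable oblivious adversary achieving at least the average regret; the paper carries out exactly this step. Second, the posterior identification and the Beta comparison both deserve a line of justification: the former follows from invariance of the joint law of $(v^\star,(x_s)_{B_s=1})$ under the diagonal $O(d)$-action (the stabilizer of the observed points acts transitively on $S_{d-1}\cap V_t$), and for the latter the cleanest rigorous bound is Pinsker applied to the explicit Gamma-function formula for $\Dkl\bigl(\Beta(\tfrac{k}{2},\tfrac{d-k}{2})\,\|\,\Beta(\tfrac{k}{2},\tfrac{d-k-1}{2})\bigr)=O\bigl(\tfrac{1}{d-k-1}\bigr)$, which gives a per-round TV of $O\bigl((d-T)^{-1/2}\bigr)$ --- more than enough for your threshold of $1/2$ --- rather than the informal Gaussian approximation you invoke.
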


\begin{proof}
    It suffices to focus on the realizable case. Fix $T\geq 1$, a learner strategy $\Lcal$, and let $d\geq T$ to be defined later. The adversaries that we will consider will focus only on $\Xcal_d$. We denote by $\bar\mu$ the rotationally-invariant distribution of $x$ obtained by first sampling $v\sim\Unif(S_{d-1})$, then sampling $x\sim\Unif(\Xcal_d\cap\{z:v^\top z=0\})$. Note that by construction $\bar\mu\in\bar\Ucal$. We consider a first adversary $\Acal^{(0)}$ that always chooses the distribution $\bar\mu$ at all iterations $t\in[T]$ and such that the values $y_t\sim\Ber(\frac{1}{2})$ are i.i.d.\ Bernoulli and independent from any other random variables. Note that this is a randomized oblivious adversary that is not realizable a priori. By design of $(y_t)_{t\in[T]}$,
    \begin{equation}\label{eq:lower_bound_adversary_0}
        \Ebb_{\Lcal,\Acal_0}\sqb{\sum_{t=1}^T \1[\hat y_t\neq y_t]} = \frac{T}{2}.
    \end{equation}

    Next, for each $v\in S_{d-1}$, we define an adversary $\Acal_v$ as follows. For convenience, we denote $\mu_v:=\Unif(\Xcal_d\cap\set{x:v^\top x=0})$ and $f_v:=\1[\cdot\in\Xcal_d]\1[v^\top\cdot=0]\in\Fcal$. For each iteration $t\in[T]$ we sample $B_t\sim\Ber(\frac{1}{2})$ a Bernoulli independent from the history. If $B_t=1$, the adversary  $\Acal_v$ chooses $\mu_t=\mu_v$ and otherwise chooses $\mu_t=\bar\mu$. At all iterations, $\Acal_v$ chooses the function $f_t=f_v$. Hence, $\Acal_v$ is a realizable randomized adversary. By design of $\Acal_v$, we have $(y_t)_{t\in[T]}=(B_t)_{t\in[T]}$ which is an i.i.d.\ sequence of Bernoulli $\Ber(\frac{1}{2})$. We introduce $\Acal^{(1)}$ the adversary which first samples $v\sim\Unif(S_{d-1})$ uniformly, then follows adversary $\Acal_v$.

    We aim to show that the learner cannot distinguish significantly between $\Acal^{(0)}$ and $\Acal^{(1)}$. Denote by $\Dcal^{(i)}$ the distribution of the sequence $(x_t^{(i)},y_t^{(i)})_{t\in[T]}$ obtained by the adversary $\Acal^{(i)}$ for $i\in\{0,1\}$. In the following, by abuse of notation we identify a random variable and its distribution. Next, for any finite sequence of points $S\subseteq \Xcal_d$ with $
    |S|<d$, we denote by $\bar\mu(S)$ the distribution on $\Xcal_d$ obtained by first sampling $v\sim\Unif(S_{d-1}\cap\set{x:\forall z\in S,x^\top z=0})$ then sampling $x\sim\mu_v$. We note that this is well-defined since $|S|<d$ and hence $S_{d-1}\cap\set{v:\forall z\in S,v^\top z=0}\neq\emptyset$.
    By Pinsker's inequality and the chain rule,
    \begin{align}
        2 \TV(\Dcal^{(0)},\Dcal^{(1)})^2 &\leq \Dkl(\Dcal^{(0)} \parallel\Dcal^{(1)}) \notag \\
        &=\sum_{t=1}^T \Dkl\paren{ x_t^{(0)},y_t^{(0)} \parallel x_t^{(1)},y_t^{(1)}  \mid (x_s^{(0)},y_s^{(0)})_{s<t}}\notag \\
        &\overset{(i)}{=} \sum_{t=1}^T \Dkl\paren{ x_t^{(0)}\parallel x_t^{(1)}  \mid (x_s^{(0)},y_s^{(0)})_{s<t}, y_t^{(0)}} \notag \\
        &\overset{(ii)}{=} \frac{1}{2}\sum_{t=1}^T \Dkl\paren{ x_t^{(0)}\parallel x_t^{(1)}  \mid (x_s^{(0)},y_s^{(0)})_{s<t}, y_t^{(0)}=1} \notag \\
        &\overset{(iii)}{=} \frac{1}{2} \sum_{t=1}^T \Ebb_{\substack{(x_s)_{s<t}\overset{iid}{\sim}\bar\mu,\\ (y_s)_{s<t}\overset{iid}{\sim}\Ber(\frac{1}{2})}}\sqb{ \Dkl\paren{\bar\mu\parallel \bar\mu(\set{x_s: y_s=1,s<t})} } \notag \\
        &= \frac{1}{2} \sum_{t=1}^T \Ebb_{N\sim\Binom(t-1,\frac{1}{2}),(x_s)_{s\in[N]}\overset{iid}{\sim}\bar\mu}\sqb{ \Dkl\paren{\bar\mu\parallel \bar\mu(\set{x_s,s\in[N]})} } \label{eq:bounding_TV_distance}
    \end{align}
    In $(i)$ we used the fact that in both distributions, $y_t\sim\Ber(\frac{1}{2})$ independently from the past history. In $(ii)$ we notice that whenever $y_t=0$, almost surely the samples $x_t$ were both sampled from $\bar\mu$ independently from the past history in both distributions. This is always the case for $\Acal^{(0)}$ and for $\Acal^{(1)}$, almost surely if $B_t=0$ we have $y_t=f_v(x_t)=0$ while if $B_t=1$ we always have $y_t=f_v(x_t)=1$. In $(iii)$ we used the definition of $\Acal^{(1)}$ to compute the distribution of $x_t^{(1)}$ conditional on the other variables and $y_t=1$.

    In the following, we fix any $S\subseteq \Xcal_d$ with $|S|<d$ and aim to upper bound $\Dkl(\bar\mu\parallel \bar\mu(S))$. Let $E=\Span(S)$ be the linear span of $S$. Then, $\Unif(S_{d-1}\cap\{v:\forall z\in S,v^\top z=0\}) = \Unif(S_{d-1}\cap E^\perp)$. Next, since $\bar\mu$ is rotationally invariant, it is simply the uniform distribution on $S_{d-1}$. On the other hand, by construction of $\bar\mu(S)$, it is invariant by any rotation preserving $E$. Hence, the density of $\bar\mu(S)$ at $x\in S_{d-1}$ only depends on its projection $\|\Proj_E(x)\|^2$ with $E$. We denote by $P_E^{(0)}$ (resp. $P_{E}^{(1)}$) the distribution of $\|\Proj_E(x)\|^2$ for $x\sim\bar\mu$ (resp. $x\sim\bar\mu(S)$). Then, since both $\bar\mu$ and $\bar\mu(S)$ are invariant by any rotation preserving $E$, we have
    \begin{equation*}
        \Dkl(\bar\mu\parallel\bar\mu(S)) = \Dkl(P_{E}^{(0)}\parallel P_{E}^{(1)}).
    \end{equation*}
    Next, note that for any fixed $v_0\in S_{d-1}\cap E^\perp$, the distribution of $\|\Proj_E(x)\|^2$ for $x\sim\Unif(S_{d-1}\cap\{z:v_0^\top z=0\})$ is identical. Denote by $\widetilde P$ this distribution. Recalling that $P_E^{(1)}$ is obtained by taking the mixture of these distributions for $v_0\sim\Unif(S_{d-1}\cap E^\perp)$, this shows that $P_{E}^{(1)}$ has the same distribution as $\widetilde P$. Recall that $E\subseteq \set{z:v_0^\top z=0}$. Hence, $\widetilde P$ and $P_{E}^{(1)}$ have the same distribution as $\|\Proj_F(x)\|^2$ for $x\sim\Unif(S_{d-2})$ and $F$ a $\dim(E)$-dimensional linear subspace of $\Rbb^{d-1}$. We note that $k:=\dim(E)\leq T-1 \leq d-1$. In summary, $P_E^{(1)}=\Beta\paren{\frac{k}{2},\frac{d-1-k}{2}}$. In comparison, the distribution $P_E^{(0)}$ of $\|\Proj_E(x)\|$ for $x\sim \bar\mu$ is exactly $\Beta\paren{\frac{k}{2},\frac{d-k}{2}}$. Using known explicit bounds for the KL divergence of two Dirichlet distributions, we have for $k\leq T<d-1$,
    \begin{equation*}
        \Dkl(P_{E}^{(0)}\parallel P_{E}^{(1)}) \leq \ln \frac{\Gamma(\frac{d}{2})\Gamma(\frac{d-k-1}{2})}{\Gamma(\frac{d-1}{2})\Gamma(\frac{d-k}{2})} =\frac{c}{d-k-1} \leq \frac{c}{d-T},
    \end{equation*}
    for some universal constant $c>0$.
    Putting everything together within \cref{eq:bounding_TV_distance}, we proved that for $d>T$, 
    \begin{equation*}
        \TV(\Dcal^{(0)},\Dcal^{(1)}) \leq \sqrt{ \frac{cT}{2(d-T)}}.
    \end{equation*}
    Hence, taking $d\geq 2cT^2+T$, \cref{eq:lower_bound_adversary_0} implies
    \begin{align*}
        \Ebb_{v\sim\Unif(S_{d-1})} \Ebb_{\Lcal,\Acal_v}\sqb{\sum_{t=1}^T\1[\hat y_t\neq y_t]} \geq \Ebb_{\Lcal,\Acal_0}\sqb{\sum_{t=1}^T\1[\hat y_t\neq y_t]} - T\cdot \TV(\Dcal^{(0)},\Dcal^{(1)}) \geq  \frac{T}{4}.
    \end{align*}
    As a result, there must exist $v\in S_{d-1}$ such that $\Ebb_{\Lcal,\Acal_v}\sqb{\sum_{t=1}^T\1[\hat y_t\neq y_t]} \geq \frac{T}{4}$. We recall that $\Acal_v$ is a randomized adversary since at each iteration, with probability half it selects $\mu_t=\bar\mu$, which is equivalent to selecting $\mu_t=\mu_{v'}$ for $v'\sim\Unif(S_{d-1})$. By the law of total probability there must exist a sequence $v_1,\ldots,v_T$ such that the adversary $\Acal^\star$ which chooses $\mu_t=\mu_{v_t}$ and $f_t=f_v$ has
    \begin{equation*}
        \Ebb_{\Lcal,\Acal^\star}\sqb{\sum_{t=1}^T\1[\hat y_t\neq y_t]} \geq \frac{T}{4}.
    \end{equation*}
    This holds for any learner $\Lcal$, which ends the proof that $\frac{1}{T}\OblivRealizableR_T(\Fcal,\Ucal)\geq 1/4$.
\end{proof}
In particular, \cref{prop:counter_example_infinite_dim} is obtained by combining \cref{lemma:counter_example_has_low_dims,prop:counterexample}.

While \cref{prop:counter_example_infinite_dim} shows that the conditions from \cref{prop:necessary_condition} are not sufficient for learning with oblivious adversaries in general, for alternative learning models this may be the case, as discussed in the following remark.

\begin{remark}
    The two conditions $\sup_{\mu\in\Ucal} \Ncov_{\mu}(\epsilon;\Fcal)<\infty$ and $\dim(\epsilon;\Fcal,\Ucal)<\infty$ for all $\epsilon>0$ from \cref{prop:necessary_condition} would be necessary and sufficient for learning for general function and distribution classes $\Fcal,\Ucal$ if at the end of each iteration $t\in[T]$, in addition to the value $y_t$, the learner additionally receives as feedback the distribution $\mu_t$ that the adversary used on that iteration (within \cref{item:feedback_learner} of the online learning setup introduced in \cref{subsec:formal_setup}).
\end{remark}

\section{Proofs for Adaptive Adversaries}
\label{sec:proofs_adaptive}

We first check that the sets $\Lcal_k(\epsilon)$ are non-increasing in $k\geq 0$.

\begin{lemma}\label{lemma:decreasing_level_sets}
    Let $\Fcal,\Ucal$ be a function class and distribution class on $\Xcal$, and $\epsilon>0$. The sets $(\Lcal_k(\epsilon))_{k\geq 0}$ are non-increasing.
\end{lemma}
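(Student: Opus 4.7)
The plan is to prove the claim by a short induction on $k\geq 0$, with the key observation being that the critical regions themselves are nested, i.e., $B_{k+1}(D;\epsilon)\subseteq B_k(D;\epsilon)$ whenever $\Lcal_k(\epsilon)\subseteq \Lcal_{k-1}(\epsilon)$. The inductive step is essentially a direct unfolding of the definition; the only place that requires a separate argument is the base case $\Lcal_1(\epsilon)\subseteq \Lcal_0(\epsilon)$, because $\Lcal_0(\epsilon)$ is defined by a realizability condition rather than by a set $B_0$.

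First I would handle the base case. If $D\in \Lcal_1(\epsilon)$, then by definition $\sup_{\mu\in\Ucal}\mu(B_1(D;\epsilon))\geq \epsilon>0$, so $B_1(D;\epsilon)$ is non-empty. Picking any $x\in B_1(D;\epsilon)$, we have $D\cup\{(x,0)\}\in\Lcal_0(\epsilon)$, which yields some $f\in\Fcal$ consistent with $D\cup\{(x,0)\}$ and hence a fortiori with $D$. This shows $D\in \Lcal_0(\epsilon)$.

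Next I would do the inductive step. Assume $\Lcal_k(\epsilon)\subseteq \Lcal_{k-1}(\epsilon)$ for some $k\geq 1$, and let $D\in\Lcal_{k+1}(\epsilon)$. For any $x\in B_{k+1}(D;\epsilon)$, the definition gives $D\cup\{(x,0)\},D\cup\{(x,1)\}\in\Lcal_k(\epsilon)$; by the inductive hypothesis these also lie in $\Lcal_{k-1}(\epsilon)$, so $x\in B_k(D;\epsilon)$. Thus $B_{k+1}(D;\epsilon)\subseteq B_k(D;\epsilon)$, which implies $\sup_{\mu\in\Ucal}\mu(B_k(D;\epsilon))\geq \sup_{\mu\in\Ucal}\mu(B_{k+1}(D;\epsilon))\geq \epsilon$, and hence $D\in\Lcal_k(\epsilon)$.

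I do not anticipate any real obstacle: the argument is purely definitional, and the only mild subtlety is remembering that the base case cannot be derived from a monotonicity of critical regions (there is no $B_0$), so it must be treated by using that any single witness $x\in B_1(D;\epsilon)$ forces realizability of $D$.
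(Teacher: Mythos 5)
Your proof is correct and follows essentially the same inductive argument as the paper, which deduces $B_{k+1}(D;\epsilon)\subseteq B_k(D;\epsilon)$ from the inductive hypothesis and concludes $\Lcal_{k+1}(\epsilon)\subseteq\Lcal_k(\epsilon)$. Your separate treatment of the base case $\Lcal_1(\epsilon)\subseteq\Lcal_0(\epsilon)$ (using a witness $x\in B_1(D;\epsilon)$ to force realizability of $D$) is in fact more careful than the paper's one-line proof, which simply declares the $k=0$ step vacuous even though $B_0$ is not defined.
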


\begin{proof}
    This is a simple induction. Suppose that $\Lcal_k(\epsilon) \subset \Lcal_{k-1}(\epsilon)$ for $k\geq 0$ (this is vacuous for $k=0$). Then, for any $F\subset\Fcal$, we directly have $B_{k+1}(F;\epsilon) \subseteq B_k(F;\epsilon)$. Hence, $\Lcal_{k+1}(\epsilon)\subseteq \Lcal_k(\epsilon)$.
\end{proof}

\subsection{Lower Bounds on the Adaptive Regret}
\label{subsec:lower_bounds_adaptive_regret}

In this section, we prove that the condition from \cref{thm:qualitative_charact} is necessary for learning with adaptive adversaries.

\begin{lemma}\label{lemma:lower_bound_adaptive}
    Let $\Fcal,\Ucal$ be a function class and distribution class on $\Xcal$. Then, for any $\epsilon\in(0,1]$,
    \begin{equation*}
        \AdaptR_T(\Fcal,\Ucal) \geq \frac{\min(k(\epsilon),\epsilon T)}{8}.
    \end{equation*}
\end{lemma}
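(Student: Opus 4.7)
The strategy is to construct an adaptive, realizable adversary whose expected number of \emph{forced} mistakes lower-bounds the adaptive regret by the desired amount.

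\emph{Step 1 (downward closure).} I would first prove by induction on $k$ that $\Lcal_k(\epsilon)$ is closed under taking subsets: if $D'\subseteq D$ and $D\in\Lcal_k(\epsilon)$, then $D'\in\Lcal_k(\epsilon)$. The base case $k=0$ follows since any $f\in\Fcal$ consistent with $D$ is also consistent with $D'\subseteq D$. The induction step uses that $B_k(D';\epsilon)\supseteq B_k(D;\epsilon)$: if $x\in B_k(D;\epsilon)$ then $D\cup\{(x,y)\}\in\Lcal_{k-1}(\epsilon)$ for $y\in\{0,1\}$, and the induction hypothesis at level $k-1$ gives $D'\cup\{(x,y)\}\in\Lcal_{k-1}(\epsilon)$. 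In particular, since some $D\in\Lcal_{k(\epsilon)}(\epsilon)$ exists, $\emptyset\in\Lcal_{k(\epsilon)}(\epsilon)$.

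\emph{Step 2 (adversary).} Set $k_0:=k(\epsilon)$ and assume $k_0\geq 1$ (otherwise the bound is trivial). The adversary maintains a ``virtual dataset'' $\tilde D_t$ (a subset of the full history) and a virtual level $\tilde k_t$, initialized at $(\emptyset,k_0)$. At each round while $\tilde k_t\geq 1$: pick $\mu_t\in\Ucal$ with $\mu_t(B_{\tilde k_t}(\tilde D_t;\epsilon))\geq\epsilon$, sample $x_t\sim\mu_t$, and observe the learner's $\hat y_t$. If $x_t\in B_{\tilde k_t}(\tilde D_t;\epsilon)$, set $y_t=1-\hat y_t$ (forced mistake); then $\tilde D_{t+1}:=\tilde D_t\cup\{(x_t,y_t)\}\in\Lcal_{\tilde k_t-1}$ by the defining property of $B_{\tilde k_t}$, and we decrement $\tilde k_{t+1}=\tilde k_t-1$. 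Otherwise, pick $y_t$ consistent with a target function $f^\star$ drawn from the intersection of the current version space with the functions matching $\tilde D_t$, and freeze the virtual state.

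\emph{Step 3 (stochastic bound).} Because the virtual level decreases strictly only via forced mistakes, the total number $N_T$ of forced mistakes stochastically dominates $\min(k_0,S_T)$ with $S_T\sim\mathrm{Bin}(T,\epsilon)$. A two-case computation---splitting on whether $\epsilon T\leq k_0$ or $\epsilon T>k_0$ and controlling $\Ebb[(S_T-k_0)_+]$ via a Chernoff or variance bound---yields $\Ebb[\min(k_0,S_T)]\geq\min(k_0,\epsilon T)/8$. Since realizability implies $\inf_{f\in\Fcal}\sum_t\1[f(x_t)\neq y_t]=0$ almost surely, we conclude $\AdaptReg_T\geq\Ebb[N_T]\geq\min(k(\epsilon),\epsilon T)/8$.

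\emph{Main obstacle.} The subtle point is preserving global realizability across Step 2's ``otherwise'' branch: a forced mistake can eliminate the current target function $f^\star$ from the candidate class, yet past non-forced labels drawn under an earlier target must remain compatible with \emph{some} future realizing function. The resolution is to choose non-forced labels from a function lying in the intersection of the current version space and the set of functions consistent with $\tilde D_t$; the invariant $\tilde D_t\in\Lcal_0$ keeps this intersection non-empty, and the defining property of $B_{\tilde k_t}$ guarantees that forced-mistake updates do not disrupt it. Verifying this invariant rigorously is the main technical work.
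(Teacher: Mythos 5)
There is a genuine gap, in two places. First, the forced-mistake device $y_t = 1-\hat y_t$ is not available to the adversary in this model: by the protocol of \cref{def:learner_policies}, the value function $f_t$ is committed \emph{before} $x_t$ is sampled and before the learner predicts, and even granting the adversary knowledge of the learner's strategy, a randomized learner cannot be made to err deterministically. The paper's proof replaces this with an independent fair coin $B_t$ as the label on critical rounds, so the learner errs with probability $1/2$ there; this is where the constant in the bound comes from.

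Second, and more seriously, your Step 3 realizability claim is false for the adversary you describe, and the resolution you sketch for the ``main obstacle'' does not repair it. The invariant $\tilde D_t\in\Lcal_0(\epsilon)$ only guarantees that the \emph{forced-label subset} is realizable; it says nothing about joint consistency with the non-forced labels issued earlier. Concretely, take $\Fcal$ to be thresholds $\1[\cdot\le\theta]$ on $[0,1]$: on a non-forced round you may label $x_s=0.5$ with $y_s=1$ (consistent with $\tilde D_s$ and the version space at time $s$), but a later forced round at $x_t=0.3\in B_{\tilde k_t}(\tilde D_t;\epsilon)$ — a region defined only through $\tilde D_t$, which ignores $(x_s,y_s)$ — can assign $y_t=0$, and no threshold satisfies both. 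Choosing the non-forced label from the intersection of the current version space with the realizers of $\tilde D_t$ does not prevent this, because the conflict is created by the \emph{future} forced label, which is outside the adversary's control. Restricting the critical region to points where both labels remain feasible given the full history would destroy the mass-$\epsilon$ guarantee. The paper avoids the problem entirely by working agnostically: the comparator $f^\star$ is only required to realize the critical subset $D_T$ (always in $\Lcal_0(\epsilon)$ by construction of $B_k$), and on non-critical rounds the label is an independent coin, so $f^\star$'s loss there is $1/2$ per round in expectation while the learner's loss is $1/2$ everywhere; the regret gap is then half the number of critical rounds, and the two complementary inequalities $\AdaptReg_T\ge\frac{\epsilon T}{4}\Pbb(k_T>0)$ and $\AdaptReg_T\ge\frac{k_1}{2}\Pbb(k_T=0)$ give the stated bound without any concentration argument. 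Note also that the paper's \emph{realizable} lower bound (\cref{lemma:lower_bound_adaptive_realizable}) is only $\epsilon\log_2 k(\epsilon)$, obtained via the halving guarantee of \cref{lemma:realizable_adaptive_choice}; your construction, if it worked, would give a realizable adversary achieving the full $\min(k(\epsilon),\epsilon T)$ rate, which is a strong signal that the realizability claim cannot be patched at this rate.
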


\begin{proof}
    We fix a horizon $T\geq 1$, a learner strategy $\Lcal$, and $\epsilon\in(0,1]$. 
    We define an adaptive adversary as follows. We start by initializing a empty dataset $D_1=\emptyset$. By construction, we will always have $D_t\in\Lcal_0(\epsilon)$ for any $t\in[T]$. At the beginning of any iteration $t\in[k]$, we define $k_t=\max\set{k\leq T: D_t\in \Lcal_k(\epsilon)}$. If $k_t=0$ the adversary chooses an arbitrary distribution $\mu_t\in\Ucal$. Otherwise, the adversary chooses $\mu_t\in\Ucal$ such that
    \begin{equation*}
        \mu_t(B_{k_t}(D_t;\epsilon)) \geq \frac{\epsilon}{2}.
    \end{equation*}
    The existence of such a distribution is guaranteed by the definition of $\Lcal_{k_t}(\epsilon)$. The adversary then chooses the constant function $f_t=B_t$, where $B_t\sim\Ber(\frac{1}{2})$ is an independent sample from the history. At the end of the round, we pose
    \begin{equation*}
        D_{t+1} = \begin{cases}
            D_t \cup \{(x_t,y_t)\} &\text{if } k_t>0\text{ and }x_t\in B_{k_t}(D_t;\epsilon)\\
            D_t &\text{otherwise}.
        \end{cases}
    \end{equation*}
    We denote by $\Acal$ this adaptive adversary.

    Importantly, for any $t\in[T]$, if $k_t>0$ and $x_t\in B_{k_t}(D_t;\epsilon)$, we always have $D_{t+1}\in \Lcal_{k_t-1}(\epsilon)$. Hence, this implies $k_{t+1}\geq k_t-1$. When $k_t=0$ or $x_t\in B_{k_t}(D_t;\epsilon)$ we simply have $D_{t+1}=D_t$ hence $k_{t+1}=0$. As a result in all cases we have
    \begin{equation}\label{eq:k_does_not_decrease_too_fast}
        k_{t+1} \geq (k_t \1[x_t\notin B_{k_t}(D_t;\epsilon)]  +  (k_t-1) \1[x_t\in B_{k_t}(D_t;\epsilon)])\cdot \1[k_t>0].
    \end{equation}
    In particular, as claimed earlier, we always have $k_t\geq 0$ for $t\in[T]$.
    Next, note that the true value $y_t=B_t$ is always an independent Bernoulli, hence denoting by $\Hcal_t$ the history available at the beginning of that round, 
    \begin{equation}\label{eq:mistakes_learner}
        \Ebb[\hat y_t\neq y_t \mid \Hcal_t] \geq \frac{1}{2}.
    \end{equation}
    Note that the sequence of sets $(D_t)_{t\geq 1}$ are only dependent on the $(x_t)_{t\in[T]}$ as well as the labels $y_t=B_t$ for iterations $t$ when $k_t>0$ and $x_t\in B_{k_t}(D_t;\epsilon)$. In particular, $D_T$ is independent from the values $y_t=B_t$ when $k_t=0$ or $x_t\notin B_{k_t}(D_t;\epsilon)$. Hence, since $k_T\geq 0$, we can fix a function $f^\star$ that realizes the dataset $D_T$ (that is $f^\star(x)=y$ for all $(x,y)\in D_T$) and that is independent of these variables also. Putting everything together we have
    \begin{align}
        \AdaptReg_T(\Lcal;\Acal) &\geq \Ebb_{\Lcal,\Acal}\sqb{\sum_{t=1}^T\1[\hat y_t\neq y_t] - \sum_{t=1}^T\1[f^\star(x_t)\neq y_t]} \notag\\
        &\overset{(i)}{\geq} \Ebb_{\Lcal,\Acal}\sqb{\frac{T}{2} - \sum_{t=1}^T\1[f^\star(x_t)\neq y_t] \1[k_t=0 \text{ or }x_t\notin B_{k_t}(D_t;\epsilon)]} \notag\\
        &\overset{(ii)}{\geq} \frac{1}{2}\cdot \Ebb_{\Lcal,\Acal}\sqb{\sum_{t=1}^T\1[k_t>0 ]\1[x_t\in B_{k_t}(D_t;\epsilon)]} \label{eq:lower_bound_reg_sum_good_times}
    \end{align}
    In $(i)$ we used \cref{eq:mistakes_learner} and the fact that $f^\star$ realizes $D_T$. In $(ii)$ we used the fact that $y_t$ is independent from $f^\star$, $x_t$ conditional on $k_t=0$ or $x_t\notin B_{k_t}(D_t;\epsilon)$.
    Next, by construction of $\mu_t$ when $k_t>0$, we have 
    \begin{equation*}
        \Pbb(x_t\in B_{k_t}(D_t;\epsilon)\mid\mu_t,k_t>0) = \mu_t(B_{k_t}(D_t;\epsilon)) \geq \frac{\epsilon}{2}.
    \end{equation*}
    Hence, furthering the bound from \cref{eq:lower_bound_reg_sum_good_times} gives
    \begin{align*}
        \AdaptReg_T(\Lcal;\Acal) \geq \frac{\epsilon}{4}\cdot \Ebb_{\Lcal,\Acal}\sqb{\sum_{t=1}^T\1[k_t>0 ]}=\frac{\epsilon}{4} \cdot\sum_{t=1}^T\Pbb(k_t>0) \geq \frac{\epsilon T}{4} \cdot\Pbb(k_T>0).
    \end{align*}
    In the last inequality we used the fact that $k_t$ is non-increasing since $D_{t}\subseteq D_{t+1}$ for any $t\in[T-1]$. On the other hand, from  \cref{eq:k_does_not_decrease_too_fast}, $k_{t+1}=k_t$ unless $k_t>0$ and $x_t\in B_{k_t}(D_t;\epsilon)$ in which case we have $k_{t+1}\geq k_t-1$. Therefore, \cref{eq:lower_bound_reg_sum_good_times} implies
    \begin{equation*}
        \AdaptReg_T(\Lcal;\Acal) \geq \frac{k_1}{2}\cdot \Pbb(k_T=0).
    \end{equation*}
    Since $\Pbb(k_T=0)+\Pbb(k_T>0)=1$, 
    combining the two previous result shows that
    \begin{equation*}
        \AdaptReg_T(\Lcal;\Acal) \geq \min\paren{\frac{k_1}{4},\frac{\epsilon T}{8}} \geq \frac{\min(k_1,\epsilon T)}{8}
    \end{equation*}
    By definition $k_1=\max\set{k\leq T: \emptyset\in \Lcal_k(\epsilon)}=\min(k(\epsilon),T)$. As a result, $\min(k_1,\epsilon T)=\min(k(\epsilon),\epsilon T)$, which ends the proof.
\end{proof}

We next turn to the realizable case for which we will need the following lemma.

\begin{lemma}\label{lemma:realizable_adaptive_choice}
    Let $\Fcal,\Ucal$ be a function class and distribution class on $\Xcal$. Fix any $\epsilon>0$, $k\geq 0$, $x\in\Xcal$. For any dataset $D\in \Lcal_k(\epsilon)$, there exists $y\in\{0,1\}$ such that $D\cup\{(x,y)\}\in \Lcal_{\floor{k/2}}(\epsilon)$.
\end{lemma}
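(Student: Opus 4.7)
The plan is to prove the lemma by strong induction on $k$. The base cases $k \in \{0,1\}$, where $\floor{k/2}=0$, follow directly from realizability: pick any $f \in \Fcal$ realizing $D \in \Lcal_0$ and set $y = f(x)$, so that $D \cup \{(x,y)\}$ is still realized by $f$ and thus belongs to $\Lcal_0$.

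For the inductive step $k \geq 2$, the first move is to invoke $\Lcal_k \subseteq \Lcal_{k-1}$ from \cref{lemma:decreasing_level_sets} and apply the inductive hypothesis at level $k-1$, producing some $y$ with $D \cup \{(x,y)\} \in \Lcal_{\floor{(k-1)/2}}$. For $k$ odd, $\floor{(k-1)/2} = \floor{k/2}$ and the induction closes directly. The subtle case is $k$ even, where this only yields $\Lcal_{k/2-1}$, one level short. To close this gap I plan to use the full hypothesis $D \in \Lcal_k$ together with a witness $\mu^* \in \Ucal$ satisfying $\mu^*(B_k(D)) \geq \epsilon$. For every $x' \in B_k(D)$ and both $y' \in \{0,1\}$, $D \cup \{(x',y')\} \in \Lcal_{k-1}$, and applying the inductive hypothesis to this augmented dataset with the point $x$ produces $y^*(x',y') \in \{0,1\}$ such that $D \cup \{(x',y'),(x,y^*(x',y'))\} \in \Lcal_{k/2-1}$. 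Since $x'$ belongs to $B_{k/2}(D \cup \{(x,y)\})$ precisely when $y = y^*(x',0) = y^*(x',1)$, the conclusion reduces to finding $y$ with $\mu^*(\{x' \in B_k(D) : y^*(x',0)=y^*(x',1)=y\}) \geq \epsilon$.

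The main obstacle I expect is performing this aggregation without losing a factor of $\epsilon$: a naive four-way pigeonhole over $(y^*(x',0),y^*(x',1)) \in \{0,1\}^2$ would only yield $\epsilon/4$. To recover the full $\epsilon$, the plan is to reduce WLOG to the case where $D$ has level exactly $k$ (otherwise applying the lemma at a strictly larger level yields a stronger conclusion) and then leverage the fact that not every $x' \in B_k(D)$ can satisfy $D \cup \{(x',y')\} \in \Lcal_k$ for both $y'$---for if it did, we would have $B_k(D) \subseteq B_{k+1}(D)$, forcing $D \in \Lcal_{k+1}$ and contradicting maximality. This structural constraint, possibly refined through a secondary induction that controls the mass of the "disagreement" groups $\{y^*(x',0) \neq y^*(x',1)\}$ or through an adaptive choice of witness $\mu^* \in \Ucal$, should concentrate the $\mu^*$-mass on the "agreement" groups and close the induction.
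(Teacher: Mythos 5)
Your base cases and the odd-$k$ step are fine, but the even-$k$ case---which carries the entire content of the lemma, since the odd case reduces to it---is not proved. You correctly identify the obstacle: after applying the inductive hypothesis to each $D\cup\{(x',y')\}$ with $x'\in B_k(D;\epsilon)$ you obtain labels $y^*(x',y')$ placing $D\cup\{(x',y'),(x,y^*(x',y'))\}$ in $\Lcal_{k/2-1}(\epsilon)$, and you then need a single $y$ for which the agreement set $\{x'\in B_k(D;\epsilon): y^*(x',0)=y^*(x',1)=y\}$ retains the full $\mu^*$-mass $\epsilon$ rather than $\epsilon/3$ or $\epsilon/4$. The proposed repair does not close this. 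The observation that not every $x'\in B_k(D;\epsilon)$ can have $D\cup\{(x',y')\}\in\Lcal_k(\epsilon)$ for both $y'$ (else $D\in\Lcal_{k+1}(\epsilon)$) concerns levels $k$ and $k+1$ and has no evident bearing on whether the two witnesses $y^*(x',0),y^*(x',1)$ produced at level $k/2-1$ agree; ``possibly refined through a secondary induction \dots should concentrate the mass'' is a hope, not an argument. (A smaller point: the claim that $x'\in B_{k/2}(D\cup\{(x,y)\};\epsilon)$ \emph{precisely when} $y=y^*(x',0)=y^*(x',1)$ fails in the ``only if'' direction, since $y^*$ is merely one witness; only the ``if'' direction is needed, so this is harmless.) More fundamentally, any scheme that partitions $B_k(D;\epsilon)$ into label-dependent pieces and pigeonholes will lose a constant factor of mass, while the definition of $\Lcal_{k/2}(\epsilon)$ demands the exact threshold $\epsilon$, so this route appears structurally blocked.

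The paper proves the lemma by a global dichotomy rather than an induction on $k$. Reduce to $D=\emptyset$ by replacing $\Fcal$ with the version space $\Fcal(D)$, write $\Fcal_x^y$ for the functions with $f(x)=y$, and suppose $\{(x,1)\}\notin\Lcal_{\floor{k/2}}(\epsilon;\Fcal)$. One then shows by induction on $l\geq 0$ that $\Lcal_{\floor{k/2}+l}(\epsilon;\Fcal)\subseteq\Lcal_l(\epsilon;\Fcal_x^0)$: the base case $l=0$ holds because a dataset in $\Lcal_{\floor{k/2}}(\epsilon;\Fcal)$ realizable only within $\Fcal_x^1$ would force $\{(x,1)\}\in\Lcal_{\floor{k/2}}(\epsilon;\Fcal)$, and the inductive step is immediate from the recursive definition of the critical regions. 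Taking $l=k-\floor{k/2}$ gives $\{(x,0)\}\in\Lcal_{k-\floor{k/2}}(\epsilon;\Fcal)\subseteq\Lcal_{\floor{k/2}}(\epsilon;\Fcal)$ by \cref{lemma:decreasing_level_sets}. The idea you are missing is to condition on which \emph{label} fails globally and transfer the entire hierarchy of sets $\Lcal_j$ to the restricted class $\Fcal_x^0$, rather than aggregating pointwise witnesses over the critical region.
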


\begin{proof}
    When needed, to avoid confusions, we may specify the function class within the functionals $\Lcal_k$ or $B_k$ for $k\geq 0$. We also denote $\Fcal(D)=\set{f\in\Fcal:\forall(x,y)\in D, f(x)=y}$ for any dataset $D\subseteq\Xcal\times\{0,1\}$. Note that by construction of the sets $\Lcal_k(\epsilon)$ for $k\geq 0$, we have
    \begin{equation}\label{eq:useful_equivalence_class}
        D\in \Lcal_k(\epsilon;\Fcal) \Longleftrightarrow \Lcal_k(\epsilon;\Fcal(D))\neq\emptyset.
    \end{equation}
    Hence, up to renaming $\Fcal$ we can assume without loss of generality that $D=\emptyset$. We then fix $\epsilon>0$, $k\geq 0$, and $x\in\Xcal$. Suppose that $\{(x,1)\}\notin \Lcal_{\floor{k/2}}(\epsilon)$, which is equivalent to $\Lcal_{\floor{k/2}}(\epsilon;\Fcal(\{(x,1)\}))$. For conciseness, we write $\Fcal_x^y$ for $\Fcal(\{(x,y)\})$ for $y\in\{0,1\}$. We prove by induction that for any $l\geq 0$,
    \begin{equation}\label{eq:induction_hypothesis}
        \Lcal_{\floor{k/2}+l}(\epsilon;\Fcal)\subseteq \Lcal_l(\epsilon;\Fcal_x^0).
    \end{equation}

    We start with the case $l=0$. Let $D\in \Lcal_{\floor{k/2}}(\epsilon;\Fcal)$ and suppose by contradiction that $D\notin \Lcal_0(\epsilon;\Fcal_x^0)$. This implies that any function $f$ that realizes $D$ must have $f\in\Fcal_x^1$ since $\Fcal=\Fcal_x^0\cup\Fcal_x^1$. Therefore $\Fcal(D)=\Fcal(D\cup\{(x,1)\})$. Using \cref{eq:useful_equivalence_class} this implies
    \begin{equation*}
        \emptyset \overset{(i)}{\neq} \Lcal_{\floor{k/2}}(\epsilon;\Fcal(D)) = \Lcal_{\floor{k/2}}(\epsilon;\Fcal(D\cup\{(x,1)\})) \overset{(ii)}{\subseteq} \Lcal_{\floor{k/2}}(\epsilon;\Fcal_x^1).
    \end{equation*}
    In $(i)$ we used $D\in \Lcal_{\floor{k/2}}(\epsilon;\Fcal)$ and in $(ii)$ we noted that $\Lcal_q(\epsilon;\Fcal)$ is non-decreasing with $\Fcal$. Then, \cref{eq:useful_equivalence_class} implies $\{(x,1)\} \in \Lcal_{\floor{k/2}}(\epsilon;\Fcal)$ contradicting our assumption. In summary, we proved \cref{eq:induction_hypothesis} for $l=0$.
    
    Now suppose \cref{eq:induction_hypothesis} holds for $l\geq 0$. Then, for any $D\in \Lcal_{\floor{k/2}+l+1}(\epsilon;\Fcal)$, this directly implies $B_{\floor{k/2}+l+1}(D;\epsilon,\Fcal) \subseteq B_{l+1}(D;\epsilon,\Fcal_x^0)$.
    As a result,
    \begin{equation*}
        \sup_{\mu\in\Ucal} \mu(B_{l+1}(D;\epsilon,\Fcal_x^0)) \geq \sup_{\mu\in\Ucal} \mu(B_{\floor{k/2}+l+1}(D;\epsilon,\Fcal)) \geq \epsilon.
    \end{equation*}
    This implies $D\in \Lcal_{l+1}(\epsilon;\Fcal_x^0)$ completing the induction for \cref{eq:induction_hypothesis}.

    As a result, we obtain $\emptyset\neq \Lcal_k(\epsilon;\Fcal)\subseteq \Lcal_{k-\floor{k/2}}(\epsilon;\Fcal_x^0)$. From \cref{eq:useful_equivalence_class} this precisely shows that $\{(x,0)\}\in \Lcal_{k-\floor{k/2}}(\epsilon;\Fcal)\subseteq \Lcal_{\floor{k/2}}(\epsilon;\Fcal)$, where in the last inequality we used the monotonicity property from \cref{lemma:decreasing_level_sets}. This ends the proof.
\end{proof}

We are now ready to prove an adaptive regret bound for the realizable setting.

\begin{lemma}\label{lemma:lower_bound_adaptive_realizable}
    Let $\Fcal,\Ucal$ be a function class and distribution class on $\Xcal$. Then, for any $\epsilon\in(0,1]$,
    \begin{equation*}
        \AdaptRealizableR_T(\Fcal,\Ucal) \geq \frac{\epsilon\cdot \min(\log_2 k(\epsilon),T)}{4}.
    \end{equation*}
\end{lemma}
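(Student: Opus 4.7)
The plan is to exhibit a realizable adaptive adversary $\Acal$ whose expected regret against any learner $\Lcal$ attains the claimed bound. The strategy mirrors the one from \cref{lemma:lower_bound_adaptive}, but labels are chosen via \cref{lemma:realizable_adaptive_choice} so that the sample path remains consistent with some function in $\Fcal$. Set $K := \min(k(\epsilon), 2^T)$; since $\Lcal_K(\epsilon) \neq \emptyset$, the equivalence $D\in\Lcal_k(\epsilon;\Fcal) \Leftrightarrow \Lcal_k(\epsilon;\Fcal(D))\neq\emptyset$ (used inside the proof of \cref{lemma:realizable_adaptive_choice}) applied to the empty dataset gives $\emptyset \in \Lcal_K(\epsilon)$. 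The adversary maintains a dataset $D_t$ and a level $k_t := \max\{k\leq K : D_t\in \Lcal_k(\epsilon)\}$, initialised at $D_1=\emptyset$ and $k_1=K$. At round $t$ with $k_t\geq 1$ it picks $\mu_t\in\Ucal$ such that $\mu_t(B_{k_t}(D_t;\epsilon))\geq \epsilon/2$, which exists by definition of $\Lcal_{k_t}(\epsilon)$; for $k_t=0$ it picks any $\mu_t\in\Ucal$.

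After observing $x_t$ and the learner's prediction $\hat y_t$, the adversary chooses $y_t$ as follows. If $k_t\geq 1$ and $x_t\in B_{k_t}(D_t;\epsilon)$, it sets $y_t=1-\hat y_t$; by definition of the critical region, both labels extend $D_t$ into $\Lcal_{k_t-1}(\epsilon)$, so $D_{t+1}:=D_t\cup\{(x_t,y_t)\}\in \Lcal_{k_t-1}(\epsilon)$ and $k_{t+1}\geq k_t-1\geq \floor{k_t/2}$, with a forced mistake. Otherwise (either $k_t=0$ or $x_t\notin B_{k_t}(D_t;\epsilon)$), it invokes \cref{lemma:realizable_adaptive_choice} to pick $y_t$ with $D_{t+1}\in \Lcal_{\floor{k_t/2}}(\epsilon)$, whence $k_{t+1}\geq\floor{k_t/2}$. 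Iterating the bound $k_{t+1}\geq\floor{k_t/2}$ gives $k_t\geq \floor{K/2^{t-1}}$, so $k_t\geq 1$ for every $t\leq \floor{\log_2 K}+1$. Because $D_{T+1}\in\Lcal_0(\epsilon)$ by construction, any $f^\star\in\Fcal(D_{T+1})$ satisfies $f^\star(x_s)=y_s$ for all $s\in[T]$; hence $\Acal$ is realizable and $\inf_{f\in\Fcal}\sum_t \1[f(x_t)\neq y_t]=0$ almost surely, so $\AdaptReg_T(\Lcal,\Acal)=\Ebb\sqb{\sum_t \1[\hat y_t\neq y_t]}$.

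For the final bound, for each round $t$ with $k_t\geq 1$, conditional on the history through round $t-1$, the event $\{\hat y_t\neq y_t\}$ contains $\{x_t\in B_{k_t}(D_t;\epsilon)\}$ of probability at least $\epsilon/2$. Summing over the $\min(\floor{\log_2 K}+1,T)$ rounds with $k_t\geq 1$ yields
\begin{equation*}
    \AdaptReg_T(\Lcal,\Acal)\;\geq\; \frac{\epsilon}{2}\cdot \min(\floor{\log_2 K}+1,T) \;\geq\; \frac{\epsilon}{4}\cdot\min(\log_2 k(\epsilon),T),
\end{equation*}
using $\log_2 K = \min(\log_2 k(\epsilon),T)$ and absorbing the rounding. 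The main subtlety is combining realizability with mistake-forcing: the critical region $B_{k_t}$ is precisely the set on which both labels remain realizable within the version space, so an adaptive label choice simultaneously preserves realizability and guarantees an error, while \cref{lemma:realizable_adaptive_choice} supplies the essential fallback that level drops by at most a factor of two off $B_{k_t}$—this geometric decay is what produces the $\log_2 k(\epsilon)$ factor here, in contrast with the linear $k(\epsilon)$ dependence of the agnostic lower bound.
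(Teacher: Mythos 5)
Your construction follows the paper's proof almost exactly—same dataset $D_t$, same level $k_t$, same use of \cref{lemma:realizable_adaptive_choice} off the critical region to guarantee $k_{t+1}\geq\floor{k_t/2}$, and the same count of $\approx\log_2 k(\epsilon)$ rounds with $k_t\geq 1$—but there is one genuine flaw: the step where the adversary ``observes $\hat y_t$ and sets $y_t=1-\hat y_t$'' is not a legal move in this model. By the protocol of \cref{subsec:formal_setup} and \cref{def:learner_policies}, the value function $f_t$ is committed as a function of $\bigl((x_s)_{s<t},(\hat y_s)_{s<t}\bigr)$ \emph{before} $x_t$ is sampled and before the learner outputs $\hat y_t$; the realized label is $y_t=f_t(x_t)$ and cannot depend on the current prediction. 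Against a randomized learner the adversary also cannot simulate $\hat y_t$, so the forced mistake is simply unavailable. (Your off-region label via \cref{lemma:realizable_adaptive_choice} is fine, since it depends only on $x_t$ and the history and so can be encoded into $f_t$.)

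The repair is standard and is exactly what the paper does: on the critical region, set $f_t\equiv B_t$ with $B_t\sim\Ber(1/2)$ drawn independently of the history. Realizability is preserved because both labels keep $D_{t+1}\in\Lcal_{k_t-1}(\epsilon)$, and the learner's conditional mistake probability on the event $\{x_t\in B_{k_t}(D_t;\epsilon)\}$ becomes exactly $1/2$ instead of $1$. This costs a factor of $2$, turning your $\frac{\epsilon}{2}\min(\floor{\log_2 K}+1,T)$ into $\frac{\epsilon}{4}\min(\floor{\log_2 K}+1,T)$, which still yields the stated bound $\frac{\epsilon}{4}\min(\log_2 k(\epsilon),T)$. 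With that substitution your argument is correct and coincides with the paper's.
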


\begin{proof}
    Fix $T\geq 1$, a learner strategy $\Lcal$ and $\epsilon\in(0,1]$. We use a similar adaptive adversary as defined in the proof of \cref{lemma:lower_bound_adaptive}, except that the dataset always contains all previous samples:
    \begin{equation*}
        D_t:=\{(x_s,y_s),s<t\}, \quad t\in[T]
    \end{equation*}
    We will show that by design $D_t\in\Lcal_0(\epsilon)$ for all $t\in[T]$. At iteration $t\in[k]$ we define the same quantity $k_t=\max\{k\leq 2^T:D_t\in\Lcal_k(\epsilon)\}$. The adversary selects $\mu_t$ as in \cref{lemma:lower_bound_adaptive}: if $k_t=0$ this choice is arbitrary, otherwise $\mu_t$ is chosen so that $\mu(B_{k_t}(D_t;\epsilon))\geq \epsilon/2$. The only difference is in the choice of the function. Let $B_t\sim\Ber(\frac{1}{2})$ a Bernoulli sample independent from the history. We pose
    \begin{equation*}
        f_t:x\in\Xcal\mapsto\begin{cases}
            B_t & \text{if }k_t>0\text{ and }x_t\in B_{k_t}(D_t;\epsilon)\\
            \argmax_{y\in\{0,1\}} \max\set{k\leq 2^T: D_t\cup\{(x,y)\}\in\Lcal_k(\epsilon)} &\text{otherwise},
        \end{cases}
    \end{equation*}
    where ties can be broken arbitrarily in the second scenario. We denote by $\Acal^r$ this adaptive adversary.

    For convenience, we define $D_{T+1}$ and $k_{T+1}$ as above.
    As in \cref{lemma:lower_bound_adaptive}, for any $t\in[T]$ if $k_t>0$ and $x_t\in B_{k_t}(D_t;\epsilon)$, then $k_{t+1}\geq k_t-1$. The second case when $k_t=0$ or $x_t\notin B_{k_t}(D_t;\epsilon)$ is different. By construction of $f_t(x_t)$ in this case, since $D_{t+1}=D_t\cup\{(x_t,y_t)\}$ and $D_t\in\Lcal_{k_t}(\epsilon)$, \cref{lemma:realizable_adaptive_choice} implies that $D_{t+1}\in\Lcal_{\floor{k_t/2}}(\epsilon)$. Hence, $k_{t+1}\geq \floor{k_t/2}$. In particular, this also confirms the claim that $k_t\geq 0$ for all $t\in[T+1]$. In summary, in all cases,
    \begin{equation}\label{eq:at_worst_halved_k}
        k_{t+1} \geq \floor{\frac{k_t}{2}},\quad t\in[T].
    \end{equation}
    Also, we showed that $k_{T+1}\geq 0$. This implies that $D_{T+1}=\set{(x_t,y_t),t\in[T]}\in\Lcal_0(\epsilon)$ is realizable in $\Fcal$. Therefore, $\Acal^r$ is realizable, and we obtain
    \begin{align*}
        \AdaptReg_T(\Lcal;\Acal^r) &= \Ebb_{\Lcal,\Acal^r}\sqb{\sum_{t=1}^T \1[\hat y_t\neq f_t(x_t)]}\\
        &\geq \Ebb_{\Lcal,\Acal^r}\sqb{\sum_{t=1}^T \1[k_t>0]\1[x_t\in B_{k_t}(D_t;\epsilon)] \1[\hat y_t\neq B_t]}\\
        &\overset{(i)}{=} \frac{1}{2}\cdot \Ebb_{\Lcal,\Acal^r}\sqb{\sum_{t=1}^T \1[k_t>0]\1[x_t\in B_{k_t}(D_t;\epsilon)] }
    \end{align*}
    In $(i)$ we used the fact that $B_t\sim\Ber(\frac{1}{2})$ is sampled independently from the previous history. We recall that by construction, for any $t\in[T]$ such that $k_t>0$, $\mu_t(B_{k_t}(D_t;\epsilon))\geq \epsilon/2$. Therefore, furthering the previous bound we obtain
    \begin{align*}
        \AdaptReg_T(\Lcal;\Acal^r) &\geq \frac{\epsilon}{4}\cdot \Ebb_{\Lcal,\Acal^r}\sqb{\sum_{t=1}^T \1[k_t>0] } \geq \frac{\epsilon}{4}\cdot\min(\floor{\log_2(k_1)}+1,T),
    \end{align*}
    where in the last inequality we used \cref{eq:at_worst_halved_k} which implies $k_t\geq \floor{k_1/2^{t-1}}$. Since $k_1=\min(k(\epsilon),2^T)$, we have $\min(\floor{\log_2(k_1)}+1,T) \geq \min(\log_2 k(\epsilon) ,T)$, which ends the proof. 
\end{proof}

\subsection{Regret Bounds for the Realizable Setting}

In the previous section, we showed in particular that the condition $k(\epsilon)<\infty$ for all $\epsilon>0$ is necessary for learning against adaptive adversaries. We now show that it is also sufficient for learning and focus on the realizable setting in this section.

We start by defining an algorithm designed to achieve roughly $ \epsilon$ average regret for some fixed $\epsilon>0$. For convenience, for any function class $\Fcal$, we define
\begin{equation*}
    k(\epsilon;\Fcal):=\sup\set{k\geq 0: \Lcal_k(\epsilon; \Fcal)\neq\emptyset },
\end{equation*}
with the convention $k(\epsilon;\emptyset)=-\infty$.
We consider the algorithm which is essentially the same as for adversarial learning with finite Littlestone classes \cite{littlestone1988learning} but replacing the notion of Littlestone dimension with $k(\epsilon;\cdot)$. The algorithm is detailed in \cref{alg:adaptive_realizable}.

\begin{algorithm}[t]

    \caption{Algorithm for $\Ocal(\epsilon)$ average adaptive regret in the realizable setting}\label{alg:adaptive_realizable}
    
    \LinesNumbered
    \everypar={\nl}
    
    \hrule height\algoheightrule\kern3pt\relax
    \KwIn{Function class $\Fcal$, distribution class $\Ucal$, horizon $T$, tolerance $\epsilon>0$}
    
    \vspace{3mm}

    \For{$k\geq 1$}{
        Observe $x_t$ and define $\Fcal_t(y):=\set{f\in\Fcal, \forall s<t,f(x_s)=y_s,\text{ and } f(x_t)=y}$
        
        Predict $\hat y_t:= \argmax_{y\in\{0,1\}} k(\epsilon;\Fcal_t(y))$ breaking ties arbitrarily, then observe $y_t$
    }
    
    \hrule height\algoheightrule\kern3pt\relax
    \end{algorithm}

We then prove the following adaptve regret bound on \cref{alg:adaptive_realizable} which is an analog of the regret bounds in the fully adversarial case \cite{littlestone1988learning}.

\begin{lemma}\label{lemma:adaptive_regret_realizable}
    Let $\Fcal,\Ucal$ be a function class and distribution class on $\Xcal$. For any $\epsilon>0$, $T\geq 1$, and any adaptive and realizable adversary $\Acal$, \cref{alg:adaptive_realizable} run with the tolerance parameter $\epsilon$ satisfies:
    \begin{equation*}
        \AdaptReg_T(\cref{alg:adaptive_realizable}(\epsilon);\Acal) \leq \epsilon T + \min(k(\epsilon),T)
    \end{equation*}
\end{lemma}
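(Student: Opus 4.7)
The plan is to mimic the classical Littlestone SOA mistake-bound analysis, with the Littlestone dimension replaced by the level function $k(\epsilon;\cdot)$, and to absorb a small ``leakage'' term into the $\epsilon T$ slack via the defining inequality of $\Lcal_{k+1}(\epsilon)$. Throughout, I will write $D_t:=\{(x_s,y_s):s<t\}$ for the dataset observed before round $t$ and $k_t:=k(\epsilon;\Fcal_t)$. The basic equivalence $D\in\Lcal_k(\epsilon;\Fcal)\iff\Lcal_k(\epsilon;\Fcal(D))\neq\emptyset$ (recorded as \cref{eq:useful_equivalence_class} in the proof of \cref{lemma:realizable_adaptive_choice}) gives $k_t=\sup\{k\geq 0:D_t\in\Lcal_k(\epsilon;\Fcal)\}$; thus $k_1=k(\epsilon)$, and since $\Acal$ is realizable, $D_{T+1}\in\Lcal_0(\epsilon)$ so $k_{T+1}\geq 0$. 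One may assume $k(\epsilon)<\infty$, as otherwise the claimed bound $\epsilon T+\min(k(\epsilon),T)\geq T$ is trivial.

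First I would establish a per-round inequality. Let $l_y:=k(\epsilon;\Fcal_t(y))=\sup\{k:D_t\cup\{(x_t,y)\}\in\Lcal_k(\epsilon;\Fcal)\}$ for $y\in\{0,1\}$; since $\Fcal_t(y)\subseteq\Fcal_t$ we have $l_y\leq k_t$. The algorithm picks $\hat y_t=\argmax_y l_y$, so on any mistake $\hat y_t\neq y_t$ the updated level satisfies $k_{t+1}=l_{y_t}=\min(l_0,l_1)$. The crucial observation is that directly from the recursive definition of $B_k$,
\[
x_t\in B_{k_t+1}(D_t;\epsilon)\iff l_0\geq k_t\text{ and }l_1\geq k_t,
\]
and combined with $l_0,l_1\leq k_t$ this forces $l_0=l_1=k_t$ on this event. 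Contrapositively, whenever $x_t\notin B_{k_t+1}(D_t;\epsilon)$ one has $\min(l_0,l_1)\leq k_t-1$, so on a mistake $k_{t+1}\leq k_t-1$.

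Next, setting $M_t:=\one[\hat y_t\neq y_t]$, I would combine the two cases into the single bound
\[
M_t\;\leq\;(k_t-k_{t+1})+\one[x_t\in B_{k_t+1}(D_t;\epsilon)]
\]
valid for every $t$: if $M_t=0$ both sides are nonnegative (using $k_{t+1}\leq k_t$ by monotonicity of the level in the dataset); if $M_t=1$, either $x_t\in B_{k_t+1}(D_t;\epsilon)$ and the indicator contributes $1$, or the previous paragraph gives $k_t-k_{t+1}\geq 1$. Summing and telescoping,
\[
\sum_{t=1}^T M_t\;\leq\;k_1-k_{T+1}+\sum_{t=1}^T\one[x_t\in B_{k_t+1}(D_t;\epsilon)]\;\leq\;k(\epsilon)+\sum_{t=1}^T\one[x_t\in B_{k_t+1}(D_t;\epsilon)].
\]

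To finish, I would take expectations and invoke the defining property of $k_t$. Since $k_t$ is maximal with $D_t\in\Lcal_{k_t}(\epsilon;\Fcal)$, we have $D_t\notin\Lcal_{k_t+1}(\epsilon)$, i.e., $\sup_{\mu\in\Ucal}\mu(B_{k_t+1}(D_t;\epsilon))<\epsilon$. Conditioning on the full history (which determines $D_t$, $k_t$, and the adversary's choice $\mu_t\in\Ucal$), this gives $\Ebb[\one[x_t\in B_{k_t+1}(D_t;\epsilon)]\mid\text{history}]=\mu_t(B_{k_t+1}(D_t;\epsilon))\leq\epsilon$, hence $\Ebb\big[\sum_t\one[x_t\in B_{k_t+1}(D_t;\epsilon)]\big]\leq\epsilon T$. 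Combined with the trivial bound $\sum_t M_t\leq T$, this yields $\AdaptReg_T(\cref{alg:adaptive_realizable}(\epsilon);\Acal)=\Ebb\big[\sum_t M_t\big]\leq\min(k(\epsilon),T)+\epsilon T$. The only delicate point will be the case analysis linking $\{x_t\in B_{k_t+1}(D_t;\epsilon)\}$ to $(l_0,l_1)$ and verifying that $k_{t+1}=\min(l_0,l_1)$ on a mistake; once these are pinned down, the rest is a telescoping argument plus a Markov-style bound driven by the defining inequality of $\Lcal_{k_t+1}(\epsilon)$.
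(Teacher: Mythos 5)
Your proposal is correct and follows essentially the same route as the paper: the paper likewise splits each round according to whether $x_t\in B_{k_t+1}(D_t;\epsilon)$ (an event of conditional probability $<\epsilon$ since $D_t\notin\Lcal_{k_t+1}(\epsilon)$), and notes that any mistake outside this event forces $k_{t+1}\leq k_t-1$, so telescoping bounds those mistakes by $k_1-k_{T+1}\leq k(\epsilon)$. Your per-round inequality $M_t\leq(k_t-k_{t+1})+\one[x_t\in B_{k_t+1}(D_t;\epsilon)]$ is just a compact repackaging of that same decomposition.
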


\begin{proof}
    Fix $T\geq 1$ an $\epsilon>0$. Clearly, if $k(\epsilon)\geq T$, we result is immediate. We suppose this is not the case from now.
    Denote by $\mu_1,\ldots,\mu_T$ and $f_1,\ldots,f_T$ the distributions and functions chosen by an adaptive realizable adversary $\Acal$. For any $t\in[T]$, we define $\Fcal_t:=\set{f:\forall s<t,f(x_s)=y_s}$ the set of functions realizing the previous observed samples. We also use the notation $\Fcal_t(y)$ for $y\in\{0,1\}$ as defined within \cref{alg:adaptive_realizable}. Last, we denote by $\Hcal_t$ the history available at the beginning of iteration $t$. We also denote by $\Ecal=\{\exists f\in\Fcal: \forall t\in[T],f(x_t)=y_t\}$ the full probability event that the adversary samples are realizable.

    For any $t\in[T]$, since $\Fcal_t\subseteq\Fcal$ we have $k_t:=k(\epsilon;\Fcal_t)\leq k(\epsilon;\Fcal)\leq T$. Also, by definition we have $\Lcal_{k_t+1}(\epsilon;\Fcal_t)=\emptyset$. From \cref{eq:useful_equivalence_class} this is equivalent to $D\notin \Lcal_{k_t+1}(\epsilon;\Fcal)$, which implies
    \begin{equation*}
        \mu_t\paren{B_{k_t+1}(D;\epsilon)}  <\epsilon.
    \end{equation*}
    We then define the event
    \begin{equation*}
        \Fcal_t= \{D_t\cup\{(x_t,0)\} \notin \Lcal_{k_t}(\epsilon;\Fcal)\}\cup \{ D_t\cup\{(x_t,1)\}\notin\Lcal_{k_t}(\epsilon;\Fcal) \}.
    \end{equation*}
    By the previous equation, we have
    $\Pbb(\Fcal_t^c\mid \Hcal_t) < \epsilon$. Next, under $\Fcal_t$ we have $\min_{y\in\{0,1\}} k(\epsilon;\Fcal_t(y)) <k_t$.
    In particular, if \cref{alg:adaptive_realizable} made a mistake $\hat y_t\neq y_t$ at time $t$, then $k_{t+1}\leq k_t-1$. We also recall that we always have $k_{t+1}\leq k_t$. On the other hand, under $\Ecal$ the samples are realizable, hence $k_{T+1}\geq 0$. Therefore,
    \begin{align*}
        \AdaptReg_T(\cref{alg:adaptive_realizable};\Acal) = \Ebb\sqb{\sum_{t=1}^T\1[\hat y_t\neq y_t]}
        &\leq \Ebb\sqb{\sum_{t=1}^T\1[\Fcal_t^c]} + \Ebb\sqb{\sum_{t=1}^T\1[\Fcal_t]\1[\hat y_t\neq y_t]}\\
        &< \epsilon T + \Ebb\sqb{k_1-k_{T+1}} \\
        &= \epsilon T + \Ebb\sqb{\1[\Ecal](k(\epsilon;\Fcal)-k_{T+1})} \leq \epsilon T + k(\epsilon;\Fcal).
    \end{align*}
    This ends the proof.
\end{proof}

\subsection{Regret Bounds for the Agnostic Setting}

We next turn to the agnostic setting where again we can use an algorithm in the same spirit as those for adversarial learning with finite Littlestone function classes \cite{ben2009agnostic}.

For a given tolerance parameter $\epsilon>0$, we construct a set of experts $\Scal_E$. For any subset of times $S\subseteq[T]$ with $|S|\leq \min(k(\epsilon),T)$, we define a corresponding expert $E(S)$ which uses $S$ as a set of mistake times as follows. At the beginning of each iteration $t\in[T]$ we iteratively construct a sequence of datasets $D_t(S)$. This is initialized by $D_1(S)=\emptyset$. At any iteration, we let $k_t(S):=\sup\set{k: D_t(S)\in\Lcal_k(\epsilon)}$ with the convention $\sup\emptyset=-\infty$. If $k_t(S)=-\infty$ or $x_t\in B_{k_t+1}(D_t;\epsilon)$ the expert makes any arbitrary prediction $\hat y_t(S)$. Otherwise, as in \cref{alg:adaptive_realizable}, the expert predicts
\begin{equation*}
    \hat y_t(S) = \argmax_{y\in\{0,1\}} \sup\set{k: D_t(S)\cup\{(x_t,y)\}\in\Lcal_k(\epsilon)}.
\end{equation*}
Next, we update the dataset as follows
\begin{equation*}
    D_{t+1}(S) = \begin{cases}
        D_t(S) &\text{if } k_t=-\infty\text{ or } x_t\in B_{k_t+1}(D_t(S);\epsilon) \text{ or }t\notin S\\
        D_t(S)\cup\{(x_t,1-\hat y_t(S) )\} &\text{otherwise}.
    \end{cases}
\end{equation*}

\begin{algorithm}[t]

    \caption{Expert $E(S)$ for $\Ocal(\epsilon)$ average adaptive regret $\epsilon$ in the agnostic setting}\label{alg:adaptive_agnostic}
    
    \LinesNumbered
    \everypar={\nl}
    
    \hrule height\algoheightrule\kern3pt\relax
    \KwIn{Function class $\Fcal$, distribution class $\Ucal$, horizon $T$, $\epsilon>0$, mistake times $S\subseteq [T]$}
    
    \vspace{3mm}

    Initialization: $D_1(S)=\emptyset$

    \For{$k\geq 1$}{
        Let $k_t(S) = \sup\{k:D_t(S)\in\Lcal_k(\epsilon)\}$ and observe $x_t$

        \lIf{$k_t=-\infty$ or $x_t\in B_{k_t+1}(D_t;\epsilon)$}{
            Predict $\hat y_t(S)=0$ and set $D_{t+1}(S)=D_t(S)$
        }
        \Else{
            Predict $\hat y_t(S) = \argmax_{y\in\{0,1\}} \sup\set{k: D_t(S)\cup\{(x_t,y)\}\in\Lcal_k(\epsilon)}$

            \lIf{$t\in S$}{Set $D_{t+1}(S)=D_t(S)\cup\{(x_t,1-\hat y_t(S))\}$}
            \lElse{Set $D_{t+1}(S)=D_t(S)$}
        }
    }
    
    \hrule height\algoheightrule\kern3pt\relax
    \end{algorithm}

The corresponding expert $E(S)$ is summarized in \cref{alg:adaptive_agnostic}.
The final algorithm then implements the classical Hedge algorithm over all experts $\Scal_E=\set{E(S), S\subseteq[T],|S|\leq \min(k(\epsilon),T)}$, as detailed in \cref{alg:oblivious_agnostic}. We have the following adaptive regret guarantee for this algorithm.

\begin{lemma}\label{lemma:adaptive_regret_agnostic}
    Let $\Fcal,\Ucal$ be a function class and distribution class on $\Xcal$. For any $\epsilon>0$, $T\geq 1$, and any adaptive adversary $\Acal$, denote by $\Lcal(\epsilon)$ the Hedge algorithm (\cref{alg:oblivious_agnostic}) run with all experts from \cref{alg:adaptive_agnostic} for the tolerance parameter $\epsilon$ and sets $S\subseteq[T]$ with $|S|\leq\min(k(\epsilon),T)$. We have
    \begin{equation*}
        \AdaptReg_T(\Lcal(\epsilon);\Acal) \lesssim \epsilon T + \sqrt{\min(k(\epsilon)+1,T)\cdot T\log T} .
    \end{equation*}
\end{lemma}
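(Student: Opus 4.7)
The plan is to combine the standard Hedge regret bound (\cref{thm:regret_hedge}) with a counterfactual analysis that, for each realization and each $f^\star\in\Fcal$, identifies a specific expert $E(S^\star)\in\Scal_E$ whose cumulative loss is not much larger than that of $f^\star$. The standard Hedge guarantee applied to $|\Scal_E| = \sum_{j=0}^{\min(k(\epsilon),T)}\binom{T}{j}$ experts yields
\[
 \log |\Scal_E| \lesssim \min(k(\epsilon)+1,T)\log T,
\]
so with this many experts Hedge achieves excess loss $\Ocal(\sqrt{T\min(k(\epsilon)+1,T)\log T})$ relative to the best expert in hindsight, even against adaptive adversaries.

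The core of the proof is constructing, for an arbitrary $f^\star\in\Fcal$ and an arbitrary realization of the sample path $(x_t,y_t)_{t\in[T]}$, an appropriate comparator set $S^\star\subseteq[T]$. I would define $S^\star$ online, recursively: include $t\in S^\star$ iff $k_t(S^\star)\ge 0$, $x_t\notin B_{k_t+1}(D_t(S^\star);\epsilon)$, and $\hat y_t(S^\star)\neq f^\star(x_t)$. This is well-defined since at time $t$ both $D_t(S^\star)$ and $\hat y_t(S^\star)$ depend only on $S^\star\cap[t-1]$. By construction, whenever $t\in S^\star$ the update is $D_{t+1}(S^\star)=D_t(S^\star)\cup\{(x_t,f^\star(x_t))\}$, so $f^\star$ stays consistent with $D_t(S^\star)$ for all $t$; this forces $k_t(S^\star)\geq 0$ throughout.

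The key combinatorial step is to show $|S^\star|\leq \min(k(\epsilon),T)$. For $t\in S^\star$, by choice of $\hat y_t(S^\star)$ as the argmax over $y$ of $\sup\{k:D_t(S^\star)\cup\{(x_t,y)\}\in\Lcal_k(\epsilon)\}$, and since $x_t\notin B_{k_t+1}(D_t(S^\star);\epsilon)$ means the $1-\hat y_t$ branch has level strictly less than $k_t(S^\star)$, the update at $t\in S^\star$ strictly decreases $k_t(S^\star)$. Starting from $k_1(S^\star)=k(\epsilon)$ and remaining $\ge 0$, this bounds $|S^\star|$ as claimed. Then the mistakes of $E(S^\star)$ vs.\ $f^\star$ split as
\[
 \sum_{t=1}^T \1[\hat y_t(S^\star)\neq f^\star(x_t)] \;\leq\; |S^\star| + \bigl|\{t: x_t\in B_{k_t+1}(D_t(S^\star);\epsilon)\}\bigr|,
\]
since on all remaining rounds either the serious prediction matches $f^\star$ or we fall into the arbitrary-prediction case. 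The expected size of the second set is bounded by $\epsilon T$: at each step, by definition of $k_t(S^\star)$ we have $D_t(S^\star)\notin \Lcal_{k_t+1}(\epsilon)$, hence $\sup_{\mu\in\Ucal}\mu(B_{k_t+1}(D_t(S^\star);\epsilon)) < \epsilon$, so the conditional probability of $x_t\in B_{k_t+1}(D_t(S^\star);\epsilon)$ given the history is $<\epsilon$ irrespective of the adversary's choice of $\mu_t\in\Ucal$.

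Putting everything together: for any $f^\star\in\Fcal$ the expected loss of $E(S^\star)$ is at most $\Ebb[\sum_t\1[f^\star(x_t)\ne y_t]] + \min(k(\epsilon),T) + \epsilon T$. Since $E(S^\star)\in\Scal_E$, the Hedge regret bound gives the same upper bound for the Hedge algorithm up to an additive $\Ocal(\sqrt{T\log|\Scal_E|})$ term. Taking the infimum over $f^\star$ and absorbing $\min(k(\epsilon),T)$ into $\sqrt{T\min(k(\epsilon)+1,T)\log T}$ yields the claim. The main subtlety to handle carefully is that $S^\star$ is itself random (realization-dependent), but this is harmless because Hedge competes with the best fixed expert in hindsight on the actual sequence, and $E(S^\star)$ lies in $\Scal_E$ for every realization; I would spell out this measurability/adaptive-adversary point when applying \cref{thm:regret_hedge}.
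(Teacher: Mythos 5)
Your overall architecture matches the paper's proof: Hedge over the $\binom{T}{\le k}$ experts, construction of a comparator mistake-set $S^\star$ whose size is at most $\min(k(\epsilon),T)$ via the level-decrease argument, and the decomposition of $E(S^\star)$'s disagreements with $f^\star$ into $|S^\star|$ plus the rounds where $x_t$ lands in the critical region. All of that is correct and is essentially the paper's argument.

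There is, however, one genuine gap in how you bound the critical-region term. You bound $\Ebb\bigl[\,\bigl|\{t: x_t\in B_{k_t+1}(D_t(S^\star);\epsilon)\}\bigr|\,\bigr]$ by $\epsilon T$ by summing conditional probabilities, which requires $D_t(S^\star)$ and $k_t(S^\star)$ to be measurable with respect to the history up to time $t$. But in the adaptive (agnostic) regret the comparator $f^\star$ is the \emph{pathwise} empirical minimizer (the infimum sits inside the expectation in the definition of $\AdaptReg_T$), so $S^\star$ — and hence $D_t(S^\star)$ — depends on the entire realization, not just the past. Your conditional-probability argument then only yields $\sup_{S}\Ebb[\#\{t: x_t\in B_{k_t(S)+1}(D_t(S);\epsilon)\}]\le \epsilon T$ over \emph{fixed} $S$, whereas you need a bound on the expectation of the supremum (equivalently, a bound that holds for the random, non-adapted $S^\star$). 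Fixing $f^\star$ deterministically and taking the infimum at the end, as your last paragraph suggests, only proves the oblivious-style bound $\Ebb[\text{loss}]-\inf_f\Ebb[\text{loss of }f]$, which is weaker than $\AdaptReg_T$. The paper closes this gap by proving a uniform high-probability bound: for each fixed $S$ with $|S|\le k$ the process is adapted, so Bernstein's inequality applies, and a union bound over all such $S$ (there are at most $T^{k+1}$ of them, costing only an extra $k\log T$ in the deviation term) gives $\sup_{S}\sum_t \1[x_t\in B_{k_t(S)+1}(D_t(S);\epsilon)]\le 2\epsilon T+6k\log T$ with probability $1-1/T$, which then applies to the random $S^\star$. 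You flagged the randomness of $S^\star$ for the Hedge comparison (where it is indeed harmless) but the same issue is not harmless for the $\epsilon T$ term, and the uniform concentration step is the missing ingredient.
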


\begin{proof}
    For convenience, write $k:=\min( k(\epsilon)+1,T)$. By construction there are at most $|\Scal_E|\leq T^{k+1}$ experts. Then, the regret guarantee for the Hedge algorithm \cref{thm:regret_hedge} implies that the event
    \begin{equation*}
        \Ecal:=\set{ \sum_{t=1}^T \1[\hat y_t\neq y_t] -\min_{S\subseteq [T],|S|\leq k} \sum_{t=1}^T \1[\hat y_t(S)\neq y_t] \leq  2\sqrt{Tk\log T}}
    \end{equation*}
    has probability at least $1-1/T$. We now reason conditionally on the complete sequence $(x_t,y_t)_{t\in[T]}$. Let $f^\star\in\Fcal$ be such that
    \begin{equation*}
        \sum_{t=1}^T \1[f^\star(x_t)\neq y_t] = \inf_{f\in\Fcal} \sum_{t=1}^T \1[f(x_t)\neq y_t].
    \end{equation*}
    We construct a corresponding set of times $S\subseteq [T]$ by simulating a run of \cref{alg:adaptive_agnostic} with the current samples. Precisely, we initialize $D_1=S_1:=\emptyset$ and at each iteration $t\in[T]$, we define $k_t:=\sup\set{k:D_t\in\Lcal_k(\epsilon)}$ as for the expert. If $k_t=-\infty$ or $x_t\in B_{k_t+1}(D_t;\epsilon)$ we do not update the dataset $D_{t+1}:=D_t$ not the set $S_{t+1}:=S_t$. Otherwise, letting $\hat y_t$ be the same prediction as made at time $t$ by the expert $E(S_t)$, we set 
    \begin{equation*}
        S_{t+1} = \begin{cases}
            S_t &\text{if }\hat y_t=f^\star(x_t)\\
            S_t\cup\{t\} &\text{otherwise}.
        \end{cases} \quad \text{and} \quad 
        D_{t+1} = \begin{cases}
            D_t &\text{if }\hat y_t=f^\star(x_t)\\
            D_t\cup\{(x_t,f^\star(x_t))\} &\text{otherwise}.
        \end{cases}
    \end{equation*}
    We then denote $S^\star:=S_{T+1}\subseteq[T]$. We aim to prove that $|S^\star|\leq k$. First, note that by construction, at all times $t\in[T]$ the dataset $D_t(S^\star)$ for $t\in[T]$ is realizable by $f^\star$. In particular this implies $k_t(S^\star)\geq 0$ for all $t\in[T+1]$. Next, as in the proof of \cref{lemma:adaptive_regret_realizable} at any time $t\in[T]$ such that $k_t(S^\star)>0$ and $x_t\notin B_{t_k(S^\star)+1}(D_t(S^\star);\epsilon)$, if the expert made a mistake, i.e., if $t\in S^\star$, then $k_{t+1}\leq k_t-1$. Recalling that the sequence $(k_t)_{t\in[T+1]}$ is non-increasing, the previous arguments exactly show that $|S^\star|\leq k_1=k(\epsilon)$. Of course, since $S^\star\subset[T]$ we also have $|S^\star|\leq \min(k(\epsilon),T)\leq k$.

    By construction, at all times $t\in[T]$ such that $k_t(S^\star)\geq 0$ (which always holds by the previous discussion) and $x_t\notin B_{k_t(S^\star)+1}(D_t(S^\star);\epsilon)$, the expert $E(S^\star)$ always predicts $\hat y_t(S^\star)=f^\star(x_t)$. In summary, under $\Ecal$, we obtained
    \begin{align*}
        \sum_{t=1}^T\1[\hat y_t\neq y_t] - \inf_{f\in\Fcal} \sum_{t=1}^T\1[f(x_t)\neq y_t] &\overset{(i)}{\leq} \sum_{t=1}^T\1[\hat y_t(S^\star)\neq y_t] - \sum_{t=1}^T\1[f^\star(x_t)\neq y_t]  + 2\sqrt{Tk\log T} \\
        &\leq  \sum_{t=1}^T\1[\hat y_t(S^\star)\neq f^\star(x_t)] + 2\sqrt{Tk\log T} \\
        &\leq \sum_{t=1}^T \1[x_t\in B_{k_t(S^\star)+1}(D_t(S^\star);\epsilon)] + 2\sqrt{Tk\log T} \\
        &\leq \sup_{S\subseteq [T],|S|\leq k} \sum_{t=1}^T \1[x_t\in B_{k_t(S)+1}(D_t(S);\epsilon)] + 2\sqrt{Tk\log T} .
    \end{align*}
    In $(i)$ we used $\Ecal$, $|S^\star|\leq k$, and the definition of $f^\star$. For any $S\subseteq[T]$ with $|S|\leq k$, the same arguments as in the proof of \cref{lemma:adaptive_regret_realizable} show that
    \begin{equation*}
        \Pbb(x_t\in B_{k_t(S)+1}(D_t(S);\epsilon)\mid\Hcal_t) <\epsilon,
    \end{equation*}
    where $\Hcal_t$ denotes the history available at the beginning of iteration $t$. Therefore, Bernstein's inequality together with the union bound imply that the event
    \begin{equation*}
        \Fcal:=\set{\forall S\subset[T],|S|\leq k: \sup_{S\subseteq [T],|S|\leq k} \sum_{t=1}^T \1[x_t\in B_{k_t(S)+1}(D_t(S);\epsilon)] \leq 2\epsilon T+6k\log T} 
    \end{equation*}
    has probability at least $1-1/T$. Noting that $\Ecal\cap\Fcal$ has probability at least $1-2/T$, we combine the previous estimates to obtain
    \begin{align*}
        \AdaptReg_T(\Lcal(\epsilon);\Acal) \leq 2\epsilon T + 6k\log T + 2\sqrt{Tk\log T} + 2.
    \end{align*}
    Recalling that $k\leq T$ ends the proof.
\end{proof}

Combining together \cref{lemma:lower_bound_adaptive,lemma:lower_bound_adaptive_realizable,lemma:adaptive_regret_realizable,lemma:adaptive_regret_agnostic} proves \cref{thm:qualitative_charact,thm:quantitative_charact}.

\section{Proofs for the Consequences and Instantiations}
\label{sec:proof_consequences}

We start by bounding $k(\epsilon;\Fcal,\{\mu_0\})$ for function classes $\Fcal$ with finite VC dimension.

\vspace{3mm}

\begin{proof}[of \cref{lemma:bound_k_stochastic_setting}]
    We use the probabilistic method together with Sauer-Shelah's lemma \cite{sauer1972density,shelah1972combinatorial}. Fix $\epsilon\in(0,1]$. Consider an i.i.d.\ sequence of samples $(x_t)_{t\geq 1}\sim\mu_0$. For each binary sequence $y\in\{0,1\}^{k(\epsilon)}$, we construct an increasing sequence of times as follows. For convenience, we initialize $t_y(0)=0$. Having constructed $t_y(0),\ldots,t_y(l-1)$ for $l\in[k(\epsilon)]$, we let 
    \begin{equation*}
        D_y(l):= \set{(x_t,y_t):t\in\{t_y(l'),l'\in[l-1]\} }
    \end{equation*}
    be the dataset composed of pairs $(x_t,y_t)$ for the constructed times $t$. In particular, $D_y(1)=\emptyset$. Next, we define
    \begin{equation*}
        t_y(l):=\min\set{ t>t_y(0)+1 : x_t\in B_{k(\epsilon)-l+1}(D_y(l);\epsilon)} \cup\{\infty\}.
    \end{equation*}
    
    First, we check that by construction, we always have for $l\in[k(\epsilon)+1]$, either $t_y(l-1)=\infty$ or $D_y(l)\in \Lcal_{k(\epsilon)-l+1}(\epsilon)$. Indeed, by construction $\Lcal_{k(\epsilon)}(\epsilon)\neq\emptyset$. Hence, it includes at least $\emptyset = D_y(1)$. Next, for $l\in[k(\epsilon)]$,
    if $t_y(l)<\infty$, then $x_{t_y(l)}\in B_{k(\epsilon)-l+1}(D_y(l))$ and as a result, 
    \begin{equation*}
        D_y(l+1) = D_y(l) \cup \{(x_{t_y(l)},y_l ) \} \in\Lcal_{k(\epsilon)-l}(\epsilon),
    \end{equation*}
    by definition of the critical region. In particular, for each $l\in[k(\epsilon)+1]$, this implies that
    \begin{equation*}
        \sup_{\mu\in\{\mu_0\}} \mu \paren{B_{k(\epsilon)-l+1}(D_y(l);\epsilon)} = \mu\paren{B_{k(\epsilon)-l+1}(D_y(l);\epsilon)} \geq \epsilon.
    \end{equation*}
    
    In summary, for each sequence $y\in\{0,1\}^{k(\epsilon)}$, $t_y(k(\epsilon))$ is the sum of $k(\epsilon)$ i.i.d.\ exponential random variables $\Ecal(\epsilon)$. In particular, by Chernoff's bound, for $T=\ceil{ 2\frac{k(\epsilon)}{\epsilon}}$, we have
    \begin{equation*}
        \Ebb\sqb{\sum_{y\in\{0,1\}^{k(\epsilon)}} \1[t_y(k(\epsilon))\leq T]} = 2^{k(\epsilon)} \Pbb(t_0(k(\epsilon))\leq T) \geq 2^{k(\epsilon)-1}.
    \end{equation*}
    Hence, there must be a realization $x_1,\ldots,x_T$ for which
    \begin{equation*}
        \Scal:=\set{y\in\{0,1\}^{k(\epsilon)+1} : t_y(k(\epsilon)+1)\leq T}
    \end{equation*}
    has cardinality at least $2^{k(\epsilon)-1}$. Now note that by the previous arguments, for $y\in\Scal$ we have $D_y(k(\epsilon)+1)\in\Lcal_0(\epsilon)$. Hence, there exists a function $f_y\in\Fcal$ which realizes this dataset: $f_y(x_{t_y(l)}) =y_l$ for all $l\in[k(\epsilon)]$. We now check that for any distinct sequences $y\neq y'\in\Scal$, we have
    \begin{equation*}
        (f_y(x_t))_{t\in[T]} \neq (f_{y'}(x_t))_{t\in[T]}.
    \end{equation*}
    Indeed, letting $l$ be the first index for which $y_l\neq y_l'$, we can check by construction that we have $t_y(l')=t_{y'}(l')$ for all $l\in\{0,\ldots,l\}$. Then, $f_y(x_{t_y(l)})= y_l \neq y_l' = f(x_{t_y(l)})$.

    In conclusion, the projection of $\Fcal$ onto $\{x_1,\ldots,x_T\}$ has cardinality at least $|\Scal|\geq 2^{k(\epsilon)-1}$. However, by Sauer-Shelah's lemma \cite{sauer1972density,shelah1972combinatorial}, the size of this projection is at most $2T^d$ where $d$ is the VC dimension of $\Fcal$. Therefore,
    \begin{equation*}
        2^{k(\epsilon)-1} \leq 2T^2 \leq 2\paren{\frac{2k(\epsilon)}{\epsilon}+1}^d.
    \end{equation*}
    This implies the desired result $k(\epsilon)\lesssim d\log\frac{2d}{\epsilon}$.
\end{proof}

\subsection{Extensions on Smoothed Distribution Classes}
\label{subsec:extension_smoothed_proofs}

Beyond smoothed distribution classes $\Ucal_{\mu_0,\sigma}$, \cite{block2023sample} introduced a generalization based on $f$-divergences as follows. Fix any convex function $f:[0,\infty]\to[0,\infty]$ satisfying $f(1)=f'(1)=0$ and $f'(\infty)=\infty$, where $f'$ refers to the maximum subgradient of $f$. The $f$-divergence between two distributions $\mu$ and $\nu$ on $\Xcal$ is defined as follows if $\mu\ll \nu$:
\begin{equation*}
    \text{div}_f(\mu\parallel\nu) := \Ebb_{Z\sim \mu}\sqb{ f\paren{\frac{d\mu}{d\nu}(Z)}},
\end{equation*}
and otherwise, $\text{div}_f(\mu\parallel\nu):=\infty$.

For some base measure $\mu_0$ on $\Xcal$ and smoothness parameter $\sigma>0$, they consider the following distribution class
\begin{equation*}
    \Ucal_{\mu_0,\sigma}^{(f)}:= \set{\mu : \text{div}_f(\mu\parallel \mu_0)\leq \frac{1}{\sigma}}.
\end{equation*}

We can check that these distribution classes are either instantiations or generalized by the generalized-smoothed distribution classes introduced in \cref{eq:def_extended_smoothed_distribution_class}.

\begin{lemma}\label{lemma:generalized_smoothness}
    Fix a base measure $\mu_0$ on $\Xcal$ and a parameter $\sigma>0$. Then, $\Ucal_{\mu_0,\sigma}= \Ucal_{\mu_0,\frac{1}{\sigma}Id}(\Sigma)$. 
    
    Next, for any convex function $f:[0,\infty]\to[0,\infty]$ with $f(1)=f'(1)=0$ and $f'(\infty)=\infty$, we have $\Ucal_{\mu_0,\sigma}^{(f)} \subseteq \Ucal_{\mu_0,\rho}(\Sigma)$, where $\rho:[0,1]\to\Rbb_+$ is defined via $\rho(\epsilon)=\inf_{\alpha>0} \set{\alpha\epsilon + \frac{1}{\sigma f'(\alpha)} }$ and satisfies $\lim_{\epsilon\to 0}\rho(\epsilon)=0$.
\end{lemma}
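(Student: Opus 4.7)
The first equality is essentially a Radon--Nikodym computation. If $\mu \in \Ucal_{\mu_0,\sigma}$, i.e.\ $\mu \ll \mu_0$ with $\|d\mu/d\mu_0\|_\infty \leq 1/\sigma$, then for every $B\in\Sigma$, $\mu(B) = \int_B (d\mu/d\mu_0) d\mu_0 \leq \mu_0(B)/\sigma$, so $\mu\in\Ucal_{\mu_0,Id/\sigma}(\Sigma)$. Conversely, if $\mu(B)\leq \mu_0(B)/\sigma$ for all $B\in\Sigma$, then $\mu_0(B)=0$ implies $\mu(B)=0$, so $\mu\ll \mu_0$; and setting $B = \{d\mu/d\mu_0 > 1/\sigma + \eta\}$ for any $\eta > 0$ and using the constraint forces $\mu_0(B) = 0$, so $\|d\mu/d\mu_0\|_\infty \leq 1/\sigma$ and $\mu\in\Ucal_{\mu_0,\sigma}$.

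For the second inclusion, fix $\mu\in\Ucal_{\mu_0,\sigma}^{(f)}$, write $r := d\mu/d\mu_0$, and fix $B\in\Sigma$ and $\alpha \geq 1$. (The conventions $f(1)=f'(1)=0$ and $f$ convex imply $f'(\alpha)\geq 0$, with $f'(\alpha)>0$ for $\alpha>1$ by $f'(\infty)=\infty$; the infimum in the definition of $\rho$ is effectively taken over $\alpha>1$, since $f'(\alpha)\leq 0$ for $\alpha\leq 1$ makes the expression either $+\infty$ or unusable.) The plan is to split
\begin{equation*}
    \mu(B) = \int_B r \, d\mu_0 = \int_B \min(r,\alpha)\, d\mu_0 + \int_B (r-\alpha)_+\, d\mu_0 \leq \alpha\,\mu_0(B) + \int (r-\alpha)_+\, d\mu_0.
\end{equation*}
By convexity of $f$ with $f'(\alpha)\geq 0$, for every $r\geq \alpha\geq 1$ we have $f(r) \geq f(\alpha) + f'(\alpha)(r-\alpha) \geq f'(\alpha)(r-\alpha)$ since $f(\alpha)\geq 0$. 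Hence, assuming the standard convention $\text{div}_f(\mu\parallel\mu_0) = \int f(r)\, d\mu_0$ (the version used in \cite{block2023sample}; the formula in the excerpt is equivalent up to this notational point),
\begin{equation*}
    \int (r-\alpha)_+\, d\mu_0 \leq \frac{1}{f'(\alpha)}\int_{\{r>\alpha\}} f(r)\, d\mu_0 \leq \frac{1}{f'(\alpha)}\, \text{div}_f(\mu\parallel\mu_0)\leq \frac{1}{\sigma f'(\alpha)}.
\end{equation*}
Combining these two displays and taking the infimum over $\alpha>1$ yields $\mu(B)\leq \rho(\mu_0(B))$, so $\mu\in\Ucal_{\mu_0,\rho}(\Sigma)$.

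Finally, to verify $\lim_{\epsilon\to 0}\rho(\epsilon)=0$, fix $\delta>0$; since $f'(\infty)=\infty$, choose $\alpha^\star>1$ with $1/(\sigma f'(\alpha^\star))<\delta/2$. Then $\rho(\epsilon) \leq \alpha^\star\epsilon + \delta/2 < \delta$ whenever $\epsilon < \delta/(2\alpha^\star)$. The only real obstacle is notational: the paper's definition of $\text{div}_f$ as $\Ebb_{Z\sim\mu}[f(r(Z))]$ looks like $\int r f(r)\, d\mu_0$ rather than the standard $\int f(r)\, d\mu_0$, and with the former convention the bound would degrade by a factor of $\alpha$ in the denominator (giving $\rho(\epsilon)=\inf_{\alpha}\{\alpha\epsilon + 1/(\alpha\sigma f'(\alpha))\}$), which does not match the stated $\rho$. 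The clean derivation above matches the claim under the standard convention, so I would flag this and proceed with the standard definition.
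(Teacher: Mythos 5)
Your proof is correct and follows essentially the same route as the paper: the first equality via the Radon--Nikodym density, and the inclusion via the split $\mu(B)\leq \alpha\mu_0(B)+\int (r-\alpha)_+\,d\mu_0$ with the tail controlled by convexity of $f$ and the divergence budget, then optimizing over $\alpha$. One correction to your flagged ``obstacle'': it is not one, and your claimed degradation goes in the wrong direction. Under the paper's literal convention $\text{div}_f(\mu\parallel\mu_0)=\Ebb_{Z\sim\mu}[f(r(Z))]=\int r f(r)\,d\mu_0$, on the set $\{r\geq\alpha\}$ with $\alpha\geq 1$ one has $r f(r)\geq f(r)\geq f'(\alpha)(r-\alpha)$ (since $f\geq 0$ and $r\geq 1$), so $\int (r-\alpha)_+\,d\mu_0\leq \text{div}_f(\mu\parallel\mu_0)/f'(\alpha)\leq 1/(\sigma f'(\alpha))$ holds verbatim; the $\Ebb_\mu$ convention only makes the constraint more restrictive on that region, yielding the same (or a slightly better) bound, so the stated $\rho$ is valid under either convention.
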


\begin{proof}
    For smoothed classes, the result is immediate. Indeed, $\mu\in\Ucal_{\mu_0,\sigma}$ if and only if for any measurable set $B\in\Sigma$, $\mu(B)\leq \frac{1}{\sigma}\mu_0(B)$.

    Next, we fix a function $f$ satisfying the assumptions, and $\sigma>0$. Fix $\epsilon\in(0,1]$ and $\alpha>0$ such that $f'(\alpha)>0$. For any distribution $\mu\in\Ucal_{\mu_0,\sigma}^{(f)}$ and $B\in\Sigma$,
    \begin{align*}
        \mu(B) &\leq \alpha \mu_0(B)  + \Ebb_{\mu_0}\sqb{\paren{\frac{d\mu}{d\mu_0}(Z) -\alpha } \1\sqb{\frac{d\mu}{d\mu_0}(Z)\geq \alpha} \1[Z\in B]}\\
        &\overset{(i)}{\leq} \alpha \mu_0(B) + \frac{\text{div}_f(\mu\parallel \mu_0)}{f'(\alpha)} \overset{(ii)}{\leq} \alpha\mu_0(B) + \frac{1}{\sigma f'(\alpha)}.
    \end{align*}
    In $(i)$ we used the fact that $f$ is convex, and in $(ii)$ we used $\mu\in\Ucal_{\mu_0,\sigma}^{(f)}$. As a result, we obtained $\mu(B)\leq \rho(\mu_0(B))$. It only remains to check that $\lim_{\epsilon\to 0}\rho(\epsilon) =0$. This is a consequence from the fact that $f$ is convex and $f'(\infty)=\infty$. Indeed, for any $\delta>0$ there exists $\alpha(\delta)$ such that $f'(\alpha(\delta)) \geq \delta/\sigma$ and this implies that for $\epsilon \leq 1/\alpha(\delta)$, we have $\rho(\epsilon) \leq 2\delta$. This ends the proof.
\end{proof}

In particular, \cref{prop:extended_smooth_learnable} directly recovers the fact that these distribution classes considered in the literature, are learnable. 
We can also use the bounds from \cref{prop:extended_smooth_learnable} to explicitly write the minimax regret bounds from \cref{thm:quantitative_charact} for generalized smoothed classes. As a comparison with prior works, below are the bounds implied by \cref{thm:quantitative_charact} for smoothed and divergence-based distribution classes

\begin{corollary}\label{cor:rate_smooth_divergence}
    Let $\Fcal$ be a function class on $\Xcal$ with finite VC dimension $d$. Fix any distribution $\mu_0$ on $\Xcal$, $\sigma\in(0,1]$, and $T\geq 2$. Then,
    \begin{equation*}
        \AdaptR_T(\Fcal,\Ucal_{\mu_0,\sigma}) \lesssim \sqrt{dT\log T\cdot \log\frac{T}{\sigma} } \quad \text{and} \quad \AdaptRealizableR_T(\Fcal,\Ucal_{\mu_0,\sigma}) \lesssim d \log\frac{T}{\sigma}.
    \end{equation*}
    Next, fix a convex function $f:[0,\infty]\to[0,\infty]$ with $f(1)=f'(1)=0$ and $f'(\infty)=\infty$. Then,
    \begin{align*}
        \AdaptR_T(\Fcal,\Ucal_{\mu_0,\sigma}^{(f)}) &\lesssim \inf_{\epsilon\in(0,1]}\set{\epsilon T + \sqrt{dT\log T\cdot \log\paren{T(f')^{-1}\paren{\frac{1}{\epsilon \sigma}}}}  +\sqrt{T\log T}}\\
        \AdaptRealizableR_T(\Fcal,\Ucal_{\mu_0,\sigma}^{(f)}) &\lesssim \inf_{\epsilon\in(0,1]}\set{\epsilon T + d\log  \paren{T(f')^{-1}\paren{\frac{1}{\epsilon \sigma}}} +1}.
    \end{align*}
\end{corollary}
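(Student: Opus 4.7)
The plan is to stitch together three previously established tools: Proposition \ref{prop:extended_smooth_learnable}, which reduces $k(\epsilon;\Fcal,\Ucal_{\mu_0,\rho}(\Sigma))$ to $k(\rho^{-1}(\epsilon);\Fcal,\{\mu_0\})$; Lemma \ref{lemma:bound_k_stochastic_setting}, which bounds the latter by $d\log(2d/\delta)$ for $\delta=\rho^{-1}(\epsilon)$; and Theorem \ref{thm:quantitative_charact}, which converts a bound on $k(\epsilon)$ into agnostic and realizable minimax regret upper bounds. The two claims of the corollary then follow by computing $\rho^{-1}$ for the two relevant choices of $\rho$ and optimizing the infimum in $\epsilon$.

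For the smoothed case $\Ucal_{\mu_0,\sigma}$, Lemma \ref{lemma:generalized_smoothness} identifies this class with $\Ucal_{\mu_0,\rho}(\Sigma)$ for $\rho(\delta)=\delta/\sigma$, so $\rho^{-1}(\epsilon)=\sigma\epsilon$. Combining Proposition \ref{prop:extended_smooth_learnable} with Lemma \ref{lemma:bound_k_stochastic_setting} gives $k(\epsilon;\Fcal,\Ucal_{\mu_0,\sigma})\lesssim d\log(2d/(\sigma\epsilon))$. Then I instantiate Theorem \ref{thm:quantitative_charact} with $\epsilon=1/T$: the realizable upper bound becomes $1+d\log(2dT/\sigma)\lesssim d\log(T/\sigma)$ (absorbing $d$ into the logarithm, which is valid since $d\leq T$ in the interesting regime), and the agnostic upper bound becomes $1+\sqrt{dT\log T\cdot \log(dT/\sigma)}\lesssim \sqrt{dT\log T\cdot \log(T/\sigma)}$.

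For the divergence case $\Ucal_{\mu_0,\sigma}^{(f)}$, Lemma \ref{lemma:generalized_smoothness} gives the inclusion $\Ucal_{\mu_0,\sigma}^{(f)}\subseteq \Ucal_{\mu_0,\rho}(\Sigma)$ with $\rho(\delta)=\inf_{\alpha>0}\{\alpha\delta+1/(\sigma f'(\alpha))\}$, so by monotonicity of $k(\cdot)$ in the distribution class it suffices to bound $k(\epsilon;\Fcal,\Ucal_{\mu_0,\rho}(\Sigma))$. To invert $\rho$, I choose $\alpha_\epsilon=(f')^{-1}(2/(\epsilon\sigma))$, which forces the second term $1/(\sigma f'(\alpha_\epsilon))=\epsilon/2$; then $\rho(\delta)\leq \alpha_\epsilon \delta+\epsilon/2<\epsilon$ as soon as $\delta<\epsilon/(2\alpha_\epsilon)$, yielding $\rho^{-1}(\epsilon)\gtrsim \epsilon/(f')^{-1}(2/(\epsilon\sigma))$. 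Plugging this into Lemma \ref{lemma:bound_k_stochastic_setting} gives $k(\epsilon;\Fcal,\Ucal_{\mu_0,\sigma}^{(f)})\lesssim d\log\bigl(d(f')^{-1}(1/(\epsilon\sigma))/\epsilon\bigr)$, which absorbed into Theorem \ref{thm:quantitative_charact} produces the claimed infima (the extra $\sqrt{T\log T}$ and the additive $1$ come from the $+1$ inside $\min(k(\epsilon)+1,T)$ in the agnostic bound and the analogous slack in the realizable bound).

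The main obstacle is the divergence case, specifically getting a clean expression for $\rho^{-1}$ through $(f')^{-1}$: the variational definition of $\rho$ as an infimum over $\alpha$ means $\rho^{-1}$ is not explicit, and one must exhibit a good witness $\alpha_\epsilon$ balancing the two terms. My proposed choice $\alpha_\epsilon=(f')^{-1}(2/(\epsilon\sigma))$ equalizes them up to constants, which is enough for the stated bound up to absolute constants inside the logarithm; the assumption $f'(\infty)=\infty$ ensures $(f')^{-1}$ is well-defined on $(0,\infty)$, so the final bound is legitimate. Everything else is routine substitution into Theorem \ref{thm:quantitative_charact} followed by the optimization $\epsilon=1/T$ in the smoothed case and the infimum left unresolved in the divergence case (as stated in the corollary).
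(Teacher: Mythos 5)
Your proposal is correct and follows essentially the same route as the paper: it chains \cref{prop:extended_smooth_learnable}, \cref{lemma:bound_k_stochastic_setting}, and \cref{thm:quantitative_charact}, and in the divergence case inverts $\rho$ by exhibiting a witness $\alpha\approx(f')^{-1}(1/(\epsilon\sigma))$ that balances the two terms, exactly as the paper does. The only differences are cosmetic constants (your $\alpha_\epsilon=(f')^{-1}(2/(\epsilon\sigma))$ versus the paper's $(f')^{-1}(1/(\epsilon\sigma))$ with a shift $\epsilon\mapsto 2\epsilon$), which wash out in the infimum.
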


\begin{proof}
    If $d\geq T$ the bound is immediate hence without loss of generality $d\leq T$.
    We start with smooth classes. From \cref{prop:extended_smooth_learnable,lemma:bound_k_stochastic_setting}, we immediately have for any $\epsilon\in(0,1]$,
    \begin{equation*}
        k(\epsilon;\Fcal,\Ucal_{\mu_0,\sigma}) \leq k(\sigma\epsilon;\Fcal,\{\mu_0\}) \lesssim d\log\frac{2d}{\sigma\epsilon} \lesssim d\log\frac{T}{\sigma\epsilon}.
    \end{equation*}
    Plugging this into \cref{thm:quantitative_charact} with $\epsilon=1/T$ gives the desired result.

    Next, we turn to $f$-divergence smooth classes. Fix a function $f$ satisfying the assumptions. From \cref{lemma:generalized_smoothness}, if we define $\rho(\epsilon)=\inf_{\alpha>0}\set{\alpha\epsilon+\frac{1}{\sigma f'(\alpha)}}$, then we have $\lim_{\epsilon\to 0}\rho(\epsilon)=0$ and $\Ucal_{\mu_0,\sigma}^{(f)}\subseteq \Ucal_{\mu_0,\rho}$. Therefore, \cref{prop:extended_smooth_learnable,lemma:bound_k_stochastic_setting} imply that for any $\epsilon\in(0,1]$,
    \begin{equation*}
        k(\epsilon;\Fcal,\Ucal_{\mu_0,\sigma}^{(f)}) \leq k(\rho^{-1}(\epsilon);\Fcal,\{\mu_0\}) \lesssim d\log \frac{T}{\rho^{-1}(\epsilon)}.
    \end{equation*}
    Now note that with $\alpha = (f')^{-1}(\frac{1}{\epsilon\sigma})>0$ we have
    $\rho\paren{\frac{\epsilon}{\alpha}} \leq 2\epsilon$.
    In particular, for $\epsilon>1/T$, we have
    \begin{equation*}
        k(2\epsilon;\Fcal,\Ucal_{\mu_0,\sigma}^{(f)}) \lesssim d\log\paren{\frac{T}{\epsilon} (f')^{-1}\paren{\frac{1}{\epsilon\sigma}}} \lesssim d\log\paren{T (f')^{-1}\paren{\frac{1}{\epsilon\sigma}}}.
    \end{equation*}
    Plugging this bound within \cref{thm:quantitative_charact} gives the desired bound.
\end{proof}

This recovers the regret bounds for smoothed settings \cite{block2022smoothed,haghtalab2024smoothed} and for $f$-divergence-based smooth settings \cite[Theorem 10]{block2023sample} up to the $\log T$ factors.

We next turn to pairwise-smoothed classes and show that these are learnable for oblivious adversaries.

\vspace{3mm}

\begin{proof}[of \cref{prop:pairwise_oblivious}]
    Fix a function class $\Fcal$ on $\Xcal$ with finite VC dimension $d$, a base measure $\mu_0$ on $\Xcal$ and a non-decreasing function $\rho:[0,1]\to\Rbb_+$ with $\lim_{\epsilon\to 0}\rho(\epsilon)=0$. Fix $\epsilon\in(0,1]$ and let $\delta\in(0,1]$ such that $\rho(\delta)<\epsilon/2$.

    Fix $\eta:=\delta\epsilon/6$. Let $d\leq \dim(\epsilon,\eta ;\Fcal,\Ucal_{\mu_0,\rho}^{\text{pair}}(\Fcal))$ and a $(\epsilon,\eta)$-shattered tree, with distributions $\nu_v\in\bar\Ucal_{\mu_0,\sigma}^{pair}$ and functions $f_{v,0},f_{v,1}\in\Fcal$ for all $v\in\Tcal_d$ and satisfying the corresponding constraints (see \cref{def:dimension_oblivious_small_tolerance}).
    For any node $v\in\Tcal_d$, letting $B_v:=\set{x:  \frac{d\nu_v}{d\mu_0}(x) \geq \frac{\epsilon}{2}}$, we have
    \begin{align*}
        \epsilon \leq \nu_v (f_{v,0}\neq f_{v,1}) &\leq \frac{\epsilon}{2} + \nu_v(\{x\in B_v:f_{v,0}(x)\neq f_{v,1}(x)\})\\
        &\overset{(i)}{\leq} \frac{\epsilon}{2} +\rho \paren{\mu_0(\{x\in B_v:f_{v,0}(x)\neq f_{v,1}(x)\})} = \frac{\epsilon}{2} + \rho\paren{\mu_0(f_{v,0}\neq f_{v,1};B_v)}.
    \end{align*}
    In $(i)$ we used the fact that since $\nu_v$ is a mixture of distributions $\nu$ satisfying $\nu(B_v)\leq \rho(\mu_0(B_v))$, so does $\nu_v$.
    Because $\rho$ is non-decreasing and $\rho(\delta)<\epsilon/2$ this implies $\mu_0(f_{v,0}\neq f_{v,1};B_v) \geq \delta$. On the other hand, for any left-descendant function $f^0$ of $v$, we have
    \begin{equation*}
        \mu_0(f_{v,0}\neq f^0;B_v) \leq \frac{2}{\epsilon} \Ebb_{Z\sim\mu_0}\sqb{\frac{d\nu_v}{d\mu_0}(Z) \1[Z\in B_v]\1[f_{v,0}(Z)\neq f^0(Z)]} = \frac{2}{\epsilon} \nu_v(f_{v,0}\neq f^0; B_v) \leq \frac{2\eta}{\epsilon} = \frac{\delta}{3}.
    \end{equation*}
    We then consider the tree where at all nodes $v\in\Tcal_d$, we used the region $B_v$, the functions $f_{v,0},f_{v,1}$, and the distribution $\mu_0$.
    The previous equations show that satisfies all the constraints within \cref{def:subregion_oblivious}. This implies $d\leq \tildedim(\delta;\Fcal,\{\mu_0\})$. In summary, we showed
    \begin{equation*}
         \dim(\epsilon,\eta;\Fcal,\Ucal_{\mu_0,\rho}^{\text{pair}}(\Fcal)) \leq \tildedim(\delta;\Fcal,\{\mu_0\}).
    \end{equation*}
    Since $\Fcal$ has finite VC dimension, the distribution class $\{\mu_0\}$ is learnable for oblivious (also adaptive) adversaries, \cref{thm:oblivious_adversaries} implies that for all $\delta>0$, one has $\tildedim(\delta;\Fcal,\{\mu_0\})<\infty$. In summary, we showed that for all $\epsilon>0$, $\dim(\epsilon;\Fcal,\Ucal_{\mu_0,\rho}^{\text{pair}}(\Fcal))<\infty$ (e.g. see \cref{prop:equivalent_oblivious,lemma:comparing_various_dimensions}). Hence, \cref{thm:oblivious_adversaries} implies that $(\Fcal,\Ucal_{\mu_0,\rho}^{\text{pair}}(\Fcal))$ is learnable for oblivious adversaries.
\end{proof}

\subsection{Function Classes of VC Dimension 1}
\label{subsec:vc_1_classes_instantiation_proofs}

Within this section we instantiate the general learnability characterizations when $\Fcal$ has VC dimension 1. We assume without loss of generality that $\Fcal$ has been pruned, as discussed within \cref{subsec:VC1_classes}. Namely, there exists a tree ordering $\preceq$ such that any $f\in\Fcal$ is an initial segment of $\preceq$. Also, for all $x\in\Xcal$ there exists $f,g\in\Fcal$ with $f(x)\neq g(x)$. Last, for any distinct $x_1,x_2\in\Xcal$, there exists $f\in\Fcal$ with $f(x_1)\neq f(x_2)$.

\paragraph{Oblivious Adversaries.} We start with the case of oblivious adversaries, for which we can simplify the characterization from \cref{thm:oblivious_adversaries}. First, we relate $\Tdim(\epsilon;\preceq,\Ucal)$ to the other dimensions defined for oblivious adversaries.

\begin{proposition}\label{prop:oblivious_VC_1}
    Let $\Fcal$ be a function class with VC dimension $1$ with tree ordering $\preceq$ and let $\Ucal$ be a distribution class on $\Xcal$. Then,
    \begin{equation*}
        \floor{\log_2 \Tdim(2\epsilon;\preceq,\Ucal)} \leq \tildedim(\epsilon;\Fcal,\Ucal) \leq \Tdim(\epsilon/8;\preceq,\Ucal).
    \end{equation*}
\end{proposition}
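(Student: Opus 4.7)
For the lower bound $\floor{\log_2 \Tdim(2\epsilon;\preceq,\Ucal)}\leq \tildedim(\epsilon;\Fcal,\Ucal)$, my plan is to embed a long threshold chain as a relaxed shattered binary tree by recursive bisection. Setting $k:=\Tdim(2\epsilon;\preceq,\Ucal)$, fix a witnessing chain $z_1\prec\cdots\prec z_k$ in $(\Xcal,\preceq)$ and distributions $\nu_l\in\Ucal$ with $\nu_1(\langle z_1\rangle)\geq 2\epsilon$ (writing $\langle z\rangle=\{x:x\preceq z\}$) and $\nu_l(\{x:z_{l-1}\prec x\preceq z_l\})\geq 2\epsilon$ for $l\geq 2$. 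I build a full binary tree of depth $d=\floor{\log_2 k}$ whose leaves partition $\{1,\ldots,k\}$ into contiguous intervals. At an internal node $u$ controlling range $[a_u,b_u]$ with midpoint $m_u$, set $\mu_u:=\nu_{m_u+1}$, $B_u:=\{x:z_{m_u}\prec x\preceq z_{m_u+1}\}$, $f_{u,0}:=\1[\,\cdot\preceq z_{m_u}]$, $f_{u,1}:=\1[\,\cdot\preceq z_{m_u+1}]$. On $B_u$ the two functions evaluate to $0$ and $1$ respectively, giving disagreement mass exactly $\mu_u(B_u)\geq 2\epsilon$. For any ancestor $w$, the node $u$ lives entirely on one side of $z_{m_w}$, hence both endpoints $z_{m_u},z_{m_u+1}$ lie on that side; on $B_w$ this forces $f_{u,0}$ and $f_{u,1}$ to coincide identically with the ancestor-direction function $g_w$, yielding zero ancestor disagreement. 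This verifies all conditions of \cref{def:subregion_oblivious}.

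For the upper bound $\tildedim(\epsilon;\Fcal,\Ucal)\leq \Tdim(\epsilon/8;\preceq,\Ucal)$, the plan is to extract a chain of length $d$ from any relaxed $\epsilon$-shattered tree of depth $d$. I walk a root-to-leaf path $u_0,\ldots,u_{d-1}$ chosen adaptively as follows. At each $u_i$ the disagreement region of the two edge-functions decomposes as
\[
\{f_{u_i,0}\neq f_{u_i,1}\} = \bigl(\langle x_{u_i,0}\rangle\setminus\langle x_{u_i,1}\rangle\bigr)\sqcup\bigl(\langle x_{u_i,1}\rangle\setminus \langle x_{u_i,0}\rangle\bigr),
\]
so by pigeonhole at least one side has $\mu_{u_i}|_{B_{u_i}}$-mass $\geq \epsilon/2$; pick the direction $y_i$ so that the \emph{unchosen} endpoint $w_{d-i}:=x_{u_i,1-y_i}$ satisfies $\mu_{u_i}(\langle w_{d-i}\rangle\setminus\langle x_{u_i,y_i}\rangle\cap B_{u_i})\geq \epsilon/2$. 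I then want to argue that $w_1\prec w_2\prec \cdots\prec w_d$ is a $\preceq$-chain and that each required interval $\{x:w_{l-1}\prec x\preceq w_l\}$ (and $\langle w_1\rangle$) has mass at least $\epsilon/8$ under some $\mu\in\Ucal$. Comparability is to be derived from the ancestor-closeness bound $\mu_{u_i}(\langle x_{u_j,y}\rangle\triangle\langle x_{u_i,y_i}\rangle\cap B_{u_i})\leq \epsilon/3$ for $j>i$ combined with the $\geq\epsilon/2$ residual above, forcing, on the set $\langle w_{d-i}\rangle\setminus\langle x_{u_i,y_i}\rangle$, that descendant endpoints $w_{d-j}$ cannot "escape upward" past $w_{d-i}$; a triangle-inequality estimate on symmetric differences of initial segments then lower-bounds each interval's mass by $\epsilon/2-\epsilon/3\geq \epsilon/8$ under $\mu_{u_i}\in\bar\Ucal$, which transfers to an element of $\Ucal$ by averaging.

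The main obstacle will be the comparability step: in an arbitrary tree order, two initial segments may have $\mu$-nearly identical indicators without being nested as subsets, so "close in $\mu$" does not automatically translate to a literal $\preceq$-relation. To resolve this I will exploit the pruning hypothesis introduced after \cref{thm:representation_VC1} (for any two distinct points some $f\in\Fcal$ separates them) to upgrade the approximate containment $\langle w_{d-i}\rangle\setminus\langle x_{u_i,y_i}\rangle\subseteq_\mu \langle w_{d-j}\rangle$ to a genuine tree-order relation $w_{d-i}\preceq w_{d-j}$, possibly after passing to a completion of $(\Xcal,\preceq)$ to represent limit initial segments. If this direct argument proves delicate, a fallback is to apply \cref{lemma:comparing_various_dimensions} together with the VC-$1$ covering estimate $N_\epsilon\lesssim\log(1/\epsilon)$ to obtain the inequality up to logarithmic rescaling of $\epsilon$, which still yields the equivalence of finiteness of $\tildedim(\epsilon)$ and $\Tdim(\epsilon)$ as $\epsilon\to 0$ and hence the proposition after adjusting the $\epsilon/8$ constant.
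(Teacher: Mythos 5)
Your overall architecture matches the paper's (bisect the chain into a balanced binary tree for the lower bound; extract a chain from a root-to-leaf path for the upper bound), but both halves have genuine gaps. In the lower bound you set $f_{u,0}:=\1[\cdot\preceq z_{m_u}]$ and $f_{u,1}:=\1[\cdot\preceq z_{m_u+1}]$, but these exact indicators need not belong to $\Fcal$: the definition of $\tildedim$ requires $f_{v,0},f_{v,1}\in\Fcal$, and the pruning hypothesis only guarantees \emph{some} $g\in\Fcal$ with $g(z_{m_u})=1\neq g(z_{m_u+1})$. Such a $g$ is an initial segment that may equal $1$ on an arbitrary portion of the gap $\{x: z_{m_u}\prec x\prec z_{m_u+1}\}$, so your claims that the two edge functions ``evaluate to $0$ and $1$'' on $B_u$, and that descendant functions coincide \emph{identically} with the ancestor's direction function on $B_w$, both fail for the functions actually available in $\Fcal$. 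This is exactly why the statement has $\Tdim(2\epsilon)$ rather than $\Tdim(\epsilon)$ on the left: the paper inserts, for each $l$, an auxiliary point $y_{l-1}\succ z_{l-1}$ with $\nu_l(\{x: z_{l-1}\prec x\prec y_{l-1}\})$ negligible, takes $g_l\in\Fcal$ separating $z_{l-1}$ from $y_{l-1}$ (forcing $g_l\equiv 0$ on $\{x: y_{l-1}\preceq x\}$, hence on all of the shrunken region $\{x: y_{l-1}\preceq x\preceq z_l\}$, which still carries mass $\geq\epsilon$). You never use the slack between $2\epsilon$ and $\epsilon$, which is the tell that this issue has been skipped.

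For the upper bound you have correctly isolated the hard step---comparability of the extracted points---but your proposed resolution is not yet an argument, and there is a second obstruction you do not address: the ``threshold points'' $x_{u,y}$ of functions in $\Fcal$ need not exist in $\Xcal$ (the paper's representation $f=\1[\cdot\preceq x_f]$ is conditional on completeness), whereas $\Tdim$ requires chain elements of $\Xcal$ and intervals with endpoints in $\Xcal$; passing to a completion does not obviously restore this. Moreover, ``$\mu$-approximate containment'' of initial segments genuinely does not imply a $\preceq$-relation, and the pruning hypothesis (which separates points) does not nest initial segments for you. The paper's route sidesteps both problems: after choosing at each node the direction of larger one-sided disagreement, it constructs explicit points $z_l, x_l\in\Xcal$ inside the totally ordered set $\{f_l=1\}$ with $\mu_l(\{x: z_l\preceq x\preceq x_l\})\geq\epsilon/6-2\delta$, and derives $x_l\prec z_{l+1}$ from a mass comparison: for each descendant $g$ along the path, $\{x: g(x)=f_l(x)=1,\ z_l\preceq x\}$ has more mass than $\{x: z_l\preceq x\prec x_l\}$, so it contains a point $x\succeq x_l$, whence $g(x_l)=1$; comparability of $z_{l+1}$ with $x_l$ then follows from total orderedness of $\{f_{l+1}=1\}$. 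Finally, your fallback via \cref{lemma:comparing_various_dimensions} and the covering bound only yields the inequality after a logarithmic re-parametrization of $\epsilon$; that suffices for the qualitative \cref{prop:VC_1_oblivious_statement} but does not prove \cref{prop:oblivious_VC_1} with the stated constants.
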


\begin{proof}
    Fix $\epsilon>0$. We prove each bound separately.
    We start by showing that for any $\delta>0$, we have $\tildedim(\epsilon;\Fcal,\Ucal) \geq \floor{\log_2 \Tdim(\epsilon+\delta;\preceq,\Ucal)}=:d_0$. Let $x_1\prec\ldots\prec x_{2^{d_0}}\in\Xcal$ that satisfy the assumptions for $\Tdim(\epsilon+\delta;\preceq,\Ucal)$. For convenience, we introduce a notation $x_0$ such that for all $x\in\Xcal$, $x_0\prec x$. By assumption, for any $l\in[2^{d_0}]$ we have $\sup_{\mu\in\Ucal} \mu(\{x\in\Xcal: x_{l-1}\prec x\preceq x_l\})\geq \epsilon+\delta$. We then fix $\nu_l\in\Ucal$ such that
    \begin{equation*}
        \nu_l(\{x\in\Xcal: x_{l-1}\prec x\preceq x_l\})\geq \epsilon+\frac{\delta}{2},\quad l\in[2^{d_0}].
    \end{equation*}
    For any $l\in[2^{d_0}]$ there exists $y_{l-1}$ such that $x_{l-1}\prec y_{l-1}$ and $\nu_l(\{x\in\Xcal: x_{l-1}\prec x\prec y_{l-1}\})\leq \delta/2$. Then,
    \begin{equation}\label{eq:first_condition_dim_tree}
        \nu_l(\{x\in\Xcal: y_{l-1}\preceq x\preceq x_l\}) = \nu_l(\{x\in\Xcal: x_{l-1}\prec x\preceq x_l\})-\nu_l(\{x\in\Xcal: x_{l-1}\prec x\prec y_{l-1}\}) \geq \epsilon.
    \end{equation}
    By assumption, since $x_{l-1}\neq y_{l-1}$ there exists $g_{l}\in\Fcal$ such that $g_l(x_{l-1})\neq g_l(y_{l-1})$. Because $x_{l-1}\prec y_{l-1}$ this implies $g_{l }(x)=1$ for any $x\preceq x_{l-1}$ while $g_{l }(x) =0$ for any $y_{l-1}\preceq x$. The functions $f_1,\ldots,f_{2^{d_0}}$ effectively form thresholds on $\{x\in\Xcal: x\preceq x_{2^{d_0}}\}$. 
    
    We then construct an $\epsilon$-shattering tree of depth $d_0$ as follows. We enumerate all inner nodes $v\in\Tcal_d$ using an index $i(v)\in[2^{d_0}-1]$ such that for any left-descendant of $w$ of $v$ has $i(w)<i(v)$ and any right-descendant $w$ of $v$ has $i(w)>i(v)$. Then, for any node $v=(v_j)_{j\in[i]}\in\Tcal_{d_0}$ we pose
    \begin{equation*}
        \mu_v := \nu_{i(v)},\quad B_v:=\set{x\in\Xcal:y_{i(v)-1}\preceq x\preceq x_{i(v)}}, \quad \text{and} \quad \begin{cases}
            f_{v,0}:= g_{i(v)},\\
            f_{v,1}:= g_{i(v)+1}.
        \end{cases}
    \end{equation*}
    It remains to check that this satisfies the conditions from \cref{def:subregion_oblivious}. By construction, for any $v\in\Tcal_{d_0}$, on $B_v$, the function $g_{i(v)}$ is equal to $0$ while $g_{i(v)+1}$ is equal to $1$. Then, using \cref{eq:first_condition_dim_tree},
    \begin{equation*}
        \mu_v(f_{v,0}\neq f_{v,1}; B_v)= \mu_{i(v)} (B_v) \geq \epsilon.
    \end{equation*}
    Next, for any left-descendant function $g$ of $v$, by construction of the tree and the index, we must have $g=g_i$ where $i\leq i(v)$. Therefore, $g$ is also equal to $0$ on $B_v$, exactly as $f_{v,0}$ and as a result $\mu_v(f_{v,0}\neq g; B_v)=0$. Similarly, if $g$ is a right-descendant function of $v$ we have $g=g_i$ where $i\geq i(v)+1$ and hence $g$ equals 1 on $B_v$ exactly as $f_{v,1}$. Hence, $\mu_v(f_{v,1}\neq g; B_v)=0$. This ends the proof that $\tildedim(\epsilon;\Fcal,\Ucal)\geq d_0$.

    Next, we prove that $d_1:=\tildedim(\epsilon;\Fcal,\Ucal)\leq \Tdim(\epsilon/6;\preceq,\Ucal)$. We fix a corresponding shattered tree of depth $d_1$: let $\mu_v\in\bar\Ucal$, $B_v\in\Sigma$, and $f_{v,0},f_{v,1}\in\Fcal$ for any $v\in\Tcal_{d_1}$ satisfying the corresponding assumptions from \cref{def:subregion_oblivious}. Fix $v\in\Tcal_{d_1}$. Note that
    \begin{multline*}
        \mu_v(\set{x\in B_v:f_{v,0}(x)=1=1-f_{v,1}(x)}) + \mu_v(\set{x\in B_v:f_{v,1}(x)=1=1-f_{v,0}(x)})\\
        =  \mu_v(f_{v,0}\neq f_{v,1}) \geq \epsilon.
    \end{multline*}
    Hence, we can fix $y(v)\in\{0,1\}$ such that 
    \begin{equation*}
        \mu_v(\set{x\in B_v:f_{v,y(v)}(x)=1=1-f_{v,1-y(v)}(x)})\geq \frac{\epsilon}{2}.
    \end{equation*}
    We follow the path down the tree given by these labels $y_v$. Formally, we construct a sequence $v_1,\ldots,v_{d_1}$ such that $v_l\in\{0,1\}^{l-1}$ for $l\in[d_1]$ and $v_{l+1}=(v_l,y(v_l))$ for $l\in[d_1-1]$. For convenience, we write $\mu_l:=\mu_{v_l}$, $f_l:=f_{v_l,y(v_l)}$ and $g_l:=f_{v_l,1-y(v_l)}$ for $l\in[d_1]$. 
    
    Next, fix any $\delta\in(0,\epsilon/12)$. Since $f_l,g_l\in\Fcal$, we have that $\preceq$ is a total order on $\{x:f_l(x)=1\}$ and on the other hand, if $g_l(x)=0$ then for all $x\preceq y$, we also have $g_l(x)=0$. 
    Together, this shows that there exists $z_l\in\Xcal$ such that $f_l(z_l)=1$, $g_l(z_l)=0$, and
    \begin{equation}\label{eq:distinct_enough}
        \mu_l(B_{v_l}\cap \set{x\in\Xcal:f_l(x)=1,z_l\preceq x})\geq \frac{\epsilon}{2} -\delta.
    \end{equation}
    Next, for  $g\in\{f_s,g_s,s> l\}$, since $g$ is a $y(v_l)$-descendant of $v_l$, we have
    \begin{multline}\label{eq:descendants_close}
        \mu_l(\set{x\in\Xcal:g(x)=f_{l}(x)=1,z_l\preceq x})\\
        \geq \mu_l(B_{v_l}\cap \set{x\in\Xcal:f_l(x)=1,z_l\preceq x}) - \mu_l(\set{x\in B_{v_l}:g(x)\neq f_{l}(x)}) \geq \frac{\epsilon}{2}-\delta -\frac{\epsilon}{3}=\frac{\epsilon}{6}-\delta,
    \end{multline}
    where in the last inequality we used \cref{eq:distinct_enough} and the definition of shattered trees in \cref{def:subregion_oblivious}. Next, recalling that $\{x:f_l(x)=1\}$ is totally ordered by $\preceq$, let $x_l\in\Xcal$ such that $z_l\preceq x_l$, $f_l(x_l)=1$, and
    \begin{equation*}
        \mu_l(\set{x\in\Xcal: z_l\preceq x\prec x_l})< \frac{\epsilon}{6}-\delta-\delta_l \leq  \mu_l(\set{x\in\Xcal: z_l\preceq x\preceq x_l}) ,
    \end{equation*}
    where $\delta_l\in(0,\delta]$.
    Then, \cref{eq:descendants_close} shows that for any $g\in\{f_s,g_s,s> l\}$ there exists $x\in\Xcal$ with $x_l\preceq x$ and $g(x)=f_l(x)=1$. Since $g,f_l\in\Fcal$ this implies that for all $x'\preceq x$, we have $g(x')=f_l(x')=1$. In particular, we showed that
    \begin{equation*}
        \forall l\in[d_1-1], \forall z_l\preceq x \preceq x_l:\quad f_{l+1}(x)=g_{l+1}(x)=f_l(x)=1.
    \end{equation*}
    Now since $g_{l+1}(z_{l+1})=0$ while $f_{l+1}(z_{l+1})=1$, this implies $x_l\prec z_{l+1}$. In summary, we constructed a sequence $z_1\preceq x_1\prec z_2\preceq x_2\prec \ldots \prec z_{d_1}\preceq x_{d_1}$ such that
    \begin{equation*}
        \sup_{\mu\in\Ucal}\mu(\set{x:z_l\preceq x\preceq x_l}) \geq \mu_l(\set{x:z_l\preceq x\preceq x_l}) \geq \frac{\epsilon}{6}-2\delta, \quad l\in[d_1].
    \end{equation*}
    In the first inequality, we used the fact that since $\mu_l\in\bar\Ucal$ is a mixture of distributions on $\Ucal$, there must exist $\tilde\mu_l$ that puts at least as much as mass on the desired region.
    Taking $\delta$ sufficiently small ends the proof that $\Tdim(\epsilon/8;\preceq,\Ucal)\geq d_1$.
\end{proof}

Using \cref{prop:oblivious_VC_1} and \cref{thm:oblivious_adversaries} directly implies \cref{prop:VC_1_oblivious_statement}. Further, we can derive quantitative bounds on the minimax oblivious regret in terms of $\Tdim(\epsilon;\preceq,\Ucal)$ by combining \cref{prop:oblivious_VC_1,thm:combined_oblivious_non_asymptotic}.

\paragraph{Adaptive Adversaries.}
We next turn to the case of adaptive adversaries and prove \cref{thm:characterization_VC1}. To do so, we compare $k(\epsilon)$ and $\tilde k(\epsilon)$ for $\epsilon>0$

\begin{proposition}\label{prop:compare_k_ilde_k}
    Let $\Fcal$ be a function class with VC dimension $1$ and let $\Ucal$ be a distribution class on $\Xcal$. Then, for any $\epsilon>0$, $\tilde k(\epsilon)\in\{k(\epsilon)-1,k(\epsilon)\}$.
\end{proposition}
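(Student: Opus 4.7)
My plan is to prove the proposition via two inequalities, $\tilde k(\epsilon) \geq k(\epsilon) - 1$ and $\tilde k(\epsilon) \leq k(\epsilon)$, both obtained by relating the general sets $\Lcal_j(\epsilon)$ on datasets to the compressed sets $\widetilde\Lcal_j(\epsilon)$ on single points of $\Xcal$ via the map $D \mapsto x(D)$ sending any realizable dataset with a positive to its $\preceq$-maximal positive example. A preliminary step that I will use throughout is the anti-monotonicity of $\Lcal_j$ in the dataset---if $D \subseteq D'$ are both realizable and $D' \in \Lcal_j(\epsilon)$ then also $D \in \Lcal_j(\epsilon)$---which follows by a short induction on $j$ from the base case $j=0$ and implies that $\Lcal_j(\epsilon) \neq \emptyset$ iff $\emptyset \in \Lcal_j(\epsilon)$.

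For the lower bound $\tilde k(\epsilon) \geq k(\epsilon) - 1$, I would first prove by induction on $j$ the \emph{forward compression} statement: for any realizable $D$ with $x(D) \in \Xcal$ well-defined, $D \in \Lcal_j(\epsilon)$ implies $x(D) \in \widetilde\Lcal_j(\epsilon)$. The inductive step uses the tree structure of VC-$1$ classes: for every $y \in B_j(D; \epsilon)$, realizability of both extensions $D \cup \{(y, 0)\}$ and $D \cup \{(y, 1)\}$ together force $x(D) \prec y$ (the positive extension requires $y$ to lie on a common chain with $x(D)$, while the negative extension requires $y \not\preceq x(D)$), and $D \cup \{(y, 1)\}$ compresses to $y$, so the inductive hypothesis at level $j - 1$ yields $y \in \widetilde\Lcal_{j-1}(\epsilon)$. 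Thus $B_j(D; \epsilon) \subseteq \{y : x(D) \prec y\} \cap \widetilde\Lcal_{j-1}(\epsilon)$, and the defining mass condition of $\Lcal_j$ gives $x(D) \in \widetilde\Lcal_j(\epsilon)$. Applying this with $D = \{(y_\star, 1)\}$ for any $y_\star$ in the non-empty critical region $B_{k(\epsilon)}(\emptyset; \epsilon)$ (non-empty because $\emptyset \in \Lcal_{k(\epsilon)}(\epsilon)$ by anti-monotonicity and $\epsilon > 0$), we have $D \in \Lcal_{k(\epsilon) - 1}(\epsilon)$ and $x(D) = y_\star$, yielding $y_\star \in \widetilde\Lcal_{k(\epsilon) - 1}(\epsilon)$.

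For the upper bound $\tilde k(\epsilon) \leq k(\epsilon)$, I would prove the symmetric \emph{reverse compression} statement by induction on $j$: if $y \in \widetilde\Lcal_j(\epsilon)$ then $\{(y, 1)\} \in \Lcal_j(\epsilon)$. Combined with anti-monotonicity, this directly gives $\emptyset \in \Lcal_j(\epsilon)$ whenever $\widetilde\Lcal_j(\epsilon) \neq \emptyset$, hence $k(\epsilon) \geq \tilde k(\epsilon)$. The inductive step analyzes the critical region $B_j(\{(y, 1)\}; \epsilon)$: for descendants $z \succ y$ lying in $\widetilde\Lcal_{j-1}(\epsilon)$, the positive extension $\{(y, 1), (z, 1)\}$ has the same version space as $\{(z, 1)\}$ (since $y \prec z$ forces any consistent function to label $y$ positively), so the inductive hypothesis applied at $z$ places it in $\Lcal_{j-1}(\epsilon)$.

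The hard part will be the negative-extension case, namely showing $\{(y, 1), (z, 0)\} \in \Lcal_{j-1}(\epsilon)$ for the same $z$. The addition of $(z, 0)$ prunes the subtree of $z$ from any admissible threshold and shrinks the version space to functions with $y \preceq x_f$ and $z \not\preceq x_f$, thereby restricting the adversary's future positive growth. I would address this by inspecting $B_{j-1}(\{(y, 1), (z, 0)\}; \epsilon)$ explicitly---it decomposes into the points $w \preceq y$ (on which the added negative imposes no new realizability constraint) together with descendants of $y$ outside the subtree of $z$---and verifying that the mass condition witnessing $y \in \widetilde\Lcal_j(\epsilon)$ still supplies a distribution in $\Ucal$ placing at least $\epsilon$-mass on this region, by subtracting the contribution from the subtree of $z$ and exploiting the monotonicity $\widetilde\Lcal_j(\epsilon) \subseteq \widetilde\Lcal_{j-1}(\epsilon)$. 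Combining the positive and negative extension analyses will complete the reverse compression and hence the proposition.
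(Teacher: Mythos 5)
Your first half is sound: the forward compression $D\in\Lcal_j(\epsilon)\Rightarrow x(D)\in\widetilde\Lcal_j(\epsilon)$ only needs the inclusion $B_j(D;\epsilon)\subseteq\{y:x(D)\prec y\}\cap\widetilde\Lcal_{j-1}(\epsilon)$, which does hold, and applying it to a point of the non-empty region $B_{k(\epsilon)}(\emptyset;\epsilon)$ correctly yields $\tilde k(\epsilon)\geq k(\epsilon)-1$. This matches the corresponding half of the paper's argument.

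The step you flag as ``the hard part'' of the upper bound is, however, not merely hard: it is false, and no mass-accounting on the complement of the subtree of $z$ can rescue it. Take $\Xcal=\{1,\dots,n\}$ with the usual total order, $\Fcal$ the $n+1$ thresholds $\1[\cdot\leq i]$ for $i=0,\dots,n$ (a pruned VC-$1$ class), and $\Ucal$ the set of all Dirac measures. For $y=1$ and its immediate successor $z=2$, the dataset $\{(y,1),(z,0)\}$ has a singleton version space, so $B_1(\{(y,1),(z,0)\};\epsilon)=\emptyset$ and $\{(y,1),(z,0)\}\notin\Lcal_1(\epsilon)$, even though $z\in\widetilde\Lcal_{j-1}(\epsilon)$ for every $j-1\leq n-2$; your reverse-compression induction therefore already breaks at $j=2$. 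Worse, the inequality $\tilde k(\epsilon)\leq k(\epsilon)$ you are trying to prove fails outright in this example: one computes $\widetilde\Lcal_j(\epsilon)=\{1,\dots,n-j\}$, so $\tilde k(\epsilon)=n-1$, whereas with Dirac measures the $\Lcal_k$ recursion reduces to the Littlestone recursion and $k(\epsilon)=\lfloor\log_2(n+1)\rfloor$. The root cause is that the recursion defining $\widetilde\Lcal_k(\epsilon)$ only tracks the branch in which each new sample is labelled $1$; it never charges the adversary for keeping the $0$-labelled branch alive, which is exactly what forces the halving (logarithmic) structure of the $\Lcal_k$ game. For what it is worth, the paper's own proof commits the same error, in the step asserting that $D\cup\{(x,1)\}$ is realizable for every $x\succ x(D)$ (false once $D$ contains a negative example above $x(D)$ and below $x$), so the discrepancy lies in the statement itself and not only in your route to it.
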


\begin{proof}
    Fix $\epsilon>0$. It suffices to relate the sets $\Lcal_k(\epsilon)$ and $\tilde\Lcal_k(\epsilon)$ for $k\geq 0$. For convenience, we write $\Dcal_1:=\set{D\subseteq\Xcal\times\{0,1\}:|D|<\infty\text{ and }\exists x\in\Xcal,(x,1)\in D}$, the finite datasets that have at least one sample labeled $1$. Next, for $D\in\Dcal_1$, we denote $x(D):=\max_{\preceq} \set{x\in\Xcal:(x,1)\in D}$ the point that can be used to compress $D$.
    We prove by induction that for any $k\geq 0$,
    \begin{equation}\label{eq:induction_hypothesis_L}
        \Lcal_k(\epsilon)\cap\Dcal_1 = 
        \{D\in\Dcal_1: x(D) \in\tilde\Lcal_k(\epsilon)\}\cap\Lcal_0(\epsilon).
    \end{equation}

    This is immediate for $k=0$ since $\widetilde\Lcal_0(\epsilon)=\Xcal$. We now suppose that \cref{eq:induction_hypothesis_L} holds for $k-1$ for some $k\geq 1$. Then, recalling that $\Lcal_k(\epsilon)\subseteq\Lcal_0(\epsilon)$ (see \cref{lemma:decreasing_level_sets}) we focus on a dataset $D\in\Dcal_1\cap\Lcal_0(\epsilon)$. In particular, for any $(x,y)\in D$ we have $y=\1[x\preceq x(D)]$. Then,
    \begin{align*}
        B_k(D;\epsilon) &= \set{x\in\Xcal: D\cup\{(x,0)\}, D\cup\{(x,1)\}\in\Lcal_{k-1}(\epsilon)}\\
        &= \set{x\in\Xcal: D\cup\{(x,0)\}, D\cup\{(x,1)\}\in\Lcal_{k-1}(\epsilon)\cap\Dcal_1 \text{ and } x(D)\prec x}\\
        &\overset{(i)}{=}\set{x\in\Xcal: x(D)\prec x }\cap\set{x\in\Xcal: D\cup\{(x,0)\},D\cup\{(x,1)\} \in\Lcal_0(\epsilon) } \\
        &\qquad\qquad\cap \{x\in\Xcal,x(D),x\in\widetilde \Lcal_{k-1}(\epsilon) \}\\
        &\overset{(ii)}{=}\set{x\in\Xcal: x(D)\prec x }\cap\set{x\in\Xcal: D\cup\{(x,0)\},D\cup\{(x,1)\} \in\Lcal_0(\epsilon) } \cap\widetilde \Lcal_{k-1}(\epsilon)\\
        &\overset{(iii)}{=}\set{x\in\Xcal: x(D)\prec x }  \cap\widetilde \Lcal_{k-1}(\epsilon)
    \end{align*}
    In $(i)$ we used \cref{eq:induction_hypothesis_L}. In $(ii)$ we used the fact that for any if $x(D)\prec x$ and $x\in\widetilde \Lcal_{k-1}(\epsilon)$, then we also have $x(D)\in\widetilde\Lcal_{k-1}(\epsilon)$ by construction of the sets $\widetilde\Lcal_{k'}(\epsilon)$ for $k'\geq 0$. In $(iii)$ we used the following arguments. If $x(D)\prec x$ then $x(D)\neq x$ and by assumption there exists a function $f\in\Fcal$ such that $f(x)\neq f(x(D))$. Since $x(D)\prec x$ this implies $f(x(D))=1$ and $f(x)=0$. Because $D$ itself was realizable this implies that $f$ realizes $D\cup\{(x,0)\}$. Next, by assumption there exists $f\in\Fcal$ such that $f(x)=1$, which we can check then realizes $D\cup\{(x,1)\}$. In summary, we showed that for any $D\in\Dcal_1\cap\Lcal_0(\epsilon)$ one has
    \begin{equation*}
        D\in \Lcal_k(\epsilon) \Longleftrightarrow \sup_{\mu\in\Ucal} \paren{\set{x\in\Xcal: x(D)\prec x }\cap\ \widetilde \Lcal_{k-1}(\epsilon)} \geq \epsilon \Longleftrightarrow x(D)\in \widetilde\Lcal_k(\epsilon).
    \end{equation*}
    This proves \cref{eq:induction_hypothesis_L}.
    
    Then, for any $k\geq 1$, we obtain
    \begin{align*}
        \Lcal_k(\epsilon)\neq\emptyset \overset{(i)}{\Longleftrightarrow}\emptyset \in \Lcal_k(\epsilon) &\Longleftrightarrow  \sup_{\mu\in\Ucal}\mu\paren{ \{x\in\Xcal: \{(x,0)\},\{(x,1)\}\in\Lcal_{k-1}(\epsilon)\}} \geq\epsilon\\
        &\overset{(ii)}{\Longleftrightarrow}  \sup_{\mu\in\Ucal}\mu\paren{ \{x\in\Xcal: \{(x,0)\}\in\Lcal_{k-1}(\epsilon) \}\cap\widetilde\Lcal_{k-1}(\epsilon)} \geq \epsilon
    \end{align*}
    In $(i)$ we used the fact that by construction of the sets $\Lcal_k(\epsilon)$, if $D\in\Lcal_k(\epsilon)$, any $D'\subseteq D$ also satisfies $D\in\Lcal_k(\epsilon)$. In $(ii)$ we used \cref{eq:induction_hypothesis_L}. In particular, if $\widetilde\Lcal_{k-1}(\epsilon)=\emptyset$ then $\Lcal_{k}(\epsilon)=\emptyset$, which shows that $k(\epsilon) \leq \tilde k(\epsilon) + 1$.

    On the other hand, if $\widetilde\Lcal_{k}(\epsilon)\neq \emptyset$, then $\{(x,1)\}\in\Lcal_{k}(\epsilon)$ for any $x\in\widetilde\Lcal_k(\epsilon)$ from \cref{eq:induction_hypothesis_L}. As a result, we have $\Lcal_k(\epsilon)\neq\emptyset$.  
    This shows that $\tilde k(\epsilon) \leq k(\epsilon)$. This ends the proof.
\end{proof}
Combining \cref{prop:compare_k_ilde_k} with the general characterizations \cref{thm:qualitative_charact,thm:quantitative_charact} proves \cref{thm:characterization_VC1}.

\paragraph{Optimistic Learning.}
We now show that for any function class $\Fcal$ with VC dimension 1, optimistic learning is possible in all considered settings with the same algorithm.
We start by constructing this algorithm then show that it learns under any distribution class for which $(\Fcal,\Ucal)$ is learnable, as characterized in \cref{thm:oblivious_adversaries,thm:qualitative_charact} (faster rates are possible if one considers the realizable setting, but this will be sufficient to show that the algorithm optimistically learns in both realizable and agnostic settings).

The final algorithm will perform the Hedge algorithm over a set of experts, which we now define. Let $S\subseteq [T]$ be a subset of times. We define an expert $E(S)$ which uses these times to update its belief of the current version space. Precisely, we write $S=\{t_1<\ldots<t_k\}$ where $k=|S|$ and define $t_{k+1}:=T+1$ for convenience. At all times $t<t_1$ the expert predicts $\hat y_t(S)=0$ and for any period $l\in[k]$, for $t\in[t_l,t_{l+1})$ the algorithm predicts $\hat y_t(S) = \1[x_t\preceq x_{t_l}]$.
This expert is summarized in \cref{alg:expert_VC_1_optimistic}.

\begin{algorithm}[t]

    \caption{Expert $E(S)$ for optimistically learning function classes of VC dimension 1}\label{alg:expert_VC_1_optimistic}
    
    \LinesNumbered
    \everypar={\nl}
    
    \hrule height\algoheightrule\kern3pt\relax
    \KwIn{Function class $\Fcal$, horizon $T$, set of times $S=\{t_1<\ldots<t_l\}\subseteq [T]$, $t_{k+1}:=T+1$}
    
    \vspace{3mm}

    \lFor{$t\in [t_1-1]$}{Predict $\hat y_t(S)=0$}
    \For{$l\in[k]$}{
        \lFor{$t\in[t_l,t_{l+1})$}{Predict $\hat y_t(S):=\1[x_t\preceq x_{t_l}]$}
    }

    \hrule height\algoheightrule\kern3pt\relax
\end{algorithm}

The final algorithm which we denote by $\Lcal_T^{\text{optimistic}}$ then performs the classical Hedge algorithm detailed in \cref{alg:oblivious_agnostic} on the set of experts $\Scal_E(K_T):=\set{E(S), S\subseteq [T],|S|\leq K_T}$, where $K_T\geq 1$ is a parameter which we choose to be for example $K_T=\sqrt T$ for $T\geq 1$. As can be seen from the proof, in fact, any choice of sequence with $\lim_{T\to\infty} K_T=\infty$ while $\lim_{T\to\infty} \frac{K_T\log T}{T}=0$ would work. 
We prove the following adaptive regret bound on this algorithm.

\begin{theorem}\label{thm:regret_optimistic_algo}
    Let $\Fcal$ be a function class with VC dimension 1 and let $\Ucal$ be a distribution class on $\Xcal$ such that $(\Fcal,\Xcal)$ is learnable for adaptive adversaries (equivalently for the realizable or agnostic settings). Then, for any $T\geq 2$ and adaptive adversary $\Acal$,
    \begin{equation*}
        \AdaptReg_T(\Lcal_T^{\text{optimistic}};\Acal) \lesssim \inf_{\epsilon>0: \tilde k(\epsilon)\leq K_T-1}\set{\epsilon T + \sqrt{(\tilde k(\epsilon)+1) K_T
        \cdot T\log T}}.
    \end{equation*}
\end{theorem}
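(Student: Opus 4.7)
The plan is to adapt the agnostic analysis of \cref{lemma:adaptive_regret_agnostic} to the simpler VC-dimension-1 compression structure of \cref{thm:characterization_VC1}. Since $|\Scal_E(K_T)|\leq \sum_{j=0}^{K_T}\binom{T}{j}\leq T^{K_T+1}$, applying \cref{thm:regret_hedge} yields that, on an event $\Ecal$ of probability at least $1-1/T$,
\begin{equation*}
    \sum_{t=1}^T\1[\hat y_t\neq y_t] \;\leq\; \min_{S\in\Scal_E(K_T)}\sum_{t=1}^T\1[\hat y_t(S)\neq y_t] + c_1\sqrt{TK_T\log T}
\end{equation*}
for a universal constant $c_1$. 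Fix any $\epsilon>0$ with $\tilde k(\epsilon)\leq K_T-1$, and let $f^\star\in\Fcal$ satisfy $\sum_t\1[f^\star(x_t)\neq y_t]\leq \inf_f \sum_t\1[f(x_t)\neq y_t]+1$. Recall from \cref{subsec:VC1_classes} that we may assume every $f\in\Fcal$ is an initial segment of $\preceq$; extend the order to $\Xcal\cup\{x_\emptyset\}$ by putting $x_\emptyset\prec x$ for every $x\in\Xcal$, and extend each $\widetilde\Lcal_k(\epsilon)$ by declaring $x_\emptyset\in\widetilde\Lcal_k(\epsilon)$ iff $\widetilde\Lcal_k(\epsilon)\neq\emptyset$.

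I build a reference set $S^\star$ online. Initialize $x^\star_1:=x_\emptyset$ and $S^\star:=\emptyset$. At time $t$, set $k^\star_t:=\sup\set{k:x^\star_t\in\widetilde\Lcal_k(\epsilon)}$, and add $t$ to $S^\star$, updating $x^\star_{t+1}:=x_t$, iff the three conditions (i) $f^\star(x_t)=1$, (ii) $x^\star_t\prec x_t$, and (iii) $x_t\notin \widetilde\Lcal_{k^\star_t}(\epsilon)$ all hold; otherwise keep $x^\star_{t+1}:=x^\star_t$. Condition (iii) forces $k^\star_{t+1}<k^\star_t$ on each update, while $k^\star_t\geq 0$ always (as $\widetilde\Lcal_0(\epsilon)=\Xcal$), so there are at most $k^\star_1+1=\tilde k(\epsilon)+1\leq K_T$ updates, i.e., $S^\star\in\Scal_E(K_T)$.

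Next, I bound the disagreement of $E(S^\star)$ with $f^\star$. Whenever $x^\star_t\neq x_\emptyset$, we have $f^\star(x^\star_t)=1$ by condition (i) at the time of its addition, so the totally ordered initial segment $\{y:f^\star(y)=1\}$ contains both $x^\star_t$ and any $x_t$ with $f^\star(x_t)=1$, which makes them $\preceq$-comparable. A short case analysis then shows that $\hat y_t(S^\star)=\1[x_t\preceq x^\star_t]$ disagrees with $f^\star(x_t)$ only when $f^\star(x_t)=1$ and $x^\star_t\prec x_t$. Splitting these mismatch times into ``(iii) holds'' (update times, at most $\tilde k(\epsilon)+1$) and ``(iii) fails'', I get
\begin{equation*}
    \sum_{t=1}^T\1[\hat y_t(S^\star)\neq f^\star(x_t)]\;\leq\;(\tilde k(\epsilon)+1) + \sum_{t=1}^T \1\sqb{x_t\in \set{y:x^\star_t\prec y}\cap \widetilde\Lcal_{k^\star_t}(\epsilon)}.
\end{equation*}

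The main obstacle is that $(x^\star_t,k^\star_t)$ depend on $f^\star$ and hence on the entire sample, so the remainder sum is not directly adapted. I sidestep this by union-bounding over $S\in\Scal_E(K_T)$: for each \emph{fixed} such $S$, the analogues $(x^\star_t(S),k^\star_t(S))$ are deterministic functions of $S$ and $(x_s)_{s<t}$. By maximality of $k^\star_t(S)$ we have $x^\star_t(S)\notin\widetilde\Lcal_{k^\star_t(S)+1}(\epsilon)$, hence $\sup_{\mu\in\Ucal}\mu(\set{y:x^\star_t(S)\prec y}\cap\widetilde\Lcal_{k^\star_t(S)}(\epsilon))<\epsilon$, and in particular the conditional $\mu_t$-mass of that set given the history is $<\epsilon$. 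A martingale Bernstein inequality for each fixed $S$, together with a union bound over the $\leq T^{K_T+1}$ experts, shows that on an event $\Fcal$ of probability at least $1-1/T$, $\sup_{S\in\Scal_E(K_T)}\sum_t\1[x_t\in\set{y:x^\star_t(S)\prec y}\cap\widetilde\Lcal_{k^\star_t(S)}(\epsilon)]\leq 2\epsilon T + c_2 K_T\log T$. Instantiating this at $S=S^\star$ and combining with the Hedge bound and the near-optimality of $f^\star$ gives $\AdaptReg_T(\Lcal_T^{\text{optimistic}};\Acal)\lesssim \epsilon T+\sqrt{TK_T\log T}+\tilde k(\epsilon)+K_T\log T$, which is $\lesssim \epsilon T+\sqrt{(\tilde k(\epsilon)+1)K_T T\log T}$ since $\tilde k(\epsilon)+1\geq 1$ and $K_T\leq T$.
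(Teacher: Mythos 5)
Your proof follows the same architecture as the paper's: Hedge over the expert class $\Scal_E(K_T)$ with $|\Scal_E(K_T)|\leq T^{K_T+1}$, a comparator set $S^\star$ built from the empirical minimizer $f^\star$, and a union bound over all $S\in\Scal_E(K_T)$ so that the concentration step applies to predictable quantities even though $S^\star$ itself depends on the whole sample. However, there is a genuine gap in how you initialize the comparator. You trigger the \emph{first} update only when, in addition to $f^\star(x_t)=1$, the test $x_t\notin\widetilde\Lcal_{k_1^\star}(\epsilon)=\widetilde\Lcal_{\tilde k(\epsilon)}(\epsilon)$ holds, and you charge the disagreements where this test fails to the sum $\sum_t\1\bigl[x_t\in\{y:x_\emptyset\prec y\}\cap\widetilde\Lcal_{\tilde k(\epsilon)}(\epsilon)\bigr]=\sum_t\1[x_t\in\widetilde\Lcal_{\tilde k(\epsilon)}(\epsilon)]$, asserting its conditional increments are $<\epsilon$ because $x_\emptyset\notin\widetilde\Lcal_{\tilde k(\epsilon)+1}(\epsilon)$. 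But the recursion defining $\widetilde\Lcal_{k+1}(\epsilon)$ quantifies only over $x\in\Xcal$: emptiness of $\widetilde\Lcal_{\tilde k(\epsilon)+1}(\epsilon)$ says that for every $x\in\Xcal$ the mass of the \emph{strict descendants} of $x$ inside $\widetilde\Lcal_{\tilde k(\epsilon)}(\epsilon)$ is $<\epsilon$; since the tree order need not have a minimum element, this does not bound $\sup_{\mu\in\Ucal}\mu(\widetilde\Lcal_{\tilde k(\epsilon)}(\epsilon))$ itself, and your ad hoc membership convention for $x_\emptyset$ does not supply the missing inequality. Concretely, take $\Xcal=\{r_1,\dots,r_N\}$ an antichain, $\Fcal=\{\1_{\{r_i\}}\}_{i\in[N]}\cup\{0\}$ (VC dimension $1$, trivial tree order), and $\Ucal$ the uniform distribution: then $\widetilde\Lcal_1(\epsilon)=\emptyset$, so $\tilde k(\epsilon)=0$ and $\widetilde\Lcal_{\tilde k(\epsilon)}(\epsilon)=\Xcal$ carries full mass; your rule produces $S^\star=\emptyset$ and the remainder sum equals $T$ deterministically, so the claimed high-probability bound $2\epsilon T+c_2K_T\log T$ on it is false.

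The fix is exactly the paper's device: make the first update unconditional, $t_1^\star:=\min\{t:f^\star(x_t)=1\}$, at the cost of one extra element (still $|S^\star|\leq\tilde k(\epsilon)+1\leq K_T$). Before $t_1^\star$ the expert of \cref{alg:expert_VC_1_optimistic} and $f^\star$ both predict $0$, so those rounds contribute nothing and need no concentration; from $t_1^\star$ onward every reference point lies in $\Xcal$ (indeed in $\{f^\star=1\}$), where the implication $x^\star\notin\widetilde\Lcal_{k^\star+1}(\epsilon)\Rightarrow\sup_{\mu\in\Ucal}\mu(\{y:x^\star\prec y\}\cap\widetilde\Lcal_{k^\star}(\epsilon))<\epsilon$ is licensed by the definition. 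With that repair the rest of your argument goes through, including your single global Bernstein bound over the union of experts, which replaces the paper's per-epoch Azuma--Hoeffding plus Cauchy--Schwarz and yields an additive $K_T\log T$ term that is dominated by $\sqrt{(\tilde k(\epsilon)+1)K_T\,T\log T}$ for $K_T=\sqrt{T}$.
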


\begin{proof}
    We fix a learnable distribution class $\Ucal$ for $\Fcal$ on $\Xcal$. From \cref{thm:characterization_VC1}, we have $\tilde k(\epsilon)<\infty$ for all $\epsilon>0$. We now fix $\epsilon>0$ such that $K_T\geq \tilde k(\epsilon)+1$.
    Since $|\Scal_E(K_T)|\leq T^{K_T+1}$, \cref{thm:regret_hedge} implies that
    \begin{equation}\label{eq:event_E_optimistic}
        \Ecal:=\set{ \sum_{t=1}^T \1[\hat y_t\neq y_t] -\min_{S\subseteq [T],|S|\leq k} \sum_{t=1}^T \1[\hat y_t(S)\neq y_t] \leq  c_0\sqrt{K_T\cdot T\log T}}
    \end{equation}
    has probability at least $1-1/T$ for some universal constant $c_0>0$. Next, we introduce the following notation:
    \begin{equation*}
        \tilde k(x;\epsilon):=\sup\{k\geq 0:x\in\widetilde\Lcal_k(\epsilon)\},\quad x\in\Xcal.
    \end{equation*}
    We next define
    \begin{equation*}
        B_l(S):= \{x'\in\Xcal:x_{t_l}\prec x'\} \cap \widetilde\Lcal_{\tilde k(x_{t_l};\epsilon)}.
    \end{equation*}
    By construction we have $\widetilde\Lcal_{\tilde k(x_{t_l};\epsilon)+1}=\emptyset$, which implies $\mu(B_l(S)) < \epsilon$ for all $\mu\in\Ucal$.
    Then, Azuma-Hoeffding's inequality together with the union bound shows that writing $S=\{t_1<\ldots<t_k\}$ and defining $t_{k+1}:=T+1$,
    \begin{equation*}
        \Fcal(S;\epsilon,\delta):=\bigcap_{l\in[k]}\set{ \sum_{t_l<t<t_{l+1}}\1[x_t\in B_l(S)] \leq \epsilon(t_{l+1}-t_l) + 2\sqrt{(t_{l+1}-t_l)\log\frac{T}{\delta}}}
    \end{equation*}
    has probability at least $1-\delta$ since $k<T$. Hence, taking the union bound over all such sets $S$, the event $\Fcal(\epsilon):=\bigcap_{S\subseteq [T],|S|\leq K_T} \Fcal(S;\epsilon,T^{-K_T-2})$ has probability at least $1-1/T$.

    We now reason conditionally on the complete sequence $(x_t,y_t)_{t\in[T]}$. Because it contains only a finite number of samples, we can fix $f^\star\in\Fcal$ such that
    \begin{equation*}
        \sum_{t=1}^T \1[f^\star(x_t)\neq y_t] = \inf_{f\in\Fcal} \sum_{t=1}^T \1[f(x_t)\neq y_t].
    \end{equation*}

    We next construct a set of times $S^\star\subseteq[T]$ iteratively as follows. First, if for all $t\in[T]$, $f^\star(x_t)=0$ we simply pose $S^\star=\emptyset$. We assume this is not the case from now and let $t_1^\star=\min\{t\in [T]: f^\star(x_t)=1\}$. Suppose we already constructed $t_1^\star,t_2^\star,\ldots,t_l^\star$ for $l\in[\tilde k(\epsilon)+1]$. We then denote
    \begin{equation*}
        B_l^\star:=\{x\in\Xcal: x_{t_l^\star}\prec x\} \cap \widetilde\Lcal_{\tilde k(\epsilon)-l+1}.
    \end{equation*}
    If for all $t\in\{t_l^\star+1,\ldots,T\}$ we have $x_{t_l^\star}\nprec x_t$, or $x_t\in B_l^\star$, or $f^\star(x_t)=0$, then we end the recursive construction and let $S^\star=\{t_1^\star,\ldots,t_l^\star\}$. Otherwise, we pose 
    \begin{equation*}
        t_{l+1}^\star = \min\set{t\in[T]: t>t_l^\star,x_{t_l^\star}\prec x_t, x_t\notin B_l^\star,f^\star(x_t)=1},
    \end{equation*}
    and continue the recursion. This ends the construction of $S^\star=\{t_1^\star,\ldots,t_k^\star\}$. Note that by construction, we have $|S|\leq \tilde k(\epsilon)+1$. Indeed, suppose that for $l=\tilde k(\epsilon)+1$, $t_l^\star$ was constructed, then because $\widetilde\Lcal_0(\epsilon)=\Xcal$, we have $B_l^\star=\{x\in\Xcal: x_{t_l^\star}\prec x\}$ and hence any $t\in\{t_l^\star+1,\ldots,T\}$ either satisfies $x_t\in B_l^\star$ or $x_{t_l^\star}\nprec x_t$, and hence the procedure stops at that iteration. 

    In particular, letting $t_{k+1}^\star:=T+1$, since $|S^\star|\leq \tilde k(\epsilon)+1\leq K_T$, under $\Ecal$ we obtained
    \begin{multline}\label{eq:regret_bound_1a}
        \sum_{t=1}^T\1[\hat y_t\neq y_t] - \inf_{f\in\Fcal}\sum_{t=1}^T \1[f(x_t)\neq y_t] \leq \sum_{t=1}^T\1[\hat y_t(S^\star) \neq y_t] - \sum_{t=1}^T \1[f^\star(x_t)\neq y_t] + c_0\sqrt{K_T\cdot T\log T}\\
        \overset{(i)}{\leq}\sum_{l=1}^{k} \sum_{t_l^\star<t<t_{l+1}^\star} \1[\hat y_t(S^\star) \neq f^\star(x_t)] + c_0\sqrt{K_T\cdot T\log T},
    \end{multline}
    In $(i)$ we used the fact that by construction, the expert $E(S^\star)$ and $f^\star$ agree on all times $t<t_1^\star$ and $t\in S^\star$. We now focus on an epoch $l\in[k]$. For any $t\in(t_l^\star,t_{l+1}^\star)$, the expert $E(S^\star)$ uses the prediction function $\1[\cdot\preceq x_{t_l^\star}]$. Note that by construction, $f^\star$ also has value $1$ on $x_{t_l^\star}$. Hence, both functions agree at time $t\in(t_l^\star,t_{l+1}^\star)$ if $x_{t_l^\star}\nprec x_t$. Suppose that we now have $x_{t_l^\star}\prec x_t$, then the prediction of $E(S^\star)$ at that time is 0. In summary, using the construction of the time $t_{l+1}^\star$, we have
    \begin{equation}\label{eq:regret_bound_1b}
        \sum_{t_l^\star<t<t_{l+1}^\star} \1[\hat y_t(S^\star) \neq f^\star(x_t)] =  \sum_{t_l^\star<t<t_{l+1}^\star} \1[f^\star(x_t)=1]\1[x_{t_l^\star}\prec x_t] \leq \sum_{t_l^\star<t<t_{l+1}^\star} \1[x_t\in B_l^\star]. 
    \end{equation}

    To further the bound, we derive some further properties on $S^\star$. We prove that for all $l\in[k]$, $x_{t_l^\star}\notin \widetilde\Lcal_{\tilde k(\epsilon)-l+2}$. This is immediate for $l=1$ since by definition of $\tilde k(\epsilon)$, we have $\widetilde\Lcal_{\tilde k(\epsilon)+1}=\emptyset$. Next, for any $l\in\{2,\ldots,k\}$, by construction we have $x_{t_l^\star}\notin B_{l-1}^\star$ and $x_{t_{l-1}^\star}\prec x_{t_{l}^\star}$. Using the definition of $B_{l-1}^\star$ this precisely gives $x_{t_l^\star}\notin \widetilde \Lcal_{\tilde k(\epsilon)-l+2}$. In particular, for any $l\in[k]$, we have $\tilde k(x_{t_l^\star};\epsilon)\leq \tilde k(\epsilon)-l+1$. In summary,
    \begin{equation*}
        B_l^\star \subseteq B_l(S^\star)\quad l\in[k].
    \end{equation*}
    Therefore, combining \cref{eq:regret_bound_1a,eq:regret_bound_1b} on $\Ecal\cap\Fcal(\epsilon)$  we obtained
    \begin{align*}
        \sum_{t=1}^T\1[\hat y_t\neq y_t] - \inf_{f\in\Fcal}\sum_{t=1}^T \1[f(x_t)\neq y_t] &\leq \sum_{l=1}^k \sum_{t_l^\star<t<t_{l+1}^\star} \1[x_t\in B_l(S^\star)] +c_0\sqrt{K_T\cdot T\log T}\\
        &\leq \epsilon T + \sum_{l=1}^k 2\sqrt{(t_{l+1}^\star-t_l^\star)(K_T+3)\log T} + c_0\sqrt{K_T\cdot T\log T}\\
        &\leq \epsilon T + c_1\sqrt{(\tilde k(\epsilon)+1) K_T\cdot T\log T},
    \end{align*}
    for sime universal constant $c_1>0$. In the last inequality we used Jensen's inequality together with the fact that $k=|S^\star|\leq \tilde k(\epsilon)+1$. Taking the expectation and recalling that $\Ecal\cap\Fcal(\epsilon)$ has probability at least $1-2/T$, we obtained
    \begin{equation*}
        \AdaptReg_T(\Lcal_T^{\text{optimistic}};\Acal)\leq \inf_{\epsilon>0: \tilde k(\epsilon)\leq K_T-1}\set{\epsilon T + c_1\sqrt{(\tilde k(\epsilon)+1) K_T\cdot \log T}} + 2.
    \end{equation*}
    This ends the proof.
\end{proof}

In particular, since the regret bound from \cref{thm:regret_optimistic_algo} is independent of the adaptive adversary and decays to $0$ as $T\to\infty$, this implies that the constructed algorithm $\Lcal_T^{\text{optimistic}}$ is an optimistic learner for $\Fcal$. This proves \cref{thm:VC_1_optimistic_learning} for adaptive adversaries.

We next turn to the case of oblivious adversaries and show that the same algorithm $\Lcal_T^{\text{optimistic}}$ described in the previous paragraph is still an optimistic learner for oblivious adversaries.

\begin{theorem}\label{thm:regret_optimistic_algo_oblivious}
    Let $\Fcal$ be a function class with VC dimension 1 and let $\Ucal$ be a distribution class on $\Xcal$ such that $(\Fcal,\Xcal)$ is learnable for oblivious adversaries (equivalently for the realizable or agnostic settings). Then, for any $T\geq 2$ and oblivious adversary $\Acal$,
    \begin{equation*}
        \OblivReg_T(\Lcal_T^{\text{optimistic}};\Acal) \lesssim \inf_{\epsilon>0: \Tdim(\epsilon;\preceq;\Ucal)\leq K_T-1}\set{\epsilon T + \sqrt{( \Tdim(\epsilon;\preceq;\Ucal)+1) K_T\cdot T\log T}} .
    \end{equation*}
\end{theorem}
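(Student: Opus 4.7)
The plan is to mirror the proof of Theorem~\ref{thm:regret_optimistic_algo} in the adaptive setting, replacing the recursive family $\widetilde\Lcal_k$ with an oblivious analogue built from $\Tdim$. First, applying the Hedge regret bound (Theorem~\ref{thm:regret_hedge}) to $\Lcal_T^{\text{optimistic}}$---Hedge over $|\Scal_E(K_T)|\leq T^{K_T+1}$ experts---yields an event of probability at least $1-1/T$ on which the algorithm's cumulative loss exceeds $\min_{S\in\Scal_E(K_T)}\sum_t\1[\hat y_t(S)\neq y_t]$ by at most a constant times $\sqrt{K_T\,T\log T}$. Since the adversary is oblivious, I would then fix a near-optimal $f^\star\in\Fcal$ (up to an additive $1$) minimizing $\sum_t\Ebb[\1[f(x_t)\neq y_t]]$; by \cref{thm:representation_VC1} together with the pruning conventions of Section~\ref{subsec:VC1_classes}, $f^\star=\1[\cdot\preceq x^\star]$ for some $x^\star\in\Xcal$ (the case $\1[\cdot\prec x^\star]$ is symmetric).

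Augment $\Xcal$ with a sentinel $x_\emptyset$ preceding all of $\Xcal$, and for each $x$ let $\hat D(x;\epsilon)$ denote the largest $D\geq 0$ for which there exist $x\prec y_1\prec\cdots\prec y_D$ with $\sup_{\mu\in\Ucal}\mu(\{z:x\prec z\preceq y_1\})\geq\epsilon$ and $\sup_{\mu\in\Ucal}\mu(\{z:y_{l-1}\prec z\preceq y_l\})\geq\epsilon$ for $l\geq 2$. Then $\hat D(x_\emptyset;\epsilon)=\Tdim(\epsilon;\preceq,\Ucal)$ and $\hat D$ is non-increasing along $\preceq$. Setting $t_0^\star:=0$, $x_{t_0^\star}:=x_\emptyset$, and $B_l^\star:=\{y\succ x_{t_l^\star}:\hat D(y;\epsilon)\geq\hat D(x_{t_l^\star};\epsilon)\}$, I would recursively define $t_{l+1}^\star:=\min\{t>t_l^\star:f^\star(x_t)=1,\ x_{t_l^\star}\prec x_t,\ x_t\notin B_l^\star\}$ and terminate when the minimum is empty. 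Since each newly selected $x_{t_{l+1}^\star}\notin B_l^\star$ strictly decreases $\hat D$, we obtain $|S^\star|\leq\Tdim(\epsilon;\preceq,\Ucal)+1\leq K_T$.

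Two facts then drive the regret bound. (i) The expert $E(S^\star)$ disagrees with $f^\star$ at $t\in[t_l^\star,t_{l+1}^\star)$ only if $x_t\in B_l^\star\cap\{y\preceq x^\star\}$: since $f^\star=\1[\cdot\preceq x^\star]$ and $x_{t_l^\star}\preceq x^\star$, the chain $\{y\preceq x^\star\}$ is totally ordered and contains $x_{t_l^\star}$, so any disagreement forces $x_{t_l^\star}\prec x_t\preceq x^\star$, while $x_t\notin B_l^\star$ would have selected $t_{l+1}^\star$ at or before $t$. (ii) $\sup_{\mu\in\Ucal}\mu(B_l^\star\cap\{y\preceq x^\star\})<\epsilon$: on the totally ordered chain $\{y\preceq x^\star\}$, $B_l^\star$ restricts by monotonicity of $\hat D$ to a downward-closed interval $(x_{t_l^\star},y_{\max}]$, and if this interval carried $\sup_\mu$-mass at least $\epsilon$ then prepending $y_{\max}$ to a chain realizing $\hat D(y_{\max};\epsilon)\geq\hat D(x_{t_l^\star};\epsilon)$ would yield a chain of length $\hat D(x_{t_l^\star};\epsilon)+1$ starting from $x_{t_l^\star}$, contradicting the maximality in the definition of $\hat D(x_{t_l^\star};\epsilon)$.

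Combining (i)--(ii), I would apply Bernstein's inequality per interval $[t_l^\star,t_{l+1}^\star)$ to the indicators $\1[x_t\in B_l^\star\cap\{y\preceq x^\star\}]$, union-bound over the $\leq T^{K_T+1}$ admissible choices of $S$ (mirroring the argument in the proof of Theorem~\ref{thm:regret_optimistic_algo}), and aggregate across the at most $\Tdim(\epsilon;\preceq,\Ucal)+1$ intervals via Cauchy--Schwarz to bound the excess loss of $E(S^\star)$ over $f^\star$ by $\epsilon T+O(\sqrt{(\Tdim(\epsilon;\preceq,\Ucal)+1)\,K_T\,T\log T})$; adding the Hedge slack gives the stated regret. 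The main obstacle is (ii): because $\hat D$ is defined pointwise rather than recursively through $\sup_\mu\mu(\cdot)$ as $\widetilde\Lcal_k$ in the adaptive proof, the unrestricted set $B_l^\star$ may carry substantial mass on branches outside the chain to $x^\star$, and it is essential to restrict the contradiction argument to the totally ordered subset $B_l^\star\cap\{y\preceq x^\star\}$, where the prepending step is well-defined.
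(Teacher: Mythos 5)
Your proposal is correct in its overall structure and reaches the stated bound, but it takes a genuinely different route through the key combinatorial step. The paper's proof builds, \emph{once and in advance}, a static quantile partition of the chain $\{f^\star=1\}$ into $d_0+1\leq \Tdim(\epsilon;\preceq,\Ucal)+1$ intervals $I_1,\ldots,I_{d_0+1}$ each of sup-mass at most $2\epsilon$ (a greedy construction depending only on $f^\star$ and $\Ucal$), defines $S^\star$ as the first times the data enters each successive interval, and bounds the disagreements of $E(S^\star)$ with $f^\star$ by the visits to the current interval $I_{j_l}$. You instead import the potential-function mechanism from the adaptive proof of \cref{thm:regret_optimistic_algo}: a pointwise depth $\hat D(\cdot;\epsilon)$ playing the role of $\tilde k(\cdot;\epsilon)$, data-dependent critical regions $B_l^\star$, mistake times defined by strict potential drops, and a prepending/contradiction argument to show each $B_l^\star\cap(x_{t_l^\star},x^\star]$ has sup-mass at most $\epsilon$. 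Both give $|S^\star|\leq \Tdim(\epsilon;\preceq,\Ucal)+1$ and per-step conditional mass $O(\epsilon)$, and the concentration plus union-bound over $S$ and Cauchy--Schwarz aggregation are the same; your version has the aesthetic advantage of unifying the oblivious and adaptive analyses, while the paper's static partition avoids having to control a derived set.

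One step deserves care: in your claim (ii) you write the restriction of $B_l^\star$ to the chain as $(x_{t_l^\star},y_{\max}]$, presuming the downward-closed set has a maximum. If it does not, the prepending argument needs a limiting step: you must extract a single $y$ in the set with $\sup_{\mu}\mu(\{z:x_{t_l^\star}\prec z\preceq y\})\geq\epsilon$, and passing from $\mu(B)>\epsilon$ to such a $y$ via continuity from below along an uncountable chain is not automatic. This is patched the same way the paper handles the analogous no-minimum case in its own greedy construction---by conceding a constant factor (e.g.\ proving the mass bound at level $2\epsilon$, or defining $\hat D$ at tolerance $\epsilon/2$ while bounding masses at $\epsilon$)---so it is a fixable technical wrinkle rather than a flaw in the approach. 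With that adjustment, and noting (as you do) that for an arbitrary $S$ in the union bound the set $B_l(S)\cap\{y\preceq x^\star\}$ is empty unless $x_{t_l}\prec x^\star$, the argument goes through.
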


\begin{proof}
    We fix an oblivious adversary $\Acal$ and denote the learner strategy by $\Lcal$ for conciseness.
    The proof is similar to that of \cref{thm:regret_optimistic_algo}. The event $\Ecal$ as defined in \cref{eq:event_E_optimistic} still has probability at least $1-1/T$. Next, we fix $f^\star\in\Fcal$ such that
    \begin{equation*}
        \Ebb_{\Lcal,\Acal}\sqb{\sum_{t=1}^T \1[f^\star(x_t)\neq y_t]} \leq \inf_{f\in\Fcal} \Ebb_{\Lcal,\Acal}\sqb{\sum_{t=1}^T \1[f(x_t)\neq y_t]} +1.
    \end{equation*}
    
    Next, we fix $\epsilon>0$ such that $K_T\geq \Tdim(\epsilon;\preceq,\Ucal)+1$. Throughout, for convenience, we introduce a notation $z_0$ for an element such that $x_0\prec x$ for any $x\in\Xcal$ (this will simplify notations). We claim that there exists a sequence of $d_0\leq \Tdim(\epsilon;\preceq,\Ucal)$ elements $x_1\prec \ldots \prec x_{d_0}$, such that $f^\star(x_{d_0})=1$ and for all $\mu\in\Ucal$,
    \begin{equation}\label{eq:properties_increasing_sequence}
        \begin{cases}
            \mu(\set{x:x_{l-1}\prec x\prec x_l})\leq 2\epsilon & l\in[d_0],\\
        \mu(\set{x:x_{d_0}\prec x, f^\star(x)=1}) \leq\epsilon.
        \end{cases}
    \end{equation}
    
    This can be constructed explicitly as follows. Let $l\geq 1$ and suppose we have constructed $x_0\prec\ldots\prec x_{l-1}$ such that for $l'\in[l-1]$ we have $\mu(\set{x:x_{l'-1}\prec x\prec x_{l'}})\leq 2\epsilon $ and $ \mu(\set{x:x_{l'-1}\prec x\preceq x_{l'}})\geq \epsilon$ (these conditions are void if $l=1$).
    Let $S:=\{z:f^\star(z)=1, \mu(\{x: x_{l-1}\prec x\preceq z\})\geq \epsilon\}$. First, since $f^\star\in\Fcal$, $S$ is totally ordered by $\preceq$. If $S=\emptyset$ we end the construction and we note that this implies $\mu(\{x: x_{l-1}\prec x, f^\star(x)=1\})\leq \epsilon$. Otherwise, either $S$ has a minimum in which case we denote it by $x_l$ and it satisfies $\mu(\set{x:x_{l-1}\prec x\prec x_l})\leq\epsilon \leq \mu(\set{x:x_{l-1}\prec x\preceq x_l})$. If $S\neq \emptyset$ does not have a minimum, there exists some element which we denote $x_l\in S$ with $\mu(\{x: x_{l-1}\prec x\preceq z\})\leq 2\epsilon$ (in fact this can be made arbitrarily close to $\epsilon$). 
    Note that by definition of $\Tdim(\epsilon;\preceq,\Ucal)$ this construction must end for $l:=d_0\leq d$. By construction the sequence $x_0\prec\ldots\prec x_{d_0}$ exactly satisfies \cref{eq:properties_increasing_sequence}.

    For notational convenience, we define $I_l:=\set{x:x_{l-1}\prec x\preceq x_{l}}$ for $l\in [d_0]$ and $I_{d_0+1}:=\set{x:x_{d_0}\prec x, f^\star(x)=1}$. Then, for any $S\subseteq[T]$, writing $S=\{t_1,\ldots,t_k\}$, Azuma-Hoeffding's inequality implies that for some universal constant $c_1>0$,
    \begin{equation*}
        \Fcal(S;\epsilon,\delta):=\bigcap_{l\in[k]}\set{\forall s\in[d_0+1], \sum_{t_l<t<t_{l+1}} \1[x_t\in I_s] \leq 2\epsilon(t_{l+1}-t_l) + c_1\sqrt{(t_{l+1}-t_l)\log\frac{T}{\delta}}}
    \end{equation*}
    has probability at least $1-\delta$ since $k<T$ and $d_0\leq \Tdim(\epsilon;\preceq,\Ucal)\leq T$. Hence, taking the union bound over all such sets $S$, the event $\Fcal(\epsilon):=\bigcap_{S\subseteq [T],|S|\leq K_T} \Fcal(S;\epsilon,T^{-K_T-2})$ has probability at least $1-1/T$.

    We now reason conditionally on the complete sequence $(x_t,y_t)_{t\in[T]}$. We construct a set of times $S^\star\subseteq[T]$ as follows: $t_1$ is the first time such that $f^\star(x_t)=1$. Next, having defined $t_1,\ldots,t_l$, let $j_l\in[d_0+1]$ such that $x_{t_l}\in I_{j_l}$. We then let $t_{l+1}>t_l$ be the first time $t\in\{t_l+1,\ldots,T\}$ such that $x_t\in I_j$ for $j\in\{j_l+1,\ldots,d_0+1\}$ and stop whenever there is no such time. The resulting set of times $S^\star$ satisfies $|S^\star|\leq d_0+1\leq \Tdim(\epsilon;\preceq,\Ucal)+1\leq K_T$ since there are only $d_0+1$ sets $I_j$ for $j\in[d_0+1]$. For convenience, we denote $k:=|S^\star|$ and $t_{k+1}:=T+1$. Under $\Ecal$, we obtained
    \begin{align*}
        \sum_{t=1}^T\1[\hat y_t\neq y_t] - \sum_{t=1}^T \1[f^\star(x_t)\neq y_t] &\overset{(i)}{\leq} \sum_{t=1}^T \1[\hat y_t(S^\star)\neq f^\star(x_t)] + c_0\sqrt{K_T\cdot T\log T}\\
        &\overset{(ii)}{\leq} \sum_{l=1}^k \sum_{t_l<t<t_{l+1}} \1[\1[x_t\preceq x_{t_l}]\neq f^\star(x_t)] + c_0\sqrt{K_T\cdot T\log T}.
    \end{align*}
    In $(i)$ we used the event $\Ecal$ from \cref{eq:event_E_optimistic} and in $(ii)$ we used the fact that at all times $t<t_1$ and $t\in S^\star$, both $E(S^\star)$ and $f^\star$ agree. Next, by construction for all $t\in (t_l,t_{l+1})$ for $l\in[k]$, we have $x_t\notin \bigcup_{j_l<j\leq d_0+1} I_j$. Further, note that both $\1[\cdot\preceq x_{t_l}]$ and $f^\star$ agree outside of $\bigcup_{j_l\leq j\leq d_0+1} I_j$. In summary, under $\Ecal\cap\Fcal(\epsilon)$ we have
    \begin{align*}
        \sum_{t=1}^T\1[\hat y_t\neq y_t] - \sum_{t=1}^T \1[f^\star(x_t)\neq y_t] &\leq \sum_{l=1}^k \sum_{t_l<t<t_{l+1}} \1[x_t\in I_{j_l}] + c_0\sqrt{K_T\cdot T\log T}\\
        &\overset{(i)}{\leq} 2\epsilon T + c_1\sum_{l=1}^k \sqrt{(t_{l+1}-t_l)(K_T+3)\log T} + c_0\sqrt{K_T\cdot T\log T}\\
        &\overset{(i)}{\leq} 2\epsilon T +c_2\sqrt{(d_0+1)K_T \cdot T\log T},
    \end{align*}
    for some universal constant $c_2>0$.
    In $(i)$ we used $\Fcal(\epsilon)$ and in $(ii)$ we used $k=|S^\star|\leq d_0+1$. Recalling that $\Ecal\cap\Fcal(\epsilon)$ has probability at least $1-2/T$ and using the definition of $f^\star$, we obtained
    \begin{equation*}
        \OblivReg_T(\Lcal;\Acal) \leq \inf_{\epsilon>0: \Tdim(\epsilon;\preceq,\Ucal)\leq K_T-1} \set{\epsilon T+ \sqrt{(\Tdim(\epsilon;\preceq,\Ucal)+1)K_T\cdot \log T}}+ 3.
    \end{equation*}
    This ends the proof.
\end{proof}

This shows that $\Lcal_T^{\text{optimistic}}$ optimistically learns $\Fcal$ for oblivious adversaries. Combining \cref{thm:regret_optimistic_algo,thm:regret_optimistic_algo_oblivious} proves \cref{thm:VC_1_optimistic_learning}. As a consequence of this result, we can check that the simple realizable \cref{alg:vc_1_realizable} described in \cref{subsec:VC1_classes} optimistically learns $\Fcal$ for both adaptive and oblivious adversaries in the realizable case.

\begin{corollary}\label{cor:simpler_alg_vc_1_realizable}
    Let $\Fcal$ be a function class of VC dimension $1$. Then, \cref{alg:vc_1_realizable} optimistically learns $\Fcal$ in the realizable setting, for both oblivious or adaptive adversaries.
\end{corollary}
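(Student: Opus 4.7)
The plan is to directly bound the expected number of mistakes made by \cref{alg:vc_1_realizable}: in the realizable setting the regret equals this expectation, since the best-in-class comparator has zero cumulative loss almost surely. The key structural invariant is that in every realization there exists a realizer $f^\star \in \Fcal$ (fixed in advance for an oblivious adversary, possibly realization-dependent for an adaptive one), and since $\Fcal$ is pruned with VC dimension $1$, $C_\star := \{x : f^\star(x) = 1\}$ is an initial segment of $(\Xcal, \preceq)$, hence totally ordered. A simple induction yields $x_{\max}(t) \in C_\star \cup \{x_\emptyset\}$ at every time $t$, since each update of $x_{\max}$ is to some $x_s$ with $y_s = 1$, i.e., $x_s \in C_\star$. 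Consequently, if $\hat y_t = 1$ then $x_t \preceq x_{\max}(t) \preceq x_{f^\star}$ and so $y_t = 1$; hence every mistake is of the form $\hat y_t = 0,\ y_t = 1$. At such a time $t$, both $x_t$ and $x_{\max}(t)$ lie in the chain $C_\star$ and are therefore comparable; since $x_t \not\preceq x_{\max}(t)$ we have $x_{\max}(t) \prec x_t$ and the algorithm correctly updates $x_{\max}(t+1) = x_t$.

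For adaptive adversaries, fix $\epsilon > 0$ and define the \emph{level} $\ell(v) := \max\{k \geq 0 : v \in \widetilde\Lcal_k(\epsilon)\}$ on $v \in \Xcal$; by \cref{thm:characterization_VC1} and the learnability hypothesis, $\ell(v) \leq \tilde k(\epsilon) < \infty$ everywhere. Classify each mistake as level-decreasing ($\ell(x_{\max}(t+1)) < \ell(x_{\max}(t))$, with the convention that the first mistake---starting from $x_\emptyset$---is level-decreasing) or non-level-decreasing. Since $\ell \in \{0, \ldots, \tilde k(\epsilon)\}$, the number of level-decreasing mistakes is at most $\tilde k(\epsilon) + 1$. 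For a non-level-decreasing mistake at time $t$ with $v = x_{\max}(t)$ and $k = \ell(v)$, the sample $x_t$ must lie in $\{x' : v \prec x'\} \cap \widetilde\Lcal_k(\epsilon)$; because $v \notin \widetilde\Lcal_{k+1}(\epsilon)$, the definition of $\widetilde\Lcal_{k+1}$ directly gives $\sup_{\mu \in \Ucal} \mu(\{x' : v \prec x'\} \cap \widetilde\Lcal_k(\epsilon)) < \epsilon$, bounding the conditional probability by $\epsilon$. Summing over $t$, the expected number of non-level-decreasing mistakes is at most $\epsilon T$, yielding a total expected mistake count of at most $\tilde k(\epsilon) + 1 + \epsilon T$; choosing $\epsilon = \epsilon_T \to 0$ slowly gives sublinear regret.

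For oblivious adversaries, $f^\star$ can be fixed a priori. Fix $\epsilon > 0$ and reuse the construction in the proof of \cref{thm:regret_optimistic_algo_oblivious} (see \cref{eq:properties_increasing_sequence}) to obtain a chain $x_0 := x_\emptyset \prec x_1 \prec \cdots \prec x_{d_0}$ in $C_\star$ with $d_0 \leq \Tdim(\epsilon; \preceq, \Ucal)$ such that, uniformly over $\mu \in \Ucal$, $\mu(\{x : x_{l-1} \prec x \prec x_l\}) \leq 2\epsilon$ for every $l \in [d_0]$ and $\mu(\{x : x_{d_0} \prec x,\ f^\star(x) = 1\}) \leq \epsilon$. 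The breakpoints $x_1, \ldots, x_{d_0}$ partition $C_\star$ into $d_0 + 1$ segments, and I classify mistakes as crossing (moving $x_{\max}$ past some $x_l$) or non-crossing. Since $x_{\max}$ is monotone in $\preceq$ and each breakpoint is crossed at most once, there are at most $d_0$ crossing mistakes. For a non-crossing mistake, $x_t \in C_\star$ lies in the open gap $\{x : x_{l-1} \prec x \prec x_l\}$ for some $l \in [d_0]$ or in $\{x : x_{d_0} \prec x,\ f^\star(x) = 1\}$, events of combined $\mu_t$-probability at most $2\epsilon$; hence the expected non-crossing count is at most $2\epsilon T$, giving a total expected mistake count of at most $\Tdim(\epsilon; \preceq, \Ucal) + 2\epsilon T$. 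The main subtlety throughout is bookkeeping around the initial meta-state $x_\emptyset$ (in particular the ``free'' level-decreasing first mistake in the adaptive case) and, for the adaptive argument, working per realization with a realization-specific $f^\star$; both are minor.
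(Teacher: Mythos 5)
Your proof is correct, but it takes a genuinely different route from the paper's. The paper proves this corollary by a domination argument: it observes that \cref{alg:vc_1_realizable} can only differ from the expert $E(S^\star)$ used in the proofs of \cref{thm:regret_optimistic_algo,thm:regret_optimistic_algo_oblivious} at times where the expert errs and the simple algorithm is correct (namely when $z_t \prec x_t \preceq x_{\max}(t)$), so \cref{alg:vc_1_realizable} makes no more mistakes than $E(S^\star)$ and inherits the regret bounds already established for the Hedge-over-experts construction. You instead analyze \cref{alg:vc_1_realizable} directly: after noting that all mistakes are false negatives that advance $x_{\max}$ along the chain $\{f^\star=1\}$, you split mistakes into a deterministic "progress" count (level decreases with respect to $\ell(\cdot)$ built from $\widetilde\Lcal_k(\epsilon)$ in the adaptive case, breakpoint crossings from the chain of \cref{eq:properties_increasing_sequence} in the oblivious case) plus a low-conditional-probability remainder. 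This is more self-contained — it bypasses Hedge entirely — and it yields sharper realizable bounds of order $\tilde k(\epsilon)+1+\epsilon T$ (adaptive) and $\Tdim(\epsilon;\preceq,\Ucal)+2\epsilon T$ (oblivious), matching the realizable minimax upper bound of \cref{thm:quantitative_charact} without prior knowledge of $\Ucal$, whereas the paper's reduction only transfers the weaker agnostic-style $\sqrt{(\tilde k(\epsilon)+1)K_T\cdot T\log T}$ bounds. The two minor points you flag ($x_\emptyset$ bookkeeping; realization-dependent $f^\star$ in the adaptive case) are handled adequately, and your implicit use of the fact that $\{f^\star=1\}$ is totally ordered is the same assumption the paper makes in defining $\max_\preceq$ in \cref{alg:vc_1_realizable}, so it introduces no new gap.
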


\begin{proof}
    We denote by $f^\star$ the optimal function in hindsight which in this realizable case has zero error.
    The proof of both \cref{thm:regret_optimistic_algo,thm:regret_optimistic_algo_oblivious} proceeds by showing that some expert $E(S^\star)$ has the desired regret bound. In both cases, the set of times $S^\star$ is defined recursively as iteratively containing the first time $t$ for which $x_t$ belongs to some specific region and $f^\star(x_t)=1$. As a result, we can always write the prediction of this expert at any time $t\in[T]$ as
    \begin{equation*}
        \hat y_t(S^\star):= \1[x_t\preceq z_t],
    \end{equation*}
    where $z_t\in\{x_s:s<t,\,f^\star(x_t)=1\}\cup\{x_\emptyset\}$. In comparison, letting $x_{\max}(t):=\max_{\preceq}\{x_s:s<t,\,f^\star(x_t)=1\}\cup\{x_\emptyset\}$, at iteration $t$, \cref{alg:vc_1_realizable} predicts $\1[x_t\preceq x_{\max}(t)]$. Note that these two predictions only differ if $z_t\prec x_t\preceq x_{\max}(t)$, in which case $E(S^\star)$ predicts $0$ while \cref{alg:vc_1_realizable} predicts the true value $1=f^\star(x_t)$ (since $f\in\Fcal$). In summary, \cref{alg:vc_1_realizable} makes fewer mistakes than $E(S^\star)$. From the proof of \cref{thm:regret_optimistic_algo,thm:regret_optimistic_algo_oblivious}, this shows that \cref{alg:vc_1_realizable} is an optimistic learner in the realizable setting for both oblivious and adaptive adversaries.
\end{proof}

\subsection{Linear Classifiers}
\label{subsec:linear_classifiers_proof}

In this section, we instantiate our general results for the linear classifier function class. We start with the one-dimensional case which corresponds to thresholds. For this simple class $\Fcal_{\text{threshold}}:=\{\1[\cdot\leq x],\, x\in[0,1]\}$, it turns out that learnable distribution classes $\Ucal$ are identical for both oblivious and adaptive adversaries. We prove \cref{prop:thresholds} which gives a compact characterization of learnable distribution classes for this threshold function class.

\vspace{3mm}

\begin{proof}[of \cref{prop:thresholds}]
    Note that $\leq$ is a tree order for the threshold function class.
    Hence, using \cref{prop:VC_1_oblivious_statement}, we can check that a distribution class $\Ucal$ on $[0,1]$ is learnable for the threshold function class if and only for any $\epsilon>0$ there exists $k_\epsilon\geq 1$ and $0:=x_0(\epsilon) <\ldots < x_{k_\epsilon}(\epsilon):=1$ such that for all $l\in[k_\epsilon]$,
    \begin{equation}\label{eq:condition_thresholds_specialized}
        \sup_{\mu\in\Ucal} \mu((x_{l-1}(\epsilon),x_l(\epsilon)))\leq \epsilon.
    \end{equation}

    Suppose this holds, then we can easily check by induction that for any $l\geq 0$, if $\widetilde\Lcal_l(\epsilon) \subseteq [0,x_k(\epsilon)]$ for $k\in[k_\epsilon]$, then we have $\widetilde\Lcal_{l+1}(\epsilon) \subseteq [0,x_k(\epsilon))$ and $\widetilde\Lcal_{l+2}(\epsilon) \subseteq [0,x_{k-1}(\epsilon)]$. Hence, we have $\widetilde\Lcal_{2k_\epsilon+2}=\emptyset$. From \cref{thm:characterization_VC1} this shows that $\Ucal$ is also learnable for adaptive adversaries. Therefore, learnability for oblivious and adaptive adversaries is equivalent for the threshold function class.

    It now suffices to check that the condition discussed in \cref{eq:condition_thresholds_specialized} is equivalent to $\Ucal\subseteq \Ucal_{\mu_0,\rho}^{pair}(\Fcal_{\text{threshold}})$ for some distribution $\mu_0$ on $[0,1]$ and $\rho:[0,1]\to\Rbb_+$ with $\lim_{\epsilon\to 0}\rho(\epsilon)=0$. The latter condition directly implies \cref{eq:condition_thresholds_specialized} by taking the $\delta$-quantiles of $\mu_0$ where $\delta>0$ is such that $\rho(\delta)< \epsilon$. We now prove the converse and suppose that \cref{eq:condition_thresholds_specialized} holds. Without loss of generality, we assume that $k_\epsilon$ is non-increasing for $\epsilon>0$. We define the following distribution on $[0,1]$, where $\epsilon_n:=2^{-n}$ for $n\in\Zbb$:
    \begin{equation*}
        \mu_0:= \sum_{n\geq 1} \frac{1}{2^n (k_{\epsilon_n}+1)}\sum_{l=0}^{k_{\epsilon_n}} \delta_{x_l(\epsilon_n)}.
    \end{equation*}
    We can check that this distribution is well defined and indeed has $\mu([0,1])=1$. Next, we define a non-decreasing function $\rho:[0,1]\to\Rbb_+$ such that $\rho(0)=0$, and for any $l\geq 1$, and $x\in[\frac{1}{2^{l} (k_{\epsilon_l}+1)},\frac{1}{2^{l-1} (k_{\epsilon_{l-1}}+1)} )$ we pose $\rho(x)=2^{-l+2}$. For any $\mu\in\Ucal$ and interval $[a,b]$ for $0\leq a\leq b\leq 1$, let $n\geq 1$ such that $\mu([a,b])\in [\epsilon_{n-1},\epsilon_{n-2})$ where we pose $\epsilon_0=1$. Then, there exists $l\in\{0,\ldots,k_{\epsilon_n}\}$ such that $x_l(\epsilon_n)\in(a,b)$. Hence,
    \begin{equation*}
        \mu([a,b])\leq \epsilon_{n-2} =4\epsilon_n =\rho\paren{\frac{1}{2^{n}(k_{\epsilon_n}+1)}} \leq \rho(\mu_0(\{x_l(\epsilon_n)\})) \leq \rho(\mu_0((a,b))).
    \end{equation*}
    Therefore $\Ucal\subseteq\Ucal_{\mu,\rho}^{pair}(\Fcal_{\text{threshold}})$, which ends the proof.
\end{proof}

We are now ready to prove the characterization of learnable distribution classes for linear classifiers in general dimension.

\vspace{3mm}

\begin{proof}[of \cref{prop:linear_classifiers}]
    We prove that the proposed condition is necessary and sufficient for learning separately.

    \paragraph{Necessity.}
    Fix a distribution class $\Ucal$ that is learnable for oblivious adversaries (which is also the case if it is learnable for adaptive adversaries).
    Note that we can embed the threshold function class within $\Fcal_{\text{lin}}$ via $\{x\mapsto a^\top x\leq b,b\in\Rbb\}$ for all $a\in S_d:=\{x\in\Rbb^d:\|x\|=1\}$ (the support can be mapped to $[0,1]$). Let $k_\epsilon(a)$ be the largest integer for which there exists $b_0:=-\infty<b_1<\ldots<b_k:=+\infty$ such that $\sup_{\mu\in\Ucal} \mu((b_{l-1},b_l])\geq \epsilon$ for all $l\in[k]$. Then as proved in \cref{prop:oblivious_VC_1} we have $k_\epsilon(a) \leq 2^{\tildedim(\epsilon/2;\Fcal,\Ucal)}$. 
    Hence, for any $\epsilon>0$ there exists $k_\epsilon\geq 1$ such that for any $a\in S_d$ there exists $x_0(\epsilon,a)=-\infty<\ldots<x_{k_\epsilon}(\epsilon,a)=+\infty$ with
    \begin{equation*}
        \sup_{\mu\in\Ucal} \mu(\{x:a^\top x \in (x_{l-1}(\epsilon,a),x_l(\epsilon,a))\}) \leq \epsilon.
    \end{equation*}
    The proof of \cref{prop:thresholds} then exactly constructs a distribution $\mu_a$ on $\Rbb$ (up to a bijection from $\Rbb$ to $(0,1)$) and a tolerance function $\rho:[0,1]\to\Rbb_+$ with $\lim_{\epsilon\to 0}\rho(\epsilon)=0$ which only depends on the sequence $k_\epsilon$ for $\epsilon\in\{2^{-n},n\geq 1\}$, and such that
    \begin{equation}\label{eq:condition_linear}
        \Ucal\subseteq \set{ \mu: \mu(B_a(b_1,b_2)) \leq \rho(\mu_a([b_1,b_2])) ,\,b_1<b_2\in\Rbb}.
    \end{equation}
    We next show that we can merge the distributions $\mu_a$ for $a\in S_d$ into a single distribution. First, applying this condition to $a\in\{e_i,i\in[d]\}$ implies that for any $\epsilon>0$ there exists $M_\epsilon$ such that
    \begin{equation*}
        \sup_{\mu\in\Ucal} \mu(\Rbb^d\setminus B(0,M_\epsilon)) \leq \epsilon,
    \end{equation*}
    where $B(0,M_\epsilon):=\set{x\in\Rbb^d:\|x\|< M_\epsilon}$. 

    Let $\Scal(\epsilon)$ be the set of affine subspaces $E\subseteq \Rbb^d$ such that $\sup_{\mu\in\Ucal} \mu(E) \geq \epsilon$ but for any affine subspace $F\subsetneq E$, we have $\sup_{\mu\in\Ucal} \mu(F) \leq \epsilon/2$. First, note that for any $E\in\Scal(\epsilon)$ we have $\sup_{\mu\in\Ucal} \mu(E\cap B(0,M_{\epsilon/4})) \geq 3\epsilon/4$.
We now show that $\Scal(\epsilon)$ is finite. Otherwise, there exists a dimension $d'\in[d]$ and a sequence of distinct subspaces $E_1,E_2,\ldots \in \Scal(\epsilon)$ of dimension $d'$ that are convergent to some $d'$-dimensional affine subspace $E^\star$ for say the Hausdorff metric restricted to $B(0,M_{\epsilon/4})$. Fix $x^\star\in E^\star$ for convenience. Then, there exists a vector $a\in S_d$ perpendicular to $E^\star$ such that there is a subsequence $F_1,F_2\ldots$ from $E_1,E_2,\ldots$ such that for all $n\geq 1$, we have $F_n\nsubseteq \{x\in\Rbb^d: a^\top (x-x^\star)=0 \}$. We denote $b^\star:=a^\top x^\star$. Next, from the convergence of $F_n$ to $E^\star$, for any $\eta>0$, there exists $n(\eta)$ such that
\begin{equation*}
    B(0,M_{\epsilon/4}) \cap F_{n(\eta)} \subseteq B(0,M_{\epsilon/4}) \cap B_a(b^\star-\eta,b^\star+\eta).
\end{equation*}
Therefore
\begin{multline*}
    \sup_{\mu\in\Ucal}\mu\paren{ B(0,M_{\epsilon/4}) \cap \{x:a^\top x\in [b^\star-\eta,b^\star+\eta]\setminus\{b^\star\} \} } \\
    \geq  \sup_{\mu\in\Ucal}\mu(B(0,M_{\epsilon/4}) \cap F_{n(\eta)}) - \sup_{\mu\in\Ucal} \mu(F_{n(\eta)}\cap\{x:a^\top x=b^\star\}) \geq \frac{3\epsilon}{4}-\frac{\epsilon}{2} = \frac{\epsilon}{4}.
\end{multline*}
In the last inequality we noted that since $F_{n(\eta)}\nsubseteq\{x:a^\top x=b^\star\}$ the affine subspace $F_{n(\eta)} \cap \{x:a^\top x=b^\star\}$ has lower dimension. This contradicts \cref{eq:condition_linear}, which ends the proof that $\Scal(\epsilon)$ is finite. 

We can then define $\mu_\epsilon$ to be the uniform mixture over all uniforms distributions on $F\cap B(0,M_{\epsilon/4})$ for $F\in\Scal(\epsilon)$. Note that if $a\in S_d$ and $b\in\Rbb$ satisfy $\sup_{\mu\in\Ucal}\mu(\{x:a^\top x=b\})\geq \epsilon 2^{d+1}$ then there exists $F\in\Scal(\epsilon)$ such that $F\subseteq \{x:a^\top x=b\}$ (this can be noted by iteratively restricting a linear subset of $\{x:a^\top x=b\}$ until it belongs to $\Scal(\epsilon)$), and as a result,
\begin{equation*}
    \mu_{\epsilon}(\{x:a^\top x=b\}) \geq \frac{1}{|\Scal(\epsilon)|}.
\end{equation*}
Using the same mixture idea as in the proof of \cref{prop:thresholds} we can then construct a distribution $\mu^{(0)}$ on $\Rbb^d$ and a tolerance function $\rho^{(0)}:[0,1]\to\Rbb_+$ with $\lim_{\epsilon\to 0}\rho^{(0)}(\epsilon)=0$ such that for any $a\in S_d$ and $b\in\Rbb$, and $\mu\in\Ucal$,
\begin{equation}\label{eq:first_part_0_measure}
    \mu(\{x:a^\top x=b\}) \leq \rho^{(0)}(\mu^{(0)} (\{x:a^\top x=b\})).
\end{equation}
Last, we define $\mu^{(1)}$ as the equal probability mixture between $\mu^{(0)}$ and a standard Gaussian $\Ncal(0,Id)$ on $\Rbb^d$. We aim to prove that there exists a tolerance function $\rho^{(1)}:[0,1]\to\Rbb_+$ with $\lim_{\epsilon\to 0}\rho^{(1)}(\epsilon)=0$ and satisfying the desired equation
\begin{equation}\label{eq:desired_condition}
    \Ucal\subseteq \{\mu:\mu(B_a(b_1,b_2)) \leq \rho^{(1)}(\mu^{(1)}(B_a(b_1,b_2))),\, a\in S_d, b_1<b_2\in\Rbb\}.
\end{equation}
Suppose by contradiction that this is not the case. Then there exists $\epsilon>0$ and a sequence $(a_n,b_{1,n},b_{2,n},\mu_n)_{n\geq 1}\in S_d\times \Rbb^2\times \Ucal$ such that for all $n\geq 1$,
\begin{equation}\label{eq:contradiction}
    \mu_n(B_{a_n}(b_{1,n},b_{2,n})) \geq \epsilon \quad \text{and} \quad  \mu^{(1)}(B_{a_n}(b_{1,n},b_{2,n})) \leq 2^{-n},\quad n\geq 1.
\end{equation}
Note that for all $n\geq 1$, we have $\mu_n(B_{a_n}(b_{1,n},b_{2,n}) \cap B(0,M_{\epsilon/2})) \geq \epsilon/2$. Hence, up to projecting $b_{1,n},b_{2,n}$ to $[-M_{\epsilon/2},M_{\epsilon/2}]$ by compactness we can assume without loss of generality that the sequences are convergent: $a_n\to a^\star\in S_d$ and $(b_{1,n},b_{2,n})\to (b^\star_1,b_2^\star) \in [-M_{\epsilon/2},M_{\epsilon/2}]^2$ as $n\to\infty$. First, note that necessarily $b_1^\star=b_2^\star$, otherwise because $\mu^{(1)}$ is a mixture with a standard Gaussian, it puts non-zero measure on $B(0,M_{\epsilon/2})\cap \{x:{a^\star}^\top x\in(2 b_1^\star/3+b_2^\star/3,b_1^\star/3+2b_2^\star/3)\}$, contradicting \cref{eq:contradiction} for $n$ sufficiently large.  This shows that $b_1^\star=b_2^\star$.

Suppose that $\sup_{\mu\in\Ucal}\mu(\{x:{a^\star}^\top x=b_1^\star\})>0$. Then, because of \cref{eq:first_part_0_measure}, \cref{eq:contradiction} implies that for $n$ sufficiently large,
\begin{equation}\label{eq:contradiction_step}
    \mu_n\paren{((B(0,M_{\epsilon/2})\cap B_{a_n}(b_{1,n},b_{2,n}) )\setminus \{x:{a^\star}^\top x=b_1^\star\} } \geq\frac{\epsilon}{4}.
\end{equation}
Note that this equation also holds if $\sup_{\mu\in\Ucal}\mu(\{x:{a^\star}^\top x=b_1^\star\})=0$. Using the same arguments as before, this therefore implies that for any $\eta>0$,
\begin{equation*}
    \sup_{\mu\in\Ucal} \mu\paren{B(0,M_{\epsilon/2}) \cap \{x:{a^\star}^\top x\in [b_1^\star-\eta,b_1^\star+\eta]\setminus\{b_1^\star\}\}} \geq \frac{\epsilon}{4},
\end{equation*}
which contradicts \cref{eq:condition_linear}. In summary, we showed that the desired condition \cref{eq:desired_condition} is indeed necessary even for oblivious adversaries.

\paragraph{Preliminaries for Sufficiency.} 
We now show that the desired condition is also sufficient for learning. Hence, we suppose given a distribution $\mu_0$ and a tolerance function $\rho$ satisfying the required properties and such that
\begin{equation}\label{eq:hypothesis_condition_prop}
    \Ucal\subseteq \{\mu:\mu(B_a(b_1,b_2)) \leq \rho(\mu_0(B_a(b_1,b_2))),\, a\in S_d, b_1<b_2\in\Rbb\}.
\end{equation}
Before showing that this is sufficient for $\Ucal$ to be learnable we first derive a consequence of this condition. Precisely, we show that for any $\epsilon>0$ there exists $\delta>0$ such that for any $a\in S_d$ and $b\in\Rbb$, there exists a hyperplane $H$ such that
\begin{equation}\label{eq:useful_condition}
    \sup_{\mu\in\Ucal} \mu(B_a(b,b+\delta)\setminus H)\leq \epsilon.
\end{equation}
Indeed, first note that as above there exists $M_\epsilon>0$ such that $\sup_{\mu\in\Ucal}\mu(\Rbb^d\setminus B(0,M_\epsilon))\leq \epsilon$. Suppose by contradiction that \cref{eq:useful_condition} does not hold. Then, with similar arguments as above, there exists $\epsilon>0$, a convergent sequence $a_n\to a^\star\in S_d$ and $b_n\to b^\star \in [-M_{\epsilon/2},M_{\epsilon/2}]$ such that with $H^\star=\{x:{a^\star}^\top x=b^\star\}$,
\begin{equation*}
    \sup_{\mu\in\Ucal} \mu((B(0,M_{\epsilon/2})\cap B_{a_n}(b_n,b_n+2^{-n}) )\setminus H^\star)\geq \frac{\epsilon}{2},\quad n\geq 1.
\end{equation*}
This contradicts \cref{eq:hypothesis_condition_prop} since it implies that for any $\eta>0$,
\begin{equation*}
    \sup_{\mu\in\Ucal} \mu\paren{B(0,M_{\epsilon/2}) \cap \{x:{a^\star}^\top x\in [b^\star-\eta,b^\star+\eta]\setminus\{b^\star\}\}} \geq \frac{\epsilon}{2}.
\end{equation*}
Hence, \cref{eq:useful_condition} holds.

\paragraph{Sufficiency.}    
To show the sufficiency of \cref{eq:hypothesis_condition_prop}, from \cref{thm:qualitative_charact} it suffices to show that $\Ucal$ is learnable for adaptive adversaries in the realizable setting. To do so, we use \cref{alg:linear_realizable} and show that it successfully learns $(\Fcal,\Ucal)$. We recall its definition briefly. At any iteration $t\geq 1$, we define the current version space $\Fcal_t:=\set{f\in\Fcal_{\text{lin}}:\forall s<t, f(x_s)=y_s}$. We also define the regions that are labeled $0$ or $1$ with certainty, given available information as follows:
    \begin{equation*}
        S_t(y):=\set{ x\in \Xcal: \forall f\in \Fcal_t, f(x)=y},\quad y\in\{0,1\}.
    \end{equation*}
    We then use the following prediction function for time $t\geq 1$:
    \begin{equation*}
        f_t:x\in\Rbb^d \mapsto \argmin_{y\in\{0,1\}} d(x,S_t(y)).
    \end{equation*}The prediction at time $t$ is then $\hat y_t:=f_t(x_t)$.

    We now show that \cref{alg:linear_realizable} learns $(\Fcal_{\text{lin}},\Ucal)$ against realizable adaptive adversaries.
    Fix a realizable adaptive adversary $\Acal$. Since the version spaces are non-increasing, the sets $S_t(0),S_t(1)$ are non-decreasing and disjoint. 
    For any $t\geq 1$, we define $\eta(t):=d(x_t,S_t(\hat y_t)) = \min_{y\in\{0,1\}} d(x_t,S_t(y))$.
    Next, we fix $\epsilon>0$ and let $\delta_\epsilon>0$ for which \cref{eq:useful_condition} holds. We then define
    \begin{equation*}
        C_t(\epsilon):=\{x_s:s<t, \hat y_s\neq y_s, \eta(s)\geq \delta_\epsilon/2\}.
    \end{equation*}
    Consider a time $t\geq 1$ when \cref{alg:adaptive_agnostic} makes a mistake $\hat y_t\neq y_t$ while $\eta(t)\geq \delta_\epsilon/2$. By definition of \cref{alg:linear_realizable}, this implies in particular $d(x_t,C_t(\epsilon)) \geq \delta_\epsilon/2$.
    Hence, by induction we can check that $C_t(\epsilon)$ is an $\delta_\epsilon/2$-packing. In particular, denoting by $N_\epsilon^{(d)}$ the $\delta_\epsilon/2$-packing number of $B(0,M_{\epsilon})$ for any $t\geq 1$ we have $|C_t(\epsilon)\cap B(0,M_{\epsilon})|\leq N_\epsilon^{(d)}$. Hence, most mistakes will happen in the following region:
    \begin{equation*}
        B_\epsilon(t):= \set{x\in B(0,M_{\epsilon}): d(x,S_t(0)), d(x,S_t(1))\leq \delta_\epsilon/2}.
    \end{equation*}
    Formally, as a summary of the previous discussion, we can decompose the loss of \cref{alg:linear_realizable} as follows.
    \begin{equation}\label{eq:decomposition_linear}
        \sum_{t=1}^T \1[\hat y_t\neq y_t] \leq \sum_{t=1}^T \1[\|x_t\|\geq M_{\epsilon}] + N_\epsilon^{(d)} +  \sum_{t=1}^T \1[\hat y_t\neq y_t] \1[x_t\in B_\epsilon(t)] .
    \end{equation}

    In the rest of the proof, we bound the last term. For convenience, we write $\mu_{\Ucal}:=\sup_{\mu\in\Ucal}$, that is, for any measurable set $B\subseteq \Sigma$ we write $\mu_{\Ucal}(B):=\sup_{\mu\in\Ucal}\mu(B)$. In particular, $\mu_{\Ucal}$ satisfies the triangular inequality.
    Fix $t\geq 1$ 
    and note that by construction of $B_\epsilon(t)$, for any function $f_t:=\1[a_t^\top x \leq b_t]$ that realizes the observed data $\{(x_s,y_s),s<t\}$, we must have
    \begin{equation*}
        B_\epsilon(t) \subseteq B_{a_t}(b_t-\delta_\epsilon,b_t+\delta_\epsilon).
    \end{equation*}
    Fix such parameters $a_t,b_t$ depending only on the observed data $\Hcal_t:=\{(x_s,y_s),s<t\}$.
    From \cref{eq:useful_condition} there exists a hyperplane $H_t$ (dependent only on $\Hcal_t$) such that
    \begin{equation*}
         \mu_{\Ucal}(B_\epsilon(t) \setminus H_t) \leq \mu_{\Ucal}(B_{a_t}(b_t-\delta_\epsilon,b_t+\delta_\epsilon) \setminus H_t) \leq \epsilon.
    \end{equation*}
    Then, we construct an affine subspace $F_t$ of $H_t$ (possibly empty) such that $\mu_\Ucal(B_\epsilon(t) \setminus F_t) \leq 2(d+2)\epsilon$ and for any strict affine subspace $F'$ of $F_t$, we have
    \begin{equation*}
        \mu_{\Ucal}(B_\epsilon(t)\setminus F') \geq 2\epsilon + \mu_{\Ucal}(B_\epsilon(t)\setminus F_t).
    \end{equation*}
    The subspace $F_t$ can be constructed by iteratively reducing its dimension until the previous property holds (note that the property is vacuous if $F_t=\emptyset$). Note that this procedure is only dependent on $H_t$, hence without loss of generality, $F_t$ is only dependent on $\Hcal_t$. Finally, let $E_t:=Span(F_s,s\leq t)$, which is also $\Hcal_t$-measurable.
    Note that by construction, the sets $B_\epsilon(t)$ are non-decreasing. Hence, for any strict affine subspace $F'$ of $F_s$ and $s\leq t$, we have
    \begin{equation*}
        \mu_{\Ucal}(B_\epsilon(t)\cap (F_s\setminus F') )\geq \mu_{\Ucal}(B_\epsilon(s)\cap (F_s\setminus F') ) \geq 
        \mu_{\Ucal}(B_\epsilon(s)\setminus F') ) - \mu_{\Ucal}(B_\epsilon(s)\setminus F_s) ) \geq 2\epsilon.
    \end{equation*}
    On the other hand, we recall that $\mu_{\Ucal}(B_\epsilon(t)\setminus H_{t})\leq \epsilon$. This implies that $F_s\subseteq H_{t}$. Therefore, we have $E_t\subseteq H_t$ and hence $\dim(E_t)\leq d-1$.

    Furthering the bound from \cref{eq:decomposition_linear}, we have
    \begin{align}
        \AdaptReg_T(\cref{alg:linear_realizable};\Acal)=\Ebb\sqb{\sum_{t=1}^T \1[\hat y_t\neq y_t] }
        &\leq \Ebb\left[\sum_{t=1}^T \1[\|x_t\|\geq M_{\epsilon}] + N_\epsilon^{(d)} +  \sum_{t=1}^T  \1[x_t\in B_\epsilon(t)\setminus E_t] + \right. \notag\\
        &\qquad \qquad \left.\sum_{t=1}^T  \1[\hat y_t\neq y_t]\1[x_t\in B_\epsilon(t)\cap E_t] \right] \notag\\
        &\overset{(i)}{\leq} \epsilon T+N_\epsilon^{(d)} + 2(d+2)\epsilon T+ \Ebb\sqb{ \sum_{t=1}^T  \1[\hat y_t\neq y_t]\1[x_t\in E_t] }. \label{eq:recursive_bound_linear}
    \end{align}
    In $(i)$ we used the definition of $M_\epsilon$, $F_t\subseteq E_t$, and the fact that both $F_t$ and $E_t$ are $\Hcal_t$-measurable. For convenience, denote $R_T(d')$ the maximum regret of \cref{alg:linear_realizable} for a realizable adaptive adversary restricted to some fixed a priori subspace of $\Rbb^d$ of dimension $d'\in\{0,\ldots,d\}$. We clearly have $R_T(0)\leq 1$. Now recall that $(E_t)_{t\geq 1}$ is a non-decreasing sequence of affine subspaces with dimension at most $d-1$, that is, it is piece-wise constant takes at most $d$ distinct subspace values, all of dimension at most $d-1$. For any $t\geq 1$ for which $\dim(E_t)>\dim(E_{t-1})$ (by convention $E_0=\emptyset$), on the interval of time $[t,t']$ for which $E_s=E_t$ for $s\in[t,t']$, we can replicate the arguments above to bound the regret incurred by \cref{alg:linear_realizable} on the times $\{s\in[t,t']:x_s\in E_t\}$.
    This shows that
    \begin{equation*}
        R_T(d') \leq (2d+5) \epsilon T+N_\epsilon^{(d)} + (d+1)R_T(d'-1),\quad d'\in[d].
    \end{equation*}
    By induction, this shows that there exists $C_d$ depending only on $d$ and $N_\epsilon(d)$ depending only on $d,\epsilon$ and $\Ucal$, such that for any adaptive and realizable adversary,
    \begin{equation*}
        \AdaptReg_T(\cref{alg:linear_realizable};\Acal) \leq C_d\epsilon T+N_\epsilon(d).
    \end{equation*}
    This holds for any $\epsilon>0$ and adversary $\Acal$,
    which ends the proof that \cref{alg:linear_realizable} learns the distribution class $\Ucal$ for realizable adaptive adversaries. Hence the desired condition is also sufficient for learning which ends the proof. In particular, this also shows that learnability against oblivious or adaptive adversaries is equivalent for $\Fcal_{\text{lin}}$.
\end{proof}

\begin{remark}\label{remark:simpler_linear_algo_realizable}
    The proof of \cref{prop:linear_classifiers} shows that the algorithm \cref{alg:linear_realizable} is an optimistic learner for the realizable setting in both oblivious and adaptive settings, since it does not depend on the considered distribution class $\Ucal$.
\end{remark}

It remains to check that optimistic learning is also possible in the agnostic setting to complete the proof of \cref{prop:lin_classifiers_optimistic}.

\vspace{3mm}

\begin{proof}[of \cref{prop:lin_classifiers_optimistic}]
    We start by constructing an algorithm for learning $\Fcal_{\text{lin}}$ with adaptive adversaries, which will perform the Hedge algorithm over a set of experts to be defined.
    For any subset of times $S\subseteq [T]$ let $E(S)$ denote the expert which follows the predictions of \cref{alg:linear_realizable} by using labels corresponding to that expert making mistakes at times in $S$ (in the same spirit as \cref{alg:adaptive_agnostic}). The corresponding expert is summarized in \cref{alg:linear_expert}.

\begin{algorithm}[t]

    \caption{Expert $E(S)$ for $\Ocal(\epsilon)$ average adaptive regret in the agnostic setting}\label{alg:linear_expert}
    
    \LinesNumbered
    \everypar={\nl}
    
    \hrule height\algoheightrule\kern3pt\relax
    \KwIn{Mistake times $S$}
    
    \vspace{3mm}

    \For{$t\geq 1$}{
        Let $\hat y_t(S)$ be the prediction of \cref{alg:linear_realizable} for the history $(x_s,\hat y_s(S)\oplus \1[s\in S])_{s<t}$ and new instance $x_t$. Predict $\hat y_t(S)$
    }
    
    \hrule height\algoheightrule\kern3pt\relax
    \end{algorithm}

    We consider the algorithm $\Lcal(\epsilon)$ which simply performs the Hedge algorithm on the set of experts $\Scal_E(\epsilon):=\{E(S): |S|\leq \epsilon T\}$ as detailed in \cref{alg:oblivious_agnostic}. Intuitively, $\Lcal(\epsilon)$ achieves $\epsilon$ average regret asymptotically. However, the time $t_\epsilon$ starting from which $\Lcal(\epsilon)$ has $\Ocal(\epsilon)$ average regret depends on $\Ucal$ which cannot be used by an optimistic learner. 
    To solve this issue we run a Hedge-variant algorithm on the experts $\{\Lcal(2^{-i}),i\geq 1\}.$ Note that because there is an infinite number of considered experts we cannot directly use the vanilla Hedge algorithm. Instead, we use the restarting strategy from \cite{hanneke2022universally}. Namely, let $t_k:=k(k+1)/2$ for $k\geq 1$. Then, we consider the final algorithm $\Lcal^\star$ which on each epoch $[t_k,t_{k+1})$ performs the classical Hedge algorithm detailed \cref{alg:oblivious_agnostic} with the horizon $t_{k+1}-t_k=k$ on the restricted set of experts $\{\Lcal(2^{-i}),i\in[k]\}$.

    \paragraph{Regret Analysis.} We denote by $\Acal$ the considered adaptive adversary. We now bound the regret of $\Lcal^\star$. We denote by $\hat y_t^\star$ the predictions of $\Lcal^\star$. Similarly, we denote by $\hat y_t^{(i)}$ the predictions of $\Lcal(2^{-i})$. Fix $T\geq 1$ and let $k_{\max}\geq 1$ such that $T\in[t_{k_{\max}},t_{k_{\max}+1})$. We apply \cref{thm:regret_hedge} to each run of the Hedge algorithm for all epochs $[t_k,t_{k+1})$ with $k\in[k_{\max}]$. This shows that for any $i\geq 1$, with probability at least $1-1/T$,
    \begin{align*}
        \sum_{t=1}^T \1[\hat y_t^\star \neq y_t] &\leq t_i + \sum_{k=i}^{k_{\max}} \1[\hat y_t^\star \neq y_t]\\
        &\leq t_i + \sum_{t=t_i}^T \1[\hat y_t^{(i)}\neq y_t] + \sum_{k=i}^{k_{\max}}\paren{\sqrt{\frac{k\log k}{2}} + \sqrt{k\log (Tk_{\max})}},
    \end{align*}
    where noted that $t_{k+1}-t_k=k$ for $k\in[k_{\max}]$.
    Next, note that $k_{\max} \lesssim \sqrt T$ and that $\sum_{k=1}^{k_{\max}}\sqrt{k}\leq k_{\max}^{3/2}$. Hence, there is a constant $c_1>0$ such that with probability at least $1-1/T$,
    \begin{equation*}
        \sum_{t=1}^T \1[\hat y_t^\star \neq y_t] \leq \sum_{t=1}^T \1[\hat y_t^{(i)}\neq y_t] + t_i + c_1 T^{3/4} \log^{1/2} T.
    \end{equation*}
    We denote by $\Ecal_i$ this event. Recall that $\Lcal(2^{-i})$ runs the Hedge algorithm over the set of experts $\Scal_E(2^{-i})$. We denote by $\hat y_t(S)$ the predictions of expert $E(S)$ for $S\subseteq [T]$. Then, \cref{thm:regret_hedge} implies that there is a constant $c_2>0$ such that with probability at least $1-1/T$,
    \begin{equation*}
        \sum_{t=1}^T \1[\hat y_t^{(i)}\neq y_t] \leq \min_{S\subseteq [T],|S|\leq 2^{-i}T} \sum_{t=1}^T \1[\hat y_t(S)\neq y_t] + c_2\sqrt{T\log (|\Scal_E(2^{-i})| T)}.
    \end{equation*}
    We denote by $\Fcal_i$ the above event.
    Note that $|\Scal_E(2^{-i})|\leq (\epsilon_i T+1)\cdot \binom{T}{\floor{\epsilon_i T}}\leq (\epsilon_i T+1)\paren{\frac{e}{\epsilon_i}}^{\epsilon_i T}$, where $\epsilon_i=2^{-i}$.
    Then, on $\Ecal_i\cap\Fcal_i$ we obtained
    \begin{equation*}
        \sum_{t=1}^T \1[\hat y_t^\star \neq y_t] \leq \min_{S\subseteq [T],|S|\leq 2^{-i}T} \sum_{t=1}^T \1[\hat y_t(S)\neq y_t]  + t_i + c_3 \paren{\sqrt{i2^{-i}} \cdot T + T^{3/4 } \log^{1/2} T + \sqrt{iT}},
    \end{equation*}
    for some universal constant $c_3>0$.

    As an important remark, note that the proof for the regret analysis of \cref{alg:linear_realizable} within \cref{prop:linear_classifiers} would also work for the following stronger form of adversaries, which we call prescient adversaries. The adversary first constructs a sequence $(x_t)_{t\in[T]}$ as follows: having constructed $x_1,\ldots,x_{t-1}$ they select $\mu_t\in\Ucal$ adaptively, then sample $x_t\sim \mu_t$. With the knowledge of $x_1,\ldots,x_T$, the adversary then chooses a function $f^\star\in\Fcal$. The value at time $t$ is then $y_t:=f^\star(x_t)$. The main difference with realizable adaptive adversaries is that the values are allowed to depend on future samples $x_t$.
    The reason why the proof of regret for \cref{alg:linear_realizable} holds also for prescient adversaries is that the bounds in the expectation only bound events on $x_t$ (see \cref{eq:recursive_bound_linear}), namely the events $\1[\|x_t\|\geq M_\epsilon]$ and $\1[x_t\in B_\epsilon(t)\setminus E_t]$ which both have probability bounded conditionally on the history $\Hcal_t$.

    With this remark at hand, let $f^\star\in\Fcal_{\text{lin}}$ be a function such that
    \begin{equation*}
        \sum_{t=1}^T \1[f^\star(x_t)\neq y_t] = \min_{f\in\Fcal} \sum_{t=1}^T \1[f(x_t)\neq y_t].
    \end{equation*}
    We consider the prescient adversary $\widetilde\Acal$ that chooses $f^\star$ to label the values $y_t^\star:=f^\star(x_t)$. Let $S^\star$ be the set of times in $[T]$ on which \cref{alg:linear_realizable} makes mistakes for this prescient adversary. Then, the proof of \cref{prop:linear_classifiers} shows that there exists $C_d\geq 1$ depending only on $d$ and $N_i(d)$ depending only on $d$ and $\Ucal$, such that  for any $j\geq 1$,
    \begin{equation*}
        \Ebb[|S^\star|]=\Ebb\sqb{\sum_{t=1}^T\1[\hat y_t(S)\neq y_t^\star]} = \AdaptReg_T(\cref{alg:linear_realizable};\widetilde\Acal) \leq C_d 2^{-j} T + N_j(d).
    \end{equation*}
    Hence, by Markov's inequlity, the event
    \begin{equation*}
        \Gcal_j:= \set{ |S^\star| \leq C_d 2^{-j/2} T + N_j(d)2^{j/2}}
    \end{equation*}
    has probability at least $1-2^{-j/2}$.

    Now fix $i\geq 1$ and let $j\geq i$ be sufficiently large so that $C_d 2^{-j/2}\leq 2^{-i-1}$. Then, for 
    $T\geq N_j(d)2^{j/2} / (C_d 2^{-j/2}):=T_j$, on $\Gcal_j$ we have $|S^\star| \leq 2C_d 2^{-j/2} T \leq 2^{-i}T$. Therefore, on $\Ecal_i\cap\Fcal_i\cap\Gcal_j$ we obtained
    \begin{align*}
        \sum_{t=1}^T \1[\hat y_t^\star \neq y_t] 
        &\leq \sum_{t=1}^T\1[\hat y_t(S^\star) \neq y_t] + t_i + c_3\paren{\sqrt{i2^{-i} }\cdot T + T^{3/4  } \log^{1/2} T +\sqrt{iT}}\\
        &\leq \sum_{t=1}^T\1[f^\star(x_t) \neq y_t] + \sum_{t=1}^T\1[\hat y_t(S^\star)\neq y_t^\star] + t_i + c_3\paren{\sqrt{i2^{-i} }\cdot T + T^{3/4  } \log^{1/2} T +\sqrt{iT}}.
    \end{align*}
    Note that $\Ecal_i\cap\Fcal_i\cap\Gcal_j$ has probability at least $1-2^{-j/2}-2/T$. Taking the expectation over the previous equation together with the definition of $f^\star$ gives for any $T\geq T_j$,
    \begin{align*}
        \AdaptReg_T(\Lcal^\star;\Acal) &\leq ( C_d 2^{-j} + 2^{-j/2} +c_3 \sqrt{i2^{-i}}) T + c_3 T^{3/4 } \log^{1/2} T +c_3\sqrt{iT}+N_j(d)+t_i \\
        &\leq ( 2^{-i+1} +c_3 \sqrt{i2^{-i}}) T + c_3 T^{3/4 } \log^{1/2} T +c_3\sqrt{iT}+t_i.
    \end{align*}
    The above bound holds for any $i\geq 1$ for $T\geq T_j$ and for any adversary $\Acal$, which ends the proof that $\Lcal^\star$ learns $\Ucal$. Hence $\Lcal^\star$ optimistically learns $\Fcal_{\text{lin}}$ in the agnostic setting for adaptive adversaries. This also holds for all considered adaptive/oblivious realizable/agnostic settings since learnable distribution classes coincide for $\Fcal_{\text{lin}}$ as proved in \cref{prop:linear_classifiers}.
\end{proof}

\end{document}